\documentclass[10pt]{article}

\usepackage[utf8]{inputenc}
\usepackage[T1]{fontenc} 

\usepackage{amssymb}
\usepackage{mathtools} 
\usepackage{amsthm}
\usepackage{dsfont}    
\usepackage{multirow} 
\usepackage{booktabs}
\usepackage{float}
\usepackage{placeins}

\usepackage{graphicx}
\usepackage[title]{appendix}
\usepackage{subcaption} 

\usepackage[a4paper,
            left=3cm,
            right=3cm,
            top=3cm,
            bottom=3cm
           ]{geometry}
\usepackage{comment}
\usepackage{appendix}

\usepackage[backend=biber, style=numeric]{biblatex}
\DeclareMathOperator*{\argmin}{arg\,min}

\newtheorem{theorem}{Theorem}[section]
\newtheorem{lemma}{Lemma}[section]
\newtheorem{prop}{Proposition}[section]
\newtheorem{remark}{Remark}

\usepackage{hyperref}

\title{Optimal estimation for Functional Linear Regression with Noisy Discretized Data}

\author{Sixtine Sphabmixay\\
Université Paris Cité, CNRS, MAP5, F-75006 Paris, France}

\begin{document}
\maketitle

\begin{abstract}
 In this paper, we consider the scalar-on-function linear regression model under a realistic sampling scheme in which the functional covariates are observed on a regular grid and contaminated by additive noise. We propose a two-step estimation procedure: first, the underlying curves are reconstructed from the discrete noisy observations using a Fourier-based projection method; second, the slope function is estimated by a penalized least-squares criterion over finite-dimensional trigonometric spaces, with data-driven selection of the model dimension. We establish oracle-type inequalities for the prediction error, both with respect to the reconstructed curves and to the true latent curves. Under regularity assumptions on the slope function and polynomial decay of the eigenvalues of the covariate, we derive convergence rates for the prediction error and show that our estimator attains the minimax rate when the number of grid points is sufficiently large. Finally, the proposed method is illustrated on simulated data and on a real meteorological dataset.
\end{abstract}

\section{Introduction}
Functional data analysis (FDA), as formalized and popularized by Ramsay and Silverman \cite{ramsay2005functional}, is a statistical framework designed to analyze data that are naturally expressed as functions rather than finite-dimensional vectors, which are commonly considered in classical statistical methods. This methodology is particularly well suited to modern datasets, where observations evolve continuously over domains such as time or space. Moreover, advances in data collection and storage have also led to a substantial increase in the availability of functional data. Consequently, FDA has been widely applied across various disciplines, including medicine (Frøslie et al. \cite{froslie2013shape}), biology (Dieng et al. \cite{dieng2020application}), and finance (Kokoszka et al. \cite{kokoszka2017testing}).

\medskip\noindent
A central inferential tool in FDA is the functional linear regression model, which extends classical regression by relating scalar or functional responses to functional predictors. In this paper, we focus on the scalar-on-function case. More precisely, we assume that a functional covariate $X$ taking values in a Hilbert space $\mathcal{X}$ and a real-valued response variable $Y$ in $\mathbb{R}$ are linked through the model,
\begin{equation}\label{eq:Y1}
Y=\mu_{Y}+\int_{0}^{1}\beta(t)(X(t)-\mu(t))dt+\epsilon,
\end{equation}
where $\mu_Y=\mathbb{E}(Y)$, $\mu(t)=\mathbb{E}(X(t))$ for all $t\in [0,1]$ and $\epsilon$ is a random variable which is centered and of finite variance $\sigma^{2}$. 

\medskip\noindent
This model was popularized in the FDA literature by Ramsay and Silverman \cite{ramsay2005functional} and has since been extensively developed, with numerous approaches proposed for estimating the coefficient function $\beta$. For instance, Cardot et al. \cite{cardot1999functional} introduced an estimator based on functional principal component analysis. James et al. \cite{james2009functional} proposed the FLiRTI method, designed to produce interpretable estimates of $\beta$ that highlight regions where the relationship between $Y$ and $X$ is negligible. More recently, Grollemund et al. \cite{grollemund2019bayesian} developed a Bayesian approach, known as Bliss, which leverages the conditional distribution $Y\mid X,\beta \sim \mathcal{N}(\langle \beta, X\rangle_{\mathcal{X}}, \sigma^{2})$. This approach also aims to identify the time intervals that have the greatest influence on the response.

\medskip\noindent
Studies of the statistical properties of existing estimators have also been developed under the assumption that the functional data are observed continuously and without noise, as in Cardot et al. \cite{cardot2010thresholding} who derived minimax rates and oracle-type inequalities. However, this idealized setting is rarely encountered in practice, since observations are typically available only on a finite grid, which may be regularly or irregularly spaced, deterministic or random. This more realistic setting has received comparatively less attention. To our knowledge, Cardot et al. \cite{cardot2007smoothing} were among the first to study it. Although their approach is more practically relevant, it requires the grid size to be sufficiently large for their results to hold.

\medskip\noindent
In this paper, we aim to to extend the approach of Brunel and Roche \cite{Brunel2015penalized} to the more realistic setting where observations are available on a regular grid and are corrupted by noise. As them, we assume that the covariates are centered, periodic, and second-order stationary, with $X(0)=X(1)$. The sample $(X_i)_{i=1,\ldots,n}$ consists of independent copies of $X$. Here we further assume that the functional covariates are observed on a common regular grid $t_h = h/p$ for all $h\in\{0,\ldots,p-1\}$, and that the observations are contaminated by noise. Thus, rather than observing the full curves $X_i$, we observe
\begin{equation}\label{eq:Z}
Z_i(t_h) = X_i(t_h) + \eta_{i,h},
\end{equation}
where the noise variables $(\eta_{i,h})$ are centered, i.i.d., with variance $\tau^2$, and independent of the processes $X_i$. For technical simplicity, we assume throughout this paper that $\sigma^{2}$, the variance of $\epsilon$ defined in \eqref{eq:Y1} is known.

\medskip\noindent
In the idealized setting of continuously observed noiseless curves, Brunel and Roche \cite{Brunel2015penalized} proposed an estimator of the slope function $\beta$ based on a least-squares contrast over finite-dimensional spaces generated by trigonometric bases, together with a penalization procedure for selecting the optimal dimension of the basis. They proved an oracle inequality and obtained convergence rates when the eigenvalues of the covariance operator associated to the true curves decay in either a polynomial or exponential way, and that $\beta$ belongs to a periodic Sobolev space,
\begin{equation}\label{eq:w_per}
    W^{\text{per}}(k,L):=\{f\in W_{2}^{k}(L), \forall j=0,...,k-1, f^{(j)}(0)=f^{(j)}(1)\},
\end{equation}
with
\[W_{2}^{k}(L):=\{f:[0,1]\rightarrow\mathbb{R}, f^{(k-1)} \text{ is absolutely continuous and } \| f^{(k)} \|_{L^{2}}\leq L\},\]
for $k$ a positive integer and $L$ a positive real number. Here $\|\cdot\|_{L^{2}}$ is defined as
\[\|f\|_{L^{2}}=\left(\int_{0}^{1}f(t)^{2}dt\right)^{1/2} \qquad \text{for any square-integrable function }f \text{ on }[0,1].\]

\medskip\noindent
Our approach follows a similar philosophy. We estimate $\beta$ by minimizing a least-squares criterion after a preliminary step that reconstructs the underlying curve 
$X$ from the noisy discrete observations. This reconstruction is based on a Riemann-sum approximation and a projection onto a finite-dimensional space spanned by a truncated Fourier basis. We then introduce a penalization procedure to automatically select the optimal number of Fourier terms used in the estimation of $\beta$. This strategy differs from the work of Cardot et al. \cite{cardot2007smoothing} where they proposed a smoothing splines estimator to estimate the unknown slope function. We derive two oracle-type inequalities for the prediction error, one with respect to the reconstructed curves and one with respect to the true curves. We then establish convergence rates for the prediction error with respect to the true curves when $\beta$ belongs to $W^{\text{per}}(k,L)$ and the eigenvalues of the covariance operator decay polynomially. We recover the same rate as Brunel and Roche \cite{Brunel2015penalized} when $p$ is large enough but not necessarily infinite, namely that the prediction error decays as $n^{-\frac{2k+2a}{2k+2a+1}}$, where $a$ denotes the decay parameter of the covariance eigenvalues. In this regime, we also establish a matching lower bound, showing that observing the trajectories only on a sufficiently fine grid incurs no minimax loss compared to the continuous setting. When $p$ is moderately large, the number of observations on the grid also plays a role, and the prediction error decays as $n^{-\frac{2k+2a}{2k+2a+1}}+p^{-\frac{2a-1}{2a}}$. When $p$ is small the error decays as $p^{-\frac{2a-1}{2a}}$. Finally, we apply our method on both simulated data and real-world meteorological observations.

\medskip\noindent
\textbf{Outline of the paper.} Section 2 introduces the estimation procedure for $\beta$ and the risk measure considered in this work. Section 3 establishes oracle-type inequalities. Section 4 derives the convergence rates over periodic Sobolev spaces. Section 5 presents the lower bound in the case where $p$ is large. Section 6 reports numerical simulation results on artificial datasets and Section 7 on a true meteorological dataset.

\medskip\noindent
\textbf{Notation: } We denote by $L^{2}$ the space of square-integrable functions on the interval $[0,1]$ equipped with the inner product $\langle f, g\rangle_{L^{2}}=\int_{0}^{1}f(t)g(t)dt$ and the associated norm $\|f\|_{L^{2}}=\langle f,f\rangle_{L^{2}}^{1/2}$. For any set $\mathcal{A}$, we denote by $\mathds{1}_{\mathcal{A}}$ the indicator function and for all non-negative integers $i$ and $j$ we define $\delta_{ij}$ as the Kronecker delta. For any square matrix $A$ we define the operator norm by $\| A \|_{op}=\sup_{\|a\|_{2}=1}\|Aa\|_{2}$, where for any vector $u$, $\|u\|_{2}$ denotes the Euclidean norm. In particular, we consider the space $l^{2}$ consisting of all sequences $c=(c_{l})_{l\geq1}$, such that $\|c\|_{2}^{2}<\infty$. We also consider $\langle \cdot, \cdot\rangle_{2}$ the standard Euclidean inner product. Moreover for any $a, b \in \mathbb{Z}$ and $m \in \mathbb{N}\backslash\{0\}$, we write $a \equiv b [m]$ to indicate that $a$ and $b$ are congruent modulo $m$. 

\section{Estimation procedure}\label{sec:model}

\subsection{Curve reconstruction} \label{sec:curve_reconstruct}
We consider an i.i.d. sample $(Y_i, X_i)_{i=1,\dots,n}$ drawn from the distribution of $(Y,X)$. Thus, for each $i=1,\dots,n$, the pair $(Y_i,X_i)$ satisfies
\begin{equation}\label{eq:Y_i_first}
    Y_i = \int_{0}^{1}\beta(t)X_i(t)\mathrm{d}t + \epsilon_i,
\end{equation}
where the errors $(\epsilon_i)_{i=1,\dots,n}$ are assumed to be i.i.d., centered, with finite variance $\sigma^{2}$, and independent of the covariates $(X_i)_{i=1,\dots,n}$.

\medskip\noindent
The first step of our methodology is to reconstruct the true unobserved curves $X_{i}$, from their noisy and discrete observations, $Z_{i}$. To achieve this, we approximate each curve $X_{i}$ by a finite series expansion in a suitable basis. Given our assumptions that the true curves are periodic, centered, second-order stationary and in $L^{2}([0,1])$, the Fourier basis provides a natural and efficient choice for this decomposition. This choice is further justified by the discussion in the next section (see Section \ref{sec:risk_measure}).
\medskip
Let us first define $N_{n,p} \in \mathbb{N}\backslash\{0\}$ and $\mathcal{M}_{n,p}=\{1,\ldots,N_{n,p}\}$.
\medskip
We denote
\begin{equation}\label{eq:S_m}
    S_m:=\text{Span}\{\phi_1,\phi_2,\ldots,\phi_{2m+1}\},
\end{equation}
for all $m \in \mathcal{M}_{n,p}$ with
\[\phi_1 = 1,\quad
\phi_{2j}(\cdot) = \sqrt{2}\cos\bigl(2\pi j\cdot\bigr),\quad
\phi_{2j+1}(\cdot) = \sqrt{2}\sin\bigl(2\pi j\cdot\bigr).\]
We also define $D_m$ the dimension of $S_m$ (in particular, $D_m=2m+1$). As we only consider $D_{m}$ elements, $S_{m}$ represents a finite dimension space spanned by a truncated Fourier basis.

\medskip\noindent
Since the $Z_i$’s are only known at discrete points $t_h$, it’s impossible to compute $\langle X_{i},\phi_{j}\rangle_{L^{2}}$ directly. Hence we define
\begin{equation}\label{eq:3}
\widetilde{x}_{i,j}=\frac{1}{p}\sum_{h=0}^{p-1}Z_i(t_h)\phi_j(t_h).
\end{equation}
This quantity is a discrete approximation of the inner product 
$\langle X_i, \phi_j \rangle_{L^2}$, based on noisy observations on a regular grid, and can be interpreted as a Riemann sum in the ideal continuous setting. This particular scheme and choice of grid create a convenient discrete orthogonality property for the chosen Fourier basis under the condition that $D_{N_{n,p}}<p$. This significantly simplifies the mathematical proofs required for the main results which are detailed in the appendix (Lemma \ref{lem:A1} and Lemma \ref{lem:B1}). Then, we can define the reconstructed curves as follows :
\begin{equation}\label{eq:4}
\widetilde{X}_{i}(t):=\sum_{j=1}^{D_{N_{n,p}}}\widetilde{x}_{i,j}\,\phi_j(t),
\qquad
\text{for all }i=1,\dots,n \quad \text{and for all } t\in[0,1].
\end{equation}
Thus, $\widetilde X_i$ is a random element of the finite-dimensional space
$S_{N_{n,p}}$, entirely determined by the discrete noisy sample
$\bigl(Z_i(t_0),\dots,Z_i(t_{p-1})\bigr)$.

\medskip\noindent
We also denote by $\widetilde X$ a generic random element having the same
distribution as any $\widetilde X_i$. Equivalently, if
$(X,\eta_0,\dots,\eta_{p-1})$ is a generic copy of
$(X_i,\eta_{i,0},\dots,\eta_{i,p-1})$, with
$Z(t_h)=X(t_h)+\eta_h$, then
\[\widetilde X(t)=\sum_{j=1}^{D_{N_{n,p}}}\widetilde{x}_j\phi_j(t),\qquad t\in[0,1],\]
with 
\[\widetilde{x}_{j}=\frac1p\sum_{h=0}^{p-1} Z(t_h)\phi_j(t_h).\]
In particular, $\widetilde X_1,\dots,\widetilde X_n$ are i.i.d. copies of
$\widetilde X$.

\subsection{Estimating the slope function}\label{sec:2.2}
In this section, we build an estimator of the true slope function $\beta$. Let $m\in\mathcal{M}_{n,p}$. First of all, we define the following least-square criterion :
\begin{equation}\label{eq:gamma_n}
   \gamma_n(f) := \frac{1}{n}\sum_{i=1}^{n}(Y_i - \langle f,\widetilde{X}_i\rangle_{L^{2}})^{2}, \qquad
   f\in S_m.
\end{equation}
Because the trajectories $X_i$ are unobserved, the ideal least-squares contrast (defined using $X_i$ rather than $\widetilde{X}_i$ in \eqref{eq:gamma_n}) cannot be directly minimized. We therefore use the least-square criterion $\gamma_n$ based on the reconstructed curves $(\widetilde{X}_i)$'s which are the only available curves. This induces a difference with the ideal model, which is quantified by the reconstruction error terms appearing in the oracle inequalities introduced in Section \ref{sec:main_results}.

\medskip\noindent
We wish to minimize $\gamma_n$ to get an estimate of $\beta$. More particularly, we define $\widehat{\beta}_m$ as
\begin{equation}
    \widehat{\beta}_{m} \in \operatorname*{argmin}_{f\in S_{m}} \gamma_n(f).
\end{equation}
Any $f\in S_{m}$ can be rewritten as $f=\sum_{j=1}^{D_m}v_j\phi_{j}$ where $v=(v_1,v_2,\ldots,v_{D_m})$ is in $\mathbb{R}^{D_m}$. Therefore, minimizing $\gamma_n$ in $S_m$ is equivalent to minimizing 
\begin{equation}
    F(v):=\frac{1}{n}\sum_{i=1}^{n}\left(Y_i - \sum_{j=1}^{D_m}v_j\langle \phi_{j},\widetilde{X}_{i}\rangle_{L^{2}}\right)^{2} \qquad \text{in } \mathbb{R}^{D_m}.
\end{equation}
Considering 
\begin{equation}\label{eq:Phi_matrix}
    \Phi_m:= \left(\frac{1}{n}\sum_{i=1}^{n}\widetilde{x}_{i,k}\widetilde{x}_{i,j}\right)_{1\leq   \textit{j,k} \leq D_m},
\end{equation}
and 
\begin{equation}\label{eq:b_vector}
    b:=\left(\frac{1}{n}\sum_{i=1}^{n}Y_i\widetilde{x}_{i,k}\right)_{1\leq k \leq D_m},
\end{equation}
we have the following result,
\begin{equation}
    \nabla F(v) = -2b + 2\Phi_m v,
\end{equation}
where $\nabla F$ denotes the gradient of $F$. The computations that lead to this result are detailed in Lemma \ref{lem:grad}. If $\Phi_m$ is invertible we find that the minimum of $F$, $\widetilde{\alpha}$, is such that $\widetilde{\alpha} = (\Phi_m)^{-1}b$ and $\widehat{\beta}_m = \sum_{j=1}^{D_m}\widetilde{\alpha}_j\phi_j$.

\medskip\noindent
However, $\Phi_{m}$ is  built from only $n$ observations projected onto a $D_{m}$-basis. If the model dimension exceeds the sample size ($D_m>n$) or if some projected coordinates are linearly dependent in the data, $\Phi_{m}$ loses rank, giving at least one zero eigenvalue so its determinant vanishes and it cannot be inverted.
Even when $D_{m}\leq n$, any direction that shows nearly zero empirical variance still makes the matrix singular or ill-conditioned.

\medskip\noindent
To deal with this issue, let us define the set $G_m:=\{\widehat{\lambda}_m^{(p)}\geq s_{n}\}$ where $\widehat{\lambda}_m^{(p)}$ is the smallest eigenvalue of $\Phi_m$ and
\begin{equation}\label{eq:s_n}
    s_{n}=\frac{2}{n^{\alpha}}\left(1-\frac{1}{\sqrt{\ln{n}}}\right) \qquad \alpha>0.
\end{equation}
This form of $s_{n}$ is inspired by the one used in Brunel and Roche \cite{Brunel2015penalized} introduced in Section 3.2. This condition guarantees positive definiteness and hence invertibility of $\Phi_{m}$ on $G_m$. Moreover it enables us to show that the probability of the event $G_m$ not happening is small, for all $m\in \mathcal{M}_{n,p}$. 

\medskip\noindent
Now, let $\overline{G}:=\bigcap_{m\in\mathcal{M}_{n,p}}G_m$. On $\overline{G}$ we can compute the least squares estimate $\widehat{\beta}_m$ of $\beta$ in $S_m$ for all $m\in\mathcal{M}_{n,p}$. Hence on this set, we can define an integer \[\widehat{m}\in \argmin_{m\in\mathcal{M}_{n,p}}(\gamma_n(\widehat{\beta}_m) + \mathrm{pen}(m)),\]
where 
\begin{equation}\label{eq:penalty}
    \mathrm{pen}(m)=\theta(1+\delta)D_m\sigma^{2}/n \qquad \text{for }\theta>4\text{ and }\delta>0.
\end{equation}
This penalty is crucial for model selection as it accounts for model complexity and thus helps to avoid overfitting. Moreover, we chose this particular penalty because its  structure is specifically designed to control and bound certain terms that appear in the derivation of the oracle inequality (see Theorem \ref{thm:3.1}). It is also inspired by the work of Brunel and Roche (\cite{Brunel2015penalized}, Equation (8)) and Brunel et al. (\cite{brunel2013nonasymptoticadaptivepredictionfunctional}, Section 1.4).

\medskip\noindent
Finally, we set the estimate of the true slope function $\beta$ as 
\begin{equation}\label{eq:widetilde_beta}
   \widetilde{\beta} = \left\{
       \begin{array}{ll}
           \widehat{\beta}_{\widehat{m}} & \text{on } \overline{G}, \\
           0 & \text{on } \overline{G}^{c}.
       \end{array}
   \right.
\end{equation}

\subsection{Risk measure}\label{sec:risk_measure}
This section introduces the criterion used to evaluate the performance of the proposed estimator. We start by introducing the covariance operator of the true unobserved random curve $X$:
\begin{equation}\label{eq:Gamma}
    \Gamma f(\cdot)=\mathbb{E}\left(\langle f,X\rangle_{L^{2}} X(\cdot)\right), \qquad f\in L^{2}([0,1]).
\end{equation}
Under our assumptions that the curves $(X_i)$ are periodic, centered and second-order stationary, the eigenfunctions of $\Gamma$ coincide with the Fourier basis (as discussed in Section~2.1 of Comte and Johannes~\cite{comte2010adaptive}). Denoting by $(\lambda_j)_{j\geq 1}$ the associated eigenvalues, we have for each $j\geq 1$,
\[\Gamma \phi_j = \lambda_j \phi_j,\qquad \langle \phi_j, \phi_k \rangle_{L^{2}} = \delta_{jk}.\]
Let us define the kernel $K$ associated with the operator $\Gamma$ :
\begin{equation}\label{eq:kernel_Gamma}
    K(s,t)=\mathbb{E}\left(X(s)X(t)\right), \qquad s,t\in[0,1].
\end{equation}
We also introduce the empirical version of this operator, that we denote $\Gamma_{n}$, and which is associated with the kernel
\begin{equation}
    K_{n}(s,t)=\frac{1}{n}\sum_{i=1}^{n}X_i(s)X_i(t), \qquad s,t\in[0,1].
\end{equation}
In particular for all $f\in L^{2}([0,1])$,
\begin{equation}
    \Gamma_n f(\cdot)=\int_{0}^{1}K_n(.,t)f(t)dt=\frac{1}{n}\sum_{i=1}^{n}\langle f,X_i\rangle_{L^{2}}X_i(\cdot).
\end{equation}
We next define the covariance kernel of the reconstructed process $\widetilde X$ by
\begin{equation}\label{eq:kernel_tilde}
    \widetilde{K}(s,t)=\mathbb{E}\left(\widetilde{X}(s)\widetilde{X}(t)\right), \qquad s,t\in[0,1],
\end{equation}
and its associated covariance operator
\begin{equation}
    \widetilde{\Gamma}f(\cdot)
    =\int_{0}^{1}\widetilde{K}(\cdot,t)f(t)dt
    =\mathbb{E}\left(\langle f,\widetilde{X}\rangle_{L^{2}}\widetilde{X}(\cdot)\right),
    \qquad f\in L^{2}([0,1]).
\end{equation}
Given the reconstructed sample $(\widetilde X_i)_{i=1,\ldots,n}$, we also consider the empirical covariance kernel
\begin{equation}
    \widetilde{K}_{n}(s,t)=\frac{1}{n}\sum_{i=1}^{n}\widetilde{X}_i(s)\widetilde{X}_i(t),
    \qquad s,t\in[0,1],
\end{equation}
together with the corresponding empirical covariance operator $\widetilde{\Gamma}_{n}$, defined for any $f\in L^{2}([0,1])$ by
\begin{equation}
    \widetilde{\Gamma}_{n} f(\cdot)
    = \int_{0}^{1} \widetilde{K}_{n}(\cdot,t)f(t)dt
    = \frac{1}{n}\sum_{i=1}^{n} \langle f,\widetilde{X}_i\rangle_{L^{2}} \widetilde{X}_i(\cdot).
\end{equation}
These operators naturally induce the following bilinear forms (and associated seminorms)
\begin{equation}
    \langle f,g\rangle_{\Gamma_n} = \langle \Gamma_n f,g\rangle_{L^{2}} = \frac{1}{n}\sum_{i=1}^{n}\langle f,X_i\rangle_{L^{2}}\langle g, X_i\rangle_{L^{2}},
\end{equation}
\begin{equation}\label{eq:norme_emp_tilde}
    \langle f,g\rangle_{\widetilde{\Gamma}_{n}}
    = \langle \widetilde{\Gamma}_{n} f, g \rangle_{L^{2}}
    = \frac{1}{n}\sum_{i=1}^{n} \langle f,\widetilde{X}_i\rangle_{L^{2}}\langle g,\widetilde{X}_i\rangle_{L^{2}},
\end{equation}
\begin{equation}\label{eq:norme_tilde}
    \langle f,g\rangle_{\widetilde{\Gamma}}
    = \langle \widetilde{\Gamma} f, g \rangle_{L^{2}}
    = \mathbb{E}\left(\langle f,\widetilde{X}\rangle_{L^{2}}\langle g,\widetilde{X}\rangle_{L^{2}}\right),
\end{equation}
\begin{equation}\label{eq:semi_norm_gamma}
    \langle f,g\rangle_{\Gamma}
    = \langle \Gamma f, g \rangle_{L^{2}}
    = \mathbb{E}\left(\langle f,X\rangle_{L^{2}}\langle g,X\rangle_{L^{2}}\right).
\end{equation}
To quantify the performance of the estimator, we study the prediction error under the distribution of the true process. More specifically, we consider a new curve $X_{n+1}$ with a scalar response $Y_{n+1}$ independent of the sample with $(X_{n+1},Y_{n+1}$ satisfying \eqref{eq:Y1}. We wish to quantify the following error 
\begin{equation}
\begin{aligned}
    \mathbb{E}\left(\left(\langle \widetilde{\beta},X_{n+1}\rangle_{L^{2}} - \mathbb{E}\left(Y_{n+1}|X_{n+1}\right)\right)^{2}\big| X_1,\cdots,X_{n}\right) &= \mathbb{E}\left(\left(\langle \widetilde{\beta}-\beta,X_{n+1}\rangle_{L^{2}}\right)^{2}\big| X_1,\cdots,X_{n}\right)\\
    &=\|\widetilde{\beta}-\beta\|_{\Gamma}^{2},
\end{aligned}
\end{equation}
This criterion quantifies the average discrepancy between predictions based on $\widetilde{\beta}$ and those based on $\beta$, when both are evaluated on the same unobserved curve.

\begin{remark}
In the functional linear model, the quantity of interest is the prediction 
$\langle \beta, X \rangle_{L^2}$, rather than the function $\beta$ itself. For this reason, the natural risk is
\[\mathbb{E}\big(\langle \widetilde{\beta}-\beta, X \rangle_{L^2}^2\big)
= \mathbb{E}\left(\|\widetilde{\beta}-\beta\|_{\Gamma}^2\right),\]
which measures the prediction error. The $L^2$-norm does not take into account the distribution of the covariate $X$, and in particular directions corresponding to small eigenvalues of the covariance operator $\Gamma$ have little influence on the prediction, but may contribute significantly to the $L^2$-error. 
\end{remark}

\section{Main results}\label{sec:main_results}
In this section, we derive an upper bound of $\mathbb{E}(\|\widetilde{\beta} - \beta\|_{\widetilde{\Gamma}}^{2})$ and $\mathbb{E}(\|\widetilde{\beta} - \beta\|_{\Gamma}^{2})$ in the oracle setting.

\medskip\noindent
Before stating our assumptions, we recall the notion of a sub-Gaussian random variable from Vershynin \cite{vershynin2026high}, used throughout this section. A random variable $A$ is sub-Gaussian with variance factor $v$ if for all $t\in\mathbb{R}$
\[\mathbb{E}\left(e^{t(A-\mathbb{E}(A))}\right)\leq e^{\frac{t^{2}v}{2}}.\]

\medskip\noindent
Throughout this section, we work under the following assumptions:
\begin{itemize}
    \item (\textbf{H1}) For all $i\in \{1,...,n\}$ and for all $(c_l)_{l\geq 1}\in l^{2}$, $\sum_{l\geq 1}c_l \langle X_i,\phi_l\rangle_{L^{2}}$ is sub-Gaussian with variance factor $K_{1}^{2}\sum_{l\geq 1}c_{l}^{2}\lambda_{l}$, where $K_1$ is a positive constant.

    \item (\textbf{H2}) For all $i=1,...,n$ and $h=0,...,p-1$, $\eta_{i,h}$ is sub-Gaussian with variance factor $\tau^{2}$.
    \item (\textbf{H3}) We assume that the eigenvalues of $\Gamma$, namely the $\lambda_j$'s decrease in a polynomial way : there exist two constants $c'>0$ and $ a>1/2$ such that for all $j\geq 1$, 
    \[\lambda_{j}\leq c'j^{-2a}.\]
    
\end{itemize}
Assumption \textbf{(H1)} is satisfied under the following set of sufficient conditions:

\medskip\noindent
Assume that, for all $i \in \{1,\dots,n\}$ and all $j \geq 1$, the Fourier coefficients
$\langle X_i,\phi_j\rangle_{L^{2}}$ are sub-Gaussian with variance factor $\lambda_j$. This assumption appears repeatedly in the works of Brunel and Roche \cite{Brunel2015penalized} and Brunel et al. \cite{brunel2013nonasymptoticadaptivepredictionfunctional}. If in addition, for each $i$, the random variables $\langle X_i,\phi_j\rangle_{L^{2}}$ and $\langle X_i,\phi_k\rangle_{L^{2}}$ are independent whenever $j \neq k$, then Assumption \textbf{(H1)} holds. We emphasize however, that these conditions may be restrictive in practice. In particular, the independence of the Fourier coefficients is a strong requirement that is generally satisfied only in specific settings, such as Gaussian processes represented in their Karhunen--Loève basis. Nevertheless, this independence assumption can be dispensed when Hypothesis \textbf{(H3)} holds for some $a>1$, thereby providing an alternative set of sufficient conditions for Assumption \textbf{(H1)}.

\medskip\noindent
Assumption (\textbf{H2}) is primarily introduced to derive concentration inequalities. It is a mild and reasonable condition in practice when the observational noise is light-tailed.

\medskip\noindent
Assumption (\textbf{H3}) imposes a polynomial decay on the eigenvalues of the covariance operator $\Gamma$. Such a condition is standard in functional data analysis (see, e.g., Brunel and Roche \cite{Brunel2015penalized}, Cardot and Johannes \cite{cardot2010thresholding}, and Comte and Johannes \cite{comte2010adaptive}) and reflects the regularity of the stochastic process $X$. In particular, it implies that most of the variability of $X$ is captured by the first Fourier modes, whereas the contribution of higher-frequency components decreases progressively. Moreover, larger values of the parameter $a$ correspond to smoother sample paths of $X$. For example, Brownian motion satisfies this assumption with $a=1$, while integrated Brownian motion corresponds to $a=2$, consistent with its smoother sample paths.

\begin{remark}
Since both $\Gamma$ and $\widetilde{\Gamma}$ are diagonal operators on $\mathrm{Span}\{\phi_1,\phi_2,\ldots,\phi_{2N_{n,p}+1}\}$ with strictly positive eigenvalues, the orthogonal projection of $\beta$ onto $S_m$ with respect to the scalar product $\langle\cdot,\cdot\rangle_{\Gamma}$ coincides with its orthogonal projection with respect to $\langle\cdot,\cdot\rangle_{\widetilde{\Gamma}}$. We denote this common projection by $\beta^{(m)}$. The detailed proof of this result is established in Appendix \ref{sec:remark_ortho}.
\end{remark}
First let us state the following result which will be useful to derive oracle inequalities :
\begin{prop}
    \label{prop:1}

Assume that there exists $l>4$ such that $\upsilon_{l}:=\mathbb{E}(|\epsilon|^{l})<\infty$ and that $\mathbb{E}\left(\langle \beta, X\rangle_{L^{2}}^{4}\right)<\infty$. Moreover we suppose that assumptions  $\textbf{(H1)}, \textbf{(H2)}, \textbf{(H3)}$ are verified. If $$D_{N_{n,p}} < \min{\left(\frac{n}{\ln^{2}n},p\right)},$$
\[\lambda_{D_{N_{n,p}}}\geq \frac{2}{n^{\alpha}}\]and $\alpha>2a$ in the definition of $s_{n}$ (see in Equation \eqref{eq:s_n}), then for all slope $\beta\in L^{2}([0,1])$ we have
\begin{equation}
\begin{aligned}
    \mathbb{E}(\|\widetilde{\beta}-\beta\|_{\widetilde{\Gamma}_n}^{2}) 
    &\leq C_{0} \biggl[ \min_{m \in \mathcal{M}_{n,p}} \Bigl( \mathbb{E}(\|\beta^{(m)}-\beta\|_{\widetilde{\Gamma}_n}^{2}) + \mathrm{pen}(m) \Bigr) \\
    &\quad + \|\beta\|_{L^{2}}^{2} \mathbb{E}(\|\widetilde{X}-X\|_{L^{2}}^{2}) \\
    &\quad + \frac{\|\beta\|_{L^{2}}^{2}}{n} \left( \mathbb{E} \|\widetilde{X} - X\|_{L^2}^4 \right)^{1/2} + C_{\beta}\left(\frac{1}{n}+\frac{1}{np}\right) \biggr],
\end{aligned}
\end{equation}
where $\mathrm{pen}(m)$ is defined in (\ref{eq:penalty}), $C_{0}$ is a positive constant which depends on $K_1, \theta, l, \tau_{l}, \delta, \sigma^{2}$. Moreover we have: 
\begin{equation}\label{eq:c_beta}
    C_{\beta}=\mathbb{E}(\langle\beta,X_1\rangle_{L^{2}}^{4})^{1/2}+\|\beta\|_{L^{2}}^{2}+1.
\end{equation}
\end{prop}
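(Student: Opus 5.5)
The plan is to follow the model-selection argument of Brunel and Roche for penalized least squares, working on $\overline{G}$ and its complement separately. First, rewrite the response in terms of the reconstructed curves:
\[Y_i=\langle \beta,\widetilde X_i\rangle_{L^2}+\epsilon_i+\xi_i,\qquad \xi_i:=\langle \beta, X_i-\widetilde X_i\rangle_{L^2}.\]
This is a regression on $\widetilde X_i$ with an extra ``reconstruction noise'' $\xi_i$, which is not centered conditionally on $\widetilde X_i$.

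On $\overline{G}$, the defining property of $\widehat m$ gives, for any fixed $m\in\mathcal M_{n,p}$,
\[\gamma_n(\widehat\beta_{\widehat m})+\mathrm{pen}(\widehat m)\le \gamma_n(\beta^{(m)})+\mathrm{pen}(m).\]
Expanding the squares yields
\[\|\widetilde\beta-\beta\|_{\widetilde\Gamma_n}^2\le \|\beta^{(m)}-\beta\|_{\widetilde\Gamma_n}^2+2\nu_n(\widetilde\beta-\beta^{(m)})+2R_n(\widetilde\beta-\beta^{(m)})+\mathrm{pen}(m)-\mathrm{pen}(\widehat m),\]
with
\[\nu_n(f)=\tfrac1n\sum_i\epsilon_i\langle f,\widetilde X_i\rangle_{L^2},\qquad R_n(f)=\tfrac1n\sum_i\xi_i\langle f,\widetilde X_i\rangle_{L^2}.\]
The term $R_n$ is handled by Cauchy--Schwarz and $2xy\le x^2/c+cy^2$:
\[2R_n(f)\le \tfrac14\|f\|_{\widetilde\Gamma_n}^2+\tfrac4n\sum_i\xi_i^2,\]
and
\[\mathbb E\tfrac1n\sum_i\xi_i^2\le\|\beta\|_{L^2}^2\,\mathbb E\|\widetilde X-X\|_{L^2}^2.\]
This produces the second term of the bound. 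For $\nu_n$, I write
\[2\nu_n(f)\le \tfrac14\|f\|_{\widetilde\Gamma_n}^2+4\sup_{g\in S_{m\vee\widehat m},\,\|g\|_{\widetilde\Gamma_n}=1}\nu_n(g)^2.\]
Since $\epsilon$ is independent of $(\widetilde X_i)$, conditionally on the design this supremum is the squared norm of the projection of $\epsilon/\sqrt n$ onto a $D_{m\vee \widehat m}$-dimensional subspace of $\mathbb R^n$. Its conditional mean is then $\sigma^2 D_{m\vee\widehat m}/n$. A Rosenthal/Hanson--Wright type deviation bound using $\upsilon_l<\infty$ with $l>4$ shows that
\[\mathbb E\Big(\sup \nu_n^2-\theta(1+\delta)D_{m'}\sigma^2/(4n)\Big)_+\]
summed over $m'$ is $O(1/n)$. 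This is where $\theta>4$ is used. Combining with $\|\widetilde\beta-\beta^{(m)}\|^2\le 2\|\widetilde\beta-\beta\|^2+2\|\beta^{(m)}-\beta\|^2$ and absorbing the $\tfrac14$ terms gives the oracle part with constant $C_0$.

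On $\overline{G}^c$, $\widetilde\beta=0$, so the loss is
\[\|\beta\|_{\widetilde\Gamma_n}^2\le 2\|\beta\|_{\Gamma_n}^2+2\|\beta\|_{L^2}^2\tfrac1n\sum_i\|\widetilde X_i-X_i\|_{L^2}^2.\]
Applying Cauchy--Schwarz with $\mathbb P(\overline{G}^c)$, I bound
\[\mathbb E(\|\beta\|_{\Gamma_n}^2\mathds 1_{\overline G^c})\le \big(\mathbb E\langle\beta,X\rangle^4\big)^{1/2}\mathbb P(\overline G^c)^{1/2}\]
plus a centered-fluctuation term. Similarly, the reconstruction part is at most
\[\|\beta\|^2_{L^2}\big(\mathbb E\|\widetilde X-X\|^4\big)^{1/2}\mathbb P(\overline G^c)^{1/2}.\]
It therefore suffices to show $\mathbb P(\overline G^c)\lesssim 1/n^2$ (or $\mathbb P(\overline G^c)^{1/2}\lesssim 1/n$). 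This yields the third term and the $C_\beta/n$ contribution.

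The main obstacle is the estimate of $\mathbb P(\overline G^c)$, which is a lower-tail bound on the smallest eigenvalue of $\Phi_m$. By the discrete orthogonality (Lemmas \ref{lem:A1}, \ref{lem:B1}, valid since $D_{N_{n,p}}<p$), the population version of $\Phi_m$ is $\mathbb E\widetilde x_j\widetilde x_k$, which involves aliasing terms $\sum_{l\equiv \pm j[p]}\lambda_l$ plus $\tau^2/p$ on the diagonal. Its smallest eigenvalue is therefore at least about
\[\lambda_{D_{N_{n,p}}}\ge 2n^{-\alpha}.\]
I would use the sub-Gaussianity from (H1)--(H2) to show that the rescaled vectors $\widetilde\Gamma^{-1/2}\widetilde x_i$ are sub-Gaussian. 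A matrix Bernstein/Chernoff bound, using $D_{N_{n,p}}<n/\ln^2 n$, then gives
\[\mathbb P\big(\lambda_{\min}(\widetilde\Gamma^{-1/2}\Phi_m\widetilde\Gamma^{-1/2})<1-1/\sqrt{\ln n}\big)\le D\,e^{-c n/(D\ln n)}\le n^{-k}.\]
The condition $\alpha>2a$ is used to ensure the threshold $s_n$ is compatible with the eigenvalue decay from (H3). Finally, the extra $C_\beta/(np)$ term comes from the $\tau^2/p$ noise contributions to $\mathbb E\xi^2$-type cross moments when centering.
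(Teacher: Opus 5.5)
Your skeleton is the paper's: the basic inequality on $\overline{G}$, Young's inequality on $\nu_n$ and on the reconstruction cross-term, a moment bound for $\sup\nu_n^2$ using $l>4$, and Cauchy--Schwarz on $\overline{G}^c$. Your treatment of $\overline{G}^c$ through $\|\beta\|_{\Gamma_n}^2$ and $\frac1n\sum_i\|\widetilde X_i-X_i\|_{L^2}^2$ is valid, and slightly simpler than the paper's route through $\widetilde C_\beta$ and Lemma~\ref{lem:C_beta}. The genuine gap is the step you yourself call the main obstacle. You need $\mathbb{P}(\overline{G}^c)\lesssim n^{-2}$, and a matrix Bernstein/Chernoff bound cannot give this under the hypothesis $D_{N_{n,p}}<n/\ln^2 n$. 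For $\sum_i U_i^{(m)}U_i^{(m)T}$, the scale parameter $R$ must dominate $\|U_i^{(m)}\|_2^2$, whose mean is $D_m$. With deviation level $\delta=1/\sqrt{\ln n}$ the exponent is therefore at best $\delta^2 n/(2R)\le n/(2D_m\ln n)$. When $D_m$ is close to $n/\ln^2 n$, which the hypothesis allows, this is only about $\frac12\ln n$, so the bound is of order $D_m n^{-1/2}$ and does not even tend to zero. Your inequality $D e^{-cn/(D\ln n)}\le n^{-k}$ therefore fails for the $k$ you need, and the union over $m\in\mathcal{M}_{n,p}$ costs a further factor $D_{N_{n,p}}$.

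What closes the argument is a concentration bound whose exponent does not degrade with the dimension. The paper applies the $\varepsilon$-net result for matrices with independent, isotropic, sub-Gaussian rows (Theorem~4.6.1 of Vershynin \cite{vershynin2026high}) to $\Psi_m=\frac1n A_m^TA_m$. This gives $\|\Psi_m-I\|_{op}\le (K_1^2+1)\max(\widetilde\delta,\widetilde\delta^2)$ with $\widetilde\delta=C'\sqrt{D_m/n}+t/\sqrt n$, outside an event of probability $2e^{-t^2}$. Here the dimension enters only additively, and $\sqrt{D_m/n}<1/\ln n\ll 1/\sqrt{\ln n}$. One may therefore take $t\asymp\sqrt{n/\ln n}$ and obtain $\mathbb{P}(\widehat\mu_m^{(p)}<1-1/\sqrt{\ln n})\le 2\exp(-n/(4(K_1^2+1)^2\ln n))$, which is Lemma~\ref{lem:A5}. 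Combined with $\widetilde\lambda_j\ge\lambda_{D_{N_{n,p}}}\ge 2n^{-\alpha}$ (the diagonal covariance comes from Lemma~\ref{lem:A2}, not Lemma~\ref{lem:B1}) and Lemma~\ref{lem:A4}, this yields the super-polynomial bound of Lemma~\ref{lem:A3}.

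Two smaller points. First, with $\tfrac14$ in both Young inequalities you get $\tfrac12\|\widetilde\beta-\beta^{(m)}\|_{\widetilde\Gamma_n}^2\le\|\widetilde\beta-\beta\|_{\widetilde\Gamma_n}^2+\|\beta-\beta^{(m)}\|_{\widetilde\Gamma_n}^2$, which cancels the left-hand side exactly. Use a smaller constant in the $R_n$ step, or $1/\theta$ in both with threshold $(1+\delta)\sigma^2D_{m\vee m'}/n$ as the paper does. Second, Hanson--Wright is unavailable because $\epsilon$ only has $l>4$ moments; the moment-based bound (Corollary~5.1 of \cite{baraud2000model}, used via Lemma~1 of \cite{brunel2013nonasymptoticadaptivepredictionfunctional}) is the right tool.
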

The following result provides an oracle inequality with respect to the norm $\|\cdot\|_{\widetilde{\Gamma}}$, thereby evaluating the prediction risk based on the reconstructed curves.
\begin{theorem}
\label{thm:3.1}
Suppose that the assumptions of Proposition \ref{prop:1} hold. Then for all slope $\beta\in L^{2}([0,1])$,
\begin{equation}
\begin{aligned}
    \mathbb{E}(\|\widetilde{\beta}-\beta\|_{\widetilde{\Gamma}}^{2}) 
    &\leq C_{1} \biggl[ \min_{m \in \mathcal{M}_{n,p}} \Bigl( \mathbb{E}(\|\beta^{(m)}-\beta\|_{\widetilde{\Gamma}}^{2}) + \mathrm{pen}(m) \Bigr) \\
    &\quad + \|\beta\|_{L^{2}}^{2} \mathbb{E}(\|\widetilde{X}-X\|_{L^{2}}^{2}) \\
    &\quad + \frac{\|\beta\|_{L^{2}}^{2}}{n} \left( \mathbb{E} \|\widetilde{X} - X\|_{L^2}^4 \right)^{1/2} + C_{\beta}\left(\frac{1}{n}+\frac{1}{np}\right) \biggr],
\end{aligned}
\end{equation}
where $\mathrm{pen}(m)$ is defined in (\ref{eq:penalty}), $C_{1}$ is a positive constant which depends on $\theta, l, \tau_{l}, \delta, \sigma^{2}$ and $C_{\beta}$ is defined in (\ref{eq:c_beta}).
\end{theorem}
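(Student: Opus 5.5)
The plan is to transfer Proposition \ref{prop:1} from the empirical seminorm $\|\cdot\|_{\widetilde{\Gamma}_n}$ to the population seminorm $\|\cdot\|_{\widetilde{\Gamma}}$, in the spirit of Brunel and Roche \cite{Brunel2015penalized}. We split $\mathbb{E}\|\widetilde\beta-\beta\|_{\widetilde\Gamma}^2$ on the event $\overline{G}$ and its complement, and on $\overline{G}$ further split $\widetilde\beta-\beta=(\widehat\beta_{\widehat m}-\beta^{(N_{n,p})})+(\beta^{(N_{n,p})}-\beta)$. The first piece lies in the finite-dimensional space $S_{N_{n,p}}$. The second is $\widetilde\Gamma$-orthogonal to $S_{N_{n,p}}$ by the Remark on $\beta^{(m)}$. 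Moreover $\widetilde X\in S_{N_{n,p}}$, so $\|\beta^{(N_{n,p})}-\beta\|_{\widetilde\Gamma}=\|\beta^{(N_{n,p})}-\beta\|_{\widetilde\Gamma_n}=0$ and only the first piece matters. This reduces everything to comparing $\|f\|_{\widetilde\Gamma}$ and $\|f\|_{\widetilde\Gamma_n}$ uniformly over $f\in S_{N_{n,p}}$.

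The key step is a norm-equivalence event
\[
\Omega:=\Bigl\{\sup_{f\in S_{N_{n,p}}\setminus\{0\}}\Bigl|\frac{\|f\|_{\widetilde\Gamma_n}^2}{\|f\|_{\widetilde\Gamma}^2}-1\Bigr|\le \tfrac12\Bigr\}.
\]
On $\Omega\cap\overline G$ we have $\|\widetilde\beta-\beta\|_{\widetilde\Gamma}^2\le 2\|\widetilde\beta-\beta\|_{\widetilde\Gamma_n}^2$, and Proposition \ref{prop:1} then applies directly. In the resulting right-hand side, $\mathbb{E}\|\beta^{(m)}-\beta\|_{\widetilde\Gamma_n}^2=\|\beta^{(m)}-\beta\|_{\widetilde\Gamma}^2$, because $\beta^{(m)}$ is deterministic and the $\widetilde X_i$ are i.i.d. copies of $\widetilde X$. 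This yields the claimed bound up to constants.

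To control $\mathbb{P}(\Omega^c)$, we whiten with $\widetilde\Gamma^{-1/2}$ restricted to $S_{N_{n,p}}$. By the discrete orthogonality of Lemmas \ref{lem:A1} and \ref{lem:B1}, the operator $\widetilde\Gamma$ is diagonal in the Fourier basis there, with eigenvalues of order $\lambda_j+\tau^2/p\ge\lambda_{D_{N_{n,p}}}>0$ (up to aliasing terms). The event $\Omega^c$ is then $\{\|\Psi_n-I\|_{op}>1/2\}$, where $\Psi_n$ is the empirical covariance of the whitened vectors $\widetilde\Gamma^{-1/2}\widetilde x_i$. These vectors are sub-Gaussian with uniformly bounded variance factor by (H1) and (H2), since the coefficients $\widetilde x_{i,j}$ are linear combinations of $\langle X_i,\phi_l\rangle$ (aliased) and of the noise. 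A matrix Bernstein inequality, or a covering argument on the sphere of $\mathbb{R}^{D_{N_{n,p}}}$, then gives $\mathbb{P}(\Omega^c)\le C\exp(-c\,n/D_{N_{n,p}})$ up to polynomial factors. Under $D_{N_{n,p}}<n/\ln^2 n$ this is $O(n^{-q})$ for any $q$. The same ingredients bound $\mathbb{P}(\overline G^c)$, exactly as is needed for Proposition \ref{prop:1}; we reuse that bound.

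On $\Omega^c\cup\overline G^c$ we use crude bounds. On $\overline G^c$, $\widetilde\beta=0$ and the contribution is $\|\beta\|_{\widetilde\Gamma}^2\mathbb{P}(\overline G^c)\le 2(\mathbb{E}\langle\beta,X\rangle^4)^{1/2}\mathbb{P}(\overline G^c)+2\|\beta\|^2\mathbb{E}\|\widetilde X-X\|^2$. On $\Omega^c\cap\overline G$ we bound $\|\widehat\beta_{\widehat m}\|_{\widetilde\Gamma}^2\le\|\widetilde\Gamma\|_{op}\|\widehat\beta_{\widehat m}\|_{L^2}^2\le\|\widetilde\Gamma\|_{op}\,\|b\|_2^2/s_n^2$, using $\widehat\lambda_m^{(p)}\ge s_n$. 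We then apply Cauchy--Schwarz: $\mathbb{E}\|b\|^4$ is polynomial in $n$ thanks to the moment assumptions ($l>4$ for $\epsilon$, fourth moments of $\langle\beta,X\rangle$, and sub-Gaussianity of $\widetilde x$), while $s_n^{-2}\asymp n^{2\alpha}$. Hence this term is $n^{2\alpha+O(1)}\mathbb{P}(\Omega^c)^{1/2}$, which is absorbed into $C_\beta/n$ because $\mathbb{P}(\Omega^c)$ decays faster than any polynomial.

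The main obstacle is the deviation bound for $\Omega^c$. Making the whitened vectors genuinely sub-Gaussian with constants independent of $p$ and $D_{N_{n,p}}$ requires a careful description of $\widetilde\Gamma$, including aliasing contributions $\sum_{l\equiv\pm j[p]}\lambda_l$ and the noise part $\tau^2/p$. It also requires that its smallest eigenvalue on $S_{N_{n,p}}$ stay bounded below by $\lambda_{D_{N_{n,p}}}\ge 2n^{-\alpha}$. The condition $\alpha>2a$ is what makes $s_n$ compatible with these eigenvalues, and it guarantees that the polynomial factor $n^{2\alpha}$ above is beaten by the exponential tail.
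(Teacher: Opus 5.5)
Your proposal is correct and follows essentially the paper's argument. Your event $\Omega$ is the paper's $\Omega^{(n)}=\bigcap_m\Omega_m=\Omega_{N_{n,p}}$ with $\widetilde{\omega}=1/2$, and its probability is controlled exactly as in Lemmas \ref{lem:sub-gaussian} and \ref{lem:help_3.2} through the whitened matrix $\Psi_{N_{n,p}}$. The bad event is handled, as in the paper, by the eigenvalue floor $s_n$ on $\overline{G}$ together with the exponential smallness of $\mathbb{P}(\Omega^c)$. Your remark that $\widetilde{X}\in S_{N_{n,p}}$ makes $\beta-\beta^{(N_{n,p})}$ invisible to both seminorms is a slightly cleaner substitute for the paper's triangle-inequality detour through $\beta^{(m)}$.

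One small caveat concerns your bound via $\mathbb{E}\|b\|_2^4$. It involves $\mathbb{E}\bigl(\langle\beta,X_1\rangle_{L^2}^4\|\widetilde{X}_1\|_{L^2}^4\bigr)$, so it needs more than fourth moments of $\langle\beta,X\rangle_{L^2}$. This is available here only because (\textbf{H1}) makes $\langle\beta,X\rangle_{L^2}$ sub-Gaussian. The paper's bound $\|\widehat{\beta}_{\widehat{m}}\|_{\widetilde{\Gamma}}^2\le\rho(\widetilde{\Gamma})\,s_n^{-1}\,n^{-1}\sum_{i}Y_i^2$ on $\overline{G}$ gets by with the stated fourth moments.
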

The following result provides an oracle inequality with respect to the norm $\|\cdot\|_{\Gamma}$, thereby evaluating the prediction risk based on the true curves.
\begin{theorem}
\label{thm:3.2}
    Suppose that the assumptions of Theorem \ref{thm:3.1} are satisfied. Then for all slope $\beta\in L^{2}([0,1])$ we have,
    \begin{equation}
\begin{split}
\mathbb{E}(\|\widetilde{\beta}-\beta\|_{\Gamma}^{2}) 
&\leq C_{2} \biggl[ \min_{m \in \mathcal{M}_{n,p}} \Bigl( \|\beta^{(m)}-\beta\|_{\Gamma}^{2} + \|\beta^{(m)}-\beta\|_{\widetilde{\Gamma}}^{2} + \mathrm{pen}(m) \Bigr) \\
&\quad + \|\beta\|_{L^{2}}^{2} \mathbb{E}(\|\widetilde{X}-X\|_{L^{2}}^{2}) \\
&\quad + \frac{\|\beta\|_{L^{2}}^{2}}{n} \left( \mathbb{E} \|\widetilde{X} - X\|_{L^2}^4 \right)^{1/2} + C_{\beta}\left(\frac{1}{n}+\frac{1}{np}\right)\biggr],
\end{split}
\end{equation}
where $\mathrm{pen}(m)$ is defined in (\ref{eq:penalty}), $C_{2}$ is a positive constant which depends on $K_1, \theta, l, \tau_{l}, \delta, \sigma^{2}$, $c'$, $a$ and $C_{\beta}$ is defined in (\ref{eq:c_beta}).

\medskip\noindent
If we assume furthermore that $\beta\in W^{\text{per}}(k,L)$ we obtain the following oracle inequality :
\begin{equation}\label{eq:second_oracle_3.2}
\begin{aligned}
    \mathbb{E}(\|\widetilde{\beta}-\beta\|_{\Gamma}^{2}) 
    &\leq C_{3} \biggl[ \min_{m \in \mathcal{M}_{n,p}} \Bigl( \|\beta^{(m)}-\beta\|_{\Gamma}^{2} + \mathrm{pen}(m) \Bigr) \\
    &\quad + \mathbb{E}(\|\widetilde{X}-X\|_{L^{2}}^{2}) \\
    &\quad + \frac{1}{n} \left( \mathbb{E} \|\widetilde{X} - X\|_{L^2}^4 \right)^{1/2} + \frac{1}{p} +\frac{1}{n}+ \frac{1}{np}\biggr],
\end{aligned}
\end{equation}
where $C_{3}$ is a positive constant which depends on $K_1, \theta, l, \tau_{l}, \delta, \sigma^{2}, c'$, $a$.
\end{theorem}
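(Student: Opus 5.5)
We deduce Theorem \ref{thm:3.2} from Theorem \ref{thm:3.1} by comparing the two norms $\|\cdot\|_{\Gamma}$ and $\|\cdot\|_{\widetilde{\Gamma}}$ on the finite-dimensional space $S_{N_{n,p}}$, where $\widetilde{\beta}$ and every $\beta^{(m)}$ live. We first split
\[
\|\widetilde{\beta}-\beta\|_{\Gamma}^{2}\leq 2\|\widetilde{\beta}-\beta^{(N_{n,p})}\|_{\Gamma}^{2}+2\|\beta^{(N_{n,p})}-\beta\|_{\Gamma}^{2}.
\]
Both $\Gamma$ and $\widetilde{\Gamma}$ are diagonal in the Fourier basis on $S_{N_{n,p}}$ (Lemmas \ref{lem:A1}, \ref{lem:B1} and the Remark preceding Proposition \ref{prop:1}). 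The eigenvalue of $\widetilde{\Gamma}$ on $\phi_j$ is $\widetilde{\lambda}_j=\mathbb{E}(\widetilde{x}_j^2)$. By discrete orthogonality and aliasing, this equals $\lambda_j$ plus the aliased eigenvalues $\lambda_{j'}$ with $j'$ congruent to $\pm j$ modulo $p$, plus $\tau^2/p$ (up to constants). Hence $\widetilde{\lambda}_j\geq \lambda_j$, and so $\|g\|_{\Gamma}^2\leq \|g\|_{\widetilde{\Gamma}}^2$ for every $g\in S_{N_{n,p}}$. We check this inequality as a first lemma; if the cross terms in the aliasing have no sign, we fall back on $\widetilde{\lambda}_j\geq c\lambda_j$ using (H3) and $D_{N_{n,p}}<p$.

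\textbf{Step 1: the projection term.} Since $\beta^{(N_{n,p})}$ is the $\Gamma$-projection onto the largest model and $\beta^{(m)}-\beta^{(N_{n,p})}\in S_{N_{n,p}}$ is $\Gamma$-orthogonal to $\beta^{(N_{n,p})}-\beta$, we get $\|\beta^{(N_{n,p})}-\beta\|_{\Gamma}^2\leq\|\beta^{(m)}-\beta\|_\Gamma^2$ for every $m$. The bias term is therefore absorbed into the minimum over $m$.

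\textbf{Step 2: the estimation term.} We bound
\[
\|\widetilde{\beta}-\beta^{(N_{n,p})}\|_{\Gamma}^{2}\leq\|\widetilde{\beta}-\beta^{(N_{n,p})}\|_{\widetilde{\Gamma}}^{2}\leq 2\|\widetilde{\beta}-\beta\|_{\widetilde{\Gamma}}^{2}+2\|\beta-\beta^{(N_{n,p})}\|_{\widetilde{\Gamma}}^{2}.
\]
The projection $\beta^{(N_{n,p})}$ is also the $\widetilde{\Gamma}$-projection, so the last term is at most $\|\beta^{(m)}-\beta\|_{\widetilde{\Gamma}}^2$ for every $m$. Taking expectations and applying Theorem \ref{thm:3.1} to $\mathbb{E}\|\widetilde{\beta}-\beta\|_{\widetilde{\Gamma}}^2$ gives the first inequality. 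The minimum over $m$ is handled by fixing the same $m$ in all bounds and then minimizing. This is where the dependence of $C_2$ on $c',a$ enters, through the norm comparison.

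\textbf{Step 3: the Sobolev case.} To pass to \eqref{eq:second_oracle_3.2}, we bound $\|\beta^{(m)}-\beta\|_{\widetilde{\Gamma}}^2$ by $\|\beta^{(m)}-\beta\|_{\Gamma}^2$ plus terms of order $1/p$, and we bound $\|\beta\|_{L^2}^2$ by a constant depending on $L$ (which we fold into $C_3$; for the constant part of $\beta$ we assume boundedness). We write $\beta-\beta^{(m)}=\sum_{j>D_m}b_j\phi_j$. Then
\[
\|\beta^{(m)}-\beta\|_{\widetilde{\Gamma}}^2=\mathbb{E}\Bigl\langle\sum_{D_m<j\leq D_{N_{n,p}}}b_j\phi_j,\widetilde{X}\Bigr\rangle^2,
\]
since $\widetilde{X}\in S_{N_{n,p}}$. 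Diagonality gives $\sum_{D_m<j\leq D_{N_{n,p}}}b_j^2\widetilde{\lambda}_j$. Inserting the decomposition $\widetilde{\lambda}_j=\lambda_j+\text{aliasing}+\tau^2/p$, the $\lambda_j$ part is at most $\|\beta^{(m)}-\beta\|_\Gamma^2$. The noise part is at most $\tau^2\|\beta\|_{L^2}^2/p$. The aliasing part is at most $\sup_{j}b_j^2\cdot\sum_{|j'|\geq p-D_{N_{n,p}}}\lambda_{j'}$; by (H3) with $a>1/2$ this is $O(p^{1-2a})=O(1/p)$ when $a\geq1$. Otherwise we use the decay $b_j^2\lesssim L^2j^{-2k}$ together with $\sum\lambda_j<\infty$.

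\textbf{Main obstacle.} The delicate step is the precise computation of $\widetilde{\lambda}_j$ via the discrete orthogonality lemmas. This has two purposes: proving $\widetilde{\lambda}_j\geq\lambda_j$, or at least $\widetilde{\lambda}_j\gtrsim\lambda_j$, and controlling the aliasing excess by $O(1/p)$. We would start from $\widetilde{x}_j=\sum_{l}\langle X,\phi_l\rangle\frac1p\sum_h\phi_l(t_h)\phi_j(t_h)+\frac1p\sum_h\eta_h\phi_j(t_h)$. Stationarity makes the Fourier coefficients uncorrelated, and we would then collect the aliased frequencies.
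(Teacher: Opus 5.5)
Your argument for the first inequality is correct and essentially the paper's. The paper also uses $\widetilde\lambda_j=\lambda_j+\overline\lambda_j+\tau^2/p\ge\lambda_j$ (Lemmas \ref{lem:A2} and \ref{lem:eigenvalues_gamma_tilde}) to get $\|f\|_{\Gamma}\le\|f\|_{\widetilde\Gamma}$ on $S_{N_{n,p}}$, applies the triangle inequality, and invokes Theorem \ref{thm:3.1}. It passes through $\beta^{(m)}$ rather than $\beta^{(N_{n,p})}$, but your choice works equally well; in fact $\|\beta-\beta^{(N_{n,p})}\|_{\widetilde\Gamma}=0$, since $\widetilde\Gamma\phi_j=0$ for $j>D_{N_{n,p}}$. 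Your hedges there are unnecessary. The aliased Fourier coefficients are uncorrelated, so $\overline\lambda_j\ge0$ and the comparison holds with constant $1$, meaning no dependence on $c',a$ enters at that step. A fallback $\widetilde\lambda_j\ge c\lambda_j$ could not come from (H3) anyway, since (H3) is only an upper bound.

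The gap is in Step 3, in bounding the aliasing part $\sum_{D_m<j\le D_{N_{n,p}}}\overline\lambda_j b_j^2$. You put the supremum on $b_j^2$ and sum the aliased eigenvalues. The aliased index sets are disjoint and all exceed $2p-D_{N_{n,p}}>p$; your threshold $p-D_{N_{n,p}}$ can be $O(1)$ when $D_{N_{n,p}}$ is close to $p$. Even with the correct threshold, your route yields at best $\sup_j b_j^2\sum_{l>p}\lambda_l=O(p^{1-2a})$. That is $O(1/p)$ only for $a\ge1$, whereas the theorem assumes only $a>1/2$. Your fallback (``$b_j^2\lesssim L^2j^{-2k}$ together with $\sum\lambda_j<\infty$'') gives no rate at all, since summability alone only yields $\overline\lambda_j\le\sum_{l>p}\lambda_l=o(1)$.

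The missing ingredient is the uniform pointwise bound of Lemma \ref{lem:overline_lambda}. For each $q\ge1$, $\overline\lambda_j$ contains at most two eigenvalue terms, with indices at least $(2q-1)p$. So (H3) gives $\overline\lambda_j\le 2c'p^{-2a}\sum_{q\ge1}(2q-1)^{-2a}=C(a,c')p^{-2a}$ for all $j<p$. Putting the supremum on $\overline\lambda_j$ instead gives
$\sum_{j>D_m}\overline\lambda_jb_j^2\le C(a,c')p^{-2a}\sum_{j>D_m}b_j^2\le C(a,c')p^{-1}\|\beta\|_{L^2}^2$ for every $a>1/2$. This is how the paper proves Lemma \ref{lem:second_help_oracle3.2}; it bounds $\sum_{j>D_m}b_j^2\lesssim L^2D_m^{-2k}$ via the Sobolev condition, but $\|\beta\|_{L^2}^2$ suffices. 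With this change the rest of your Step 3 goes through. That includes absorbing $\|\beta\|_{L^2}^2$ and $C_\beta$ into $C_3$, which the paper also does without comment.
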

The sequence of Proposition \ref{prop:1} and Theorems \ref{thm:3.1}--\ref{thm:3.2} reflects a progressive refinement of the risk bounds. Proposition \ref{prop:1} establishes an oracle type inequality expressed in terms of the empirical covariance operator of the reconstructed data, which is the most directly observable quantity in our framework. Theorem \ref{thm:3.1} then replaces this empirical operator by the covariance operator of the reconstructed process, thereby removing the effect of sampling variability. Finally, Theorem \ref{thm:3.2} transfers the result to the covariance operator of the true underlying process by controlling the discrepancy between the reconstructed and true covariance structures.

\medskip\noindent
The bounds obtained in Theorems \ref{thm:3.1} and \ref{thm:3.2} are of oracle type. They show that the estimator $\widetilde{\beta}$ performs, up to a multiplicative constant and additional remainder terms, nearly as well as the best estimator in the collection $(S_m)_{m\in\mathcal{M}_{n,p}}$. More precisely, the leading term in the bound of Theorem \ref{thm:3.1} is governed by the oracle criterion
\[\min_{m\in\mathcal{M}_{n,p}}\left(\|\beta^{(m)}-\beta\|_{\widetilde{\Gamma}}^{2}+\mathrm{pen}(m)\right),\]
whereas Theorem \ref{thm:3.2} involves
\[\min_{m\in\mathcal{M}_{n,p}}\left(\|\beta^{(m)}-\beta\|_{\Gamma}^{2}+\mathrm{pen}(m)\right).\]
The additional terms involving $\|\widetilde{X}-X\|_{L^2}$ quantify the error induced by reconstructing the curves from discrete noisy observations and therefore measure the impact of the preliminary smoothing step on the final estimation procedure. The remaining terms, of order $1/n$, $1/p$, and $1/(np)$, arise from the proof of the oracle inequalities.

\section{Convergence rates over Sobolev spaces}
In this part, we derive convergence rates for the error $\mathbb{E}\left(\|\widetilde{\beta}-\beta\|_{\Gamma}^{2}\right)$.
\begin{theorem}\label{thm:4.1}
Suppose that the assumptions of Theorem \ref{thm:3.2} are satisfied, with the exception that we consider $a\geq 1$ instead of $a>1/2$ in \textbf{(H3)}. Then, for all $\beta\in W^{\text{per}}(k,L)$ :
\begin{itemize}
     \item If $p \gtrsim \bigl(\frac{n}{\ln^{2}n}\bigr)^{2a}$ then,
     \begin{equation}
         \mathbb{E}\Bigl(\|\widetilde{\beta}-\beta\|_{\Gamma}^{2}\Bigr)=\mathcal{O}\Bigl(n^{-\frac{2a+2k}{2a+2k+1}}\Bigr).
     \end{equation}
     \item If $n^{\frac{2a}{2a+2k+1}}\lesssim p \lesssim \bigl(\frac{n}{\ln^{2}n}\bigr)^{2a}$ then,
     \begin{equation}
         \mathbb{E}\Bigl(\|\widetilde{\beta}-\beta\|_{\Gamma}^{2}\Bigr)=\mathcal{O}\Bigl(n^{-\frac{2a+2k}{2a+2k+1}}+p^{-\frac{2a-1}{2a}}\Bigr).
     \end{equation}
     
     \item If $p \lesssim n^{\frac{2a}{2a+2k+1}}$ then,
     \begin{equation}
         \mathbb{E}\Bigl(\|\widetilde{\beta}-\beta\|_{\Gamma}^{2}\Bigr)=\mathcal{O}\Bigl(p^{-\frac{2a-1}{2a}}\Bigr).
     \end{equation}
     
\end{itemize}

\end{theorem}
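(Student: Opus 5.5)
The starting point is the second oracle inequality \eqref{eq:second_oracle_3.2} of Theorem \ref{thm:3.2}, applied with a suitable choice of the truncation level $N_{n,p}$. Three things have to be bounded: (i) the bias--penalty trade-off $\min_m(\|\beta^{(m)}-\beta\|_\Gamma^2+\mathrm{pen}(m))$; (ii) the reconstruction error $\mathbb{E}\|\widetilde X-X\|_{L^2}^2$ together with the term $n^{-1}(\mathbb{E}\|\widetilde X-X\|^4_{L^2})^{1/2}$; (iii) the remainders $1/p+1/n+1/(np)$. Under \textbf{(H3)} with $a\ge1$, the last of these is dominated by the target rates, because $p^{-1}\le p^{-(2a-1)/(2a)}$.

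\textbf{Step 1 (bias).} For $\beta\in W^{\text{per}}(k,L)$ the Fourier coefficients satisfy $\sum_j j^{2k}\langle\beta,\phi_j\rangle^2\lesssim L^2$. Since $\Gamma$ is diagonal in the Fourier basis, $\|\beta^{(m)}-\beta\|_\Gamma^2=\sum_{j>D_m}\lambda_j\langle\beta,\phi_j\rangle^2\le c'D_m^{-2a}\sum_{j>D_m}\langle\beta,\phi_j\rangle^2\lesssim D_m^{-2a-2k}$. The penalty is of order $D_m/n$. Balancing the two gives $D_m\asymp n^{1/(2a+2k+1)}$ and the rate $n^{-(2a+2k)/(2a+2k+1)}$, provided this $m$ lies in $\mathcal M_{n,p}$, i.e.\ $D_{N_{n,p}}\gtrsim n^{1/(2a+2k+1)}$. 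If instead $D_{N_{n,p}}$ is smaller, taking $m=N_{n,p}$ gives bias of order $D_{N_{n,p}}^{-2a-2k}$ plus $D_{N_{n,p}}/n$.

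\textbf{Step 2 (reconstruction error).} I would write $X=\sum_j x_j\phi_j$ and use the discrete orthogonality of Lemmas \ref{lem:A1}--\ref{lem:B1}: for $j\le D_N<p$, $\frac1p\sum_h\phi_l(t_h)\phi_j(t_h)$ vanishes unless $l$ aliases onto $j$, that is, the frequency of $l$ is congruent to $\pm$ that of $j$ modulo $p$. Hence $\widetilde x_j-x_j$ is a sum of three parts: the aliased coefficients $\sum_{l\ne j,\ \text{alias}}\pm x_l$, the noise average $\frac1p\sum_h\eta_h\phi_j(t_h)$ of variance $\tau^2/p$, and the truncation tail $\sum_{j>D_N}x_j^2$. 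This gives
\[\mathbb{E}\|\widetilde X-X\|_{L^2}^2\lesssim \sum_{j>D_N}\lambda_j+\frac{D_N\tau^2}{p}+D_N\sum_{l\gtrsim p}\lambda_l\lesssim D_N^{1-2a}+\frac{D_N}{p}+D_N p^{1-2a}.\]
The aliasing sum needs care to avoid an extra factor, since each $j$ receives aliases at frequencies $\approx p q\pm j$ for $q\ge1$. The fourth moment is handled the same way using sub-Gaussianity from \textbf{(H1)}--\textbf{(H2)}: $(\mathbb{E}\|\cdot\|^4)^{1/2}\lesssim\mathbb{E}\|\cdot\|^2$, so this term is at most $n^{-1}$ times the above bound and is negligible. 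Minimizing $D^{1-2a}+D/p$ over $D$ gives $D\asymp p^{1/(2a)}$ and error $p^{-(2a-1)/(2a)}$. The aliasing term $D p^{1-2a}$ is then $p^{1/(2a)+1-2a}$, which is smaller since $a\ge1$.

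\textbf{Step 3 (choice of $N_{n,p}$ and the regimes).} I would take $D_{N_{n,p}}\asymp\min\big(p^{1/(2a)},\,n/\ln^2 n\big)$, chosen slightly below both so that the constraints $D_N<\min(n/\ln^2n,p)$ and $\lambda_{D_N}\ge 2n^{-\alpha}$ of Proposition \ref{prop:1} hold; the latter uses $\alpha>2a$ and $\lambda_D\gtrsim D^{-2a}$, which implicitly requires a matching lower eigenvalue bound.

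If $p\gtrsim(n/\ln^2n)^{2a}$, then $D_N\asymp n/\ln^2 n$. The reconstruction error is $(n/\ln^2n)^{1-2a}+ (n/\ln^2 n)/p$, and both are $o(n^{-(2a+2k)/(2a+2k+1)})$: since $a\ge1$, $(n/\ln^2 n)^{1-2a}\le (n/\ln^2 n)^{-1}$, and $(n/\ln^2 n)/p\lesssim(n/\ln^2 n)^{1-2a}$ as well. The oracle $m$ is admissible, so the bound is the $n$-rate.

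In the intermediate regime $D_N\asymp p^{1/(2a)}\ge n^{1/(2a+2k+1)}$, so the oracle $m$ is still admissible and the reconstruction error contributes $p^{-(2a-1)/(2a)}$, giving the sum of the two rates.

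If $p\lesssim n^{2a/(2a+2k+1)}$, the best admissible $m$ is $N$. The bias is then $p^{-(2a+2k)/(2a)}\le p^{-(2a-1)/(2a)}$, and $D_N/n=p^{1/(2a)}/n\le n^{-(2a+2k)/(2a+2k+1)}\lesssim p^{-(2a+2k)/(2a)}$, so the rate $p^{-(2a-1)/(2a)}$ follows.

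The main obstacle is Step 2: obtaining the sharp aliasing bound with the exact discrete orthogonality relations, and doing so uniformly in the fourth moment. I would first try expanding $\mathbb{E}(\widetilde x_j-x_j)^2$ explicitly via Lemma \ref{lem:B1}, grouping frequencies by residue modulo $p$.
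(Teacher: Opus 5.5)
Your proposal is correct and follows essentially the same route as the paper. It uses the second oracle inequality \eqref{eq:second_oracle_3.2}, the bias bound $D_m^{-2a-2k}$, the reconstruction bounds of Lemmas \ref{lem:B1} and \ref{lem:theo_41} of order $D_{N_{n,p}}^{-(2a-1)}+D_{N_{n,p}}/p$, the choice $D_{N_{n,p}}\asymp\min\bigl(p^{1/(2a)},n/\ln^2 n\bigr)$, and the same three-regime comparison. Two differences are worth noting. Your aliasing bound $D_{N_{n,p}}p^{1-2a}$ is cruder than the paper's $D_{N_{n,p}}p^{-2a}$ from Lemma \ref{lem:overline_lambda}, but it is absorbed precisely because $a\ge 1$. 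Your remark that the condition $\lambda_{D_{N_{n,p}}}\ge 2n^{-\alpha}$ at the chosen truncation requires a lower eigenvalue bound is correct, and the paper leaves it implicit in the hypotheses of Theorem \ref{thm:3.2}.
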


\medskip\noindent
The stronger assumption $a \geq 1$ is required in Theorem \ref{thm:4.1} to control the contribution of the reconstruction error and to ensure that the tail sum of the eigenvalues decays sufficiently fast. Moreover, the three regimes described in Theorem \ref{thm:4.1} reveal a phase transition phenomenon that depends on the relative magnitudes of $p$ and $n$. When $p$ is large, the problem behaves essentially as if the curves were fully observed. In contrast, when $p$ is small, the estimation error is dominated by the discretization error, and increasing the sample size $n$ alone is not sufficient to improve the convergence rate.
Furthermore, when the number of observation points is sufficiently large, namely $$p \gtrsim \left(\frac{n}{\ln^2 n}\right)^{2a},$$
the convergence rate obtained in Theorem \ref{thm:4.1} coincides with the optimal rate established in the literature for functional linear models with fully observed curves. In particular, we recover the rate derived by Brunel and Roche \cite{Brunel2015penalized}. Finally, as shown in the proof of Theorem \ref{thm:4.1}, the dependence of the rate on $p$ is primarily driven by the observation noise rather than by the reconstruction procedure itself.

\section{Lower Bound for the case \texorpdfstring{$p$}{p} large}
In this section we derive a lower bound for the case 1 of Theorem \ref{thm:4.1}.
\begin{theorem}\label{thm:5.1}
    Let us assume that there exist two constants $c'>0$ and $ a>1/2$ such that for all $j\geq 1$, 
    $$j^{-2a}/c'\leq \lambda_{j}\leq c'j^{-2a}.$$
    We also make the further assumption that 
    \[\epsilon_{i}\sim \mathcal{N}(0,\sigma^{2}) \qquad \forall i\in\{1,\ldots,n\},\]
    and that 
    \[\eta_{i,h}\sim \mathcal{N}(0,\tau^{2})\qquad \forall i\in\{1,\ldots,n\}, \forall h\in\{0,\ldots,p-1\}.\]
    Then for $p\geq \left(\frac{n}{\ln^{2}{n}}\right)^{2a}$ and for all $\beta\in W^{\text{per}}(k,L)$,
    \[\mathbb{E}\Big(\|\widetilde{\beta}-\beta\|_{\Gamma}^{2}\Big)\geq C_{4}n^{-\frac{2a+2k}{2a+2k+1}},\]
    where $C_4$ is a positive constant which depends on $k,L,c',\sigma$.
\end{theorem}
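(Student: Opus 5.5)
The plan is to bound the risk of $\widetilde{\beta}$ from below by a bias term plus a variance term on the selected model, and then show that the data-driven dimension $\widehat{m}$ is forced to the balancing scale $m^{*}\asymp n^{1/(2a+2k+1)}$. I expect this to work only for slopes whose Fourier coefficients saturate the Sobolev constraint, not literally for every $\beta$; the obstacle is explained at the end.

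\textbf{Step 1 (reduction to the Gaussian, continuously observed case).} Since $p\geq (n/\ln^{2}n)^{2a}$, the reconstruction error terms controlled in the proof of Theorem \ref{thm:4.1} are $o\bigl(n^{-\frac{2a+2k}{2a+2k+1}}\bigr)$. I would therefore replace $\|\cdot\|_{\Gamma}$ by $\|\cdot\|_{\widetilde{\Gamma}}$ and $\widetilde{X}_i$ by $X_i$, at the price of these negligible terms. I would also work on $\overline{G}$: its complement has probability $o(n^{-1})$ by the argument behind Proposition \ref{prop:1}. On $\overline{G}^{c}$ we have $\widetilde{\beta}=0$, so that event only adds $\|\beta\|_{\Gamma}^{2}\mathbb{P}(\overline{G}^{c})\geq 0$ and can be dropped.

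\textbf{Step 2 (bias--variance decomposition on the selected model).} Conditionally on the design and on $\widehat{m}=m$, write $\widehat{\beta}_m-\beta=(\beta^{(m)}-\beta)+(\widehat{\beta}_m-\beta^{(m)})$. Using the diagonal structure of $\Gamma$ in the Fourier basis (Remark after (H3)), orthogonality gives
\[\|\widehat{\beta}_m-\beta\|_{\Gamma}^{2}=\|\beta-\beta^{(m)}\|_{\Gamma}^{2}+\|\widehat{\beta}_m-\beta^{(m)}\|_{\Gamma}^{2}.\]
With Gaussian $\epsilon_i$, the noise part of $\widehat{\beta}_m-\beta^{(m)}$ is $\Phi_m^{-1}\frac1n\sum_i\epsilon_i\widetilde{x}_{i,\cdot}$. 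Its $\Gamma$-norm has conditional expectation $\frac{\sigma^{2}}{n}\mathrm{tr}(\Lambda_m\Phi_m^{-1})\gtrsim \sigma^{2}D_m/n$, because $\Phi_m\approx\Lambda_m=\mathrm{diag}(\lambda_1,\dots,\lambda_{D_m})$ on $\overline{G}$ by the matrix concentration used for Theorem \ref{thm:3.1}. To handle the data-dependence of $\widehat{m}$, I would use the deterministic lower bound
\[\|\widetilde{\beta}-\beta\|_{\Gamma}^{2}\geq \min_{m}\Bigl(\|\beta-\beta^{(m)}\|_{\Gamma}^{2}\Bigr)\mathds{1}_{\widehat{m}\le m^{*}/2}+\|\widehat{\beta}_{\widehat{m}}-\beta^{(\widehat{m})}\|_{\Gamma}^{2}\mathds{1}_{\widehat{m}> m^{*}/2}.\]
The first term controls small selected dimensions through the bias, the second large ones through the variance. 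The lower bound $j^{-2a}/c'\le\lambda_j$ enters when evaluating the bias at $m^{*}/2$.

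\textbf{Step 3 (lower bounds in each regime).} Take $\beta$ with Fourier coefficients $|b_j|^{2}\asymp L^{2}j^{-2k-1}$, which lies in $W^{\text{per}}(k,L)$. For this $\beta$,
\[\|\beta-\beta^{(m)}\|_{\Gamma}^{2}=\sum_{j>D_m}\lambda_j b_j^{2}\geq c\,m^{-2a-2k}.\]
So on $\{\widehat{m}\le m^{*}/2\}$ the bias is $\gtrsim (m^{*})^{-2a-2k}= n^{-\frac{2a+2k}{2a+2k+1}}$. On the complement, a uniform concentration bound for the quadratic forms $\|\widehat{\beta}_m-\beta^{(m)}\|_{\Gamma}^{2}$ over $m\in\mathcal{M}_{n,p}$ (Hanson--Wright, Gaussian noise) gives a variance term $\gtrsim D_{\widehat{m}}/n\gtrsim m^{*}/n$. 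This is the same order, and $C_4$ then depends on $k,L,c',\sigma$ as stated.

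\textbf{Main obstacle.} The difficulty is the claim \emph{for all} $\beta\in W^{\text{per}}(k,L)$. For $\beta$ with much faster-decaying coefficients, e.g.\ $\beta=0$, the penalty with $\theta>4$ makes $\widehat{m}$ stay small with high probability. The variance is then only of order $1/n$ and Step 3 breaks down. I would first try to exploit a uniform lower bound on $D_{\widehat{m}}$, but I do not see how to get one, since the penalty is built precisely to prevent overfitting. I therefore expect the argument to go through only for slopes saturating the Sobolev ball. The statement as worded would then need to be read as $\sup_{\beta\in W^{\text{per}}(k,L)}$, i.e.\ a worst-case statement matching case 1 of Theorem \ref{thm:4.1}. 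I would flag this as the point needing clarification before writing the full proof.
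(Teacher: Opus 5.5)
Your reading of the statement is correct. For $\beta=0$, the first bound of Theorem~\ref{thm:3.2} gives $\mathbb{E}\|\widetilde\beta\|_\Gamma^2=O(1/n)$, so the inequality cannot hold for every $\beta$. What the paper's argument (Tsybakov's Theorem~2.7) actually delivers is the minimax bound $\inf_{\widehat\beta}\sup_{\beta\in W^{\text{per}}(k,L)}\mathbb{E}\|\widehat\beta-\beta\|_\Gamma^2\ge C_4n^{-\frac{2a+2k}{2a+2k+1}}$, which in particular gives the $\sup$ version for $\widetilde\beta$.

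Your route to that $\sup$ version breaks at Step~3. The proposed slope is not in $W^{\text{per}}(k,L)$: $b_j^2\asymp L^2j^{-2k-1}$ gives $\|\beta^{(k)}\|_{L^2}^2\asymp\sum_s s^{2k}s^{-2k-1}=\infty$. No fixed slope can replace it, because for every fixed $\beta\in W^{\text{per}}(k,L)$
\[
\|\beta-\beta^{(m)}\|_\Gamma^2\le c'\sum_{j>D_m}j^{-2a}\beta_j^2\le c'D_m^{-2a-2k}\sum_{j>D_m}j^{2k}\beta_j^2=o\bigl(D_m^{-2a-2k}\bigr),
\]
so the bias bound $c\,m^{-2a-2k}$ fails. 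Balancing $o(D^{-2a-2k})$ against $D/n$, with $D=\delta_n n^{1/(2a+2k+1)}$ and $\delta_n\to0$ slowly, even shows that the risk of $\widetilde\beta$ at any fixed slope is $o\bigl(n^{-\frac{2a+2k}{2a+2k+1}}\bigr)$ (for $a\ge1$, via the second bound of Theorem~\ref{thm:3.2}). The worst case is only reached along $n$-dependent slopes. An example is $\beta_n$ with $\beta_j^2\asymp L^2(D^*)^{-2k-1}$ for $D^*<j\le 2D^*$, where $D^*\asymp n^{1/(2a+2k+1)}$. For this slope the bias on $\{D_{\widehat m}\le D^*\}$ is $\gtrsim(D^*)^{-2a-2k}$, using $\lambda_j\ge j^{-2a}/c'$.

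Even after that repair, your sketch leaves the hard part unproved. You need $\|\widehat\beta_{\widehat m}-\beta^{(\widehat m)}\|_\Gamma^2\gtrsim D_{\widehat m}/n$ uniformly on $\{D_{\widehat m}>D^*\}$, for the data-driven $\widehat m$ and despite the design-dependent shift in $\widehat\beta_m-\beta^{(m)}$. This would need something like Anderson's inequality, a lower-tail Hanson--Wright bound and a union bound over $m$.

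Step~1 is also not usable for a lower bound, for three reasons:
\begin{itemize}
\item remainder terms from upper bounds cannot be subtracted to transfer a lower bound;
\item $\|f\|_\Gamma\le\|f\|_{\widetilde\Gamma}$ points the wrong way;
\item those remainders are negligible only when $2a-1>\frac{2a+2k}{2a+2k+1}$, whereas the theorem allows any $a>1/2$.
\end{itemize}

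The paper sidesteps the estimator entirely. It takes $M\ge2^{d/8}$ Varshamov--Gilbert hypotheses $\beta_w=\sum_{j\le d}w_ju_j\phi_j$, with $u_j^2=C/(n\lambda_j)$ and $d=\lceil\kappa n^{1/(2a+2k+1)}\rceil$. It checks Sobolev membership using $1/\lambda_j\le c'j^{2a}$ and obtains the $\Gamma$-separation $\ge Cd/(8n)$. It then bounds the Kullback--Leibler divergence between the laws of the observed $(Z_i,Y_i)$ by the divergence between the laws of $(X_i,Y_i)$, namely $\frac{n}{2\sigma^2}\sum_{j\le d}\lambda_ju_j^2\le\frac{Cd}{2\sigma^2}$, through a convexity (data-processing) argument. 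This gives the bound for every estimator. It uses neither $\widehat m$, $\overline G$, the reconstruction step, nor the condition on $p$.
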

In Theorem \ref{thm:4.1}, only the upper bound $\lambda_j \leq c'j^{-2a}$ is required. Indeed, the proof relies on controlling approximation and reconstruction errors, which involve tail sums of the eigenvalues and therefore only require an upper estimate on their decay.
In contrast, the proof of Theorem \ref{thm:5.1} is based on a minimax lower-bound construction. To ensure that the candidate slope functions defined in (\ref{eq:beta_test}) belong to $W^{\text{per}}(k,L)$ and remain sufficiently separated in the prediction norm $\|\cdot\|_\Gamma$, it is necessary to control both $\lambda_j$ and $1/\lambda_j$. This requires the two-sided condition $\frac{1}{c'}j^{-2a}\leq \lambda_j \leq c'j^{-2a}$. Furthermore, Theorem \ref{thm:5.1} establishes that, when $p$ is sufficiently large, the rate $n^{-\frac{2a+2k}{2a+2k+1}}$ constitutes a lower bound for the prediction risk over the class $W^{\text{per}}(k,L)$. Combined with the upper bound derived in Theorem \ref{thm:4.1}, this shows that our estimator attains the minimax rate over this class.

\section{Simulations}
Across the next four paragraphs, we present a framework for evaluating our estimator on a simulated dataset. This framework consists of generating the true functional curves, computing the corresponding scalar outputs, creating noisy discrete observations, reconstructing the curves, choosing adequate values for $\theta$ and $\delta$, estimating the slope function, and finally assessing the model’s performance. We study here three slope functions 
\begin{equation}\label{eq:beta_1}
    \beta_1(t) = t(t-1), \qquad t\in[0,1],
\end{equation}
which is in $W^{\text{per}}(1,1)$ and also studied in Brunel and Roche \cite{Brunel2015penalized}. Since for all $t\in[0,1]$ $\beta_{1}^{(1)}(t)=2t-1$, therefore $\beta_{1}^{(1)}(0)\neq \beta_{1}^{(1)}(1)$ and $\beta_1\notin W^{\text{per}}(2,1)$.
\begin{equation}\label{eq:beta_2}
    \beta_2(t) = 3\exp\left(-\frac{(t - 0.30)^2}{2 \cdot 0.07^2}\right) - 2\exp\left(-\frac{(t - 0.70)^2}{2 \cdot 0.08^2}\right), \qquad t\in[0,1],
\end{equation}
which is defined as a linear combination of two Gaussian functions and
\begin{equation}\label{eq:beta_3}
    \beta_3(t) = 4\sin(4\pi t) - \operatorname{sign}(t - 0.3) - \operatorname{sign}(0.72 - t), \qquad t\in[0,1],
\end{equation}
which corresponds to the heavisine function.
\begin{figure}[htbp]
    \centering
    
    \begin{subfigure}[b]{0.32\textwidth}
        \centering
        \includegraphics[width=\textwidth]{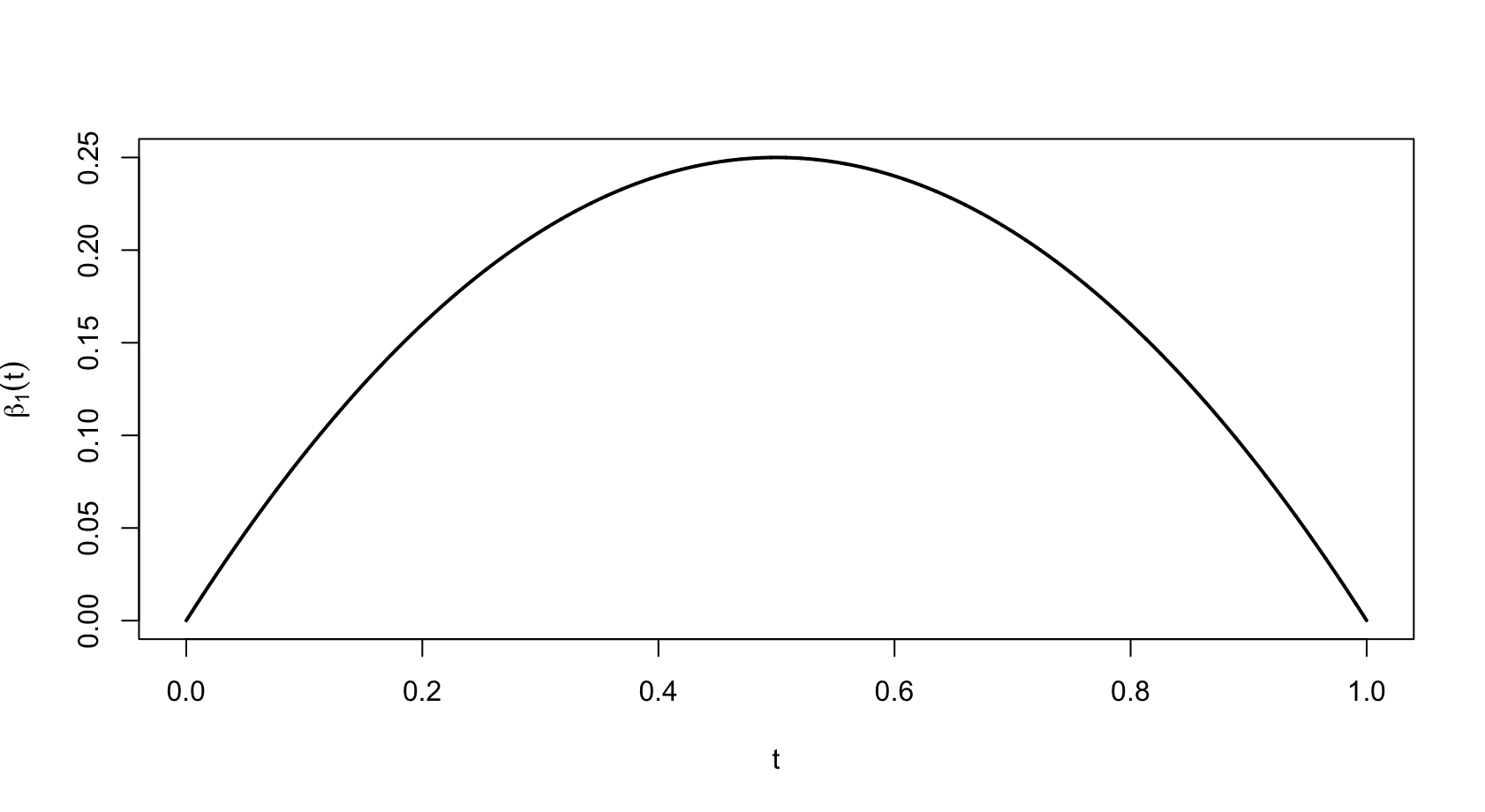} 
        \label{fig:beta_1}
    \end{subfigure}
    \hfill
    \begin{subfigure}[b]{0.32\textwidth}
        \centering
        \includegraphics[width=\textwidth]{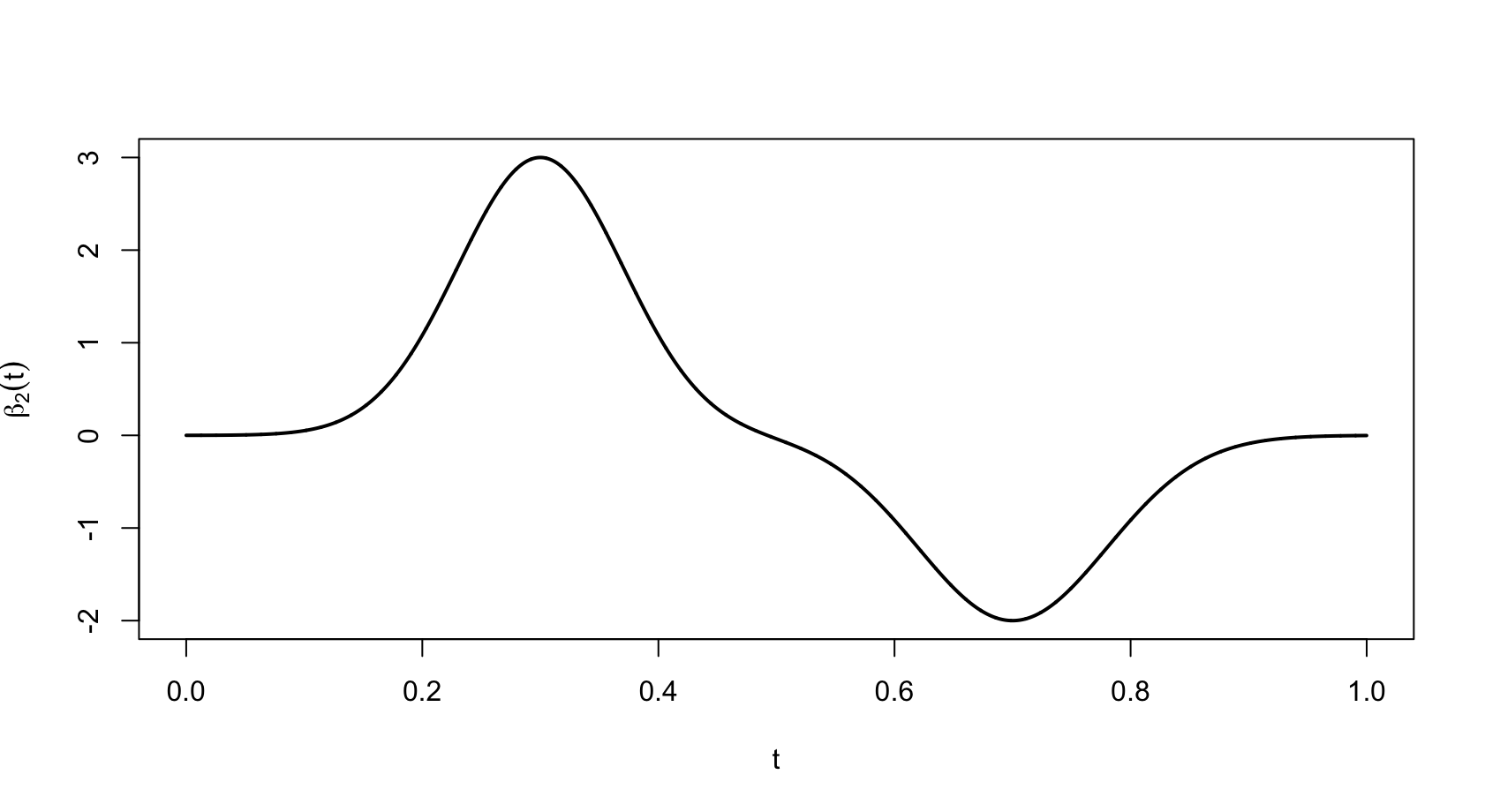} 
        \label{fig:beta_22}
    \end{subfigure}
    \hfill
    \begin{subfigure}[b]{0.32\textwidth}
        \centering
        \includegraphics[width=\textwidth]{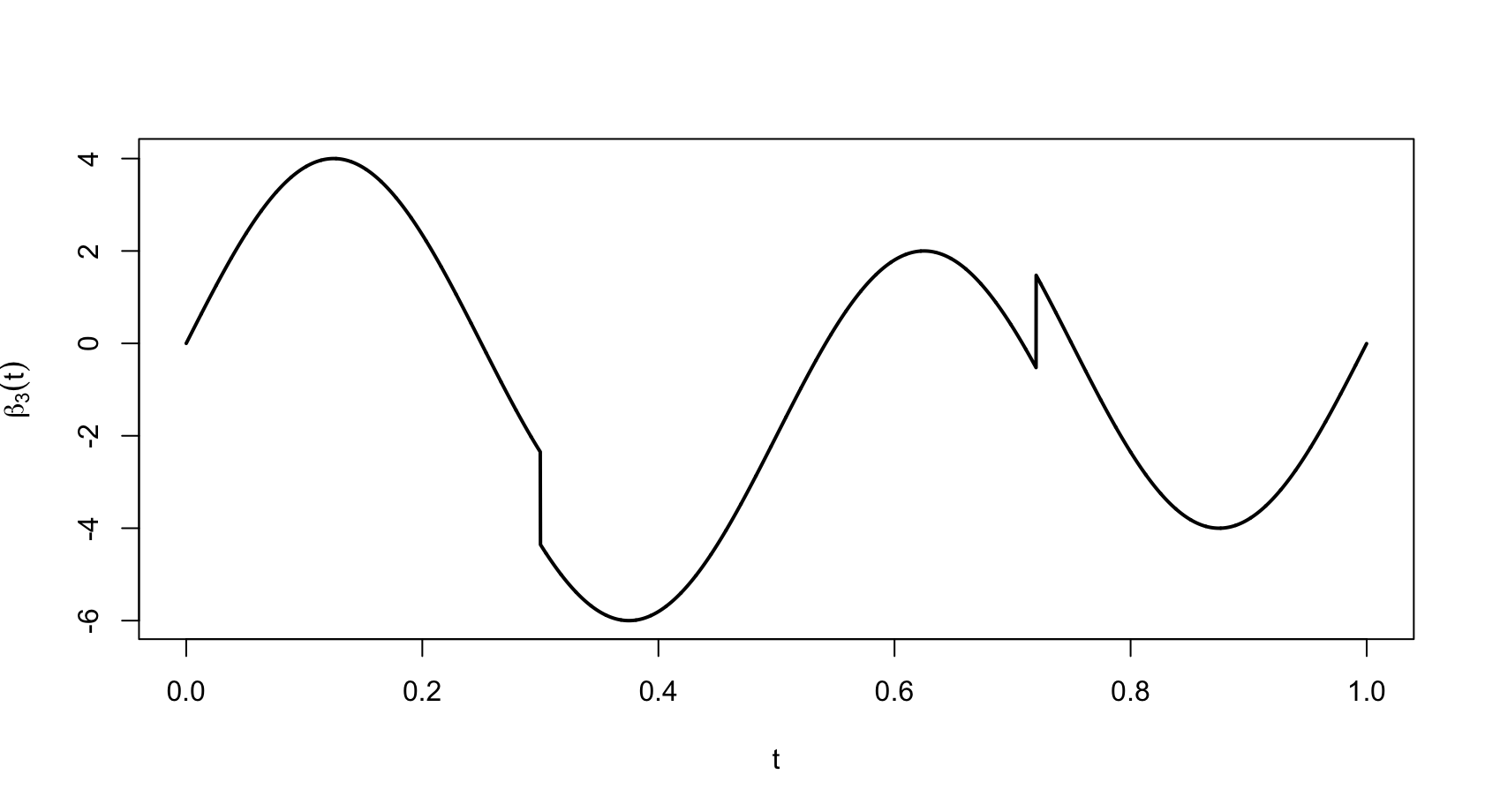} 
        \label{fig:beta_3}
    \end{subfigure}
    
    \caption{Plot of $\beta_1$ (left), $\beta_2$ (middle), $\beta_3$ (right)}
    \label{fig:beta_simu}
\end{figure}

\subsection{Simulation setup and data generation}
The simulation study begins with the generation of the true functional curves, denoted $X_i$, representing the underlying functional processes. These curves are constructed using a truncated Karhunen--Loève decomposition:
\begin{equation}
X(t)=\sum_{j=1}^{2J+1}\xi_{j}\phi_{j}(t), \qquad t\in[0,1].
\end{equation}
Here, the sequence of independent, centered random variables $\{\xi_{1},...,\xi_{2J+1}\}$ has variances $\mathrm{Var}(\xi_{j})=\lambda_{j}$. We set $J=500$ and consider the $\xi_{j}$'s to be Gaussian.

\medskip\noindent
To generate the true curves $(X_i)_{i=1,\ldots,n}$, we first construct a high-resolution grid $G_1$ over the observation interval $[0, 1]$, defined as:
\[G_1 := \left\{ \frac{j}{p_{\mathrm{sim}} - 1} \;\middle|\; j = 0, \dots, p_{\mathrm{sim}} - 1 \right\}.\]
This grid consists of $p_{\mathrm{sim}} = 10000$ equally spaced points and provides a discrete approximation of the underlying continuous functions. Because it is not feasible to simulate the curves at every point in continuous time, this fine, regular grid serves as a practical compromise between approximation accuracy and computational cost. Next, we define a lower-resolution observation grid, $G_2$, consisting of $p$ regularly spaced points where the noisy and discrete data are actually observed. This grid is defined as:
\[G_2 := \left\{ t_h = \frac{h}{p} \;\middle|\; h = 0,\ldots,p-1 \right\}.\]
Because these observation points do not necessarily coincide with those of the high-resolution grid, we augment the latter to include $G_2$. The true curves are generated on this combined grid, $G_1 \cup G_2$.

\medskip\noindent
Next, independent Gaussian noise terms $\eta_{i,h}$, with zero mean and variance $\tau^2$ are added to the discretized observations. The value of $\tau^{2}$ will vary and thus will be precised in each sections. This step aims to mimic measurement error in practical settings, yielding the observed sample $(Z_i)_{i=1,...,n}$, defined as:
\begin{equation}
Z_{i}(t_{h})=X_{i}^{obs}(t_{h})+\eta_{i,h}, \qquad t_h\in G_2.
\end{equation}

\noindent
Once the functional predictors are generated, the corresponding scalar response $Y_i$ for each curve is simulated according to the functional linear model defined in \eqref{eq:Y_i_first}. To compute the integral term $\int_{0}^{1}X_i(t)\beta(t)dt$, we employ a discrete numerical approximation evaluated over the grid $G_1$. In particular,
\[\int_{0}^{1}X_i(t)\beta(t)dt \approx \frac{1}{p_{\mathrm{sim}}-1}\sum_{j=0}^{p_{\mathrm{sim}}-1}X_i(t_j)\beta(t_j).\]
Finally, the noise variables $\epsilon_i$ are generated independently from a zero-mean Gaussian distribution with variance $\sigma^2=0.1$.

\subsection{Estimation of the slope function}
Prior to estimating $\beta$, the functional predictors are reconstructed from the noisy and discretized observations. Following the procedure described in Section \ref{sec:curve_reconstruct}, we first compute the coefficients $(\widetilde{x}_{i,j})_{i=1,...,n\text{ ; }j=1,...,D_{N_{n,p}}}$ as defined in Equation (\ref{eq:3}). These coefficients are then used to reconstruct the curves $\widetilde{X}_{i}$ according to Equation (\ref{eq:4}). This procedure yields a sample of reconstructed curves $(\widetilde{X}_{i})_{i=1,...,n}$, which can be interpreted as smoothed versions of the original observations.
\begin{figure}[htbp] 
    \centering 
    \includegraphics[width=0.9\textwidth]{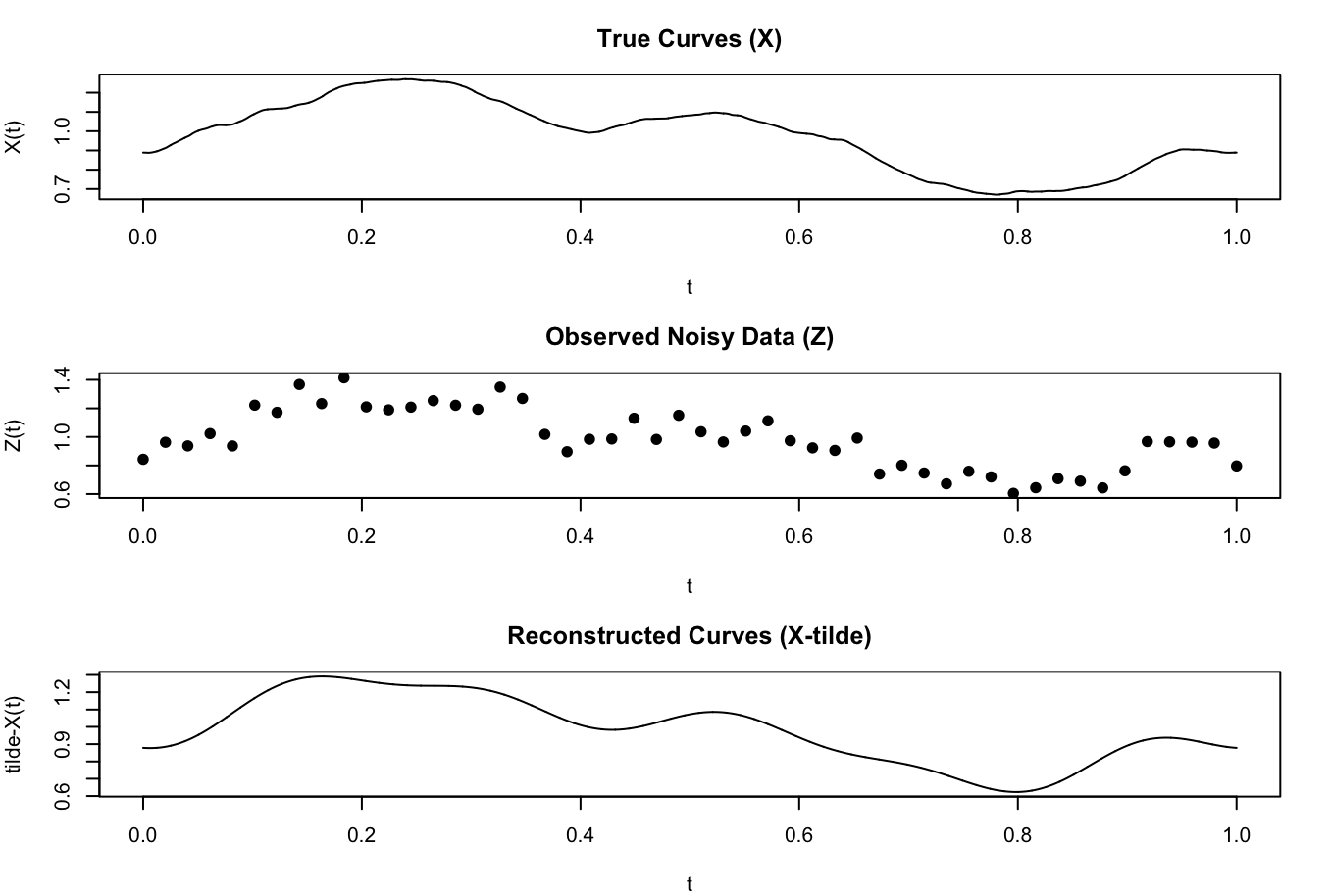} 
    \caption{Simulation of a functional curve: true, observed, and reconstructed. Here $p=50$, $p_{sim}=10000$, $D_{N_{n,p}}=11$, $\tau^{2}=0.1$ and $\lambda_{j}=1/j^4$ for all $j=1,\ldots,2J+1$.}
    \label{fig:1} 
\end{figure}
The choice of the basis dimension $D_{N_{n,p}}$ matters. In particular, $D_{N_{n,p}}$ must be chosen so as to ensure the invertibility of the empirical covariance matrix with high probability, and therefore to be able to compute $\widetilde{\beta}$ defined in \eqref{eq:widetilde_beta}. Theorem~\ref{thm:3.1} shows that this condition is satisfied whenever
\[D_{N_{n,p}} < \min\left(\frac{n}{\ln^2 n}, p\right).\]
Accordingly, in our simulations, we select the largest value of $D_{N_{n,p}}$ satisfying this constraint in order to achieve the most accurate curve reconstruction possible.

\medskip\noindent
Based on the reconstructed curves, we proceed to estimate the slope function $\beta$. To this end, we adopt a penalized model selection approach to determine the appropriate model complexity, as described in Section \ref{sec:model}. The performance of the resulting estimator is supported by an oracle inequality (Theorem \ref{thm:3.1}), provided that the penalty parameters satisfy $\theta > 4$ and $\delta > 0$. In practice, these parameters must be calibrated; the corresponding selection procedure is detailed in Section \ref{sec:kappa}.

\subsection{Prediction error}\label{sec:prediction_error}
As discussed in the previous sections, we focus on the prediction error of the proposed estimator. More specifically, we aim to analyze the behavior of
\[\mathbb{E}\left(\|\widetilde{\beta}-\beta\|_{\Gamma}^{2}\right).\]
As $\|\widetilde{\beta}-\beta\|_{\Gamma}^{2} = \sum_{j\geq 1}\lambda_{j}\langle\widetilde{\beta}-\beta,\phi_{j}\rangle_{L^{2}}^{2}$, we approximate this quantity by using a finite number of elements in the Fourier basis,
\[\|\widetilde{\beta}-\beta\|_{\Gamma}^{2} \approx \sum_{j= 1}^{2J+1}\lambda_{j}\langle\widetilde{\beta}-\beta,\phi_{j}\rangle_{L^{2}}^{2}\approx\sum_{j=1}^{2J+1}\lambda_j\left((\widetilde{\alpha}_j-\beta_j)^{2}\mathds{1}_{j\leq D_{\widehat{m}}}+\beta_{j}^{2}\mathds{1}_{j>D_{\widehat{m}}}\right),\]
where the last equality is derived from the definition of $\widetilde{\beta}$ given in Equation (\ref{eq:widetilde_beta}).
Finally we estimate our prediction error using a Monte-Carlo method with $n_{MC}$ independent samples. In particular we derive $n_{MC}$ estimators of $\beta$ that we denote as $\widetilde{\beta}^{(l)}$ for $l=1,...,n_{MC}$. For each replication $l \in \{1, \dots, n_{\mathrm{MC}}\}$, we obtain an estimator $\widetilde{\beta}^{(l)}$ defined by Equation (\ref{eq:widetilde_beta}) and its definition is given on $\overline{G}$ by $\widetilde{\beta}^{(l)} = \sum_{j=1}^{D_{\widehat{m}^{(l)}}} \widetilde{\alpha}_j^{(l)} \phi_j$, where $\widehat{m}^{(l)}$ denotes the model selected at iteration $l$. We then compute an estimate of the prediction error:
\[\mathbb{E}\left(\|\widetilde{\beta}-\beta\|_{\Gamma}^{2}\right) \approx \frac{1}{n_{MC}}\sum_{l=1}^{n_{MC}}\widehat{e}_l,\]
where $\widehat{e}_l=\sum_{j=1}^{2J+1}\lambda_j\left((\widetilde{\alpha}_{j}^{(l)}-\beta_j)^{2}\mathds{1}_{j\leq D_{\widehat{m}^{(l)}}}+\beta_{j}^{2}\mathds{1}_{j>D_{\widehat{m}^{(l)}}}\right)$. For our simulations we use $n_{MC}=50$.

\subsection{Calibration of \texorpdfstring{$\theta$ and $\delta$}{theta and delta}} \label{sec:kappa}
The constant $\kappa := \theta(1+\delta)$, which appears in the penalty term defined by \eqref{eq:penalty}, must be calibrated. To determine a suitable value for it, we evaluate its predictive performance across a grid of candidate values ranging from $1$ to $20$ (with a step size of $0.5$), holding the sample size $n=1000$ and the number of observation points $p=1000$ fixed. Specifically, we conduct a Monte Carlo simulation study to assess the candidate values across the three distinct slope functions $\beta_1$, $\beta_2$, and $\beta_3$. For each slope function, the prediction error, which is approximated via the procedure detailed in Section \ref{sec:prediction_error}, is averaged over $50$ independent replications. We then calculate the overall mean error across all three scenarios to identify a single, universal $\kappa$ that minimizes the aggregate prediction error. The results of this calibration process are visualized in Figure \ref{fig:kappa}. As indicated by the vertical dashed line, the aggregate prediction error attains its minimum at $\kappa=3$. However, given the constraints $\theta>4$ and $\delta>0$, the definition of the penalty parameter strictly requires $\kappa > 4$. Consequently, we select $\kappa=4.01$, a value situated appropriately above this theoretical boundary but close to the lowest aggregate prediction error, for all subsequent simulations.
\begin{figure}[htbp]
    \centering 
    \includegraphics[width=0.7\textwidth]{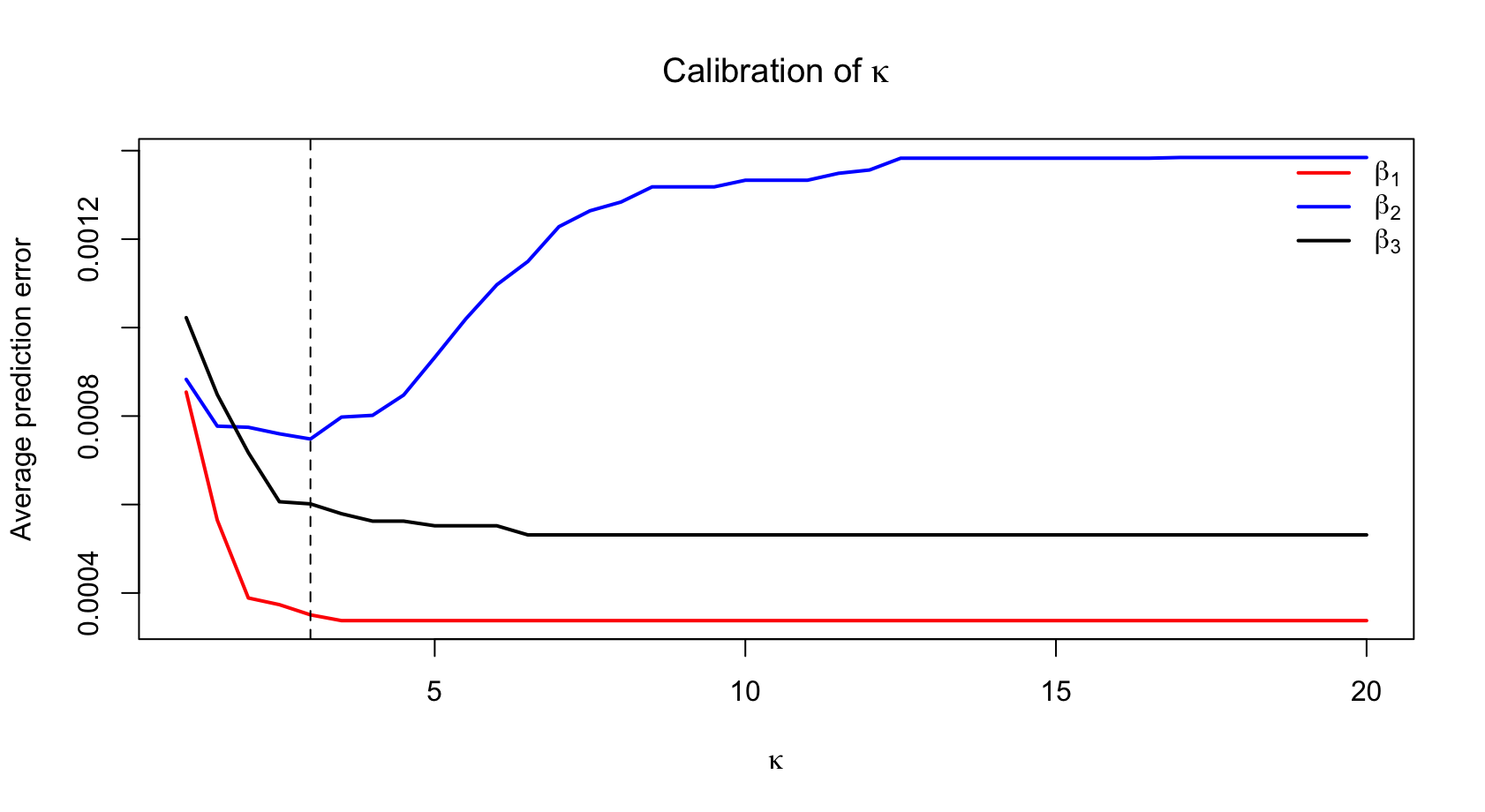} 
    \caption{Average prediction error as a function of the penalty parameter $\kappa \in [1, 20]$ across the three slope functions $\beta_1$, $\beta_2$, $\beta_3$.}
    \label{fig:kappa} 
\end{figure}

\subsection{Results}
We now study the performance of our estimator when various slope functions are considered.

\subsubsection{Study of \texorpdfstring{$\beta_1$}{beta 1}}

Now we consider $\beta_1$ as the true slope function, which belongs to $W^{\text{per}}(1,1)$ and $\lambda_{j}=1/j^4$ for all $j=1,\ldots,2J+1$. 
\begin{figure}[htbp]
    \captionsetup[subfigure]{labelformat=simple, labelsep=period}
    \centering
    \begin{subfigure}[b]{0.45\textwidth}
        \includegraphics[width=\textwidth]{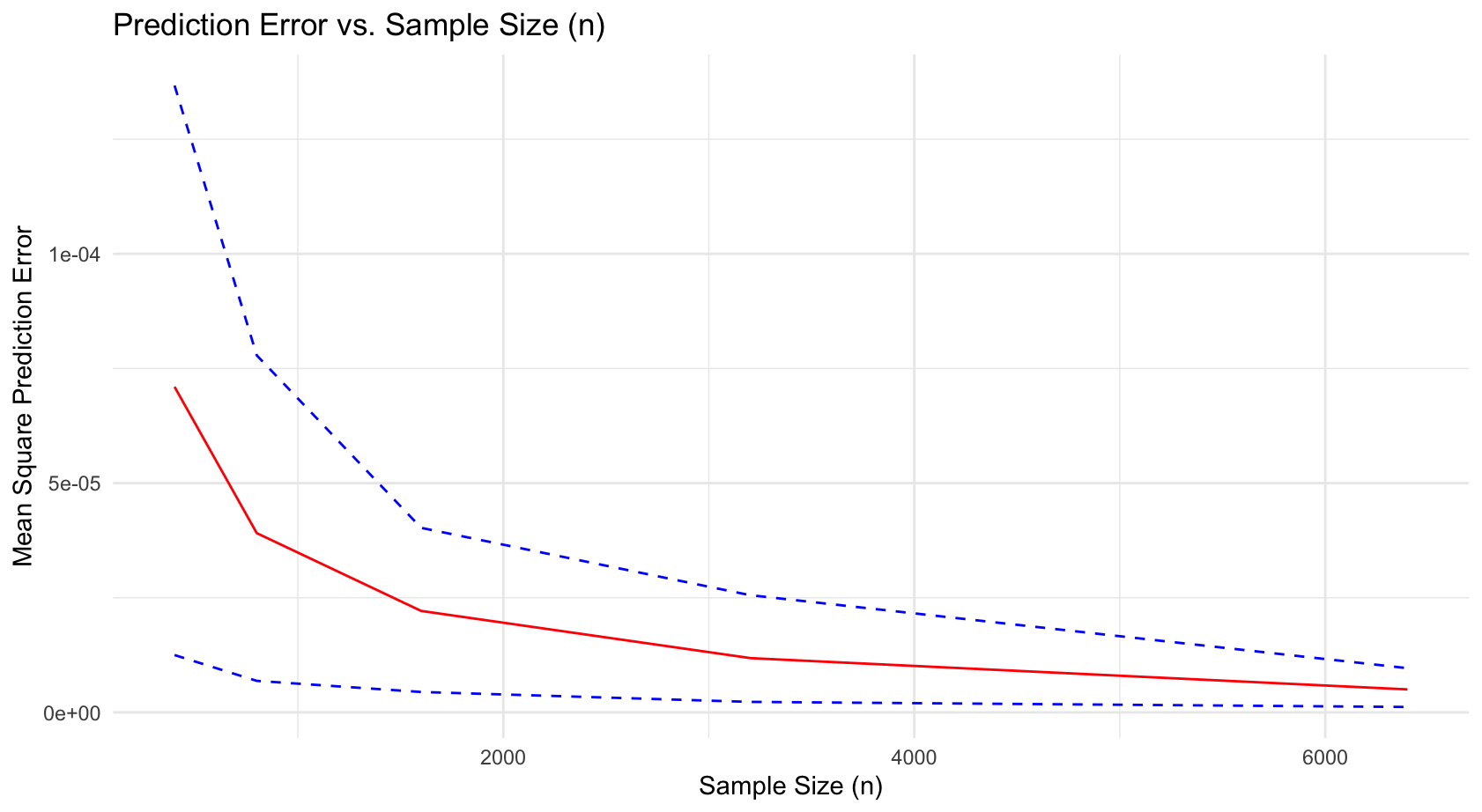}
        \caption{Average prediction error vs. sample size $n$}
        \label{fig:sub-1}
    \end{subfigure}
    \hfill
    \begin{subfigure}[b]{0.45\textwidth}
        \includegraphics[width=\textwidth]{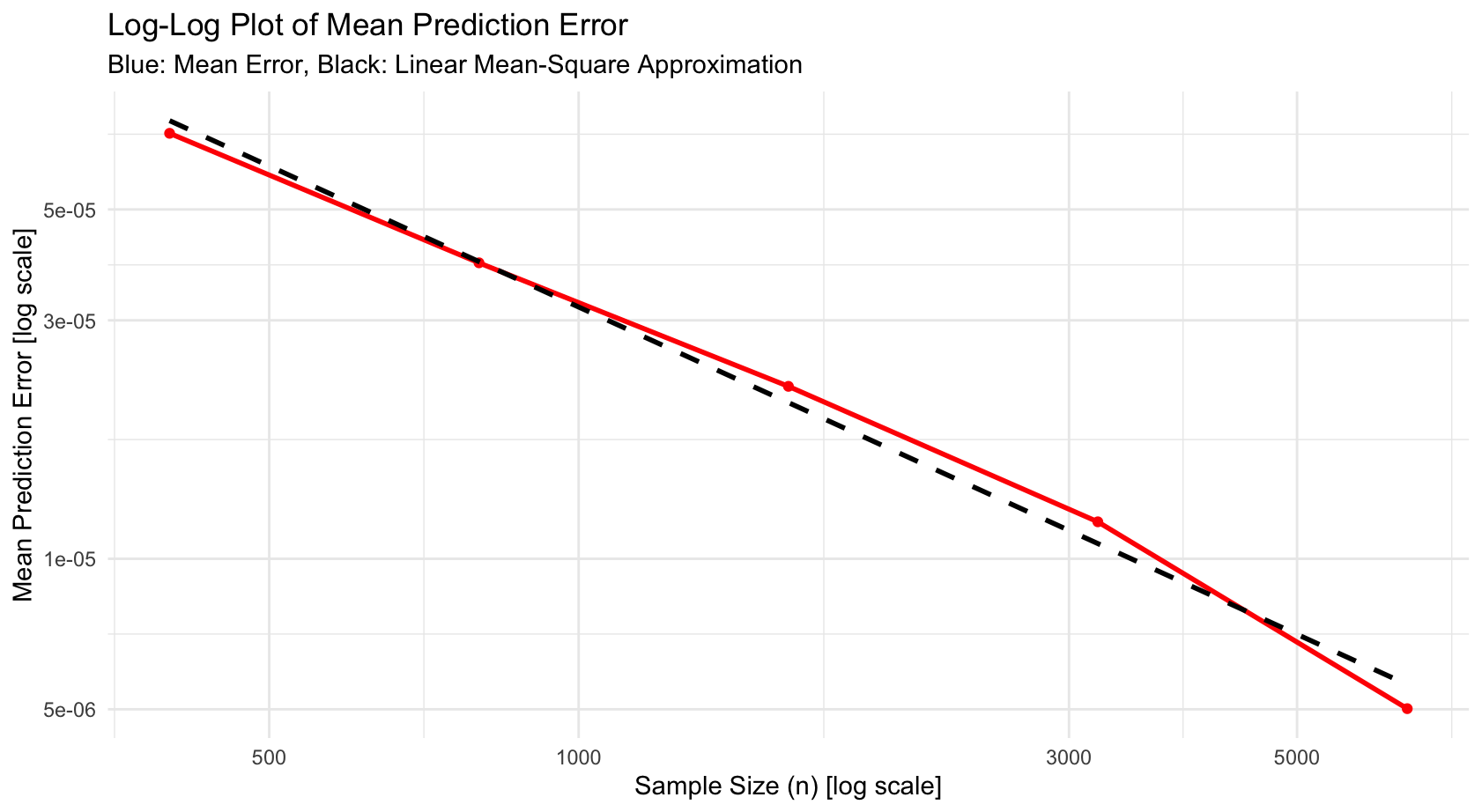}
        \caption{Log-log plot of average prediction error vs. sample size $n$}
        \label{fig:sub-2}
    \end{subfigure}

    \vspace{0.5cm} 

    \begin{subfigure}[b]{0.45\textwidth}
        \includegraphics[width=\textwidth]{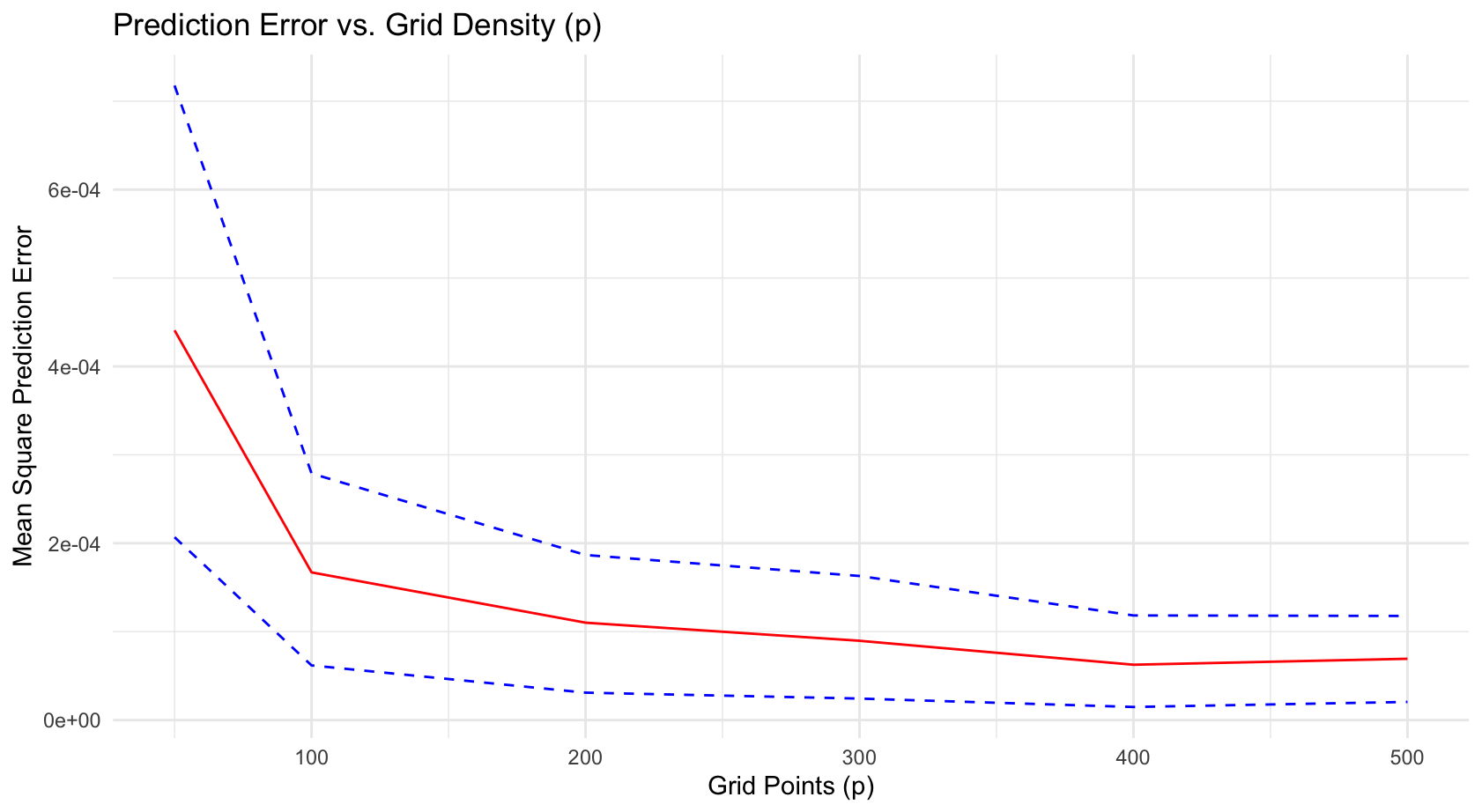}
        \caption{Average prediction error vs. number of observation points $p$}
        \label{fig:sub-3}
    \end{subfigure}
    \hfill
    \begin{subfigure}[b]{0.45\textwidth}
        \includegraphics[width=\textwidth]{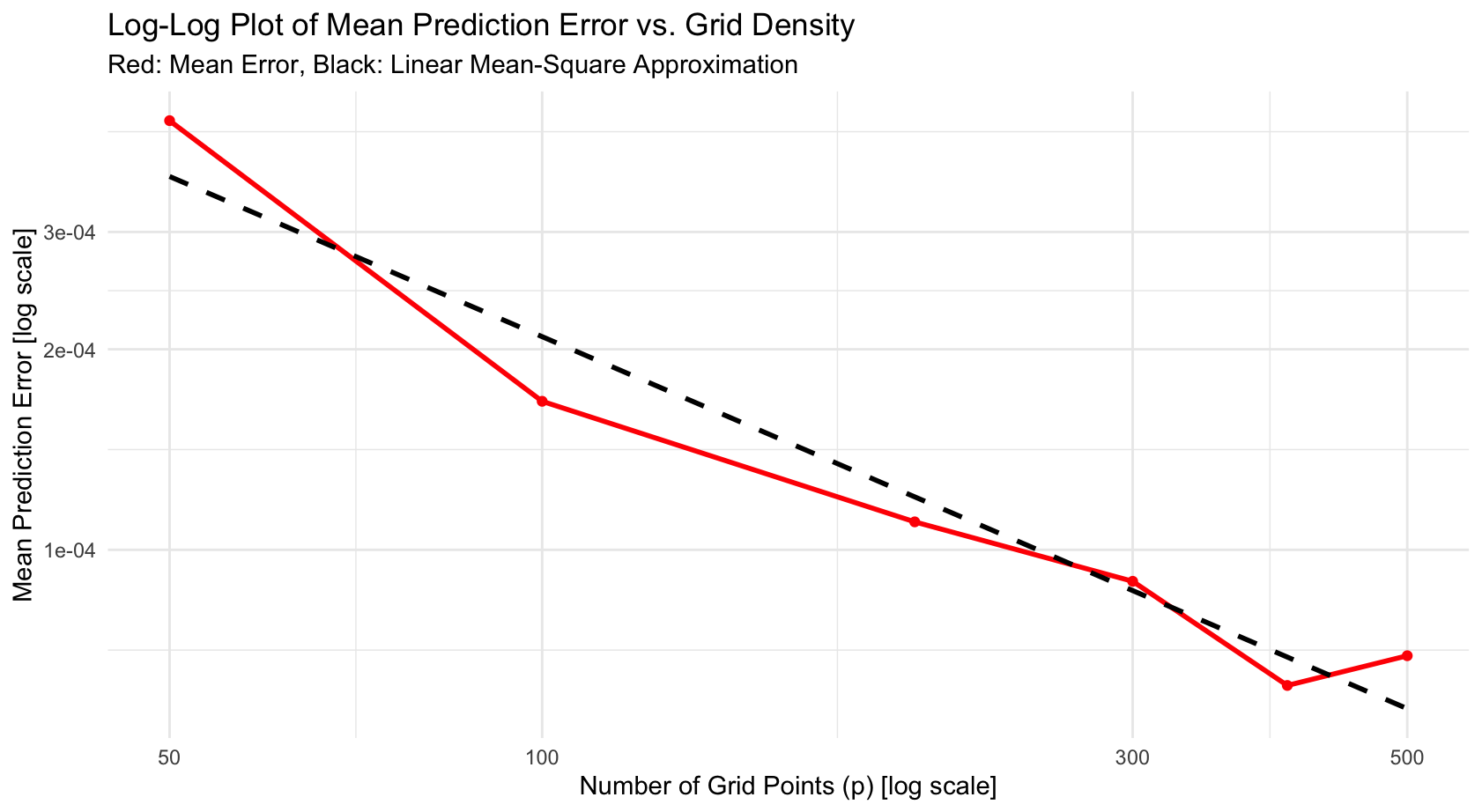}
        \caption{Log-log plot of average prediction error vs. number of observation points $p$}
        \label{fig:sub-4}
    \end{subfigure}

    \caption{Monte Carlo approximation of the mean squared $\Gamma$-norm error as a function of the sample size $n$ and the grid density $p$. The solid red curves represent the empirical error averaged over the Monte Carlo replications. In panels a. and c. the blue dashed curves represent the 10th and 90th empirical deciles of the Monte Carlo errors. Panels b. and d. display the corresponding mean errors on a log-log scale; in these panels, the black dashed lines represent linear fits used to assess the empirical rate of decay of the error.}
    \label{fig:2}
\end{figure}

\medskip\noindent
In Figures \ref{fig:sub-1}, \ref{fig:sub-2} we analyze the impact of the sample size $n$ by setting a fixed $p=7500$. The variance of the noises $\tau^{2}$ and $\sigma^{2}$ are both set to 0.1. The sample size, $n$, varies across the set $\{200, 400, 800, 1600, 3200, 6400\}$. The resulting log-log plot yields a slope of $-0.92$, which is quite close to the value that we can expect given the convergence rate established in Theorem \ref{thm:4.1}. Indeed, here we have $k=1$ and $a=2$ hence we expect to have an error which decreases in $n^{-(2k+2a)/(2a+2k+1)}=n^{-6/7}$.

\medskip\noindent
In Figures \ref{fig:sub-3}, \ref{fig:sub-4} we study the impact of the size of the grid $p$ by setting a fixed $n=1000$. The variance of the noise $\sigma^{2}=0.1$ but we use here a much higher variance for the noise of the observation $\tau^{2}=4$. The number of observations, $p$, on the grid varies across the set $\{50, 100, 200, 300, 400, 500\}$. The resulting log-log plot yields a slope of $-0.79$ which matches the rate that we expect from Theorem \ref{thm:4.1}. Indeed, here we have $a=2$ hence we expect to have an error which decreases in $p^{-(2a-1)/(2a)}=p^{-3/4}$. However, for smaller values of $\tau$, the rate of decrease is significantly higher. This could be explained by the fact that, when $\tau$ is small, the influence of noise on the reconstruction error of the curves is reduced; instead, the error is primarily driven by the choice of reconstruction scheme.

\medskip
We also studied the case where the true curves are less smooth. In particular we consider now $\lambda_{j}=1/j^2$ for all $j=1,\ldots,2J+1$. All the parameters and considered function remain unchanged. 
\begin{figure}[htbp]
    \captionsetup[subfigure]{labelformat=simple, labelsep=period}
    \centering
    \begin{subfigure}[b]{0.45\textwidth}
        \includegraphics[width=\textwidth]{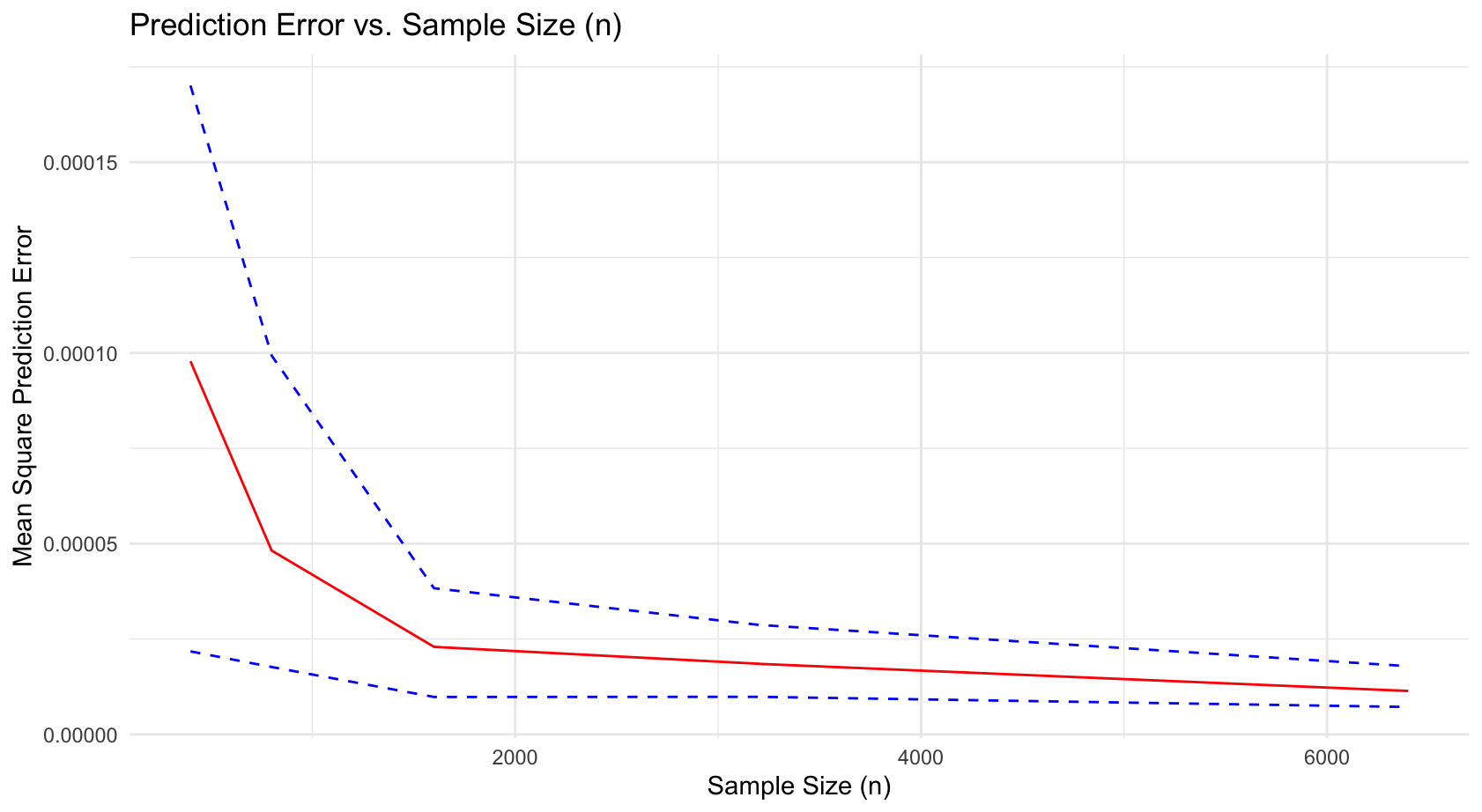}
        \caption{Average prediction error vs. sample size $n$}
        \label{fig:sub-5}
    \end{subfigure}
    \hfill
    \begin{subfigure}[b]{0.45\textwidth}
        \includegraphics[width=\textwidth]{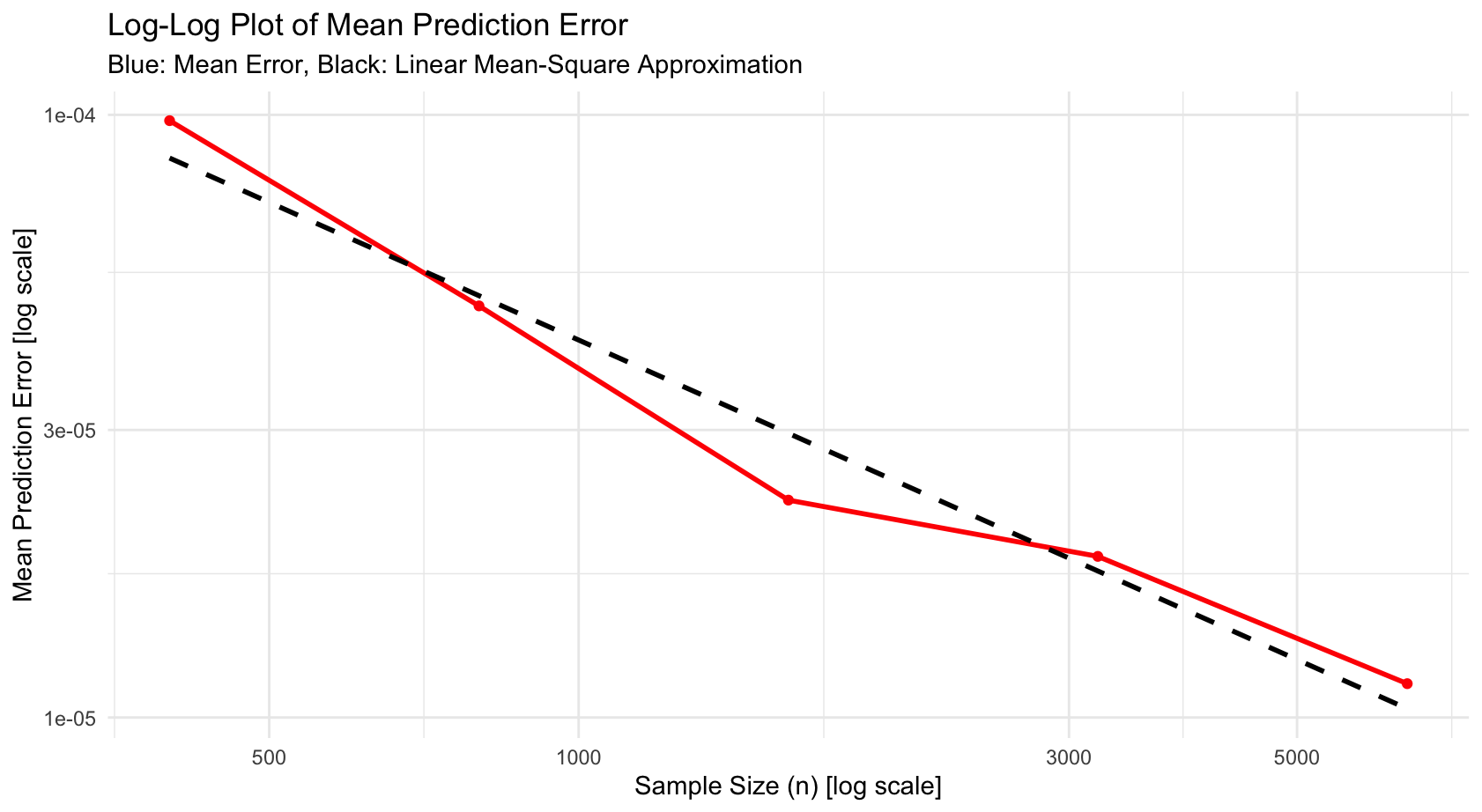}
        \caption{Log-log plot of average prediction error vs. sample size $n$}
        \label{fig:sub-6}
    \end{subfigure}

    \vspace{0.5cm} 

    \begin{subfigure}[b]{0.45\textwidth}
        \includegraphics[width=\textwidth]{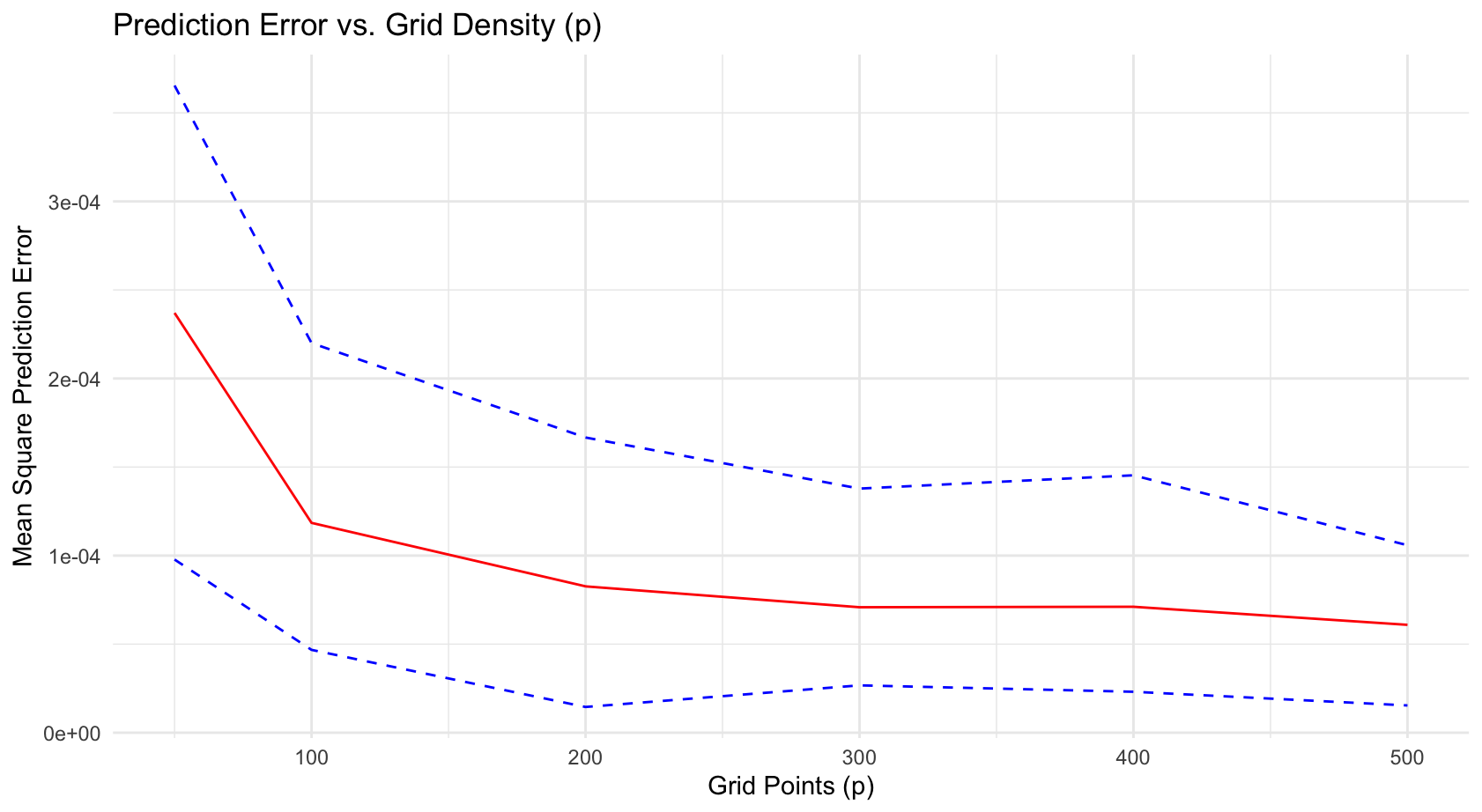}
        \caption{Average prediction error vs. number of observation points $p$}
        \label{fig:sub-7}
    \end{subfigure}
    \hfill
    \begin{subfigure}[b]{0.45\textwidth}
        \includegraphics[width=\textwidth]{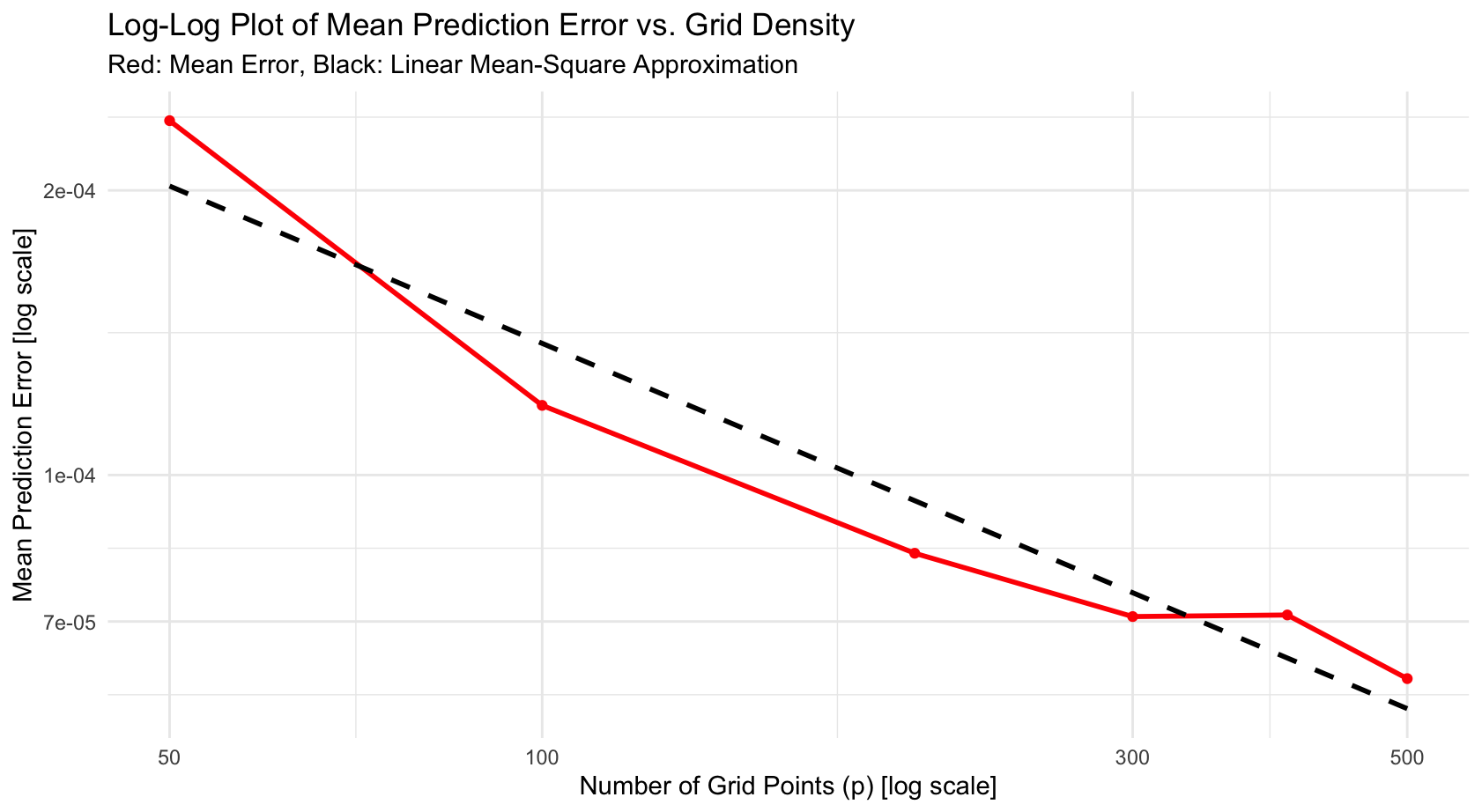}
        \caption{Log-log Plot of average prediction error vs. number of observation points $p$}
        \label{fig:sub-8}
    \end{subfigure}

    \caption{Monte Carlo approximation of the mean squared $\Gamma$-norm error as a function of the sample size $n$ and the grid density $p$. The solid red curves represent the empirical error averaged over the Monte Carlo replications. In panels a. and c. the blue dashed curves represent the 10th and 90th empirical deciles of the Monte Carlo errors. Panels b. and d. display the corresponding mean errors on a log-log scale; in these panels, the black dashed lines represent linear fits used to assess the empirical rate of decay of the error.}
    \label{fig:3}
\end{figure}
On Figure \ref{fig:3} we observe that the resulting log-log plot for the case $p$ fixed - varying $n$ yields a slope of $-0.76$, which is close to the value that we can expect given the convergence rate established in Theorem \ref{thm:4.1}. Indeed, here we have $k=1$ and $a=1$ therefore we expect to have an error which decreases in $n^{-(2k+2a)/(2a+2k+1)}=n^{-0.8}$. For the case $n$ fixed - varying $p$ the log-log plot gives an estimated slope of approximately $-0.55$, which indicates that the empirical decay of the prediction error is quite consistent with the expected $p^{-(2a-1)/(2a)}=p^{-1/2}$ behavior.

\subsubsection{Study of \texorpdfstring{$\beta_2$}{beta 2}}
In this section, we consider the true slope function $\beta_2$. Because $\beta_2$ does not perfectly satisfy the periodic boundary condition $\beta_2(0) = \beta_2(1)$, this setup allows us to evaluate how our estimator perform even if the true slope function does not perfectly lie in the space $W^{\text{per}}$. In this part we pick $a=2$ and aim to compare our result with the first study of $\beta_1$. The same parameters are used for the simulations.
\begin{figure}[htbp]
    \captionsetup[subfigure]{labelformat=simple, labelsep=period}
    \centering
    \begin{subfigure}[b]{0.45\textwidth}
        \includegraphics[width=\textwidth]{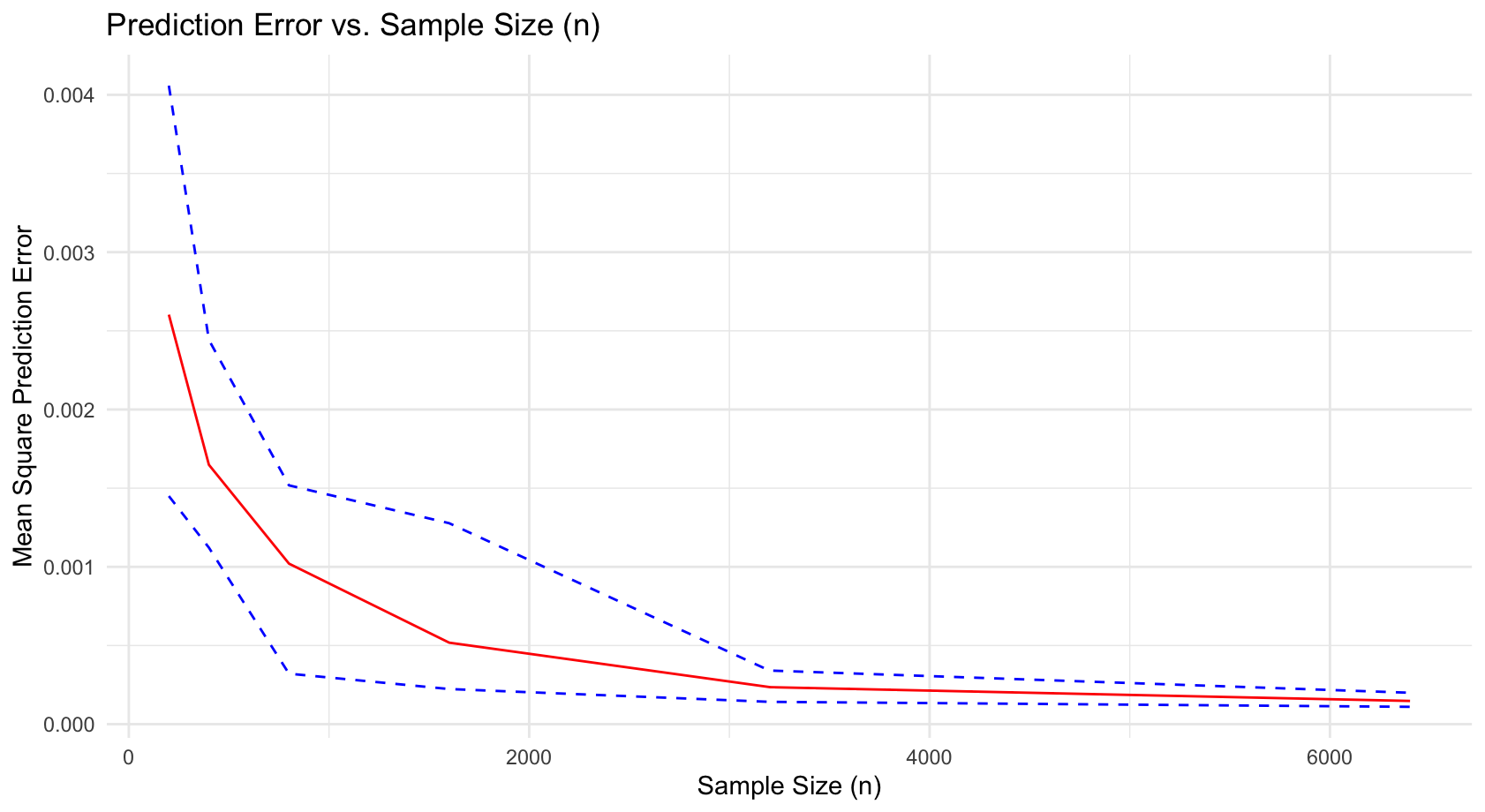}
        \caption{Average prediction error vs. sample size $n$}
        \label{fig:sub-9}
    \end{subfigure}
    \hfill
    \begin{subfigure}[b]{0.45\textwidth}
        \includegraphics[width=\textwidth]{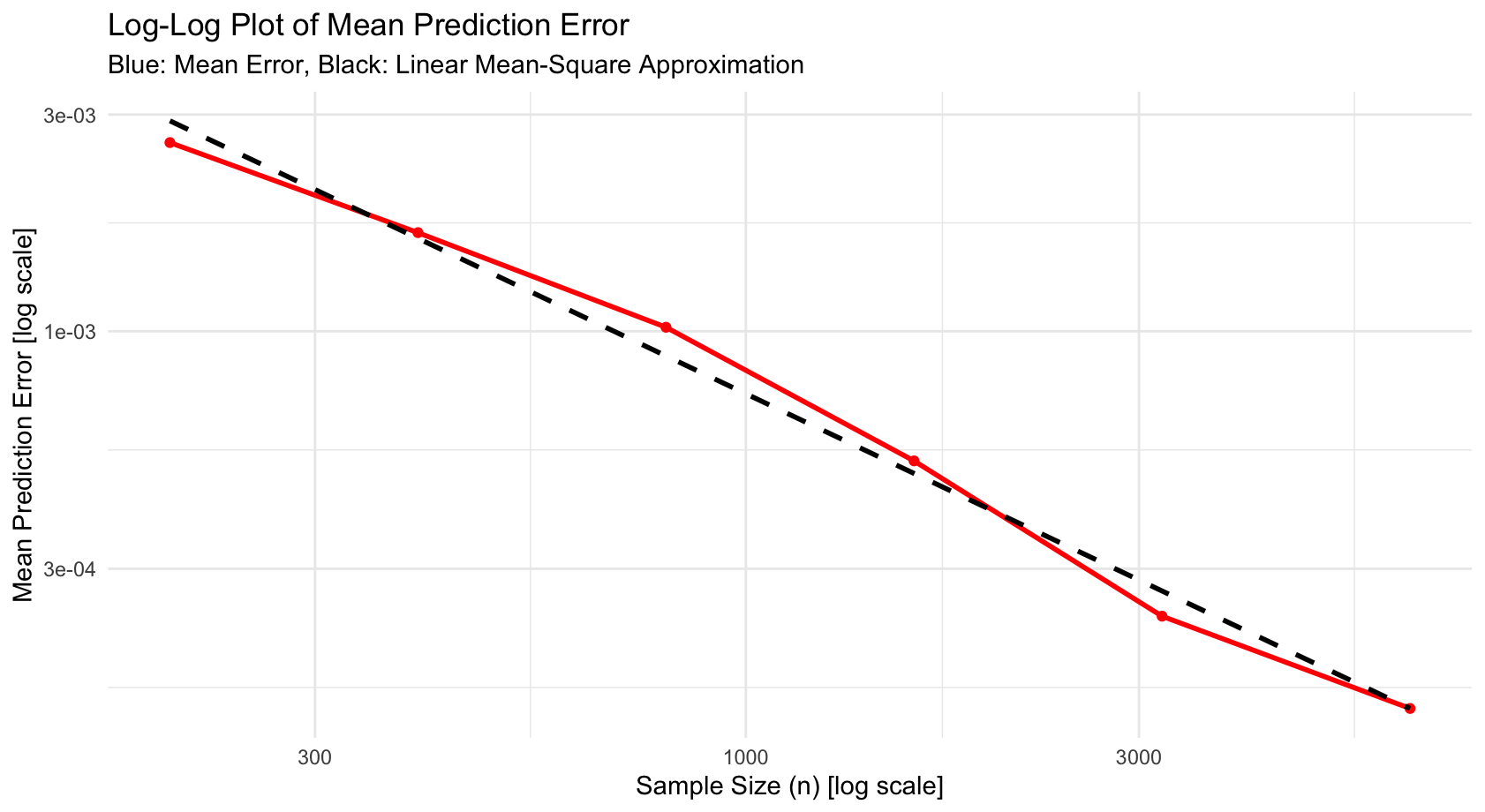}
        \caption{Log-log plot of average prediction error vs. sample size $n$}
        \label{fig:sub-10}
    \end{subfigure}

    \vspace{1cm} 

    \begin{subfigure}[b]{0.45\textwidth}
        \includegraphics[width=\textwidth]{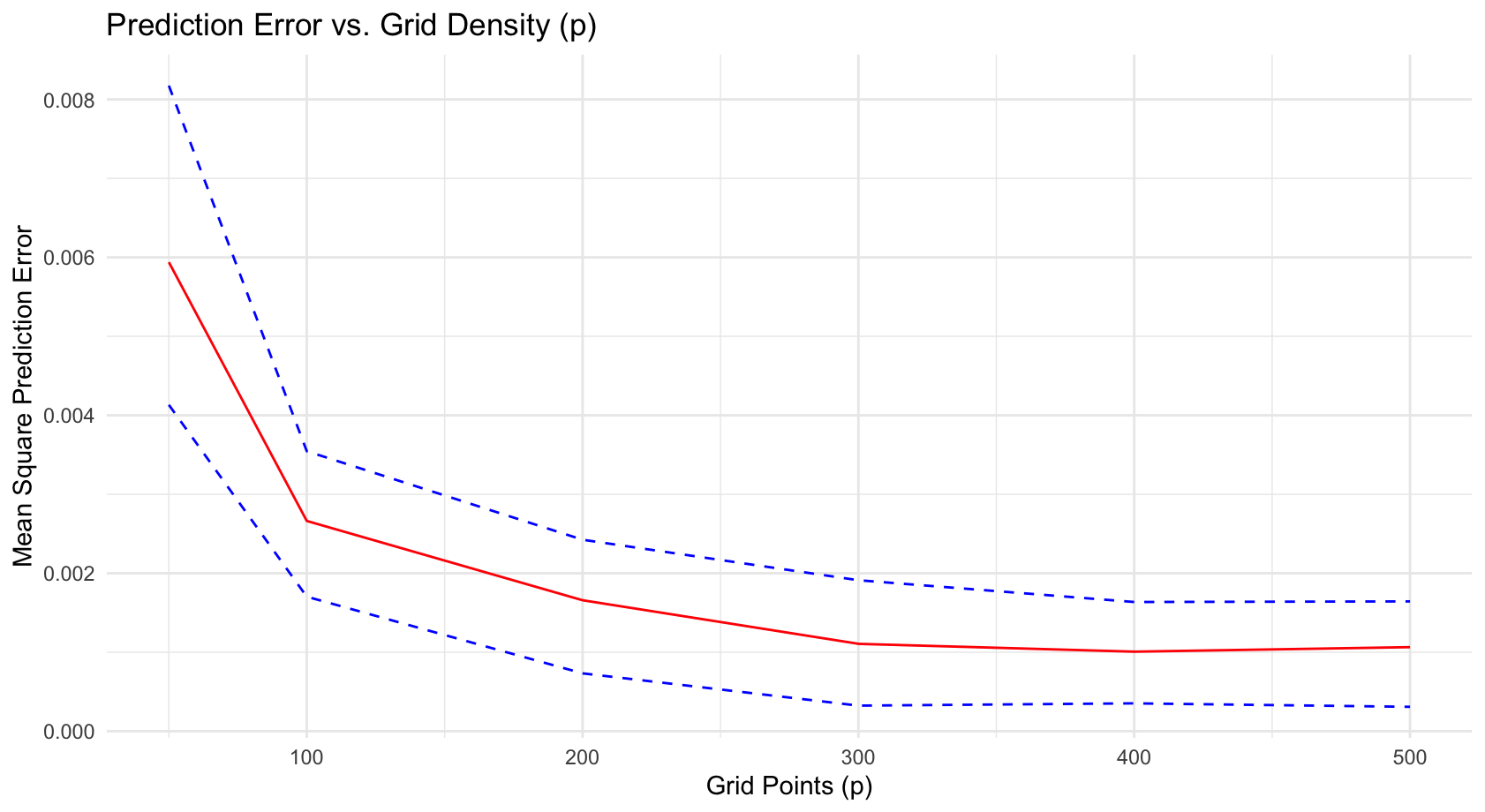}
        \caption{Average prediction error vs. number of observation points $p$}
        \label{fig:sub-11}
    \end{subfigure}
    \hfill
    \begin{subfigure}[b]{0.45\textwidth}
        \includegraphics[width=\textwidth]{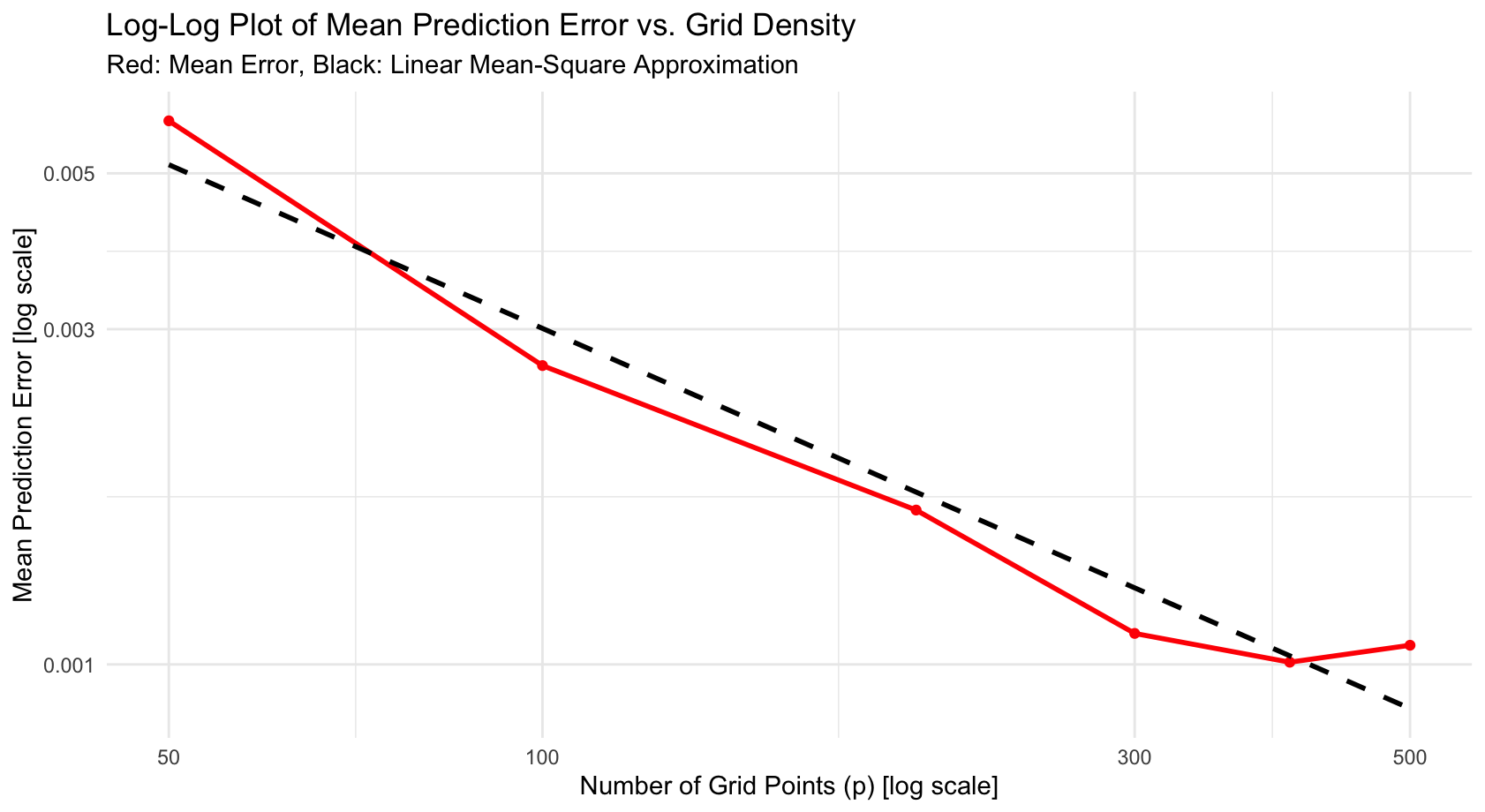}
        \caption{Log-log plot of average prediction error vs. number of observation points $p$}
        \label{fig:sub-12}
    \end{subfigure}

    \caption{Monte Carlo approximation of the mean squared $\Gamma$-norm error as a function of the sample size $n$ and the grid density $p$. The solid red curves represent the empirical error averaged over the Monte Carlo replications. In panels a. and c. the blue dashed curves represent the 10th and 90th empirical deciles of the Monte Carlo errors. Panels b. and d. display the corresponding mean errors on a log-log scale; in these panels, the black dashed lines represent linear fits used to assess the empirical rate of decay of the error.}
    \label{fig:beta_2}
\end{figure}
The log-log plot in Figures \ref{fig:sub-10} and \ref{fig:sub-12} yield respectively slopes of $-0.86$ and $-0.77$, which are very close to the one that we got previously and which match our theoretical results. 

\subsubsection{Study of \texorpdfstring{$\beta_3$}{beta 3}}
In this section, we evaluate the estimator's ability to handle change points by examining its behavior around the two discontinuities located at $t_1=0.30$ and $t_2=0.72$ of the true slope function $\beta_3$ defined in Equation \eqref{eq:beta_3}. Instead of using a threshold-based detection method, we adopt a visual approach to better understand the local behavior of our estimator. We plot the true slope function against $10$ estimated curves obtained from independent samples. To illustrate the asymptotic behavior of the estimator, we compare two distinct settings: a moderate sample size ($n=800$, $p=5000$, $D_{N_{n,p}}=15$) and a large sample size ($n=6400$, $p=5000$, $D_{N_{n,p}}=81$).
\begin{figure}[htbp]
    \captionsetup[subfigure]{labelformat=simple, labelsep=period}
    \centering
    
    \begin{subfigure}{0.8\textwidth}
        \centering
        \includegraphics[width=\textwidth]{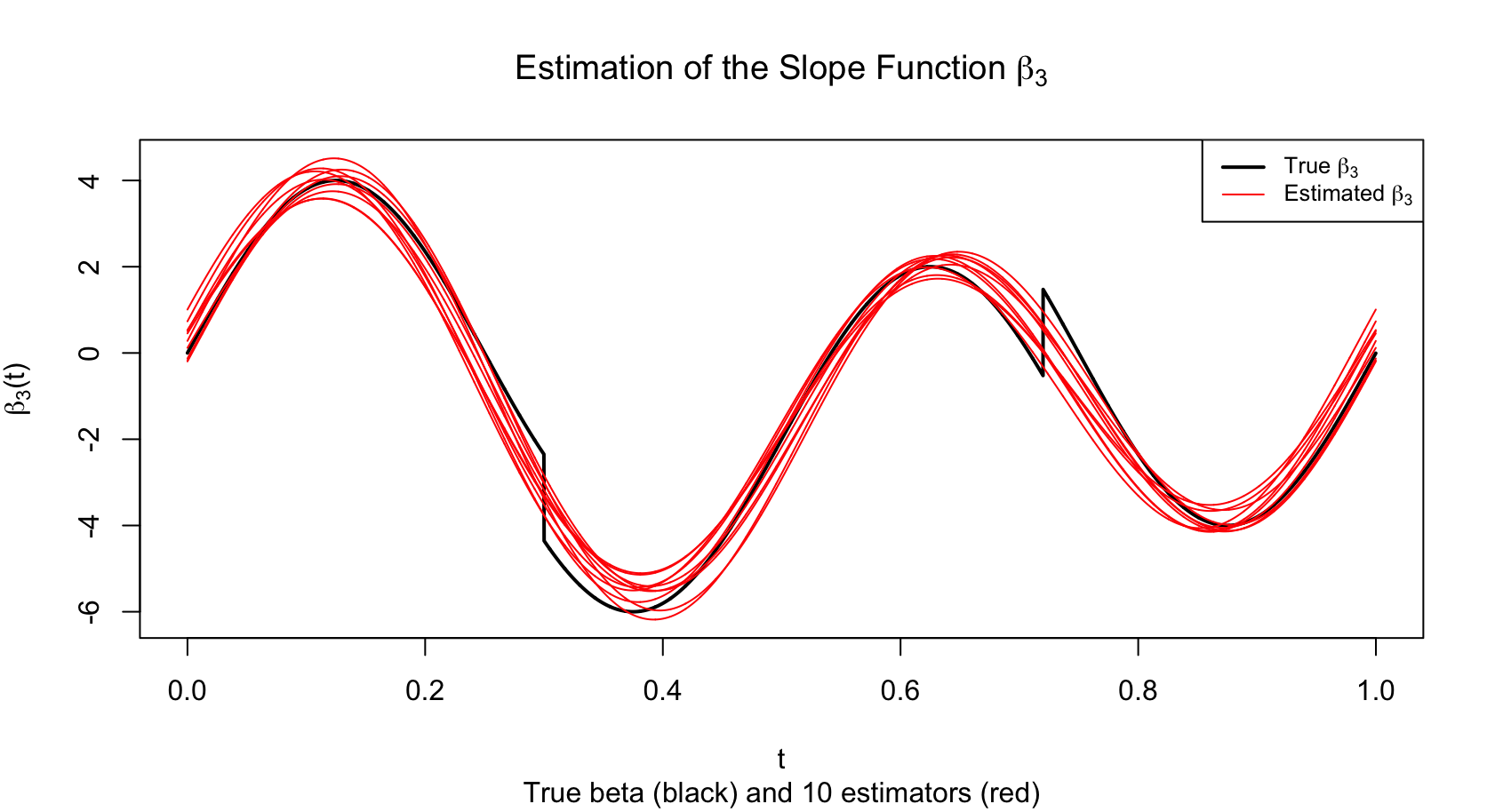}
        \caption{$n=800$, $p=5000$, and $D_{N_{n,p}}=15$.}
        \label{fig:beta3_800}
    \end{subfigure}

    \begin{subfigure}{0.8\textwidth}
        \centering
        \includegraphics[width=\textwidth]{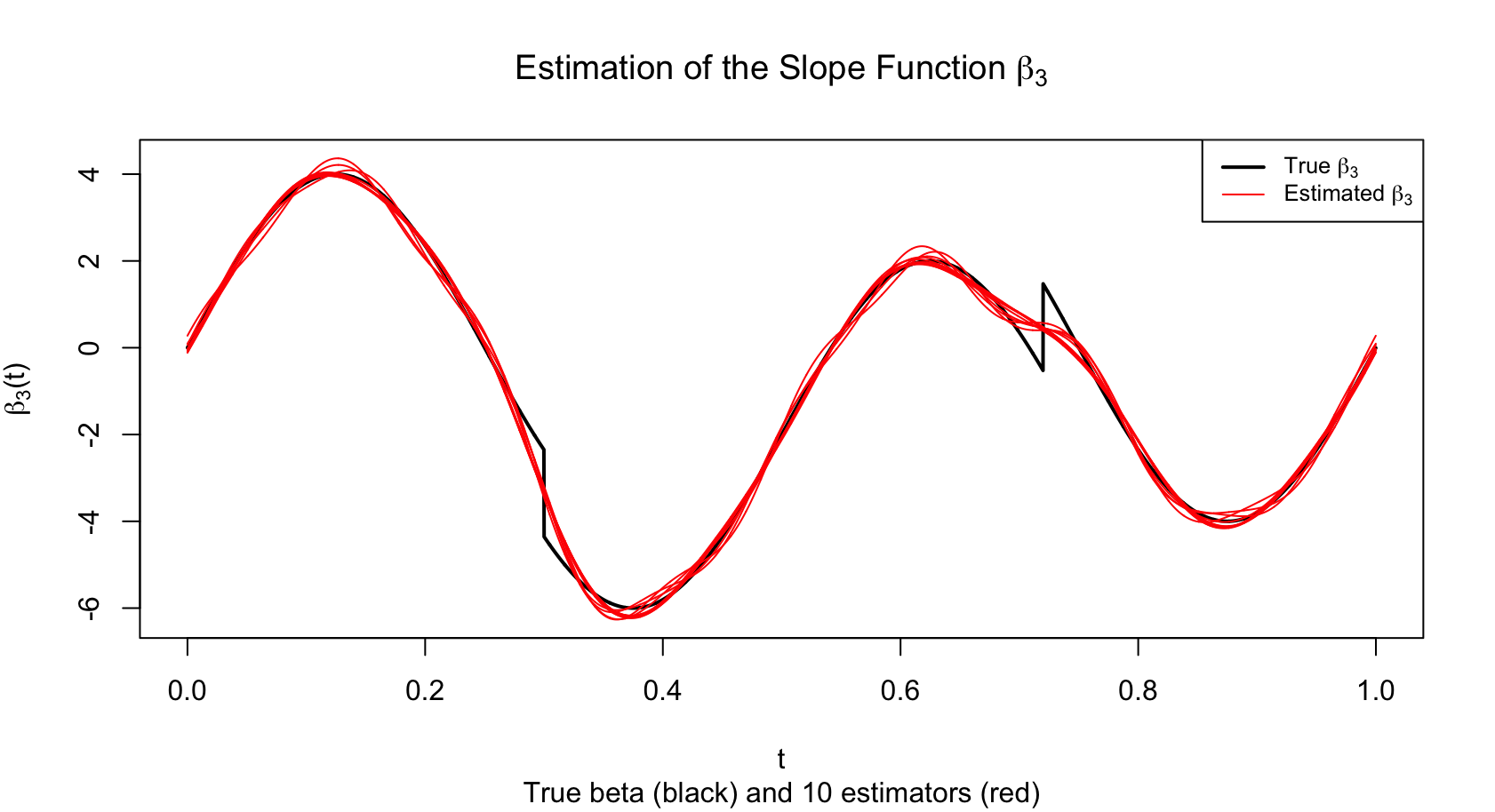}
        \caption{$n=6400$, $p=5000$, and $D_{N_{n,p}}=81$.}
        \label{fig:beta3_6400}
    \end{subfigure}
    
    \caption{Estimation of the slope function $\beta_3$ for $p=5000$ and different sample sizes $n$.}
    \label{fig:beta3_comparison}
\end{figure}
Figure \ref{fig:beta3_800} displays the results for the setting with $n=800$. The estimated curves (in red) capture the global trend of $\beta_3$ but completely smooth over the discontinuities. On the other hand, Figure \ref{fig:beta3_6400} illustrates the results for $n=6400$. With a larger sample size and a higher dimension for the basis ($D_{N_{n,p}}=81$), the variance of the estimators is significantly reduced, yielding a good fit on the continuous segments of $\beta_3$. More importantly, around the jump location $t_2$, the estimators actively attempt to fit the discontinuities as it benefits from an increased flexibility provided by $D_{N_{n,p}}=81$. 

\subsection{Effect of the unknown variance \texorpdfstring{$\sigma^{2}$}{sigma 2}}
Throughout this work we have assumed that $\sigma^{2}$, the variance of $\epsilon$ defined in Equation \eqref{eq:Y1}, is known. In practice however, this is generally not the case and $\sigma$ must therefore be estimated. To this end, for each model dimension $m$, we consider the empirical variance estimator \begin{equation}\label{eq:estimate_sigma}
    \widehat{\sigma}_{m}^{2}:=\frac{1}{n}\sum_{i=1}^{n}\left(
Y_i - \left\langle \widehat{\beta}_{m}, \widetilde{X}_i \right\rangle_{L^{2}}
\right)^{2}.
\end{equation}
We then select the dimension $\widehat{m}_{\mathrm{plug}}$ by minimizing the following penalized criterion:
\begin{equation}\label{eq:penalized_criterion}
    \widehat{m}_{\mathrm{plug}}\in\argmin_{m\in\mathcal{M}_{n,p}}\left(\gamma_n(\widehat{\beta}_m)+\theta(1+\delta)\frac{D_m\widehat{\sigma}_{m}^{2}}{n}\right).
\end{equation}
\begin{table*}[htbp]
\caption{Approximate mean squared $\Gamma$-norm prediction errors for the slope function $\beta_1$ with variance $\sigma^{2}$ known or unknown. The upper panel reports errors as $n$ varies with $p=7500$ fixed, while the lower panel reports errors as $p$ varies with $n=1000$ fixed.}
\label{tab:beta_prediction_errors_variance}
\begin{tabular}{@{}lccccccc@{}}
\hline
\multicolumn{8}{c}{Fixed $p=7500$} \\
\hline
Slope & $\sigma^{2}$ & $n=200$ & $n=400$ & $n=800$ & $n=1600$ & $n=3200$ & $n=6400$ \\
\hline
$\beta_1$ & known   & $1.42 \times 10^{-4}$ & $6.90\times 10^{-5}$ & $3.52\times 10^{-5}$ & $1.73\times 10^{-5}$ & $1.09\times 10^{-5}$ & $6.55\times 10^{-6}$ \\
$\beta_1$ & unknown & $1.27 \times 10^{-4}$ & $8.56 \times 10^{-5}$ & $3.73\times 10^{-5}$ & $1.83 \times 10^{-5}$ & $8.68\times 10^{-6}$ & $6.11\times 10^{-6}$ \\[6pt]
\hline
\multicolumn{8}{c}{Fixed $n=1000$} \\
\hline
Slope & $\sigma^{2}$ & $p=50$ & $p=100$ & $p=200$ & $p=300$ & $p=400$ & $p=500$ \\
\hline
$\beta_1$ & known   & $2.24\times 10^{-4}$ & $1.06\times 10^{-4}$ & $6.65\times 10^{-5}$ & $5.09\times 10^{-5}$ & $3.92\times 10^{-5}$ & $3.94\times 10^{-5}$ \\
$\beta_1$ & unknown & $2.18 \times 10^{-4}$ & $9.32\times 10^{-5}$ & $4.96\times 10^{-5}$ & $4.72\times 10^{-5}$ & $3.87\times 10^{-5}$ & $3.44\times 10^{-5}$ \\
\hline
\end{tabular}
\end{table*}

\begin{table*}[htbp]
\caption{Approximate mean squared $\Gamma$-norm prediction errors for the irregular slope function $\beta_2$ with variance $\sigma^{2}$ known or unknown. The upper panel reports errors as $n$ varies with $p=7500$ fixed, while the lower panel reports errors as $p$ varies with $n=1000$ fixed.}
\label{tab:beta2_prediction_errors_variance}
\begin{tabular}{@{}lccccccc@{}}
\hline
\multicolumn{8}{c}{Fixed $p=7500$} \\
\hline
Slope & $\sigma^{2}$ & $n=200$ & $n=400$ & $n=800$ & $n=1600$ & $n=3200$ & $n=6400$ \\
\hline
$\beta_2$ & known   & $2.60\times 10^{-3}$ & $1.64\times 10^{-3}$ & $1.02\times 10^{-3}$ & $5.18\times 10^{-4}$ & $2.36\times 10^{-4}$ & $1.48\times 10^{-4}$ \\
$\beta_2$ & unknown & $3.31\times 10^{-3}$ & $1.84\times 10^{-3}$ & $1.07\times 10^{-3}$ & $4.56\times 10^{-4}$ & $2.65\times 10^{-4}$ & $1.55\times 10^{-4}$ \\[6pt]
\hline
\multicolumn{8}{c}{Fixed $n=1000$} \\
\hline
Slope & $\sigma^{2}$ & $p=50$ & $p=100$ & $p=200$ & $p=300$ & $p=400$ & $p=500$ \\
\hline
$\beta_2$ & known   & $5.93\times 10^{-3}$ & $2.66\times 10^{-3}$ & $1.66\times 10^{-3}$ & $1.11\times 10^{-3}$ & $1.01\times 10^{-3}$ & $1.06\times 10^{-3}$ \\
$\beta_2$ & unknown & $5.63\times 10^{-3}$ & $2.81\times 10^{-3}$ & $1.51\times 10^{-3}$ & $1.22\times 10^{-3}$ & $1.07\times 10^{-3}$ & $9.19\times 10^{-4}$ \\
\hline
\end{tabular}
\end{table*}
Tables \ref{tab:beta_prediction_errors_variance} and \ref{tab:beta2_prediction_errors_variance} compare the mean squared $\Gamma$-norm prediction errors obtained when $\sigma^2$ is known and when it is estimated using \eqref{eq:estimate_sigma}. The comparison is performed for $\beta_1$ \eqref{eq:beta_1} and $\beta_2$ \eqref{eq:beta_2}. As in the previous simulations, we consider two asymptotic regimes: first, $n$ varies while $p=7500$ is fixed; second, $p$ varies while $n=1000$ is fixed. The results show that replacing $\sigma^2$ by the residual-based estimator $\widehat{\sigma}_m^2$ has little impact on the performance of the estimator. Indeed, in the two tables the prediction errors obtained with known and unknown variance remain of the same order of magnitude and exhibit similar decreasing behavior as $n$ or $p$ increases. This suggests that the procedure remains reliable in the more realistic setting where the regression noise variance is unknown.

\section{Application to meteorological data}
We apply our estimation procedure to meteorological data provided by Météo-France, originally covering the period 1950--2024. For this application, we retain observations from the recent period 2019--2024 and average them over time to avoid making the estimated curve depend on the particular weather realization of a single year. After removing observations with missing values and excluding February 29 to obtain a common yearly calendar, the final dataset consists of $n = 305$ temperature curves, each corresponding to one location, and $p = 365$ daily measurements. The annual average precipitation recorded at each site is used as the scalar response variable, while the corresponding annual temperature profile is treated as a functional covariate. This yields a scalar-on-function regression framework, whose objective is to investigate how the shape of the temperature curve is associated with annual rainfall across the selected French departments highlighted in Figure \ref{fig:departments}. Figures \ref{fig:curves} and \ref{fig:curves_station} show examples of the functional data for four stations located in Finistère.
\begin{figure}[htbp]
    \captionsetup[subfigure]{labelformat=simple, labelsep=period}
    \centering
    \begin{subfigure}[b]{0.8\textwidth}
        \centering
        \includegraphics[width=\textwidth]{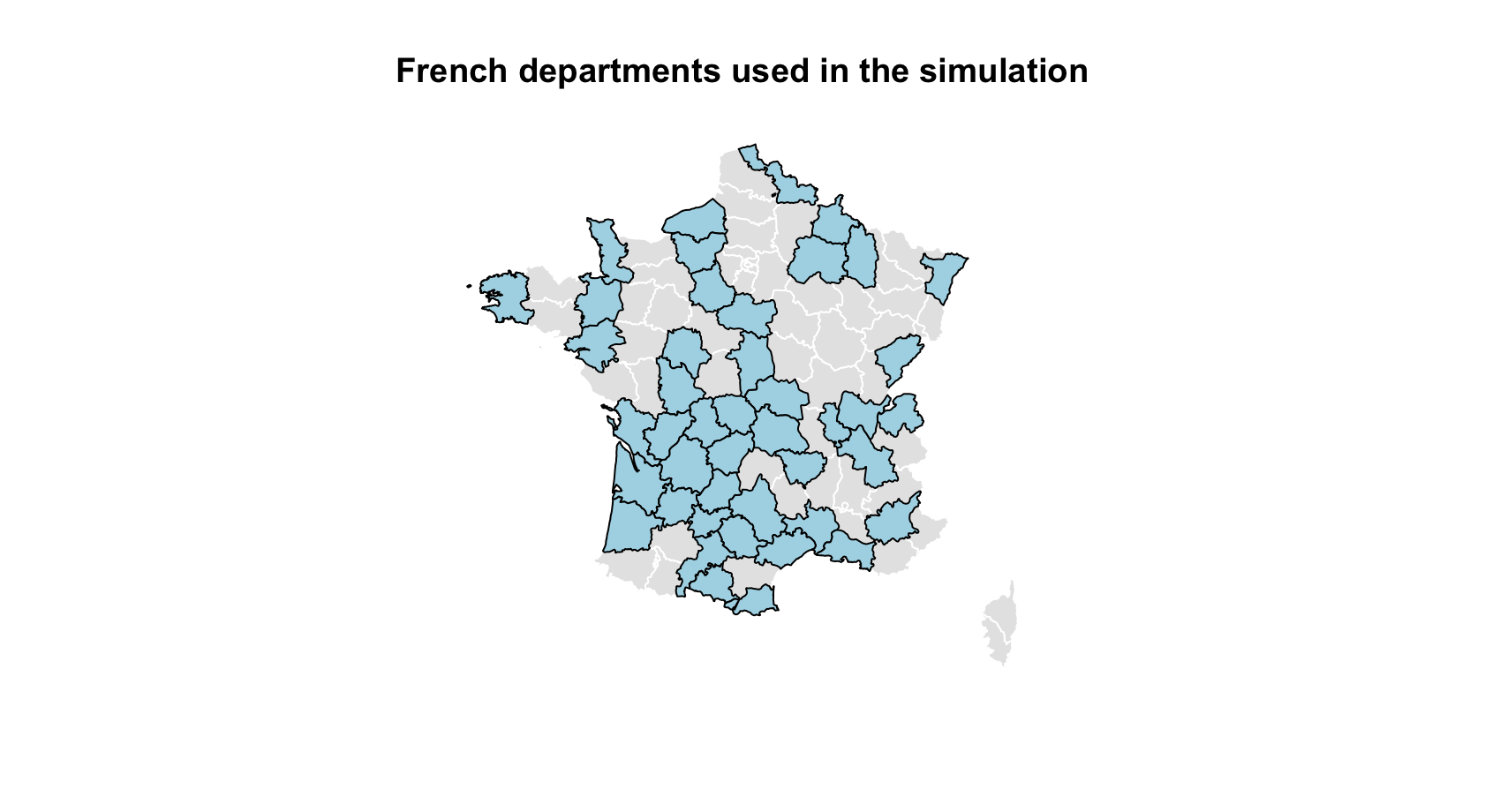}
        \caption{French departments included in the meteorological dataset.}
        \label{fig:departments}
    \end{subfigure}
    
    \begin{subfigure}[b]{0.7\textwidth}
        \centering
        \includegraphics[width=\textwidth]{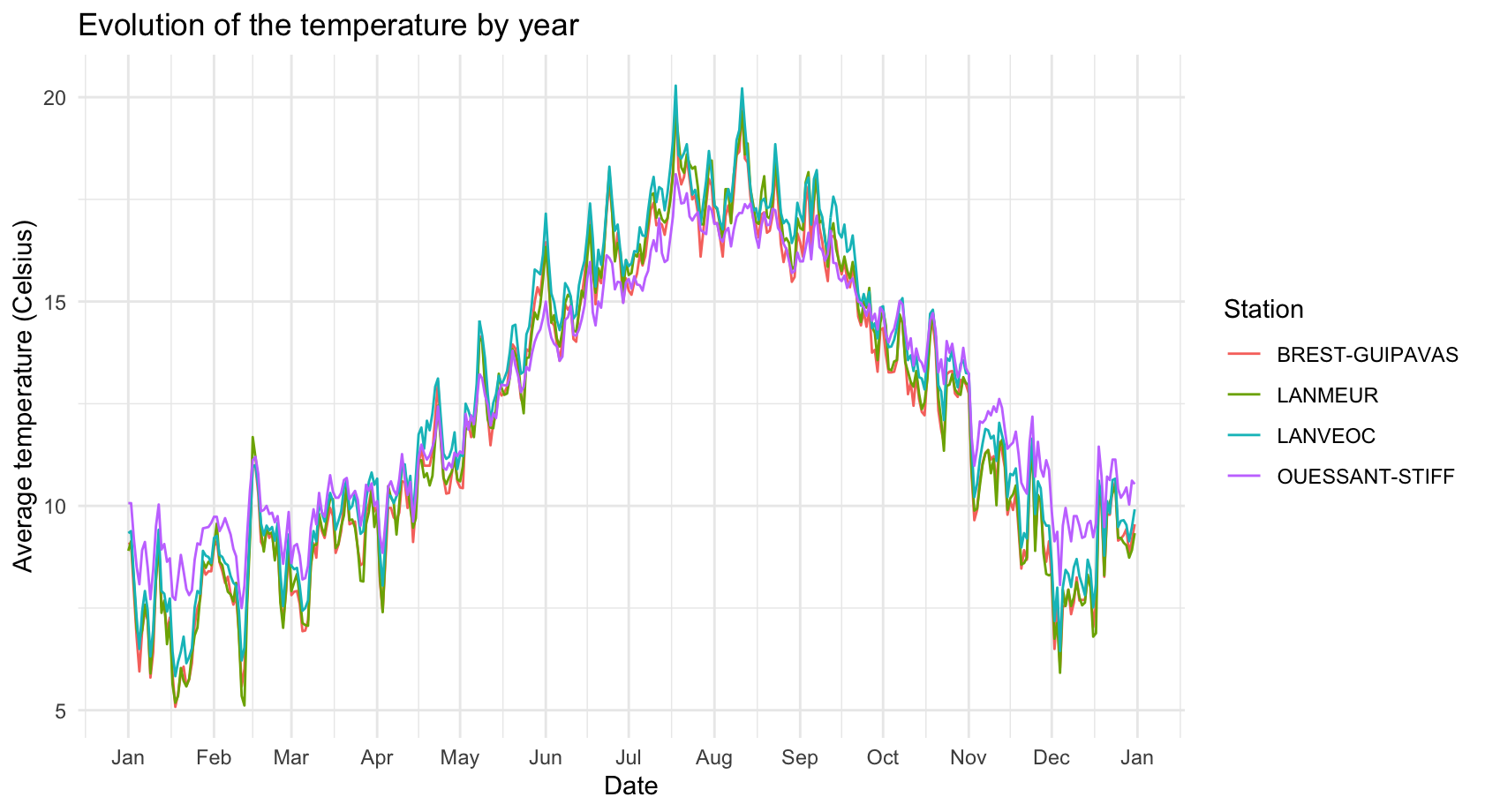}
        \caption{Averaged annual temperature curves across four stations in Finistère.}
        \label{fig:curves}
    \end{subfigure}

    \begin{subfigure}[b]{0.7\textwidth}
        \centering
        \includegraphics[width=\textwidth]{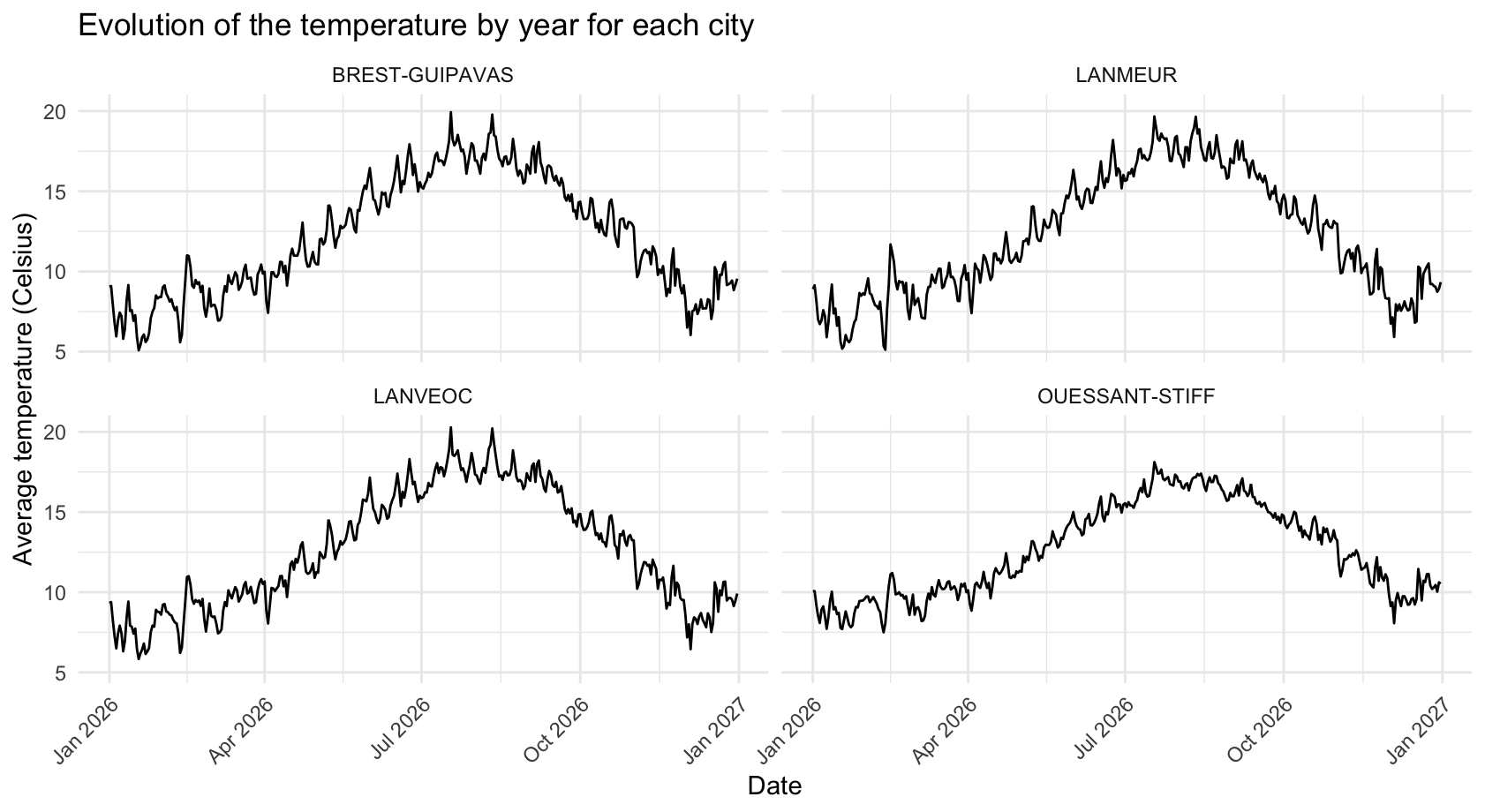}
        \caption{Averaged annual temperature curves for the selected stations in Finistère.}
        \label{fig:curves_station}
    \end{subfigure}
    \label{fig:combined_vertical}
    
    \caption{Meteorological dataset map and the associated temperature curves in Finistère.}
\end{figure}
The initial step involves partitioning the dataset into a training set ($80\%$ of observations) and a test set ($20\%$ of observations) to ensure an unbiased evaluation of the final model. We denote as $\mathrm{trainIDX}$ and $\mathrm{testIDX}$ the indexes of the data present in respectively the train and test set, as well as $n_{\mathrm{train}}$ and $n_{\mathrm{test}}$ their dimensions. Both sets are subsequently centered to standardize the data. More particularly, the empirical means are computed exclusively from the training set. For the scalar response, we define
\begin{equation*}
\overline{Y}_{\mathrm{train}}=\frac{1}{n_{\mathrm{train}}}\sum_{i \in \mathrm{trainIDX}} Y_i,
\end{equation*}
while, for each time point $t_j$, the mean temperature curve is given by
\begin{equation*}
\overline{Z}_{\mathrm{train}}(t_j)=\frac{1}{n_{\mathrm{train}}}\sum_{i \in \mathrm{trainIDX}} Z_i(t_j).
\end{equation*}
Both the training and test observations are then centered using these training-set means. Thus, the centered training data are defined as
\begin{equation*}
\widetilde{Y}_{\mathrm{train}}=Y_{\mathrm{train}}-\overline{Y}_{\mathrm{train}},
\end{equation*}
and for all $j=0,\ldots,p-1$
\begin{equation*}
\widetilde{Z}_{\mathrm{train}}(t_j)=Z_{\mathrm{train}}(t_j)-\overline{Z}_{\mathrm{train}}(t_j).
\end{equation*}
Similarly, the test data are centered using the same means computed from the training set:
\begin{equation*}
\widetilde{Y}_{\mathrm{test}}=Y_{\mathrm{test}}-\overline{Y}_{\mathrm{train}},
\end{equation*}
and for all $j=0,\ldots,p-1$
\begin{equation*}
\widetilde{Z}_{\mathrm{test}}(t_j)=Z_{\mathrm{test}}(t_j)-\overline{Z}_{\mathrm{train}}(t_j).
\end{equation*}
We then reconstruct the curves with the same methodology as in Section \ref{sec:curve_reconstruct}. Afterward we estimate the variance $\sigma^{2}$ using only the elements of training set $\widetilde{Z}_{\mathrm{train}}$ and $\widetilde{Y}_{\mathrm{train}}$, by using the estimator defined in Equation (\ref{eq:estimate_sigma}). The functional linear regression model is subsequently fitted on the training set, using the estimator of $\sigma^{2}$ to select the dimension $\widehat{m}_{\mathrm{plug}}$ (see \eqref{eq:penalized_criterion}). After this step we obtain an estimator of the slope function of the form $$\widetilde{\beta}(t)=\sum_{j=1}^{\widehat{m}_{\mathrm{plug}}}\widetilde{\alpha}_{j}\phi_j(t),$$and its predictive performance is evaluated on the held-out test set by calculating the root mean squared error (RMSE). Since the true curves are not available, we use the reconstructed ones to make prediction. More particularly, in order to do so, we reconstruct the curves from the noisy and discrete data of the test set ($\widetilde{Z}_{\mathrm{test}}$). Thus we obtain
\[\widetilde{X}_{\mathrm{test}_i}(t)=\sum_{j=1}^{D_{N_{n,p}}}\widetilde{x}_{i,j}\phi_{j}(t), \qquad \text{for all }i\in\mathrm{testIDX}\text{ and for all }t\in[0,1], \]and then, for each observation $i$ in the test set we predict 
\begin{equation*}
\widetilde{Y}_{\mathrm{pred}_i}=\sum_{j=1}^{\widehat{m}_{\mathrm{plug}}}\widetilde{x}_{i,j}\widetilde{\alpha}_j.
\end{equation*}
\begin{figure}[htbp]
    \centering
    \includegraphics[width=0.7\textwidth]{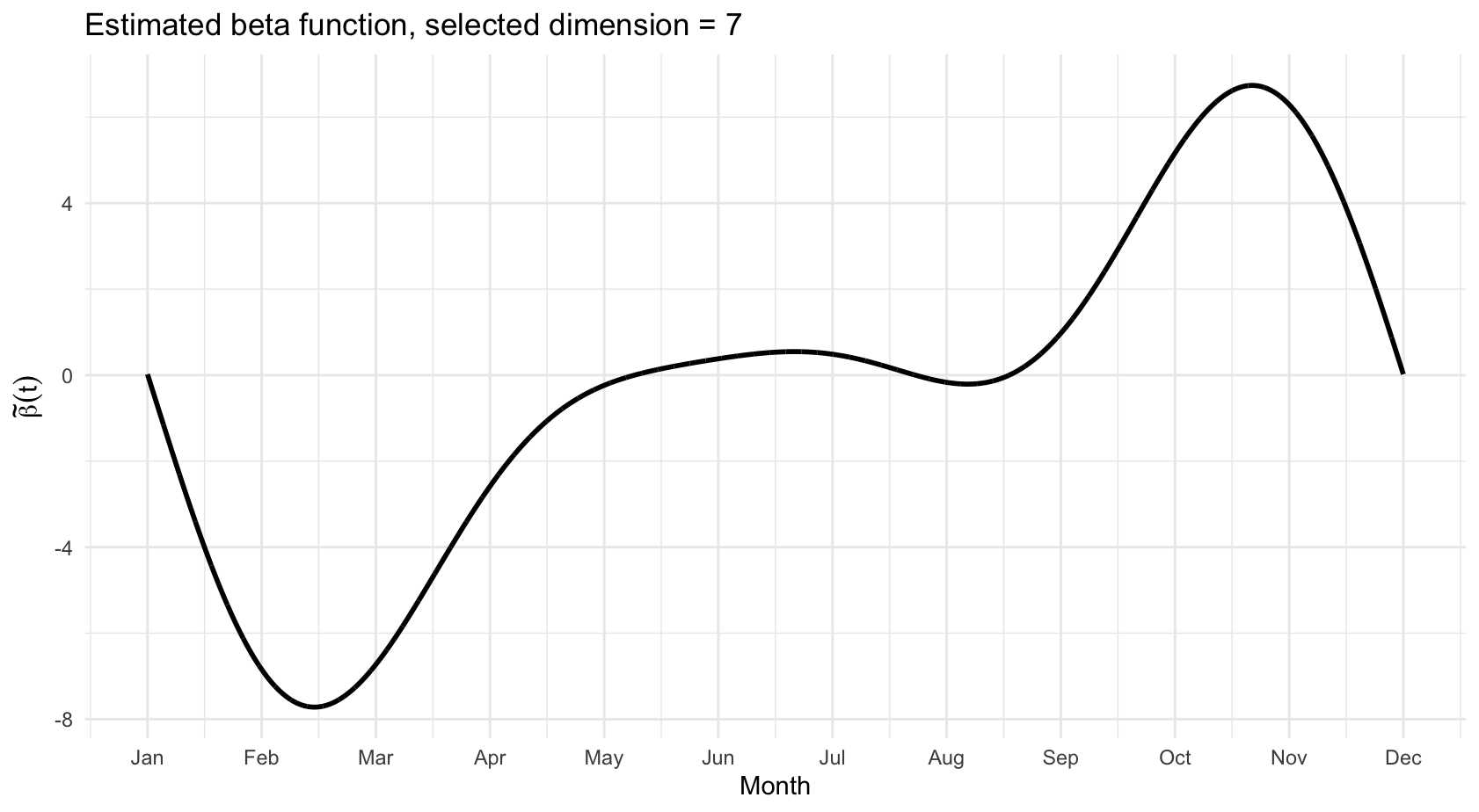}
    \caption{Estimated functional coefficient $\widetilde{\beta}(t)$, describing the effect of the annual temperature profile on average annual precipitation.}
    \label{fig:beta_true}
\end{figure}
In Figure \ref{fig:beta_true} we can see that the estimated slope function exhibits two extrema. One is negative during winter and early spring, with its lowest values around February–March, which indicates that higher temperatures during this period are associated with lower annual precipitation. The coefficient then plateau around 0 during the summer. Thus at this time the temperature curve does not impact the annual precipitation. The estimated slope function then increase quickly, reaching its maximum around October-November. This indicates a positive association between autumnal temperatures and annual precipitation. It then decreases during winter.
\begin{figure}[htbp]
\captionsetup[subfigure]{labelformat=simple, labelsep=period}
\centering

\begin{subfigure}{0.48\textwidth}
    
    \centering
    \includegraphics[width=\textwidth]{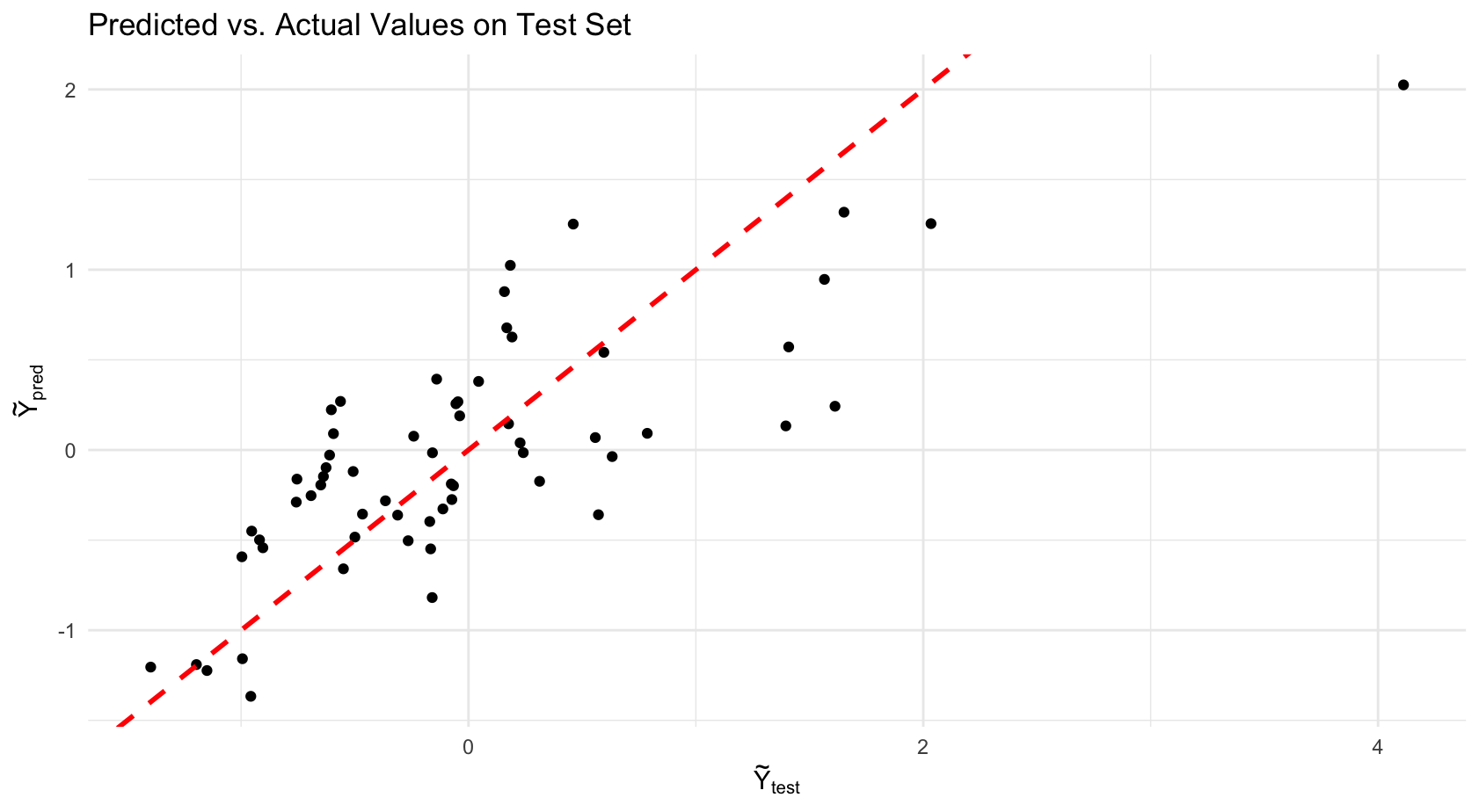}
    \caption{Plot of the actual vs predicted values, the red dashed line is $y=x$.}
    \label{fig:pred_actual}
\end{subfigure}
\hfill
\begin{subfigure}{0.48\textwidth}
    \centering
    \includegraphics[width=\textwidth]{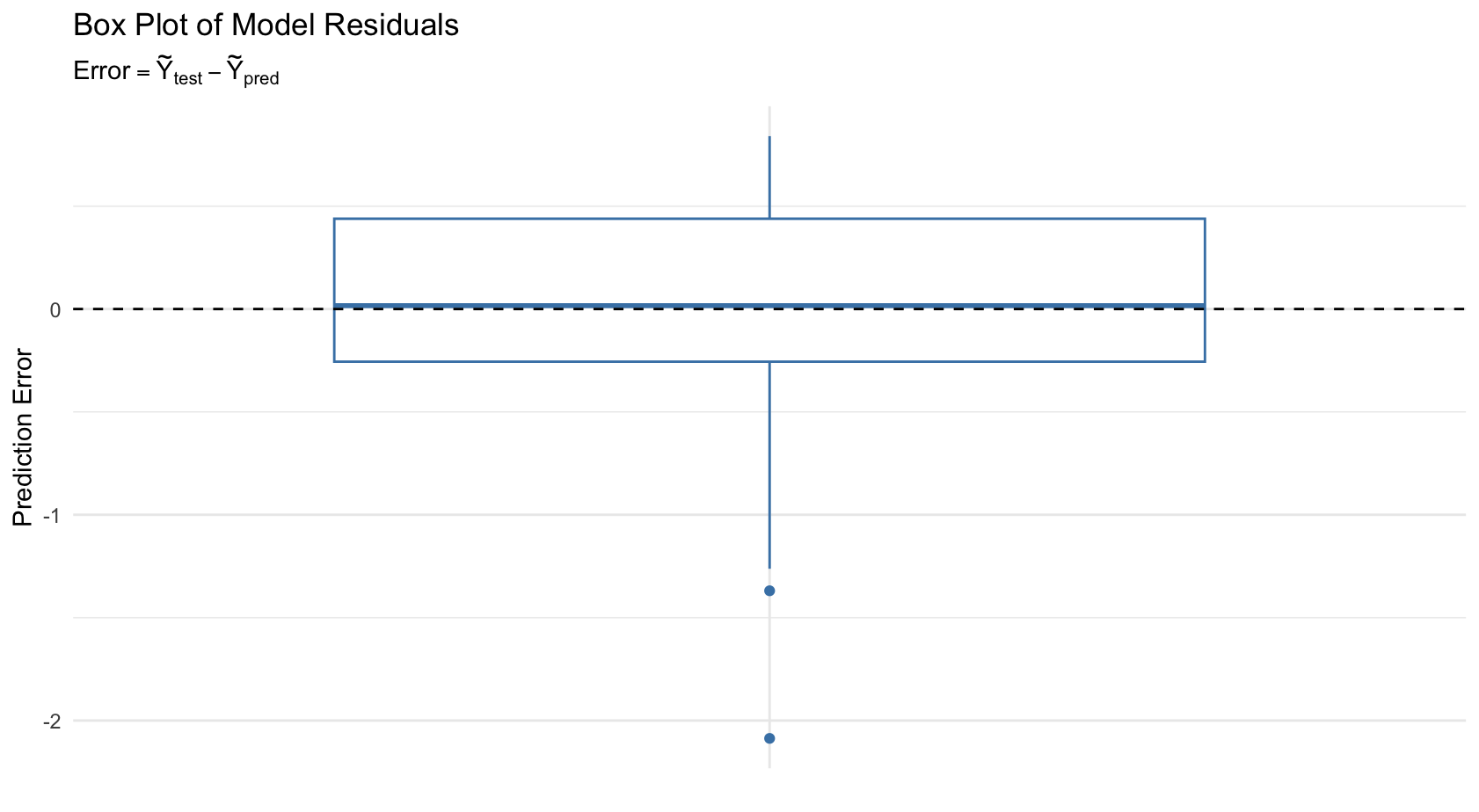}
    \caption{Boxplot of the model's residuals, showing the distribution of the prediction errors on the test set.}
    \label{fig:boxplot}
\end{subfigure}

\caption{Performance assessment of the estimator on the test set.}
\label{fig:results_app}

\end{figure}
Figure \ref{fig:pred_actual} shows a positive relationship between the observed and predicted centred values, which indicates that the model is able to reproduce the overall trend in annual precipitation. Figure \ref{fig:boxplot} confirm this. In particular, it shows that the prediction errors are mostly centred around zero therefore we don't have a systematic bias. However, the lower whisker is longer than the upper one, suggesting a slightly greater dispersion among negative residuals. Hence, some locations are more strongly overpredicted than underpredicted.

\medskip\noindent
The model achieves a RMSE of 0.58, which is quite low. However, this performance is only a small improvement over the baseline method where we use the average rainfall quantity with all the data from the training set to predict the average rainfall of the test set. This method yields a RMSE of 0.91. A primary reason for this small difference could be that the baseline method is already quite effective. Given that the target variable is the annual average rainfall, it is a stable and predictable value. The variations from year to year might be minimal, meaning that a simple prediction of the overall average is a already a strong starting point.

\begin{appendix}

\section{Gradient Calculation for the Least Squares Minimization}
\begin{lemma} \label{lem:grad}
The function $F$ which is such that
\begin{equation*}
F(v):=\frac{1}{n}\sum_{i=1}^{n}\left(Y_i - \sum_{j=1}^{D_m}v_j\langle \phi_{j},\widetilde{X}_{i}\rangle_{L^{2}}\right)^{2}, \qquad \text{for all } v \in\mathbb{R}^{D_m}  ,
\end{equation*}
is differentiable and its gradient is
\[\nabla F(v) = -2b + 2\Phi_m v,\] where $\Phi_m$ is defined by Equation \eqref{eq:Phi_matrix} and $b$ by Equation \eqref{eq:b_vector}.
\end{lemma}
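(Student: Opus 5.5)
We treat $F$ as a quadratic function of $v$ written in matrix form, so that both differentiability and the formula for the gradient follow at once. For each $i$, set $w_i:=(\langle \phi_j,\widetilde X_i\rangle_{L^2})_{1\le j\le D_m}\in\mathbb{R}^{D_m}$. Since $\widetilde X_i=\sum_{l=1}^{D_{N_{n,p}}}\widetilde x_{i,l}\phi_l$, the orthonormality of the Fourier basis gives $\langle\phi_j,\widetilde X_i\rangle_{L^2}=\widetilde x_{i,j}$ for every $j\le D_m\le D_{N_{n,p}}$. We use here that $m\in\mathcal{M}_{n,p}$, so $S_m\subset S_{N_{n,p}}$. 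Hence $w_i=(\widetilde x_{i,j})_j$, and
\[F(v)=\frac1n\sum_{i=1}^n\bigl(Y_i-\langle v,w_i\rangle_2\bigr)^2 .\]

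Expanding the square gives
\[F(v)=\frac1n\sum_{i=1}^nY_i^2-2\Bigl\langle v,\frac1n\sum_{i=1}^nY_iw_i\Bigr\rangle_2+v^\top\Bigl(\frac1n\sum_{i=1}^nw_iw_i^\top\Bigr)v .\]
By the definitions \eqref{eq:b_vector} and \eqref{eq:Phi_matrix}, this is
\[F(v)=c-2\langle b,v\rangle_2+v^\top\Phi_m v,\]
with $c$ a constant and $\Phi_m=\frac1n\sum_i w_iw_i^\top$ symmetric. Thus $F$ is a polynomial of degree at most two in $v$, so it is $C^\infty$ and in particular differentiable.

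For the gradient, a linear form contributes $\nabla\langle b,v\rangle_2=b$. For the quadratic part, $v^\top\Phi_m(v+h)-\ldots$ gives
\[(v+h)^\top\Phi_m(v+h)-v^\top\Phi_m v=h^\top(\Phi_m+\Phi_m^\top)v+h^\top\Phi_mh .\]
The last term is $o(\|h\|_2)$, so $\nabla(v^\top\Phi_mv)=(\Phi_m+\Phi_m^\top)v=2\Phi_mv$ by symmetry. Combining the two pieces yields $\nabla F(v)=-2b+2\Phi_mv$.

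There is no real obstacle in this argument. The only point that needs care is the identification $\langle\phi_j,\widetilde X_i\rangle_{L^2}=\widetilde x_{i,j}$, which requires $D_m\le D_{N_{n,p}}$. Without it, the entries of $\Phi_m$ and $b$ would not match the inner products appearing in $F$.
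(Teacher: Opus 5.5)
Your proof is correct and follows essentially the paper's argument. The paper differentiates $F$ coordinate by coordinate and then uses the orthonormality of the $\phi_j$ to replace $\langle\phi_k,\widetilde{X}_i\rangle_{L^2}$ by $\widetilde{x}_{i,k}$; you make that identification first (correctly noting it needs $D_m\le D_{N_{n,p}}$, guaranteed by $m\in\mathcal{M}_{n,p}$) and then differentiate the quadratic form $c-2\langle b,v\rangle_2+v^\top\Phi_m v$, which also gives differentiability explicitly rather than by assertion.
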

\begin{proof}
    The function $F$ being convex and $\mathcal{C}^{1}$, finding a minimizer $\widetilde{\alpha} \in \mathbb{R}^{D_m} $ of $F$ is equivalent to finding an $\widetilde{\alpha}$ in $\mathbb{R}^{D_m}$ such that $\nabla F(\widetilde{\alpha})=0$. Let us compute $\nabla F$.
\newline
First let us consider $k\in \{1,\ldots,D_m\}$, then 
\begin{equation*}
\begin{split}
\frac{\partial F(v)}{\partial v_k} 
&= -\frac{2}{n}\sum_{i=1}^{n}\langle\phi_k,\widetilde{X}_{i}\rangle_{L^{2}}\left(Y_i-\sum_{j=1}^{D_m}v_j\langle\phi_j,\widetilde{X}_i\rangle_{L^{2}}\right) \\
&= -\frac{2}{n}\sum_{i=1}^{n}Y_i\left(\sum_{l=1}^{D_{N_{n,p}}}\widetilde{x}_{i,l}\langle\phi_k,\phi_l\rangle_{L^{2}}\right) \\
&\quad + \frac{2}{n}\sum_{i=1}^{n}\sum_{j=1}^{D_m}v_j\left(\sum_{l=1}^{D_{N_{n,p}}}\widetilde{x}_{i,l}\langle\phi_k,\phi_l\rangle_{L^{2}}\right)\left(\sum_{l'=1}^{D_{N_{n,p}}}\widetilde{x}_{i,l'}\langle\phi_j,\phi_{l'}\rangle_{L^{2}}\right) \\
&= -\frac{2}{n}\sum_{i=1}^{n}Y_i\widetilde{x}_{i,k} + \frac{2}{n}\sum_{i=1}^{n}\widetilde{x}_{i,k}\sum_{j=1}^{D_m}v_j\widetilde{x}_{i,j}.
\end{split}
\end{equation*}
The transition from the first to second line is obtained by replacing the $\widetilde{X}_{i}$'s by their definition given in (\ref{eq:4}) and the one from the second to third by using the fact that the $\phi_j$'s are orthonormal.
Hence, 
\[\nabla F(v) = -2b + 2\Phi_m v.\]
\end{proof}
\section{Proof of Section 3}
\subsection{Proof of Proposition \ref{prop:1}} \label{sec:A}
Let us start first by decomposing $\mathbb{E}(\| \widetilde{\beta} - \beta\|_{\widetilde{\Gamma}_{n}}^{2})$ into two terms :
\begin{equation*}
    \begin{split}
\mathbb{E}(\| \widetilde{\beta} - \beta\|_{\widetilde{\Gamma}_{n}}^{2}) &= \mathbb{E}(\| \widetilde{\beta} - \beta\|_{\widetilde{\Gamma}_{n}}^{2}\mathds{1}_{\overline{G}}) + \mathbb{E}(\|  \widetilde{\beta} - \beta\|_{\widetilde{\Gamma}_{n}}^{2}\mathds{1}_{\overline{G}^{c}}) \\
 & = \mathbb{E}(\| \widehat{\beta}_{\widehat{m}} - \beta\|_{\widetilde{\Gamma}_{n}}^{2}\mathds{1}_{\overline{G}}) + \mathbb{E}(\| \beta\|_{\widetilde{\Gamma}_{n}}^{2}\mathds{1}_{\overline{G}^{c}}).
\end{split}
\end{equation*}
In a first hand we will focus on the left term $\mathbb{E}(\| \widehat{\beta}_{\widehat{m}} - \beta\|_{\widetilde{\Gamma}_{n}}^{2}\mathds{1}_{\overline{G}})$. Let $m\in\mathcal{M}_{n,p}$ be fixed. Consider $\beta^{(m)}$ the orthogonal projection of $\beta$ over $S_m$ with respect to the scalar product $\langle \cdot,\cdot\rangle_{\widetilde{\Gamma}}$. By definition of $\widehat{\beta}_{m}$, we have that $\gamma_n(\widehat{\beta}_{m})\leq \gamma_n(\beta^{(m)})$. Moreover, the definition of $\widehat{m}$ implies that
\[\gamma_n(\widehat{\beta}_{\widehat{m}})+\mathrm{pen}(\widehat{m}) \leq \gamma_n(\widehat{\beta}_{m})+\mathrm{pen}(m) \leq \gamma_n(\beta^{(m)})+\mathrm{pen}(m).\]
Therefore, replacing the $\gamma_{n}$'s by their definition given in (\ref{eq:gamma_n}) we have :
\[\frac{1}{n}\sum_{i=1}^{n}(Y_i - \langle\widehat{\beta}_{\widehat{m}},\widetilde{X}_{i}\rangle_{L^{2}})^{2} + \mathrm{pen}(\widehat{m}) \leq \frac{1}{n}\sum_{i=1}^{n}(Y_i - \langle\beta^{(m)},\widetilde{X}_{i}\rangle_{L^{2}})^{2} + \mathrm{pen}(m).\]
As for all $i=1,...,n$, $Y_i=\langle\beta,\widetilde{X}_{i}\rangle_{L^{2}} + \langle\beta,X_{i}-\widetilde{X}_{i}\rangle_{L^{2}} + \epsilon_{i}$ the previous equation is equivalent to the following one :
\[\begin{aligned}
\frac{1}{n}\sum_{i=1}^{n}\Bigl(
  &\bigl\langle \beta - \widehat\beta_{\widehat m},\widetilde X_i\bigr\rangle_{L^2}
  + \epsilon_i
  + \bigl\langle \beta, X_i - \widetilde X_i\bigr\rangle_{L^2}
\Bigr)^2
+ \mathrm{pen}(\widehat m)
\\
&\leq
\frac{1}{n}\sum_{i=1}^{n}\Bigl(
  \bigl\langle \beta - \beta^{(m)},\widetilde X_i\bigr\rangle_{L^2}
  + \epsilon_i
  + \bigl\langle \beta, X_i - \widetilde X_i\bigr\rangle_{L^2}
\Bigr)^2
+ \mathrm{pen}(m).
\end{aligned}\]
Simplifying the terms leads then to this inequality,
\[\|\beta-\widehat{\beta}_{\widehat{m}}\|_{\widetilde{\Gamma}_{n}}^{2}\leq \|\beta-\beta^{(m)}\|_{\widetilde{\Gamma}_{n}}^{2}+2\nu_{n}(\widehat{\beta}_{\widehat{m}}-\beta^{(m)})+2r_{n}(\widehat{\beta}_{\widehat{m}}-\beta^{(m)})+\text{pen}(m)-\text{pen}(\widehat{m}).\]
where, \begin{equation*}
\begin{aligned}
\nu_{n}(f)
&= \frac{1}{n}\sum_{i=1}^{n}
   \epsilon_{i}
   \bigl\langle f,\widetilde{X}_{i}\bigr\rangle_{L^{2}},\\
r_{n}(f)
&= \frac{1}{n}\sum_{i=1}^{n}
   \bigl\langle f,\widetilde{X}_{i}\bigr\rangle_{L^{2}}
   \bigl\langle \beta,X_{i}-\widetilde{X}_{i}\bigr\rangle_{L^{2}}.
\end{aligned}
\end{equation*}
Let us focus on the $\nu_{n}$ part. As $\nu_{n}$ is a linear process, we have that \[2\nu_{n}(\widehat{\beta}_{\widehat{m}}-\beta^{(m)})\leq 2\|\widehat{\beta}_{\widehat{m}}-\beta^{(m)}\|_{\widetilde{\Gamma}_{n}}\sup_{f\in S_{m\vee \widehat{m}}^{\widetilde{\Gamma}_{n}}}\nu_{n}(f),\]
with $S_{m\vee \widehat{m}}^{\widetilde{\Gamma}_{n}}=\{f\in S_{m\vee \widehat{m}}, \| f \|_{\widetilde{\Gamma}_{n}}=1\}$. Using the inequality $2xy\leq \frac{1}{\theta}x^{2}+\theta y^{2}$ for all $x,y$ and for all $\theta >  0$, we get that 
\[2\nu_{n}(\widehat{\beta}_{\widehat{m}}-\beta^{(m)}) \leq \frac{1}{\theta}\|\widehat{\beta}_{\widehat{m}}-\beta^{(m)}\|_{\widetilde{\Gamma}_{n}}^{2} + \theta\sup_{f\in S_{m\vee \widehat{m}}^{\widetilde{\Gamma}_{n}}}(\nu_{n}(f))^{2}. \]
Let us focus now on the $r_n$ part. Using the Cauchy--Schwarz inequality for the scalar product in $\mathbb{R}^{n}$ we get that
\[r_{n}(\widehat{\beta}_{\widehat{m}}-\beta^{(m)})\leq \Bigl(\frac{1}{n}\sum_{i=1}^{n}\langle\widehat{\beta}_{\widehat{m}}-\beta^{(m)},\widetilde{X}_{i}\rangle_{L^{2}}^{2}\Bigr)^{1/2}\Bigl(\frac{1}{n}\sum_{i=1}^{n}\langle\beta,X_{i}-\widetilde{X}_{i}\rangle_{L^{2}}^{2}\Bigr)^{1/2}.\]
Using now the Cauchy--Schwarz inequality for the scalar product $\langle\cdot,\cdot\rangle_{L^{2}}$, we get that 
\[r_{n}(\widehat{\beta}_{\widehat{m}}-\beta^{(m)})\leq \|\widehat{\beta}_{\widehat{m}}-\beta^{(m)}\|_{\widetilde{\Gamma}_{n}}\Bigl(\frac{1}{n}\sum_{i=1}^{n}\|\beta\|_{L^{2}}^{2}\| X_{i}-\widetilde{X}_{i}\|_{L^{2}}^{2}\Bigr)^{1/2}.\]
Therefore, 
\[2r_{n}(\widehat{\beta}_{\widehat{m}}-\beta^{(m)}) \leq \frac{1}{\theta}\| \widehat{\beta}_{\widehat{m}}-\beta^{(m)}\|_{\widetilde{\Gamma}_{n}}^{2} +\theta \frac{\|\beta\|_{L^{2}}^{2}}{n}\sum_{i=1}^{n}\| X_{i}-\widetilde{X}_{i}\|_{L^{2}}^{2}.\]
These two steps lead to the following inequality,
\begin{align*}
\|\widehat{\beta}_{\widehat{m}}-\beta\|_{\widetilde{\Gamma}_{n}}^{2}\leq &\|\beta^{(m)}-\beta\|_{\widetilde{\Gamma}_{n}}^{2}+\frac{2}{\theta}\|\widehat{\beta}_{\widehat{m}}-\beta^{(m)}\|_{\widetilde{\Gamma}_{n}}^{2}+\theta\sup_{f\in S_{m\vee \widehat{m}}^{\widetilde{\Gamma}_{n}}}(\nu_{n}(f))^{2} \\
&+\frac{\theta\|\beta\|_{L^{2}}^{2}}{n}\sum_{i=1}^{n}\| X_{i}-\widetilde{X}_{i}\|_{L^{2}}^{2}+\text{pen}(m)-\text{pen}(\widehat{m}).
\end{align*}
Now, as $\|\widehat{\beta}_{\widehat{m}}-\beta^{(m)}\|_{\widetilde{\Gamma}_{n}}^{2} \leq 2\|\widehat{\beta}_{\widehat{m}}-\beta\|_{\widetilde{\Gamma}_{n}}^{2} + 2 \|\beta-\beta^{(m)}\|_{\widetilde{\Gamma}_{n}}^{2}$, we obtain for all $\theta>4$:
\begin{align*}
\Bigl(1-\frac{4}{\theta}\Bigr)\|\widehat{\beta}_{\widehat{m}}-\beta\|_{\widetilde{\Gamma}_{n}}^{2}\leq &\Bigl(1+\frac{4}{\theta}\Bigr)\|\beta -\beta^{(m)}\|_{\widetilde{\Gamma}_{n}}^{2} +\theta\sup_{f\in S_{m\vee \widehat{m}}^{\widetilde{\Gamma}}}(\nu_{n}(f))^{2} \\
&+\frac{\theta\|\beta\|_{L^{2}}^{2}}{n}\sum_{i=1}^{n}\| X_{i}-\widetilde{X}_{i}\|_{L^{2}}^{2}+\text{pen}(m)-\text{pen}(\widehat{m}),
\end{align*}
which leads to :
\[\begin{aligned}
\mathbb{E}\bigl(\|\widehat{\beta}_{\widehat m}-\beta\|_{\widetilde{\Gamma}_{n}}^{2}\mathds{1}_{\overline {G}}\bigr)
\leq &\frac{\theta+4}{\theta-4}\mathbb{E}\bigl(\|\beta-\beta^{(m)}\|_{\widetilde{\Gamma}_{n}}^{2}\bigr)+\frac{\theta^{2}}{\theta-4}\mathbb{E}\Bigl(\sup_{f\in S_{m\vee \widehat m}^{\widetilde{\Gamma}_{n}}}(\nu_{n}(f))^{2}\Bigr)\\
&+\frac{\theta^{2}}{\theta-4}\frac{\|\beta\|_{L^{2}}^{2}}{n}\mathbb{E}\Bigl(\sum_{i=1}^{n}\|X_{i}-\widetilde{X}_{i}\|_{L^{2}}^{2}\Bigr)+\frac{\theta}{\theta-4}
\mathbb{E}\bigl(\mathrm{pen}(m)-\mathrm{pen}(\widehat m)\bigr).
\end{aligned}\]
Now, for $x>0$ and $Z$ a random variable we have that $Z=(Z-x)_{+}+x-(x-Z)_{+}\leq (Z-x)_{+}+x$. Therefore,
\[\sup_{f\in S_{m\vee \widehat m}^{\widetilde{\Gamma}_{n}}}(\nu_{n}(f))^{2} \leq \Bigl(\sup_{f\in S_{m\vee \widehat m}^{\widetilde{\Gamma}_{n}}}(\nu_{n}(f))^{2}-p(m,\widehat{m})\Bigr)_{+}+p(m,\widehat{m}),\]
with $p(m,\widehat{m})=(1+\delta)\sigma^{2}D_{m\vee\widehat{m}}/n=\frac{1}{\theta}\mathrm{pen}(m\vee\widehat{m})$. Obviously, for all $x,y\geq0$, $\max{(x,y)}\leq x+y$. Thus $\mathrm{pen}(m)-\mathrm{pen}(\widehat{m})+\theta p(m,\widehat{m})\leq \mathrm{pen}(m)-\mathrm{pen}(\widehat{m})+\mathrm{pen}(m)+\mathrm{pen}(\widehat{m})=2\mathrm{pen}(m)$. Therefore, 
\[\begin{aligned}\mathbb{E}\bigl(\|\widehat{\beta}_{\widehat{m}}-\beta\|_{\widetilde{\Gamma}_{n}}^{2}\mathds{1}_{\overline{G}}\bigr)&\le\frac{\theta+4}{\theta-4} \mathbb{E}\bigl(\|\beta-\beta^{(m)}\|_{\widetilde{\Gamma}_{n}}^{2}\bigr) \\
&\quad + \frac{\theta^{2}}{\theta-4} \sum_{m'\in \mathcal{M}_{n,p}}\mathbb{E}\Bigl( \Bigl[ \sup_{f\in S_{m\vee m'}^{\widetilde{\Gamma}_{n}}} (\nu_{n}(f))^{2} - p(m,m') \Bigr]_{+} \Bigr) \\
&\quad + \frac{\theta^{2}}{\theta-4} \frac{\|\beta\|_{L^{2}}^{2}}{n} \sum_{i=1}^{n} \mathbb{E}\bigl(\|X_{i}-\widetilde{X}_{i}\|_{L^{2}}^{2}\bigr) \\
&\quad + \frac{2\theta}{\theta-4} \mathrm{pen}(m).
\end{aligned}\]
Using the Lemma 1 of Brunel et al. \cite{brunel2013nonasymptoticadaptivepredictionfunctional}, we find that
\[\sum_{m'\in \mathcal{M}_{n,p}}\mathbb{E}\Bigl([\sup_{f\in S_{m\vee m'}^{\widetilde{\Gamma}_{n}}}(\nu_{n}(f))^{2}-p(m,m')]_{+}\Bigr) \leq \frac{C(l,\delta)}{n}\sigma^{2}.\]
(The proof of this result is based on the Corollary 5.1 of Baraud \cite{baraud2000model}).
To conclude this proof we need to control the term $\mathbb{E}(\| \beta\|_{\widetilde{\Gamma}_{n}}^{2}\mathds{1}_{\overline{G}^{c}})$. Applying Cauchy--Schwarz inequality for the scalar product $\langle Z_{1},Z_{2}\rangle=\mathbb{E}(Z_{1}Z_{2})$ we  have that,
\[\mathbb{E}(\| \beta\|_{\widetilde{\Gamma}_{n}}^{2}\mathds{1}_{\overline{G}^{c}})\leq \Bigl[\mathbb{E}(\|\beta\|_{\widetilde{\Gamma}_{n}}^{4})\mathbb{P}(\overline{G}^{c})\Bigr]^{1/2}.\]
Using the Lemma \ref{lem:A3} in the appendix leads to,
\[\mathbb{P}(\overline{G}^{c})\leq D_{N_{n,p}}\exp{\Bigl(-\frac{n}{4\ln{n}(K_{1}^{2}+1)^{2}}\Bigr)}.\]
This quantity is negligible for a sufficiently large $n$ as we chose $D_{N_{n,p}}< n/\ln^{2}{n}$. Now, by definition of the norm $\|\cdot\|_{\widetilde{\Gamma}_n}$ we have,
\begin{equation*}
    \begin{aligned}
        \mathbb{E} \left(\|\beta\|_{\widetilde{\Gamma}_{n}}^{4}\right) &= \mathbb{E}\left(\left(\frac{1}{n}\sum_{i=1}^{n}\langle\beta,\widetilde{X}_{i}\rangle_{L^{2}}^{2}\right)^{2}\right) \\
        &= \mathbb{E}\left(\frac{1}{n^{2}}\sum_{i=1}^{n}\langle\beta,\widetilde{X}_{i}\rangle_{L^{2}}^{4}\right) + \mathbb{E}\left(\frac{2}{n^{2}}\sum_{1\leq i<i'\leq n}\langle\beta,\widetilde{X}_i\rangle_{L^{2}}^{2}\langle\beta,\widetilde{X}_{i'}\rangle_{L^{2}}^{2}\right) \\
        &= \frac{1}{n}\mathbb{E}\left(\langle\beta,\widetilde{X}_1\rangle_{L^{2}}^{4}\right) + \frac{n-1}{n}\|\beta\|_{\widetilde{\Gamma}}^{4}.
    \end{aligned}
\end{equation*}
Using the fact that $\sqrt{x+y}\leq \sqrt{x}+\sqrt{y}$ for all $x,y\geq 0$ and that $n$ is a positive integer we obtain
\begin{equation*}
\begin{aligned}
    \mathbb{E}\left(\|\beta\|_{\widetilde{\Gamma}_{n}}^{4}\right)^{1/2}&\leq \left(\frac{1}{n}\mathbb{E}\left(\langle\beta,\widetilde{X}_1\rangle_{L^{2}}^{4}\right) + \frac{n-1}{n}\|\beta\|_{\widetilde{\Gamma}}^{4}\right)^{1/2} \\
    &\leq \frac{1}{\sqrt{n}}\mathbb{E}\left(\langle\beta,\widetilde{X}_1\rangle_{L^{2}}^{4}\right)^{1/2}+\|\beta\|_{\widetilde{\Gamma}}^{2} \\
    &\leq \widetilde{C}_{\beta}.
\end{aligned}
\end{equation*}
where
\begin{equation}\label{eq:widetilde_c_beta}
    \widetilde{C}_{\beta}=\mathbb{E}\left(\langle \beta, \widetilde{X}_{1}\rangle_{L^{2}}^{4}\right)^{1/2}+\|\beta\|_{\widetilde{\Gamma}}^{2}+1.
\end{equation}
Hence since $D_{N_{n,p}}<\frac{n}{\ln^{2}n}$ we have for $n$ large enough
\begin{equation*}
    \begin{aligned}
        \Bigl[\mathbb{E}(\|\beta\|_{\widetilde{\Gamma}_{n}}^{4})\mathbb{P}(\overline{G}^{c})\Bigr]^{1/2}&\leq \widetilde{C}_{\beta}\frac{n^{1/2}}{\ln{n}}\exp\left(-\frac{n}{8\ln{n}(K_{1}^{2}+1)^{2}}\right) \\
        &\leq \frac{\widetilde{C}_{\beta}}{n}.
    \end{aligned}
\end{equation*}
Lemma \ref{lem:C_beta} allows us to conclude the proof of this proposition.

\begin{lemma}
\label{lem:A1}
    Let us define for all $l,j\in\mathbb{N}\backslash \{0\}$ \[\langle\phi_{l},\phi_{j}\rangle_{p}:=\frac{1}{p}\sum_{h=0}^{p-1}\phi_{l}(t_{h})\phi_{j}(t_{h}),\]
    where we recall that $t_h=\frac{h}{p}$ and $h=0,\ldots,p-1$. We have 
    $$
\langle\phi_{l},\phi_{j}\rangle_{p} = \left\{
    \begin{array}{ll}
        1 & \mbox{if } l=j=1, \\
        \mathds{1}_{\{l \equiv j [2p]\}} + \mathds{1}_{\{l \equiv -j [2p]\}}  & \mbox{if } l \text{ and }j\text{ are even }, \\
        \mathds{1}_{\{l \equiv j [2p]\}} - \mathds{1}_{\{l \equiv -j +2[2p]\}}  & \mbox{if } l \text{ and }j\text{ are odd and strictly larger than 1 }, \\
        \sqrt{2}\mathds{1}_{\{l \equiv 0 [2p]\}} & \mbox{if }j=1 \text{ and } l \text{ is even}, \\
        \sqrt{2}\mathds{1}_{\{j \equiv 0 [2p]\}} & \mbox{if }l=1 \text{ and } j \text{ is even}, \\
        0  & \mbox{otherwise}.
    \end{array}
\right.
$$
Moreover, if $l$ and $j$ are strictly lower than $p$ then $\langle\phi_{l},\phi_{j}\rangle_{p}=\delta_{lj}.$
\end{lemma}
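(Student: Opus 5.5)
The plan is to compute $\langle\phi_l,\phi_j\rangle_p$ directly from the discrete orthogonality of complex exponentials on the grid $t_h=h/p$. The single key identity is that, for any integer $r$,
\[
\frac1p\sum_{h=0}^{p-1} e^{2i\pi r h/p}=\mathds{1}_{\{r\equiv 0\,[p]\}},
\]
which is the geometric sum of $p$-th roots of unity. Consequently $\frac1p\sum_h\cos(2\pi r t_h)=\mathds{1}_{\{r\equiv0[p]\}}$ and $\frac1p\sum_h\sin(2\pi r t_h)=0$. Every case of the lemma then follows from product-to-sum formulas. The only real work is translating the frequency conditions $r\equiv 0\,[p]$ into the index conditions modulo $2p$ that appear in the statement.

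Write $\phi_{2u}=\sqrt2\cos(2\pi u\cdot)$ and $\phi_{2u+1}=\sqrt2\sin(2\pi u\cdot)$. The cases go as follows.

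\begin{itemize}
\item \textbf{$l=j=1$.} The inner product equals $1$ trivially.
\item \textbf{$j=1$, $l=2u$ even.} We get $\sqrt2\cdot\frac1p\sum_h\cos(2\pi u t_h)=\sqrt2\,\mathds{1}_{\{u\equiv0[p]\}}$. Since $l=2u$, the condition $u\equiv 0\,[p]$ is equivalent to $l\equiv0\,[2p]$. The symmetric case $l=1$, $j$ even, is identical.
\item \textbf{$j=1$, $l$ odd $>1$.} The sine average vanishes, so the inner product is $0$.
\item \textbf{$l=2u$, $j=2v$ both even.} Use $2\cos A\cos B=\cos(A-B)+\cos(A+B)$ to get $\mathds{1}_{\{u-v\equiv0[p]\}}+\mathds{1}_{\{u+v\equiv0[p]\}}$. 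Doubling, these become $l\equiv j\,[2p]$ and $l\equiv -j\,[2p]$.
\item \textbf{$l=2u+1$, $j=2v+1$ both odd $>1$.} Use $2\sin A\sin B=\cos(A-B)-\cos(A+B)$ to get $\mathds{1}_{\{u\equiv v[p]\}}-\mathds{1}_{\{u+v\equiv 0[p]\}}$. Now $u\equiv v\,[p]$ iff $2u\equiv 2v\,[2p]$ iff $l\equiv j\,[2p]$. Likewise $u+v\equiv0\,[p]$ iff $l+j-2\equiv0\,[2p]$, i.e.\ $l\equiv -j+2\,[2p]$.
\item \textbf{Mixed parity (one even, one odd $>1$).} Use $2\sin A\cos B=\sin(A+B)+\sin(A-B)$. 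Both sine averages vanish, giving $0$. This is the ``otherwise'' case.
\end{itemize}

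The careful bookkeeping of these parity and index translations modulo $2p$ is the only delicate point, in particular the shift by $2$ in the odd case.

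For the final claim, assume $1\le l,j<p$, so that $u,v<p/2$. Then $|u-v|<p$ and $0<u+v<p$ whenever $u,v\ge1$. Hence $u-v\equiv0\,[p]$ forces $u=v$, while $u+v\equiv0\,[p]$ is impossible. For the constant function, $u\equiv0\,[p]$ with $1\le u<p$ is also impossible. All indicators therefore reduce to $\delta_{lj}$, and in the mixed cases the value is already $0$. The same estimate $u+v<p$ rules out the subtracted term in the odd case, since there $u,v\ge1$.
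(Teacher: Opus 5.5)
Your proof is correct and follows essentially the same route as the paper. Both arguments use the vanishing of the root-of-unity sum $\sum_{h=0}^{p-1}e^{2i\pi rh/p}$ unless $r\equiv 0\,[p]$, product-to-sum formulas, and a parity case split, including the shift by $2$ in the sine--sine case. Your explicit verification of the final claim $\langle\phi_l,\phi_j\rangle_p=\delta_{lj}$ for $l,j<p$, via $|u-v|<p$ and $0<u+v<p$, fills in a step the paper's proof leaves implicit.
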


\begin{proof}
We distinguish four cases.

\textbf{Case 1:} $j$ and $l$ are even, meaning that there exist two positive integers $a$ and $b$ such that $j=2a$ and $l=2b$.
\newline
Using the fact that for all $x\in\mathbb{R}$, $\cos(x)=(e^{ix}+e^{-ix})/{2}$, we have that 
\[\sum_{h=0}^{p-1}\cos\Bigl(\frac{2\pi ah}{p}\Bigr)\cos\Bigl(\frac{2\pi bh}{p}\Bigr)=\frac{1}{4}\sum_{h=0}^{p-1}\left(e^{\frac{i2\pi (a-b)h}{p}}+e^{-\frac{i2\pi (a-b)h}{p}}+e^{\frac{i2\pi (a+b)h}{p}}+e^{-\frac{i2\pi (a+b)h}{p}}\right).\]
Now, let us define $S(r):=\sum_{h=0}^{p-1}e^{i\frac{2\pi rh}{p}}$ for all integer $r$. 
If $r \equiv 0[p]$ then $S(r)=\sum_{h=0}^{p-1}1=p$. Otherwise, 
\[S(r)=\frac{1-e^{i2\pi r}}{1-e^{\frac{i2\pi r}{p}}}=0.\]
Moreover $S(-r)=\overline{S(r)}=S(r)$ as for all integer $r$, $S(r)$ is a real number. Hence,
\[\sum_{h=0}^{p-1}\cos\Bigl(\frac{2\pi ah}{p}\Bigr)\cos\Bigl(\frac{2\pi bh}{p}\Bigr)=\frac{p}{2}\Bigl[\mathds{1}_{\{b\equiv a[p]\}}+\mathds{1}_{\{b\equiv -a[p]\}}\Bigr],\]
and
\[
\langle\phi_{l},\phi_{j}\rangle_{p} = \mathds{1}_{\{l \equiv j [2p]\}} + \mathds{1}_{\{l \equiv -j [2p]\}}.
\]

\textbf{Case 2: }$j$ and $l$ are odd and strictly bigger than 1, meaning that there exist two positive integer $a$ and $b$ such that $j=2a+1$ and $l=2b+1$. Using the fact that for all $x\in\mathbb{R}$, $\sin(x)=(e^{ix}-e^{-ix})/(2i)$ we have that 
\[\sum_{h=0}^{p-1}\sin\Bigl(\frac{2\pi ah}{p}\Bigr)\sin\Bigl(\frac{2\pi bh}{p}\Bigr)=-\frac{1}{4}\sum_{h=0}^{p-1}[-e^{\frac{i2\pi (a-b)h}{p}}-e^{-\frac{i2\pi (a-b)h}{p}}+e^{\frac{i2\pi (a+b)h}{p}}+e^{-\frac{i2\pi (a+b)h}{p}}].\]
Therefore,
\[\begin{aligned}
\sum_{h=0}^{p-1} \sin\Bigl(\frac{2\pi ah}{p}\Bigr) \sin\Bigl(\frac{2\pi bh}{p}\Bigr) 
&= \frac{p}{2} \bigl( \mathds{1}_{\{b \equiv a [p]\}} - \mathds{1}_{\{b \equiv -a [p]\}} \bigr),
\end{aligned}\]
and $$\langle\phi_{l},\phi_{j}\rangle_{p}=\mathds{1}_{\{l \equiv j [2p]\}} - \mathds{1}_{\{l \equiv -j+2 [2p]\}}.$$

\textbf{Case 3: } $j$ is equal to 1. 
If $l$  is also equal to 1 then $\langle\phi_{l},\phi_{j}\rangle_{p}=1$. If $l$ is odd and strictly larger than 1, then there exists a positive integer $b$ such that $l=2b+1$ and 
\[\langle\phi_{l},\phi_{j}\rangle_{p}=\frac{\sqrt{2}}{p}\sum_{h=0}^{p-1}\sin\Bigl(\frac{2\pi bh}{p}\Bigr)=0.\]
If $l$ is even then there exists a positive integer $b$ such that $l=2b$ and 
\[\langle\phi_{l},\phi_{j}\rangle_{p} = \frac{\sqrt{2}}{p}\sum_{h=0}^{p-1}\cos\Bigl(\frac{2\pi bh}{p}\Bigr) = \frac{\sqrt{2}}{p}\sum_{h=0}^{p-1}1 = \sqrt{2}\]
provided that $b \equiv 0 [p]$, and $\langle\phi_l,\phi_j\rangle_p=0$ otherwise. Consequently, we have
\[\langle\phi_l,\phi_j\rangle_p=\sqrt{2}\]
whenever $l\equiv 0 [2p]$ and $j=1$. By symmetry, the case $l=1$ is treated analogously.

\textbf{Case 4: } $j$ is odd and strictly larger than 1, and $l$ is even. In this situation there exist two positive integers $a$ and $b$ such that $j=2a+1$ and $l=2b$. Then,
\[\sum_{h=0}^{p-1}\sin\Bigl(\frac{2\pi ah}{p}\Bigr)\cos\Bigl(\frac{2\pi bh}{p}\Bigr)=\frac{1}{2}\sum_{h=0}^{p-1}\Bigl[\sin\Bigl(\frac{2\pi (a+b)h}{p}\Bigr)+\sin\Bigl(\frac{2\pi (a-b)h}{p}\Bigr)\Bigl].\]
For any integer $r$, $\sum_{h=0}^{p-1}\sin((2\pi rh)/p)=0$, hence
\[\sum_{h=0}^{p-1}\sin\Bigl(\frac{2\pi ah}{p}\Bigr)\cos\Bigl(\frac{2\pi bh}{p}\Bigr)=0,\] and $\langle\phi_{l},\phi_{j}\rangle_{p}=0$. By symmetry of the roles of $l$ and $j$ we have the same result if $l$ is odd and strictly bigger than 1, and $j$ is even.

\end{proof}

\begin{lemma}
\label{lem:A2}
    Let us assume that $D_{N_{n,p}} < p$, then we have for all $j,k\in\{1,\ldots,D_{N_{n,p}}\}$
    \[\mathbb{E}(\widetilde{x}_{i,j} \widetilde{x}_{i,k}) =\delta_{jk}\widetilde{\lambda}_{j},\] where 
    \begin{equation}\label{eq:tilde_lambda}
        \widetilde{\lambda}_{j}=\lambda_{j} + \overline{\lambda}_{j} + \frac{\tau^{2}}{p},
    \end{equation}
    and
    \begin{align*}
\overline{\lambda}_{j} 
&= \mathds{1}_{\{j \text{ even}\}} \sum_{q \ge 1} \bigl( \lambda_{j+2qp} + \lambda_{2qp-j} \bigr) \\
&\quad + \mathds{1}_{\{j \text{ odd},\, j > 1\}} \sum_{q \ge 1} \bigl( \lambda_{j+2qp} + \lambda_{2qp-j+2} \bigr) \\
&\quad + 2\mathds{1}_{\{j=1\}} \sum_{q \ge 1} \lambda_{2qp}.
\end{align*}
\end{lemma}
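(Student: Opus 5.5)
The plan is to write $\widetilde{x}_{i,j}$ as a signal part plus a noise part and compute the two second moments separately. Expanding $X_i$ in the Fourier basis, $X_i=\sum_{l\geq1}\xi_{i,l}\phi_l$ with $\xi_{i,l}=\langle X_i,\phi_l\rangle_{L^2}$, and evaluating at the grid points gives
\[\widetilde{x}_{i,j}=\sum_{l\geq 1}\xi_{i,l}\langle \phi_l,\phi_j\rangle_p+\frac1p\sum_{h=0}^{p-1}\eta_{i,h}\phi_j(t_h).\]
To make this legitimate I would justify pointwise convergence of the expansion at $t_h$, at least in $L^2(\Omega)$. By stationarity, $\mathbb{E}\bigl(X(t)-\sum_{l\le L}\xi_l\phi_l(t)\bigr)^2$ can be computed explicitly from the eigenvalues, and it tends to $0$ because $\sum_l\lambda_l<\infty$ (recall that $\sum_l \lambda_l \phi_l(t)^2 = K(t,t)$ is constant). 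This is the one genuinely technical point, and I would handle it with this $L^2(\Omega)$ argument rather than pathwise.

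Next, I would compute the expectation of the product. The noise is centered, independent of $X_i$, and i.i.d. with variance $\tau^2$, so the cross terms vanish. The noise–noise term equals $\frac{\tau^2}{p^2}\sum_h\phi_j(t_h)\phi_k(t_h)=\frac{\tau^2}{p}\langle\phi_j,\phi_k\rangle_p=\frac{\tau^2}{p}\delta_{jk}$, using the last statement of Lemma \ref{lem:A1}, which applies since $j,k\le D_{N_{n,p}}<p$.

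For the signal–signal term, the eigenfunctions of $\Gamma$ are the $\phi_l$, so $\mathbb{E}(\xi_l\xi_{l'})=\lambda_l\delta_{ll'}$. The double sum therefore reduces to
\[\sum_{l\ge1}\lambda_l\langle\phi_l,\phi_j\rangle_p\langle\phi_l,\phi_k\rangle_p .\]
I would justify the interchange of expectation and sums by the $L^2(\Omega)$ convergence from the first step and $\sum\lambda_l<\infty$.

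The core step is to read off from Lemma \ref{lem:A1} which $l$ give $\langle\phi_l,\phi_j\rangle_p\neq0$ when $j<p$.
\begin{itemize}
\item For $j$ even, these are $l=j+2qp$ and $l=2qp-j$ with $q\ge1$ (plus $l=j$), and each such inner product equals $1$.
\item For $j$ odd with $j>1$, these are $l=j+2qp$, with value $1$, and $l=2qp-j+2$, with value $-1$.
\item For $j=1$, these are the even indices $l=2qp$, with value $\sqrt2$, plus $l=1$.
\end{itemize}
The parity classes of $l$ are preserved in each case, and the residues of $l$ modulo $2p$ that match $j$ differ from those that match $k$ when $j\ne k$ with $j,k<p$. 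Here I would check that $j\equiv \pm k$ or $j\equiv -k+2 \ [2p]$ forces $j=k$ given $1\le j,k<p$. Hence no $l$ makes both factors nonzero, and the off-diagonal terms vanish. For $j=k$ the squares of the factors are $1,1$ or $2$, which gives exactly $\lambda_j+\overline{\lambda}_j$; the factor $2$ for $j=1$ comes from $(\sqrt2)^2$.

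Adding the noise contribution yields $\widetilde\lambda_j$. The main obstacle is the bookkeeping of congruences, in particular the odd case, where the index shift is $-j+2$, and the $j=1$ case. I would verify these carefully against the case list of Lemma \ref{lem:A1}.
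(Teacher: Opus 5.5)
Your proposal is correct and follows essentially the paper's route: the paper likewise expands $X_i(t_h)$ in the Fourier basis, uses $\mathbb{E}(x_{i,l}x_{i,l'})=\lambda_l\delta_{ll'}$, obtains the $\tau^2/p$ noise term from Lemma~\ref{lem:A1}, and checks that the aliased index sets given by Lemma~\ref{lem:A1} are disjoint. The only organisational difference is that the paper first splits $\overline{x}_{i,j}=x_{i,j}+e_{i,j}$ and disposes of the cross terms separately, whereas you work with the single sum $\sum_{l}\lambda_l\langle\phi_l,\phi_j\rangle_p\langle\phi_l,\phi_k\rangle_p$. Your $L^2(\Omega)$ justification of the pointwise expansion at the grid points is a useful addition that the paper takes for granted. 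One small inaccuracy: for $j=1$ the aliased indices $l=1$ and $l=2qp$ have mixed parity, so the case $j=1$, $k$ even is settled by $k\not\equiv 0\,[2p]$ rather than by a parity argument.
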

\begin{proof}
For all $i\in\{1,\ldots,n\}$ and $j\in\{1,\ldots,D_{N_{n,p}}\}$, let $\widetilde{x}_{i,j}=\overline{x}_{i,j}+\overline{\eta}_{i,j}$ where 
\begin{equation}\label{eq:eta_bar_x_bar}
    \overline{x}_{i,j}=\frac{1}{p}\sum_{h=0}^{p-1}X_{i}(t_{h})\phi_{j}(t_{h}) \quad\text{and} \quad \overline{\eta}_{i,j}=\frac{1}{p}\sum_{h=0}^{p-1}\eta_{i,h}\phi_{j}(t_{h}) .
\end{equation}
Then, as the $\eta_{i,h}$'s are supposed to be i.i.d, independent of everything else, centered and of variance $\tau^{2}$ we have for all $j,k\in\{1,\ldots,D_{N_{n,p}}\}$,
\[\mathbb{E}(\widetilde{x}_{i,j}\widetilde{x}_{i,k}) = \mathbb{E}(\overline{x}_{i,j}\overline{x}_{i,k})+\mathbb{E}(\overline{\eta}_{i,j}\overline{\eta}_{i,k}).\]
Moreover $\mathbb{E}(\overline{\eta}_{i,j}\overline{\eta}_{i,k})=\frac{\tau^{2}}{p}\langle\phi_{j},\phi_{k}\rangle_{p}$ as the $\eta_{i,h}$'s are supposed to be i.i.d, centered and of variance $\tau^{2}$. As $j,k\leq D_{N_{n,p}}<p$ the previous lemma leads to
\[\mathbb{E}(\overline{\eta}_{i,j}\overline{\eta}_{i,k})=\frac{\tau^{2}}{p}\delta_{jk}.\]
We can then write $\overline{x}_{i,j}$ as 
\begin{equation}\label{eq:e_ij}
    \overline{x}_{i,j}=x_{i,j}+e_{i,j} \quad \text{where} \quad  x_{i,j}=\langle X_{i},\phi_{j}\rangle_{L^{2}}
\end{equation}
Therefore $\mathbb{E}(\overline{x}_{i,j}\overline{x}_{i,k})=\mathbb{E}(x_{i,j}x_{i,k})+ \mathbb{E}(x_{i,j}e_{i,k})+\mathbb{E}(x_{i,k}e_{i,j}) + \mathbb{E}(e_{i,k}e_{i,j})$. 

\medskip\noindent
The first term can be rewritten as $\mathbb{E}(x_{i,j}x_{i,k})=\lambda_{j}\delta_{jk}$ where the $\lambda_{j}$'s are the eigenvalues of $\Gamma$. Indeed, for any \(j,k \geq 1\) and for any $i\in\{1,\ldots,n\}$ we have, using the fact that the $X_i$'s are periodic and second order stationary,
\begin{equation}\label{eq:x_uncorrelated}
\begin{aligned}
    \mathbb{E}(x_{i,j} x_{i,k})
&= \mathbb{E}\left(\langle X_i,\phi_j\rangle_{L^{2}} \,\langle X_i,\phi_k\rangle_{L^{2}}\right) \\[6pt]
&= \left\langle \mathbb{E}\left(\langle \phi_j, X_i\rangle_{L^{2}}\, X_i \right), \phi_k \right\rangle_{L^{2}} \\[6pt]
&= \langle \Gamma \phi_j, \phi_k \rangle_{L^{2}} \\[6pt]
&= \langle \lambda_j \phi_j, \phi_k \rangle_{L^{2}} \\[6pt]
&= \lambda_j \,\delta_{jk}.
\end{aligned}
\end{equation}
Moreover for all $i\in\{1,\ldots,n\}$ and $t\in[0,1]$, $X_{i}(t)=\sum_{l\geq 1}x_{i,l}\phi_{l}(t)$, therefore $X_{i}(t_{h})=\sum_{l\geq 1}x_{i,l}\phi_{l}(t_{h})$ and using Fubini we get,
\begin{equation*}
\begin{aligned}
\mathbb{E}\bigl(x_{i,j} e_{i,k}\bigr)
&= \mathbb{E}\left(
      x_{i,j}\Bigl(
        \tfrac1p\sum_{h=0}^{p-1} X_i(t_h)\phi_k(t_h)-x_{i,k}
      \Bigr)
    \right) \\
&= \mathbb{E}\left(
      x_{i,j}\sum_{l\geq 1} x_{i,l}\Bigl(
        \tfrac1p\sum_{h=0}^{p-1} \phi_l(t_h)\phi_k(t_h)-\delta_{lk}
      \Bigr)
    \right) \\
&= \sum_{l\geq 1} \mathbb{E}\bigl(x_{i,j}x_{i,l}\bigr)
   \Bigl(
     \tfrac1p\sum_{h=0}^{p-1} \phi_l(t_h)\phi_k(t_h)-\delta_{lk}
   \Bigr) \\
&= \lambda_j\Bigl(
     \langle \phi_j,\phi_k\rangle_{p}-\delta_{jk}
   \Bigr) \\
&= 0,
\end{aligned}
\end{equation*}
as $j,k \le D_{N_{n,p}} < p$. Similarly, we obtain $\mathbb{E}(x_{i,k}e_{i,j}) = 0$. To compute $\mathbb{E}(e_{i,j}e_{i,k})$, we first express $e_{i,j}$ for any $j \in \{1, \dots, D_{N_n}\}$ as
\[e_{i,j} = \sum_{l \ge 1} x_{i,l} \bigl( \langle\phi_{l},\phi_{j}\rangle_{p} - \delta_{lj} \bigr).\]
Using Lemma~\ref{lem:A1}, we have
If $j=1$, $$
\langle\phi_{l},\phi_{1}\rangle_{p} = \left\{
    \begin{array}{ll}
        \sqrt{2} & \mbox{if } l \in (2pq)_{q\geq 1}, \\
        1 & \mbox{if } l=j, \\
        0 & \mbox{otherwise.}
    \end{array}
\right.
$$
If $j>1$ and $j$ is even, 
$$
\langle\phi_{l},\phi_{j}\rangle_{p} = \left\{
    \begin{array}{ll}
        1 & \mbox{if } l\in (j+2qp)_{q\geq 0} \text{ or } l\in(-j+2qp)_{q\geq 1},\\
        0 & \mbox{otherwise.}
    \end{array}
\right.
$$
If $j>1$ and $j$ is odd,
$$
\langle\phi_{l},\phi_{j}\rangle_{p} = \left\{
    \begin{array}{ll}
        1 & \mbox{if } l\in (j+2qp)_{q\geq 0}, \\
        -1 & \mbox{if } l\in (-j+2+2qp)_{q\geq 0},\\
        0 & \mbox{otherwise.}
    \end{array}
\right.
$$
Therefore,  
\begin{equation}\label{eq:e_ij_detailed}
    e_{i,j} = \left\{
    \begin{array}{ll}
        \sqrt{2}\sum_{q\geq1}x_{i,2qp} & \mbox{if } j=1, \\\
        \sum_{q\geq1}[x_{i,2s+2qp}+x_{i,-2s+2qp}] & \mbox{if } j=2s \text{ with } s\in\{1,\ldots,\lfloor \frac{D_{N_{n,p}}}{2} \rfloor\},\\
        \sum_{q\geq1}[x_{i,2s+1+2qp}-x_{i,2qp+2-(2s+1)}] & \mbox{if } j=2s+1 \text{ with } s\in\{1,\ldots,\lfloor \frac{D_{N_{n,p}}}{2} \rfloor\}.
    \end{array}
\right.
\end{equation}
\newline
Given Equation \eqref{eq:x_uncorrelated}), we notice that, when expanding $\mathbb{E}(e_{i,j}e_{i,k})$ into sums of terms of the form $\mathbb{E}(x_{i,t}x_{i,s})$ every term vanishes unless the indices coincide. In other words, $\mathbb{E}(e_{i,j}e_{i,k})=0$ as soon as the index sets used in $e_{i,j}$ and $e_{i,k}$ are disjoint.

Let us first describe the sets of indices involved:
\begin{itemize}
    \item For $j=1$: 
    \[
      T_1 = \{2qp : q\geq 1\}, \quad \text{all even multiples of $p$.}
    \]
    \item For $j=2s$ with $1\leq s\leq \lfloor (p-1)/2\rfloor$:
    \[
      T_{2s} = \{2s+2qp,-2s+2qp : q\geq 1\}, \quad \text{all even.}
    \]
    \item For $j=2s+1$ with $1\leq s\leq \lfloor (p-1)/2\rfloor$:
    \[
      T_{2s+1} = \{2s+1+2qp,2qp+1-2s : q\geq 1\}, \quad \text{all odd.}
    \]
\end{itemize}

\textbf{Trivial cases . } 
From this, two trivial vanishing cases appear: when $j$ and $k$ don't have the same parity and when $j=1$ and $k>1$ (or $k=1$ and $j>1$). As the role of $j$ and $k$ are symmetric, we can assume that $j$ is odd and $k$ is even. Then $T_j$ contains only odd indices while $T_k$ contains only even ones. Hence $T_j\cap T_k=\varnothing$ and 
\[\mathbb{E}(e_{i,j}e_{i,k})=0.\]
For the second case, let us consider $j=1$ versus $k>1$. If $k$ is odd, this falls into the previous parity case. If $k=2s$ is even, we must check $T_1\cap T_{2s}=\varnothing$. Let us suppose that there exist $q,q'\ge 1$ such that
\[2qp = 2s+2q'p \quad \text{or} \quad 2qp = -2s+2q'p.\]
This would imply $2s\equiv 0 [2p]$, or \ $s\equiv 0 [p]$. But since $1\leq s\leq \lfloor \frac{D_{N_{n,p}}}{2} \rfloor\leq \lfloor (p-1)/2\rfloor$, this is impossible. Therefore $T_1\cap T_{2m}=\varnothing$ and
\[\mathbb{E}(e_{i,1}e_{i,2s})=0.\]
To conclude, $j$ and $k$ have different parity, or if one of them is $1$ and the other is strictly larger than 1, then $\mathbb{E}(e_{i,j}e_{i,k})=0$. 

\medskip\noindent
\textbf{$j$ and $k$ are even . } Then there exist
$a,b\in\{1,\dots,\lfloor \frac{D_{N_{n,p}}}{2} \rfloor\}$ with $j=2a$ and $k=2b$.
Using the notation introduced above, the index sets are
\[
T_{2a}=\{2a+2qp, -2a+2qp : q\geq 1\},\qquad
T_{2b}=\{2b+2q'p, -2b+2q'p : q'\geq 1\}.
\]
To determine whether $\mathbb{E}(e_{i,2a}e_{i,2b})$ can be nonzero we must look for coincidences between elements of $T_{2a}$ and $T_{2b}$.
Thus we consider equalities of the form (with $q,q'\geq 1$):
\begin{align}
&2a+2qp = 2b+2q'p, \tag{I}\\
&2a+2qp = -2b+2q'p, \tag{II}\\
&-2a+2qp = 2b+2q'p, \tag{III}\\
&-2a+2qp = -2b+2q'p. \tag{IV}
\end{align}
\begin{itemize}
\item Cases (I) and (IV) :  From (I) we get
\[2(a-b)=2p(q'-q)\quad\Longleftrightarrow\quad a-b = p(q'-q).\]
Since $|a-b|\le \lfloor \frac{D_{N_{n,p}}}{2} \rfloor\le \lfloor (p-1)/2\rfloor < p$, the only multiple of $p$
in that range is $0$. Hence $a-b=0$, or $a=b$. If $a=b$ then
(I) reduces to $2qp=2q'p$, or $q=q'$. The same reasoning applied to
(IV) yields the identical conclusion: (IV) can occur only when
$a=b$ and $q=q'$. 

\medskip\noindent
\item Cases (II) and (III) :  From (II) we obtain
\[2(a+b)=2p(q'-q)\quad\Longleftrightarrow\quad a+b = p(q'-q).\]
But $2\leq a+b \leq 2\lfloor \frac{D_{N_{n,p}}}{2} \rfloor\leq 2\lfloor (p-1)/2\rfloor \leq p-1$, therefore $1\leq a+b\leq p-1$
and therefore $a+b$ cannot equal a nonzero multiple of $p$. Thus
(II) is impossible. The same argument applied to (III),
$-2(a+b)=2p(q'-q)$, shows (III) is impossible as well.

\medskip\noindent
Combining the four cases we conclude that if $a\neq b$ then no equality between elements of $T_{2a}$ and $T_{2b}$ can occur, therefore $T_{2a}\cap T_{2b}=\varnothing$ and
\[\mathbb{E}(e_{i,2a}e_{i,2b})=0\qquad (a\neq b).\]
If $a=b$ the only coincidences are the trivial ones coming from matching identical indices $2a+2qp$ with $2a+2qp$ and $-2a+2qp$ with $-2a+2qp$. Using Equation \eqref{eq:x_uncorrelated} we get
\[\mathbb{E}(e_{i,2a}^2)=\sum_{q\ge1}\lambda_{2a+2qp}+\sum_{q\ge1}\lambda_{-2a+2qp}.\]
Hence,
\[\mathbb{E}(e_{i,j}e_{i,k}) =
\begin{cases}
0, & j\neq k,\\[4pt]
\displaystyle \sum_{q\geq 1}\lambda_{j+2qp}+\sum_{q\geq 1}\lambda_{-j+2qp}, & j=k.
\end{cases}\
\]

\textbf{$j$ and $k$ are odd and strictly bigger than 1. } Then there
exist $a,b\in\{1,\dots,\lfloor \frac{D_{N_{n,p}}}{2} \rfloor\}$ with
$j=2a+1$ and $k=2b+1$. Recalling the notation introduced above, the
index sets are
\[\begin{aligned}
T_{2a+1} &= \{2a+1+2qp,\, 2qp+1-2a : q\geq 1\}, \\
T_{2b+1} &= \{2b+1+2q'p,\, 2q'p+1-2b : q'\geq 1\}.
\end{aligned}\]
To determine whether $\mathbb{E}(e_{i,2a+1}e_{i,2b+1})$ can be nonzero
we must look for coincidences between elements of $T_{2a+1}$ and
$T_{2b+1}$. There are four types of equalities to consider (with
$q,q'\geq 1$):
\begin{align}
& 2a+1+2qp = 2b+1+2q'p, \tag{I}\\
& 2a+1+2qp = 2q'p+1-2b, \tag{II}\\
&2qp+1-2a = 2b+1+2q'p, \tag{III}\\
& 2qp+1-2a = 2q'p+1-2b. \tag{IV}
\end{align}
We treat each case separately.
\item Cases (I) and (IV) : From (I) we get
\[2(a-b)=2p(q'-q)\quad\Longleftrightarrow\quad a-b=p(q'-q).\]
Since $|a-b|\le \lfloor \frac{D_{N_{n,p}}}{2} \rfloor\leq \lfloor (p-1)/2\rfloor < p$, the only multiple of $p$ in that range is $0$. Hence $a-b=0$, or $a=b$. If $a=b$ then (I) reduces to $2qp=2q'p$, hence $q=q'$. Thus (I) can occur only
when $a=b$ and $q=q'$. The same reasoning applied to (IV) yields the identical conclusion : (IV) can occur only when $a=b$ and $q=q'$.
\item Cases (II) and (III) : From (II) we obtain
\[2(a+b)=2p(q'-q)\quad\Longleftrightarrow\quad a+b = p(q'-q).\]
But $2\leq a+b \leq 2\lfloor \frac{D_{N_{n,p}}}{2} \rfloor \leq 2\lfloor (p-1)/2\rfloor \leq p-1$, thus $1\leq a+b\leq p-1$
and therefore $a+b$ cannot equal a nonzero multiple of $p$. Hence
(II) is impossible for any $a,b,q,q'$. The same reasoning applied to (III) yields the identical conclusion.

\medskip\noindent
Combining the four cases we conclude that if $a\neq b$ then no equality between elements of $T_{2a+1}$ and $T_{2b+1}$ can occur, therefore \(T_{2a+1}\cap T_{2b+1}=\varnothing\) and
\[\mathbb{E}(e_{i,2a+1}e_{i,2b+1})=0\qquad (a\neq b)\]
If $a=b$  the only coincidences are the trivial ones coming from matching identical indices $2a+1+2qp$ with itself and $2qp+1-2a$ with itself. Using Equation \eqref{eq:x_uncorrelated} we therefore get the expression
\[\mathbb{E}(e_{i,2a+1}^2)=\sum_{q\ge1}\lambda_{2a+1+2qp}+\sum_{q\ge1}\lambda_{2qp+1-2a}\]
Hence,
\[\mathbb{E}(e_{i,j}e_{i,k}) =
\begin{cases}
0, & j\neq k,\\[4pt]
\displaystyle \sum_{q\geq 1}\lambda_{j+2qp}+\sum_{q\geq 1}\lambda_{2qp+2-j}, & j=k.
\end{cases}
\]
\end{itemize}
\textbf{$j$ and $k$ are equal to 1 . } Let us recall that $e_{i,1}=\sqrt{2}\sum_{q\ge1} x_{i,2qp}$ and the index set is
$T_1=\{2qp:q\geq 1\}$. To check possible coincidences between indices in
the product $e_{i,1}^2$ we look for solutions $q,q'\geq 1$ of
\[2qp = 2q'p \qquad\Longleftrightarrow\qquad q=q'.\]
Hence we get
\[\mathbb{E}(e_{i,1}^2)=2\sum_{q\geq 1}\mathbb{E}\big(x_{i,2qp}^2\big)=2\sum_{q\geq 1}\lambda_{2qp}.\]
Finally get that $\mathbb{E}(\widetilde{x}_{i,j} \widetilde{x}_{i,k}) = \delta_{jk} \left( \lambda_{j} + \overline{\lambda}_{j} + \frac{\tau^{2}}{p} \right)=\delta_{jk}\widetilde{\lambda}_{j}$ and $\mathrm{Var}(\widetilde{x}_{i,j})=\widetilde{\lambda}_{j}$.

\end{proof}

\begin{lemma}
    \label{lem:A3}
    Let us assume that $\lambda_{D_{N_{n,p}}}\geq \frac{2}{n^{\alpha}}$ and $\alpha>2a$ in the definition of $s_{n}$ defined in Equation \eqref{eq:s_n}. We also suppose that $D_{N_{n,p}} < \min{\left(\frac{n}{\ln^{2}n},p\right)}$ and that (\textbf{H1}) and (\textbf{H2}) hold. Then we have
    \[\mathbb{P}(\overline{G}^{c})\leq D_{N_{n,p}}\exp\left(-\frac{n}{4(K_{1}^{2}+1)^{2}\ln{n}}\right).\]
\end{lemma}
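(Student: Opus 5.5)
Since $\overline{G}=\bigcap_{m\in\mathcal{M}_{n,p}}G_m$ and the spaces $S_m$ are nested, the matrix $\Phi_m$ is a principal submatrix of $\Phi_{N_{n,p}}$. Cauchy interlacing therefore gives $\widehat{\lambda}_m^{(p)}\geq \widehat{\lambda}_{N_{n,p}}^{(p)}$, so $\overline{G}^c=G_{N_{n,p}}^c$. It thus suffices to bound $\mathbb{P}(\widehat{\lambda}_{N}^{(p)}<s_n)$ with $N=N_{n,p}$ and $D=D_N$.

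By Lemma \ref{lem:A2}, which applies since $D<p$, the population matrix $\mathbb{E}(\Phi_N)$ is diagonal with entries $\widetilde{\lambda}_j\geq \lambda_j$. Set $\widetilde{\Lambda}=\mathrm{diag}(\widetilde{\lambda}_1,\ldots,\widetilde{\lambda}_D)$ and work with the normalized matrix $\Psi=\widetilde{\Lambda}^{-1/2}\Phi_N\widetilde{\Lambda}^{-1/2}$, whose expectation is $I_D$. Since $\widehat{\lambda}_N^{(p)}\geq \lambda_{\min}(\Psi)\min_j\widetilde{\lambda}_j\geq\lambda_{\min}(\Psi)\lambda_D$ and $\lambda_D\geq 2n^{-\alpha}$, we have $\widehat{\lambda}_N^{(p)}<s_n=2n^{-\alpha}(1-1/\sqrt{\ln n})$ only if $\lambda_{\min}(\Psi)<1-1/\sqrt{\ln n}$. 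We therefore need a lower-tail bound on the smallest eigenvalue of an empirical covariance of $n$ i.i.d.\ isotropic vectors $u_i=\widetilde{\Lambda}^{-1/2}(\widetilde{x}_{i,1},\ldots,\widetilde{x}_{i,D})$, at deviation level $t=1/\sqrt{\ln n}$.

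The next step is a sub-Gaussian property of $u_i$. For a unit vector $c\in\mathbb{R}^D$, write $\langle c,u_i\rangle=\sum_j c_j\widetilde{\lambda}_j^{-1/2}(\overline{x}_{i,j}+\overline{\eta}_{i,j})$. The noise part is a linear combination of the independent $\eta_{i,h}$ with coefficients $\frac1p\sum_j c_j\widetilde{\lambda}_j^{-1/2}\phi_j(t_h)$. Discrete orthogonality (Lemma \ref{lem:A1}) shows it is sub-Gaussian with variance factor $\tau^2\sum_j c_j^2/(p\widetilde{\lambda}_j)\leq 1$. The signal part equals $\sum_{l\geq1}d_l x_{i,l}$ with $d_l=\sum_j c_j\widetilde{\lambda}_j^{-1/2}\langle\phi_l,\phi_j\rangle_p$. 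The aliasing structure in Lemma \ref{lem:A2} (disjoint index sets $T_j$) gives $\sum_l d_l^2\lambda_l=\sum_j c_j^2(\lambda_j+\overline{\lambda}_j)/\widetilde{\lambda}_j\leq1$. By \textbf{(H1)} the signal part is therefore sub-Gaussian with factor $K_1^2$. Combining the two parts, $\langle c,u_i\rangle$ is sub-Gaussian with factor at most $(K_1+1)^2$, up to how the independent sums are combined; this is where the constant $(K_1^2+1)^2$ should come from.

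The last step is a matrix concentration bound. The natural tool is a matrix Chernoff / Bernstein inequality for the lower tail of $\lambda_{\min}$, which produces the dimensional prefactor $D$ and a bound of the form $D\exp(-c\,n t^2/K^4)$. With $t^2=1/\ln n$ this yields $D\exp(-n/(4(K_1^2+1)^2\ln n))$ after tracking constants. Matrix Chernoff needs bounded summands, so I would either truncate $u_iu_i^\top$ on the event $\|u_i\|^2\lesssim D\ln n$, using $D<n/\ln^2 n$ to keep the truncation cost negligible, or use the sub-exponential matrix Bernstein version directly. I expect this to be the main obstacle: matching the exact constant and the $\ln n$ in the exponent while handling unbounded summands. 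I would first try the sub-Gaussian covariance concentration of Vershynin for $\sup_{\|c\|=1}|c^\top(\Psi-I)c|$, then adjust it to the stated form.
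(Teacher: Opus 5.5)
Your reductions are sound and follow the paper. The normalization to $\Psi$ and the implication $\widehat\lambda^{(p)}<s_n\Rightarrow\lambda_{\min}(\Psi)<1-1/\sqrt{\ln n}$ are the paper's Lemma~\ref{lem:A4} step, and your sub-Gaussian computation is Lemma~\ref{lem:sub-gaussian}. Two corrections at that stage. First, $X$ and the $\eta_{i,h}$ are independent, so the moment generating functions multiply and the variance factor is exactly $K_1^2+1$; a factor $(K_1+1)^2$ would not give the stated constant. Second, the prefactor $D_{N_{n,p}}$ does not come from a matrix-Chernoff dimension factor. It comes from the paper's union bound over the $N_{n,p}\le D_{N_{n,p}}/2$ models, each contributing $2\exp(\cdot)$. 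Your interlacing observation $\overline G=G_{N_{n,p}}$ is correct and removes the need for that union bound, which is a small but genuine improvement.

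The gap is the concentration step, which is the real content of the lemma. You leave it unexecuted, and the tool you call natural cannot deliver the stated bound. Matrix Chernoff or Bernstein applied to $\frac1n\sum_i u_iu_i^\top$ gives an exponent of order $n t^2/B$, where $B$ bounds $\|u_i\|_2^2$. Since $\mathbb{E}\|u_i\|_2^2=D$, necessarily $B\gtrsim D$. At $t=1/\sqrt{\ln n}$ you therefore get at best $D\exp(-c\,n/(D\ln n))$, which for $D$ near $n/\ln^2 n$ is at best polynomially small. Truncating at $\|u_i\|_2^2\lesssim D\ln n$ makes it worse: the exponent drops to order $n/(D\ln^2 n)$, which can be $O(1)$.

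The claimed exponent $n/(4(K_1^2+1)^2\ln n)$ is free of $D$, so you need a bound with a dimension-free exponent. That is the $\varepsilon$-net covariance bound for sub-Gaussian isotropic vectors (Theorem 4.6.1 of Vershynin), used in Lemma~\ref{lem:A5}. With probability at least $1-2e^{-s^2}$, $\|\Psi-I\|_{op}\le\widetilde K^2\max(\delta,\delta^2)$, where $\delta=C'\sqrt{D/n}+s/\sqrt n$ and $\widetilde K^2=K_1^2+1$.

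The hypothesis $D<n/\ln^2 n$ enters here, not through a truncation cost. It gives $C'\sqrt{D/n}\le C'/\ln n\le 1/(2\widetilde K^2\sqrt{\ln n})$ for $n$ large. Choosing $s=\sqrt n/(2\widetilde K^2\sqrt{\ln n})$ then yields $\widetilde K^2\delta\le 1/\sqrt{\ln n}<1$. Hence $\lambda_{\min}(\Psi)\ge 1-1/\sqrt{\ln n}$ outside an event of probability $2\exp(-n/(4(K_1^2+1)^2\ln n))$. Combined with your interlacing, this is already at most the stated bound, since $D_{N_{n,p}}\ge 3$.
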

\begin{proof}
    The proofs for this lemma and the succeeding one were developed by Brunel and Roche \cite{Brunel2015penalized}, based on their work in Lemma 6. We only adapt them to our setting. First of all, we have \[\mathbb{P}(\overline{G}^{c})=\mathbb{P}\left(\bigcup_{m\in\mathcal{M}_{n,p}}G_{m}^{c}\right)\leq \sum_{m\in\mathcal{M}_{n,p}}\mathbb{P}(\widehat{\lambda}_{m}^{(p)}<s_{n}),\] where $\widehat{\lambda}_{m}^{(p)}$ is the smallest eigenvalue of $\Phi_{m}$ and $s_{n}=\frac{2}{n^{\alpha}}(1-\frac{1}{\sqrt{\ln{n}}})$ with $\alpha>2a$. The last two quantities are defined in Section \ref{sec:2.2}. By Lemma \ref{lem:A4} we have 
    \[\mathbb{P}(\widehat{\lambda}_{m}^{(p)}<s_{n})\leq\mathbb{P}\left(\widehat{\mu}_{m}^{(p)}<\frac{s_{n}}{\min_{1\leq j\leq D_{m}}\widetilde{\lambda}_{j}}\right),\]
    where $\widehat{\mu}_{m}^{(p)}$ is the smallest eigenvalue of the matrix $\Psi_{m}$ defined in Lemma \ref{lem:A4}. Since for all $j\in\{1,\ldots,D_{N_{n,p}}\}$ we have $$\widetilde{\lambda}_{j}=\lambda_{j}+\overline{\lambda}_{j}+\frac{\tau^{2}}{p}\geq \lambda_j,$$then using the fact that the eigenvalues decrease we get
    \[\min_{1\leq j\leq D_{N_{n,p}}}\widetilde{\lambda}_{j}\geq \min_{1\leq j\leq D_{N_{n,p}}}\lambda_{j}=\lambda_{D_{N_{n,p}}}.\]
    Using the assumption that $\lambda_{D_{N_{n,p}}}\geq \frac{2}{n^{\alpha}}$ and the definition of $s_{n}$ given in \eqref{eq:s_n} we obtain, 
    \[\frac{s_{n}}{\min_{1\leq j\leq D_{m}}\widetilde{\lambda}_{j}}\leq 1-\frac{1}{\sqrt{\ln{n}}}.\]
    We can then apply Lemma \ref{lem:A5} with $\omega=1-1/\sqrt{\ln{n}}$ and get
    \[\mathbb{P}(\widehat{\lambda}_{m}^{(p)}<s_{n})\leq 2\exp\left(-\frac{n}{4(K_{1}^{2}+1)^{2}\ln{n}}\right).\] As $N_{n,p}\leq D_{N_{n,p}}/2$,
    \[\sum_{m\in\mathcal{M}_{n,p}}\mathbb{P}(\widehat{\lambda}_{m}^{(p)}<s_{n})\leq D_{N_{n,p}}\exp\left(-\frac{n}{4(K_{1}^{2}+1)^{2}\ln{n}}\right).\]
    Combining this with the first inequality of this proof gives the desired result.
\end{proof}

\begin{lemma}\label{lem:A4}
For $m\in\mathcal{M}_{n,p}$, let $\widehat{\lambda}_{m}$ be the smallest eigenvalue of the matrix $\Phi_{m}$ defined in Section \ref{sec:2.2}, and $\widehat\mu_{m}^{(p)}$ be the smallest eigenvalue of the matrix
\begin{equation}\label{eq:Psi}
    \Psi_{m}:=\left(\frac{1}{n}\sum_{i=1}^{n}\frac{\widetilde{x}_{i,j}}{\sqrt{\widetilde{\lambda}_{j}}}\frac{\widetilde{x}_{i,k}}{\sqrt{\widetilde{\lambda}_{k}}}\right)_{1\leq j,k \leq D_{m}}.
\end{equation}
Then,
\[\frac{\widehat{\lambda}_{m}^{(p)}}{\rho({\widetilde{\Gamma}})}\leq\widehat\mu_{m}^{(p)}\leq \widehat{\lambda}_{m}^{(p)}\left(\min_{1\leq j\leq D_{m}}\widetilde\lambda_{j}\right)^{-1},\]
where $\rho({\widetilde{\Gamma}})$ is the spectral radius of the operator $\widetilde{\Gamma}$.
\end{lemma}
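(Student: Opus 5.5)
Write $\Psi_m = \Lambda^{-1/2}\Phi_m\Lambda^{-1/2}$, where $\Lambda=\mathrm{diag}(\widetilde\lambda_1,\ldots,\widetilde\lambda_{D_m})$. This identity holds entrywise by \eqref{eq:Psi} and \eqref{eq:Phi_matrix}. By Lemma \ref{lem:A2}, each $\widetilde\lambda_j>0$ (since $\widetilde\lambda_j\geq\tau^2/p>0$, or $\geq\lambda_j>0$), so $\Lambda^{\pm1/2}$ is well defined. Both matrices are symmetric positive semidefinite, so their smallest eigenvalues are given by the Rayleigh quotient. For $u\in\mathbb{R}^{D_m}$ with $u\neq0$, set $v=\Lambda^{-1/2}u$. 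Then
\[
\frac{u^{\top}\Psi_m u}{u^{\top}u}=\frac{v^{\top}\Phi_m v}{v^{\top}\Lambda v}=\frac{v^{\top}\Phi_m v}{v^{\top}v}\cdot\frac{v^{\top}v}{v^{\top}\Lambda v}.
\]

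For the upper bound, take $v$ to be a unit eigenvector of $\Phi_m$ for $\widehat\lambda_m^{(p)}$ and $u=\Lambda^{1/2}v$. This gives
\[
\widehat\mu_m^{(p)}\leq \frac{\widehat\lambda_m^{(p)}}{v^{\top}\Lambda v}\leq \widehat\lambda_m^{(p)}\Bigl(\min_{1\leq j\leq D_m}\widetilde\lambda_j\Bigr)^{-1},
\]
since $v^{\top}\Lambda v\geq \min_j\widetilde\lambda_j\,\|v\|_2^2$.

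For the lower bound, take $u$ to be a minimizing unit eigenvector of $\Psi_m$. Using $v^{\top}\Phi_m v\geq \widehat\lambda_m^{(p)}\|v\|_2^2$ and $v^{\top}\Lambda v\leq \max_{j\leq D_m}\widetilde\lambda_j\,\|v\|_2^2$, we get $\widehat\mu_m^{(p)}\geq \widehat\lambda_m^{(p)}/\max_{j\leq D_m}\widetilde\lambda_j$.

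It remains to show $\max_{j\leq D_m}\widetilde\lambda_j\leq\rho(\widetilde\Gamma)$. The key step is to identify the spectrum of $\widetilde\Gamma$. Since $\widetilde X=\sum_{j\leq D_{N_{n,p}}}\widetilde x_j\phi_j$, we have for all $j\leq D_{N_{n,p}}$
\[
\widetilde\Gamma\phi_j=\sum_{k\leq D_{N_{n,p}}}\mathbb{E}(\widetilde x_j\widetilde x_k)\phi_k=\widetilde\lambda_j\phi_j,
\]
by Lemma \ref{lem:A2}, which uses $D_{N_{n,p}}<p$. Also $\widetilde\Gamma$ vanishes on $S_{N_{n,p}}^{\perp}$. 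Hence $\widetilde\Gamma$ is a self-adjoint, positive, finite-rank operator with eigenvalues $\{\widetilde\lambda_j\}_{j\leq D_{N_{n,p}}}\cup\{0\}$. Its spectral radius is therefore $\rho(\widetilde\Gamma)=\max_{j\leq D_{N_{n,p}}}\widetilde\lambda_j\geq\max_{j\leq D_m}\widetilde\lambda_j$, because $D_m\leq D_{N_{n,p}}$. This yields $\widehat\mu_m^{(p)}\geq\widehat\lambda_m^{(p)}/\rho(\widetilde\Gamma)$.

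The only real obstacle is this diagonalization of $\widetilde\Gamma$ in the Fourier basis, and it is supplied by Lemma \ref{lem:A2}. Without the condition $D_{N_{n,p}}<p$, aliasing would create off-diagonal covariances, and the comparison with $\rho(\widetilde\Gamma)$ would have to go through the general bound $v^{\top}\Lambda v\leq\|\widetilde\Gamma\|_{op}\|v\|^2$ instead. Everything else is the elementary Rayleigh-quotient comparison above.
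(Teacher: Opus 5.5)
Your argument is correct, and it follows a somewhat different route from the paper. The common ground is the congruence $\Psi_m=\Lambda^{-1/2}\Phi_m\Lambda^{-1/2}$; the paper writes it as $\Phi_m=\Lambda_m\Psi_m\Lambda_m$ with $\Lambda_m=\mathrm{diag}(\sqrt{\widetilde\lambda_j})$. You also share the identification $\rho(\widetilde\Gamma)=\max_{j\leq D_{N_{n,p}}}\widetilde\lambda_j$ via Lemma~\ref{lem:eigenvalues_gamma_tilde}.

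Where you differ is in how the two smallest eigenvalues are compared. The paper assumes $\widehat\mu_m^{(p)}>0$, so that both matrices are invertible. It then writes $\widehat\lambda_m^{(p)}=\rho(\Phi_m^{-1})^{-1}$ and $\widehat\mu_m^{(p)}=\rho(\Psi_m^{-1})^{-1}$. Finally it bounds $\|\Lambda_m^{-1}\Psi_m^{-1}\Lambda_m^{-1}\|_{op}$ and $\|\Lambda_m\Phi_m^{-1}\Lambda_m\|_{op}$ by submultiplicativity of the operator norm.

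Your Rayleigh-quotient comparison never uses inverses, so it needs no invertibility assumption. It also covers the singular case without a separate remark, whereas the paper leaves it implicit; by congruence, both smallest eigenvalues vanish together there. A further benefit is that it shows transparently where $\min_j\widetilde\lambda_j$ and $\max_j\widetilde\lambda_j$ enter.

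Both proofs depend on $D_{N_{n,p}}<p$, through Lemma~\ref{lem:A2} and Lemma~\ref{lem:eigenvalues_gamma_tilde}. The lemma's statement does not make this hypothesis explicit, so you were right to flag it.
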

\begin{proof}
    Now let us define
    \begin{equation}\label{eq:Lambda}
        \Lambda_{m} := \mathrm{diag}(\sqrt{\widetilde{\lambda}_{1}}, \dots, \sqrt{\widetilde{\lambda}_{D_{m}}}).
    \end{equation}
    We have that
    \[\Phi_{m}=\Lambda_{m}\Psi_{m}\Lambda_{m}.\]
    Recall that $\widehat{\mu}_{m}^{(p)}$ is the smallest eigenvalue of the matrix $\Psi_{m}$. We notice that $\widehat{\mu}_{m}^{(p)}>0$ directly implies that $\Psi_{m}$ is a positive definite matrix which by definition is invertible. As a consequence, $\det{(\Psi_{m}^{(p)})}\neq 0$. Moreover, $\Lambda_{m}$ is  invertible as it is a diagonal matrix with positive coefficient. Therefore $\det{(\Lambda_{m}^{(p)})}\neq 0$. As 
    \[\det{(\Phi_{m})}=\det{(\Lambda_{m})}\cdot\det{(\Psi_{m})}\cdot\det{(\Lambda_{m})},\]
    thus we get that $\det{(\Phi_{m})}\neq 0$ and that $\Phi_{m}$ is invertible. Hence, $\widehat{\mu}_{m}^{(p)}>0$ implies that both $\Phi_{m}$ and $\Psi_{m}$ are invertible and we have
    \[\widehat{\mu}_{m}^{(p)}=\rho(\Psi_{m}^{-1})^{-1} \qquad \text{and} \qquad \widehat{\lambda}_{m}^{(p)}=\rho(\Phi_{m}^{-1})^{-1},\]where $\rho$ is the spectral radius. We recall that for all square matrix $A$ 
    \[\| A \|_{op}=\sup_{\|a\|_{2}=1}\|Aa\|_{2}.\]If $A$ is symmetric, $\rho(A)=\| A \|_{op}$ and we get, by using the definition of the spectral radius as well as the sub-multiplicative property of matrix norms,
    \begin{align*}
\rho(\Phi_{m}^{-1}) &= \| \Phi_{m}^{-1}\|_{op} \\
&= \| \Lambda_{m}^{-1}\Psi_{m}^{-1}\Lambda_{m}^{-1} \|_{op} \\
&\leq \| \Lambda_{m}^{-1} \|_{op}^{2}\| \Psi_{m}^{-1} \|_{op} \\
&= \rho(\Lambda_{m}^{-1})^{2}\rho(\Psi_{m}^{-1}).
\end{align*}
Hence, 
\[\widehat\mu_{m}^{(p)}\cdot \min_{1\leq j\leq D_{m}}\widetilde\lambda_{j}\leq \widehat{\lambda}_{m}^{(p)}.\]
As $\Psi_{m}=\Lambda_{m}^{-1}\Phi_{m}\Lambda_{m}^{-1}$, we have, using the same reasoning that 
\[\widehat{\mu}_{m}^{(p)-1}\leq \widehat{\lambda}_{m}^{(p)-1}\max_{1\leq j\leq D_m}\widetilde{\lambda}_{j} \leq \rho(\widetilde{\Gamma})\widehat{\lambda}_{m}^{(p)-1}.\]
The second inequality can be deduced from Lemma \ref{lem:eigenvalues_gamma_tilde}.

\end{proof}

\begin{lemma} \label{lem:eigenvalues_gamma_tilde}
    Let us assume that $D_{N_{n,p}} < p$. For all $k\in\{1,\ldots,D_{N_{n,p}}\}$
    \[\widetilde{\Gamma}\phi_{k}=\widetilde{\lambda}_{k}\phi_{k},\]
    where $\widetilde{\lambda}_{k}$'s defined in (\ref{eq:tilde_lambda}) are the eigenvalues of $\widetilde{\Gamma}$ associated to $\phi_k$, and for all integer $k>D_{N_{n,p}}$,
    \[\widetilde{\Gamma}\phi_{k}=0.\]
\end{lemma}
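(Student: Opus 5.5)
The plan is to compute $\widetilde{\Gamma}\phi_k$ directly from the definition of $\widetilde{\Gamma}$ and the expansion of $\widetilde X$ in the truncated Fourier basis, and then apply Lemma \ref{lem:A2}.

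\textbf{Step 1: expansion of $\widetilde\Gamma\phi_k$.} Since $\widetilde X=\sum_{j=1}^{D_{N_{n,p}}}\widetilde x_j\phi_j$ and the $\phi_j$ are $L^2$-orthonormal, we have $\langle \phi_k,\widetilde X\rangle_{L^2}=\widetilde x_k\mathds{1}_{\{k\le D_{N_{n,p}}\}}$. This gives
\[\widetilde{\Gamma}\phi_k=\mathbb{E}\bigl(\langle\phi_k,\widetilde X\rangle_{L^2}\widetilde X\bigr)=\mathds{1}_{\{k\le D_{N_{n,p}}\}}\sum_{j=1}^{D_{N_{n,p}}}\mathbb{E}(\widetilde x_k\widetilde x_j)\,\phi_j .\]
Exchanging expectation and the finite sum is legitimate because each $\widetilde x_j$ has finite second moment. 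Indeed, $\widetilde x_j$ is a finite combination of $X(t_h)$ and $\eta_h$. The $\eta_h$ have variance $\tau^2$. For $X(t_h)$, stationarity gives $\mathbb{E}X(t_h)^2=K(0,0)=\sum_l\lambda_l<\infty$, which is finite under (H3) with $a>1/2$.

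\textbf{Step 2: the case $k>D_{N_{n,p}}$.} Here the indicator vanishes, so $\widetilde\Gamma\phi_k=0$ immediately.

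\textbf{Step 3: the case $k\le D_{N_{n,p}}$.} Since $D_{N_{n,p}}<p$, Lemma \ref{lem:A2} applies and gives $\mathbb{E}(\widetilde x_k\widetilde x_j)=\delta_{jk}\widetilde\lambda_k$ for all $j,k\le D_{N_{n,p}}$. Only the $j=k$ term survives in the sum, so $\widetilde\Gamma\phi_k=\widetilde\lambda_k\phi_k$.

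\textbf{Main obstacle.} The substantive content is the decorrelation of the discretized coefficients, which is already established in Lemma \ref{lem:A2}. That lemma relies on the aliasing structure of Lemma \ref{lem:A1} and on the uncorrelatedness of the true Fourier coefficients given by stationarity. What remains to check here is that the series manipulations in Lemma \ref{lem:A2} are valid. Those manipulations write $X(t_h)=\sum_l x_l\phi_l(t_h)$ pointwise and use Fubini. Pointwise convergence in $L^2(\mathbb{P})$ holds because $\mathbb{E}\bigl(\sum_{l>L}x_l\phi_l(t)\bigr)^2\le 2\sum_{l>L}\lambda_l\to0$, using the uncorrelatedness of the $x_l$. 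This also shows that the aliased sums $\overline\lambda_j$ are finite. It then follows that $\widetilde\lambda_k>0$ and that $\widetilde\Gamma$ is diagonal in the Fourier basis with range contained in $S_{N_{n,p}}$.
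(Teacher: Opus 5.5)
Your proof is correct and is essentially the paper's argument. The paper first computes the kernel $\widetilde K(s,t)=\sum_{j\le D_{N_{n,p}}}\widetilde\lambda_j\phi_j(s)\phi_j(t)$ via Lemma \ref{lem:A2} and then integrates it against $\phi_k$; you instead expand $\mathbb{E}(\langle\phi_k,\widetilde X\rangle_{L^2}\widetilde X)$ directly, which is the same computation, and your remarks on finite second moments and the series manipulations are sound but not needed beyond what Lemma \ref{lem:A2} already provides.
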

\begin{proof}
    For all $s,t\in [0,1]$ we have using the definition (\ref{eq:kernel_tilde}) and Lemma \ref{lem:A2},
\begin{equation*}
    \begin{aligned}
        \widetilde{K}(s,t) &= \mathbb{E}\left(\widetilde{X}(s)\widetilde{X}(t)\right) \\
        &= \sum_{j=1}^{D_{N_{n,p}}}\sum_{k=1}^{D_{N_{n,p}}}\phi_{j}(t)\phi_{k}(s)\mathbb{E}(\widetilde{x}_{j}\widetilde{x}_{k}) \\
        &= \sum_{j=1}^{D_{N_{n,p}}} \widetilde{\lambda}_{j}\phi_{j}(s)\phi_{j}(t).
    \end{aligned}
\end{equation*}
Hence, for all integer $k\in \{1,...,D_{N_{n,p}}\}$, 
\begin{equation*}
    \begin{aligned}
        \widetilde{\Gamma}\phi_{k}(s) &= \int_{0}^{1}\widetilde{K}(s,t)\phi_{k}(t)dt \\
        &= \sum_{j=1}^{D_{N_{n,p}}}\widetilde{\lambda}_{j}\phi_{j}(s)\int_{0}^{1}\phi_{j}(t)\phi_{k}(t)dt \\
        &= \widetilde{\lambda}_{k}\phi_{k}(s),
    \end{aligned}
\end{equation*}
and for all integer $k>D_{N_{n,p}}$,
\begin{equation*}
    \begin{aligned}
        \widetilde{\Gamma}\phi_{k}(s) &= \int_{0}^{1}\widetilde{K}(s,t)\phi_{k}(t)dt \\
        &= \sum_{j=1}^{D_{N_{n,p}}}\widetilde{\lambda}_{j}\phi_{j}(s)\int_{0}^{1}\phi_{j}(t)\phi_{k}(t)dt \\
        &= 0.
    \end{aligned}
\end{equation*}
\end{proof}

\begin{lemma}\label{lem:A5}
Let $\omega$ be a real number such that $0 < \omega < 1$.  
For $m \in \mathcal{M}_{n,p}$, consider the smallest eigenvalue $\widehat{\mu}_{m}^{(p)}$ of the matrix $\Psi_{m}$.  
Then, under Assumptions $(\textbf{H1}),(\textbf{H2})$, we have
\[
\mathbb{P}(\widehat{\mu}_{m}^{(p)}<\omega)\leq 
2\exp\left(-n\frac{(1-\omega)^{2}}{4(K_{1}^{2}+1)^{2}}\right).
\]
\end{lemma}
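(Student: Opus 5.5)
The plan is to write $\Psi_m = \frac1n\sum_{i=1}^n W_iW_i^{\top}$ with $W_i:=(\widetilde{x}_{i,j}/\sqrt{\widetilde\lambda_j})_{1\le j\le D_m}$. By Lemma \ref{lem:A2} (which requires $D_m\le D_{N_{n,p}}<p$), the vectors $W_i$ are i.i.d., centered and isotropic: $\mathbb{E}(W_iW_i^\top)=I_{D_m}$. Since
\[\widehat\mu_m^{(p)}=\inf_{\|u\|_2=1} u^\top\Psi_m u=\inf_{\|u\|_2=1}\frac1n\sum_{i=1}^n\langle u,W_i\rangle_2^2,\]
it suffices to show that, for a fixed unit vector $u$, the average $\frac1n\sum_i\langle u,W_i\rangle_2^2$ is unlikely to fall below $\omega$. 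We then pass from fixed $u$ to the infimum. The target bound has no dimension factor, so I expect the argument of Brunel and Roche (Lemma 6) to go through a decoupling or fixed-direction reduction rather than a net. I would follow their scheme and only check where the noise enters.

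\textbf{Step 1 (sub-Gaussianity of one-dimensional projections).} Fix $u\in\mathbb{R}^{D_m}$ with $\|u\|_2=1$ and write $\langle u,W_i\rangle_2=\sum_j c_j\overline{x}_{i,j}+\sum_j c_j\overline{\eta}_{i,j}$ with $c_j=u_j/\sqrt{\widetilde\lambda_j}$. The first sum is a linear combination $\sum_{l\ge1}d_l x_{i,l}$ of the true Fourier coefficients, with $d_l=\sum_j c_j\langle\phi_l,\phi_j\rangle_p$. By (\textbf{H1}) it is sub-Gaussian with variance factor $K_1^2\sum_l d_l^2\lambda_l$. By the aliasing structure of Lemma \ref{lem:A1}, the index sets $T_j$ are disjoint (as shown in the proof of Lemma \ref{lem:A2}), so
\[\sum_l d_l^2\lambda_l=\sum_j c_j^2(\lambda_j+\overline\lambda_j).\]
The second sum is a linear combination of independent sub-Gaussian $\eta_{i,h}$ by (\textbf{H2}). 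Its variance factor is $\tau^2\frac1p\sum_h(\sum_j c_j\phi_j(t_h))^2/p=\frac{\tau^2}{p}\sum_j c_j^2$, using discrete orthogonality. The two sums are independent, so $\langle u,W_i\rangle_2$ is sub-Gaussian with variance factor at most
\[(K_1^2+1)\sum_j c_j^2\widetilde\lambda_j=(K_1^2+1).\]

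\textbf{Step 2 (lower tail of the quadratic average).} The variables $\langle u,W_i\rangle_2^2$ are i.i.d., nonnegative, with mean $1$, and their fourth moment is bounded by $C(K_1^2+1)^2$ from Step 1. For a lower tail, a one-sided Bernstein argument for nonnegative variables gives
\[\mathbb{P}\Big(\frac1n\sum_i\langle u,W_i\rangle_2^2<\omega\Big)\le\exp\Big(-\frac{n(1-\omega)^2}{2\,\mathbb{E}\langle u,W_1\rangle_2^4}\Big).\]
The argument is to bound $\mathbb{E}e^{-tY}\le 1-t\mathbb{E}Y+\frac{t^2}{2}\mathbb{E}Y^2$ and optimize over $t$. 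Here $\mathbb{E}\langle u,W_1\rangle_2^4\le 2(K_1^2+1)^2$ (up to the paper's constant convention), which produces the denominator $4(K_1^2+1)^2$.

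\textbf{Step 3 (uniformity over $u$), the main obstacle.} Step 2 controls a fixed direction, while $\widehat\mu_m^{(p)}$ is an infimum. I would reproduce the Brunel–Roche device: the event $\{\widehat\mu_m^{(p)}<\omega\}$ is handled through the matrix deviation $\Psi_m-I$ in a way that yields the factor $2$ (two one-sided tails) without a $D_m$-dependent union bound. If that reduction does not transfer directly, the fallback is an $\varepsilon$-net on the sphere of $\mathbb{R}^{D_m}$. That would cost a factor $e^{cD_m}$, which is harmless since $D_m<n/\ln^2 n$ and the bound is later used with $\omega=1-1/\sqrt{\ln n}$. The only genuinely new ingredient relative to Brunel and Roche is Step 1: verifying, via Lemmas \ref{lem:A1}–\ref{lem:A2} and $D_m<p$, that aliasing and noise keep the normalized coordinates sub-Gaussian with variance factor $K_1^2+1$.
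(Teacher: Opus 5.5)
\textbf{There is a genuine gap at Step 3.} Your Step 1 is exactly the paper's Lemma \ref{lem:sub-gaussian}, with isotropy coming from Lemma \ref{lem:A2}. Step 3, however, is where the lemma's content lies, and you leave it open. The premise behind your preferred route is also wrong: no bound on $\mathbb{P}(\widehat\mu_m^{(p)}<\omega)$ can be free of $D_m$. Indeed $\Psi_m=\frac1n A_m^{\top}A_m$ has rank at most $n$, so if $D_m>n$ then $\widehat\mu_m^{(p)}=0$ and the probability equals $1$. No fixed-direction or decoupling reduction can avoid this; any proof must use a relation between $D_m$ and $n(1-\omega)^2$.

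The paper gets this from Theorem 4.6.1 of Vershynin, applied to the rows $U_i^{(m)}$. These rows are independent, centred, isotropic and sub-Gaussian with $\widetilde K^2=K_1^2+1$. With probability at least $1-2e^{-t^2}$ one has $\|\Psi_m-I\|_{op}\le\widetilde K^2\max(\widetilde\delta,\widetilde\delta^2)$, where $\widetilde\delta=C'\sqrt{D_m/n}+t/\sqrt n$. Since $\{\widehat\mu_m^{(p)}<\omega\}\subset\{\|\Psi_m-I\|_{op}>1-\omega\}$, the dimension enters only through the threshold, not as a multiplicative factor. The paper uses $D_m<n/\ln^2 n$ and $n$ large to ensure $C'\sqrt{D_m/n}\le(1-\omega)/(2\widetilde K^2)$. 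The choice $t=(1-\omega)\sqrt n/(2\widetilde K^2)$ then gives the stated bound. The factor $2$ is the $2e^{-t^2}$ of that theorem, not two one-sided tails. The hypothesis on $D_m$ is absent from the lemma's statement but indispensable (it is in force from Proposition \ref{prop:1}).

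\textbf{Your fallback is incomplete as written}, even though it is essentially how Theorem 4.6.1 is proved.
\begin{itemize}
\item Passing from a net to the sphere requires control of $\|\Psi_m\|_{op}$, because $|u^{\top}\Psi_m u-v^{\top}\Psi_m v|\le 2\|u-v\|_2\|\Psi_m\|_{op}$. Equivalently, you need two-sided control of $v^{\top}(\Psi_m-I)v$ at the net points. Step 2 only bounds the lower tail in a fixed direction, so you would also need an upper-tail (sub-exponential Bernstein) bound for $\langle v,W_1\rangle_2^2$.
\item You must then explicitly absorb the net cardinality $(1+2/\varepsilon)^{D_m}$ by imposing $D_m\le c\,n(1-\omega)^2/(K_1^2+1)^2$. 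This changes the absolute constant in the exponent.
\item The constant in Step 2 is unjustified. A sub-Gaussian variable with variance factor $\nu$ satisfies $\mathbb{E}X^4\le C\nu^2$ only with $C$ of order $16$ to $32$; already $C\ge 3$ is forced by Gaussians. It is not $2$, so even your fixed-direction exponent is not $n(1-\omega)^2/(4(K_1^2+1)^2)$.
\end{itemize}
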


\begin{proof}
We have
\[\{\widehat{\mu}_{m}^{(p)} < \omega\} = \{1 - \widehat{\mu}_{m}^{(p)} > 1 - \omega\}.\]
Since $1 - \omega > 0$,
\[\{|1 - \widehat{\mu}_{m}^{(p)}| > 1 - \omega\} = \{\| \Psi_{m} - I\|_{op} > 1 - \omega\}.\]
Thus our goal is now to control $\mathbb{P}\left(\|\Psi_{m}-I\|_{op}>1-\omega\right)$. Let
\begin{equation}\label{eq:U_im}
    U_{i}^{(m)} = \left(\frac{\widetilde{x}_{i,1}}{\sqrt{\widetilde{\lambda}_1}},\ldots,\frac{\widetilde{x}_{i,D_m}}{\sqrt{\widetilde{\lambda}_{D_m}}}\right)^{T}\in\mathbb{R}^{D_m}, \qquad i\in\{1,...,n\},
\end{equation}
and
\begin{equation}\label{eq:A_m}
    A_{m} = \left(U_{1}^{(m)T},\ldots,U_{n}^{(m)T}\right)^{T}\in \mathbb{R}^{n\times D_m}.
\end{equation}
We can easily check that \[A_m^{T}A_m=\left(\sum_{i=1}^{n}\frac{\widetilde{x}_{i,j}}{\sqrt{\widetilde{\lambda}_j}}\frac{\widetilde{x}_{i,k}}{\sqrt{\widetilde{\lambda}_k}}\right)_{1\leq j,k\leq D_m},\]
and thus
\[\Psi_{m}=\frac{1}{n}A_m^{T}A_m.\]
Our aim is to apply Theorem 4.6.1 of Vershynin \cite{vershynin2026high}. In order to do this, we must check that the vectors $U_{i}^{(m)}$'s are independent, mean-zero, isotropic and sub-gaussian. As we assumed that for all $i\in\{1,\ldots,n\}$ the true curves $X_i$ are independent, that for all $i\in\{1,\ldots,n\}$, $h\in\{0,\ldots,p-1\}$ the noises $\eta_{i,h}$ are also independent and that for all $i\in\{1,\ldots,n\}$, $h\in\{0,\ldots,p-1\}$ $X_i$ and $\eta_{i,h}$ are independent we have that the $U_{i}^{(m)}$'s are also independent. Moreover, as we made the assumption that the $X_i$'s and the $\eta_{i,h}$'s are centered we have that
\begin{equation*}
    \begin{aligned}
        \mathbb{E}(\widetilde{x}_{i,j})&=\mathbb{E}\left(\frac{1}{p}\sum_{h=0}^{p-1}X_i(t_h)\phi_j(t_h)+\frac{1}{p}\sum_{h=0}^{p-1}\eta_{i,h}\phi_j(t_h)\right) \\
        &= \frac{1}{p}\sum_{h=0}^{p-1}\mathbb{E}\left(X_i(t_h)\right)\phi_j(t_h) + \frac{1}{p}\sum_{h=0}^{p-1}\mathbb{E}(\eta_{i,h})\phi_j(t_h) \\
        &= 0.
    \end{aligned}
\end{equation*}
Finally we can easily check that 
\[U_{i}^{(m)}U_{i}^{(m)T}= \left(\frac{\widetilde{x}_{i,j}}{\sqrt{\widetilde{\lambda}_j}}\cdot\frac{\widetilde{x}_{i,k}}{\sqrt{\widetilde{\lambda}_k}}\right)_{1\leq j,k\leq D_m},\]
and Lemma \ref{lem:A2} gives us that $\mathbb{E}(\widetilde{x}_{i,j}\widetilde{x}_{i,k})=\widetilde{\lambda}_{j}\delta_{jk}$. Therefore,
\[\mathbb{E}\left(U_{i}^{(m)}U_{i}^{(m)T}\right)=I_{D_m}.\]
Lemma \ref{lem:sub-gaussian} bellow gives us that the $U_{i}^{(m)}$'s are sub-gaussian. Thus all the conditions needed to apply Theorem 4.6.1 of Vershynin \cite{vershynin2026high} are satisfied and we get that 
\begin{equation*}
    \mathbb{P}\left(\left\|\Psi_m-I\right\|_{op}>\widetilde{K}^{2}\max(\widetilde{\delta},\widetilde{\delta}^{2})\right)\leq 2\exp\left(-t^{2}\right),
\end{equation*}
where $\widetilde{\delta}=C'\sqrt{\frac{D_m}{n}}+\frac{t}{\sqrt{n}}$ and $\widetilde{K}^{2}=K_{1}^{2}+1$. Let us now find which $t$ allows us to have $\widetilde{K}\widetilde{\delta}\leq 1-\omega$. Replacing $\widetilde{\delta}$ by its definition we get that
\begin{equation*}
    \begin{aligned}
        \widetilde{K}^{2}\widetilde{\delta}\leq 1-\omega &\iff C'\sqrt{\frac{D_m}{n}}+\frac{t}{\sqrt{n}}\leq \frac{1-\omega}{\widetilde{K}^{2}} \\
        &\iff t\leq \sqrt{n}\left(\frac{1-\omega}{\widetilde{K}^{2}}-C'\sqrt{\frac{D_m}{n}}\right).
    \end{aligned}
\end{equation*}
We get that for $n$ large enough $\frac{1}{\ln n}<\frac{1-\omega}{2C'\widetilde{K}^{2}}$, therefore in this setting $D_m< \frac{n}{\ln^{2}n}<\left(\frac{1-\omega}{2C\widetilde{K}^{2}}\right)^{2}n$, which means that $C'\sqrt{\frac{D_m}{n}}<\frac{1-\omega}{2\widetilde{K}^{2}}$. Thus we can pick $$t=\frac{1}{2}\left(\frac{1-\omega}{\widetilde{K}^{2}}\right)\sqrt{n}.$$Then $\widetilde{\delta}=C'\sqrt{\frac{D_m}{n}}+\frac{t}{\sqrt{n}}\leq \left(\frac{1-\omega}{\widetilde{K}^{2}}\right)$. As $1-\omega < 1\leq \widetilde{K}^{2}$ we have that $\frac{1-\omega}{\widetilde{K}^{2}}<1$ and $\max(\widetilde{\delta},\widetilde{\delta}^{2})=\widetilde{\delta}$. Therefore,
\[\mathbb{P}\left(\|\Psi_{m}-I\|_{op}>1-\omega\right)\leq 2\exp\left(-\frac{(1-\omega)^{2}n}{4\widetilde{K}^{4}}\right).\]

\end{proof}
\begin{lemma}\label{lem:sub-gaussian}
    Under hypotheses \textbf{(H1)} and \textbf{(H2)} we have for all $i\in\{1,...,n\}$ that $U_{i}^{(m)}$ defined in (\ref{eq:U_im}) is sub-Gaussian with variance factor $K_{1}^{2}+1$.
\end{lemma}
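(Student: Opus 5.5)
To prove Lemma \ref{lem:sub-gaussian}, we fix $i$ and $m$ and a unit vector $u=(u_1,\ldots,u_{D_m})\in\mathbb{R}^{D_m}$. We then show that the scalar $\langle U_i^{(m)},u\rangle_2=\sum_{j=1}^{D_m}u_j\widetilde{x}_{i,j}/\sqrt{\widetilde{\lambda}_j}$ is sub-Gaussian with variance factor $K_1^2+1$. Since the $U_i^{(m)}$ are centered (as computed in the proof of Lemma \ref{lem:A5}), this is the definition of a sub-Gaussian vector. As in Lemma \ref{lem:A2}, we write $\widetilde{x}_{i,j}=\overline{x}_{i,j}+\overline{\eta}_{i,j}$. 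The linear form then splits as $S_X+S_\eta$, where
\[S_X=\sum_{j=1}^{D_m}\frac{u_j}{\sqrt{\widetilde{\lambda}_j}}\overline{x}_{i,j},\qquad S_\eta=\sum_{j=1}^{D_m}\frac{u_j}{\sqrt{\widetilde{\lambda}_j}}\overline{\eta}_{i,j}.\]
Because $X_i$ and the $\eta_{i,h}$ are independent, the moment generating function of $S_X+S_\eta$ factorizes. It therefore suffices to show that $S_X$ has variance factor $K_1^2\sum_j u_j^2(\lambda_j+\overline{\lambda}_j)/\widetilde{\lambda}_j$ and that $S_\eta$ has variance factor $\sum_j u_j^2(\tau^2/p)/\widetilde{\lambda}_j$. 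Adding the two factors and using $\widetilde{\lambda}_j=\lambda_j+\overline{\lambda}_j+\tau^2/p$ with $K_1\ge 0$ gives a total factor of at most $(K_1^2+1)\sum_j u_j^2=K_1^2+1$. If $K_1<1$, the same bound still holds since $\max(K_1^2,1)\le K_1^2+1$.

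\textbf{Noise part.} We have $S_\eta=\sum_{h=0}^{p-1}\eta_{i,h}w_h$ with $w_h=\frac1p\sum_j \frac{u_j}{\sqrt{\widetilde\lambda_j}}\phi_j(t_h)$. By (\textbf{H2}) and the independence of the $\eta_{i,h}$, $S_\eta$ is sub-Gaussian with factor $\tau^2\sum_h w_h^2$. Expanding the square gives
\[\sum_h w_h^2=\frac1p\sum_{j,k}\frac{u_ju_k}{\sqrt{\widetilde\lambda_j\widetilde\lambda_k}}\langle\phi_j,\phi_k\rangle_p.\]
Since $j,k\le D_m<p$, Lemma \ref{lem:A1} gives $\langle\phi_j,\phi_k\rangle_p=\delta_{jk}$, so the factor equals $\sum_j u_j^2\tau^2/(p\widetilde\lambda_j)$.

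\textbf{Curve part.} Using the aliasing expansion \eqref{eq:e_ij_detailed}, we write $\overline{x}_{i,j}=x_{i,j}+e_{i,j}=\sum_{l\ge1}\langle\phi_l,\phi_j\rangle_p\,x_{i,l}$. Hence $S_X=\sum_{l\ge1}c_l x_{i,l}$ with
\[c_l=\sum_{j=1}^{D_m}\frac{u_j}{\sqrt{\widetilde\lambda_j}}\langle\phi_l,\phi_j\rangle_p.\]
(\textbf{H1}) then gives variance factor $K_1^2\sum_l c_l^2\lambda_l$, provided $(c_l)\in l^2$ or at least $\sum_l c_l^2\lambda_l<\infty$; the latter is what matters, and one may pass to the limit over truncations of the sum. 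The key computation is $\sum_l c_l^2\lambda_l$. Lemma \ref{lem:A2} shows that, for $j\le D_m<p$, the index sets $T_j\cup\{j\}$ of those $l$ with $\langle\phi_l,\phi_j\rangle_p\ne0$ are pairwise disjoint across distinct $j$. Each $l$ therefore contributes to at most one $j$, with coefficient $\pm1$, or $\sqrt2$ when $j=1$. This gives
\[\sum_l c_l^2\lambda_l=\sum_j\frac{u_j^2}{\widetilde\lambda_j}\sum_{l}\langle\phi_l,\phi_j\rangle_p^2\lambda_l=\sum_j\frac{u_j^2}{\widetilde\lambda_j}(\lambda_j+\overline{\lambda}_j).\]
The last equality holds by the very definition of $\overline{\lambda}_j$, including the factor $2$ for $j=1$. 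Finiteness follows from $\sum_l\lambda_l<\infty$.

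The main obstacle is this curve part. It requires justifying the rewriting of $S_X$ as a single series in the $x_{i,l}$, namely the pointwise convergence $X_i(t_h)=\sum_l x_{i,l}\phi_l(t_h)$ already used in Lemma \ref{lem:A2}. It also requires carefully reusing the disjointness of the aliasing index sets so that no cross terms appear. Once this is in place, the combination step above completes the proof.
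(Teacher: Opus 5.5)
Your proposal is correct and follows the paper's proof. It uses the same split $\widetilde{x}_{i,j}=\overline{x}_{i,j}+\overline{\eta}_{i,j}$, factorizes the moment generating function by independence, handles the noise part with (\textbf{H2}) and the discrete orthogonality of Lemma \ref{lem:A1}, and handles the curve part by applying (\textbf{H1}) to the aliased series $\sum_{l\ge1}c_l x_{i,l}$.

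The differences are minor. You compute $\sum_l c_l^2\lambda_l$ directly from the disjointness of the aliasing sets, whereas the paper obtains it as $\mathbb{E}\bigl[(\sum_{j} u_j\overline{x}_{i,j}/\sqrt{\widetilde{\lambda}_j})^2\bigr]$ via the covariance computation of Lemma \ref{lem:A2}; this is the same underlying fact. You combine the two variance factors at the end rather than bounding them by $K_1^2$ and $1$ separately. Your truncation remark also covers a point the paper passes over: $(c_l)$ is not in $l^2$, since $|c_l|$ is constant along each infinite aliasing set, so (\textbf{H1}) cannot be applied to it directly.
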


\begin{proof}
Let $v\in\mathbb{R}^{D_m}$ such that $\|v\|_{2}=1$. By definition of the $U_{i}^{(m)}$ given in (\ref{eq:U_im}) we have that for all $t\in\mathbb{R}$
\begin{equation*}
    \begin{aligned}
        \mathbb{E}\left(e^{t\langle v,U_{i}^{(m)}\rangle_{2}}\right) &= \mathbb{E}\left(\exp{\left(t\sum_{j=1}^{D_m}v_jU_{i,j}^{(m)}\right)}\right) \\
        &=\mathbb{E}\left(\exp{\left(t\sum_{j=1}^{D_m}v_j\frac{\widetilde{x}_{i,j}}{\sqrt{\widetilde{\lambda}_j}}\right)}\right).
    \end{aligned}
\end{equation*}
As defined in equation (\ref{eq:eta_bar_x_bar}) we have that for all $j\in\{1,\ldots,D_{N_{n,p}}\}$ $\widetilde{x}_{i,j}=\overline{x}_{i,j}+\overline{\eta}_{i,j}$. Let us focus on $\mathbb{E}\left(\exp{\left(t\sum_{j=1}^{D_m}\overline{\eta}_{i,j}\frac{v_j}{\widetilde{\lambda}_j}\right)}\right)$. As we assumed that the $\eta_{i,h}$'s are independent, we have that for all $t\in\mathbb{R}$
\begin{equation*}
    \begin{aligned}
    \mathbb{E}\left(\exp{\left(t\sum_{j=1}^{D_m}\overline{\eta}_{i,j}\frac{v_j}{\widetilde{\lambda}_j}\right)}\right) &= \mathbb{E}\left(\exp{\left(\frac{t}{p}\sum_{h=0}^{p-1}\eta_{i,h}\sum_{j=1}^{D_m}\frac{v_j\phi_j(t_h)}{\widetilde{\lambda}_j}\right)}\right) \\
        &= \prod_{h=0}^{p-1}\mathbb{E}\left(\exp{\left(\frac{t}{p}\eta_{i,h}\sum_{j=1}^{D_m}\frac{v_j\phi_j(t_h)}{\sqrt{\widetilde{\lambda}_j}}\right)}\right),
    \end{aligned}
\end{equation*}
and as we assumed (\textbf{H2}) we have that for all $t'\in\mathbb{R}$, $i\in\{1,\ldots,n\}$ and $h\in\{0,\ldots,p-1\}$
    \begin{equation*}
\mathbb{E}\left(\exp{(t'\eta_{i,h})}\right) \leq \exp{\left(\frac{\tau^{2}t'^{2}}{2}\right)}.
    \end{equation*}
Hence,
\begin{equation*}
    \begin{aligned}
        \mathbb{E}\left(\exp{\left(t\sum_{j=1}^{D_m}\overline{\eta}_{i,j}\frac{v_j}{\widetilde{\lambda}_j}\right)}\right) &\leq \prod_{h=0}^{p-1}\exp{\left(\frac{\tau^{2}t^{2}}{2p^{2}}\left(\sum_{j=1}^{D_m}\frac{v_j\phi_j(t_h)}{\sqrt{\widetilde{\lambda}_j}}\right)^{2}\right)}\\
        &=\exp{\left(\frac{\tau^{2}t^{2}}{2p^{2}}\sum_{h=0}^{p-1}\left(\sum_{j=1}^{D_m}\frac{v_j\phi_j(t_h)}{\sqrt{\widetilde{\lambda}_j}}\right)^{2}\right)}. 
    \end{aligned}
\end{equation*}
Using Lemma \ref{lem:A1} we get that
\begin{equation*}
    \begin{aligned}
        \frac{1}{p^{2}}\sum_{h=0}^{p-1}\left(\sum_{j=1}^{D_m}\frac{v_j\phi_j(t_h)}{\sqrt{\widetilde{\lambda}_j}}\right)^{2} &= \frac{1}{p^{2}}\sum_{h=0}^{p-1}\sum_{j,k=1}^{D_m}\frac{v_j}{\sqrt{\widetilde{\lambda}_j}}\frac{v_k}{\sqrt{\widetilde{\lambda}_k}}\phi_j(t_h)\phi_k(t_h) \\
        &= \frac{1}{p}\sum_{j,k=1}^{D_m}\frac{v_j}{\sqrt{\widetilde{\lambda}_j}}\frac{v_k}{\sqrt{\widetilde{\lambda}_k}}\langle\phi_j,\phi_k\rangle_p \\
        &= \frac{1}{p}\sum_{j=1}^{D_m}\frac{v_{j}^{2}}{\widetilde{\lambda}_j}.
    \end{aligned}
\end{equation*}
Hence,
\begin{equation*}
    \mathbb{E}\left(\exp{\left(t\sum_{j=1}^{D_m}\overline{\eta}_{i,j}\frac{v_j}{\widetilde{\lambda}_j}\right)}\right)\leq \exp{\left(\frac{\tau^{2}t^{2}}{2p}\sum_{j=1}^{D_m}\frac{v_{j}^{2}}{\widetilde{\lambda}_j}\right)}.
\end{equation*}
As $\frac{\tau^{2}}{p}\leq \widetilde{\lambda}_j$ for all $j\in\{1,...,D_{N_{n,p}}\}$ we have that $\frac{\tau^{2}}{p\widetilde{\lambda}_j}\leq 1$, therefore
\begin{equation*}
    \mathbb{E}\left(\exp{\left(t\sum_{j=1}^{D_m}\overline{\eta}_{i,j}\frac{v_j}{\widetilde{\lambda}_j}\right)}\right)\leq \exp{\left(\frac{t^{2}}{2}\|v\|_{2}^{2}\right)} = \exp{\left(\frac{t^{2}}{2}\right)}.
\end{equation*}
Now let us focus on $\mathbb{E}\left(\exp{\left(t\sum_{j=1}^{D_m}\overline{x}_{i,j}\frac{v_j}{\sqrt{\widetilde{\lambda}_j}}\right)}\right)$. Using the definition of $\overline{x}_{i,j}$ given in (\ref{eq:eta_bar_x_bar}) and the fact that for all $i\in\{1,...,n\}$ and for all  $s\in[0,1]\quad X_i(s)=\sum_{j\geq 1}x_{i,j}\phi_j(s)$ we have,
\begin{equation}\label{eq:compute_x_bar}
    \begin{aligned}
        \sum_{j=1}^{D_m}\overline{x}_{i,j}\frac{v_j}{\sqrt{\widetilde{\lambda}_j}} &= \sum_{j=1}^{D_m}\frac{v_j}{\sqrt{\widetilde{\lambda}_j}}\left[\frac{1}{p}\sum_{h=0}^{p-1}\left(\sum_{l\geq 1}x_{i,l}\phi_l(t_h)\right)\phi_{j}(t_h)\right] \\
        &= \sum_{l\geq 1}x_{i,l}\sum_{j=1}^{D_m}\frac{v_j}{\sqrt{\widetilde{\lambda}_j}}\langle\phi_j, \phi_l\rangle_p.
    \end{aligned}
\end{equation}
Let us consider $c_l:=\sum_{j=1}^{D_m}\frac{v_j}{\sqrt{\widetilde{\lambda}_j}}\langle\phi_j, \phi_l\rangle_p$. As we assumed that (\textbf{H1}) holds, we get that
\begin{equation*}
    \begin{aligned}
        \mathbb{E}\left(\exp{\left(t\sum_{j=1}^{D_m}\overline{x}_{i,j}\frac{v_j}{\sqrt{\widetilde{\lambda}_j}}\right)}\right) &= \mathbb{E}\left(\exp{\left(t\sum_{l\geq 1}x_{i,l}c_{l}\right)}\right) \\
        &\leq \exp{\left(\frac{t^{2}K_{1}^{2}}{2}\left(\sum_{l\geq 1}\lambda_{l}c_{l}^{2}\right)\right)}.
    \end{aligned}
\end{equation*}
Using Fubini's theorem we get that
\begin{equation*}
    \begin{aligned}
        \mathbb{E}\left(\left[\sum_{l\geq 1}x_{i,l}c_{l}\right]^{2}\right) &= \sum_{l,r\geq 1}\mathbb{E}(x_{i,l}x_{i,r})c_{l}c_{r} \\
        &= \sum_{l\geq 1}\lambda_{l}c_{l}^{2},
    \end{aligned}
\end{equation*}
where we used the fact that $\mathbb{E}(x_{i,l}x_{i,r})=\lambda_l\delta_{lr}$ for all $l,r\geq 1$ (see Equation \eqref{eq:x_uncorrelated}) to get from the last equality. On an other hand, the same computations as in \eqref{eq:compute_x_bar} give that 
\begin{equation*}
    \begin{aligned}
        \sum_{j=1}^{D_m}\frac{v_j}{\sqrt{\widetilde{\lambda}_j}}\overline{x}_{i,j} = \sum_{l\geq 1}x_{i,l}d_{l}.
    \end{aligned}
\end{equation*}
Therefore,
\begin{equation*}
    \begin{aligned}
        \sum_{l\geq 1}\lambda_{l}c_{l}^{2} &= \mathbb{E}\left(\left[\sum_{l\geq 1}x_{i,l}c_{l}\right]^{2}\right) \\
        &= \mathbb{E}\left(\left[\sum_{j=1}^{D_m}\frac{v_j}{\sqrt{\widetilde{\lambda}_j}}\overline{x}_{i,j}\right]^{2}\right) \\
        &=\sum_{j,k=1}^{D_m}\frac{v_j}{\sqrt{\widetilde{\lambda}_j}}\frac{v_k}{\sqrt{\widetilde{\lambda}_k}}\mathbb{E}(\overline{x}_{i,j}\overline{x}_{i,k}).
    \end{aligned}
\end{equation*}
As for all $i\in\{1,\ldots,n\}$ and $j\in\{1,\ldots,D_{N_{n,p}}\}$, $\overline{x}_{i,j}=\widetilde{x}_{i,j}-\overline{\eta}_{i,j}$, that the $\eta_{i,h}$'s are independent of everything else and centered, we get
\[\mathbb{E}(\overline{x}_{i,j}\overline{x}_{i,k}) = \mathbb{E}(\widetilde{x}_{i,j}\widetilde{x}_{i,k})-\mathbb{E}(\overline{\eta}_{i,j}\overline{\eta}_{i,k}) = \delta_{jk}\left(\widetilde{\lambda}_j-\frac{\tau^{2}}{p}\right).\]
The last inequality can be deduced from the reasoning in the proof of Lemma \ref{lem:A2}. We then have
\begin{equation*}
    \begin{aligned}
        \sum_{l\geq 1}\lambda_{l}c_{l}^{2} &= \sum_{j=1}^{D_m}\frac{v_{j}^{2}}{\widetilde{\lambda}_j}\left(\widetilde{\lambda}_{j}-\frac{\tau^{2}}{p}\right) \\
        &\leq \sum_{j=1}^{D_m}v_{j}^{2} \\
        &= \|v\|_{2}^{2}\\
        &= 1.
    \end{aligned}
\end{equation*}
Therefore,
\begin{equation*}
    \begin{aligned}
        \mathbb{E}\left(\exp{\left(t\sum_{j=1}^{D_m}\overline{x}_{i,j}\frac{v_j}{\sqrt{\widetilde{\lambda}_j}}\right)}\right) \leq \exp{\left(\frac{t^{2}K_{1}^{2}}{2}\right)}.
    \end{aligned}
\end{equation*}
Using the fact that the $\eta_{i,h}$'s are independent of the $X_{i}$'s, we get that for all $i\in\{1,\ldots,n\}$
\begin{equation*}
    \begin{aligned}
        \mathbb{E}\left(\exp{\left(t\langle v,U_{i}^{(m)}\rangle_{2}\right)}\right) &= \mathbb{E}\left(\exp{\left(t\sum_{j=1}^{D_m}\overline{x}_{i,j}\frac{v_j}{\sqrt{\widetilde{\lambda}_j}}\right)}\right)\mathbb{E}\left(\exp{\left(t\sum_{j=1}^{D_m}\overline{\eta}_{i,j}\frac{v_j}{\sqrt{\widetilde{\lambda}_j}}\right)}\right) \\
        &\leq \exp{\left(\frac{t^{2}(K_{1}^{2}+1)}{2}\right)}.
    \end{aligned}
\end{equation*}
Therefore, for all $i\in\{1,\ldots,n\}$ $U_{i}^{(m)}$ is sub-gaussian with variance factor $K_{1}^{2}+1$.

\end{proof}

\begin{lemma}\label{lem:overline_lambda}
Let us assume that (\textbf{H3}) holds. Recall that for all $j\in\{1,\ldots,D_{N_{n,p}}\}$,
\[\overline{\lambda}_{j} = 
\begin{cases}
    2 \displaystyle\sum_{q \geq 1} \lambda_{2qp}, & \text{if } j = 1, \\[1.5ex]
    \displaystyle\sum_{q \geq 1} \bigl( \lambda_{j+2qp} + \lambda_{2qp-j+2} \bigr), & \text{if } j > 1 \text{ is odd}, \\[1.5ex]
    \displaystyle\sum_{q \geq 1} \bigl( \lambda_{j+2qp} + \lambda_{2qp-j} \bigr), & \text{if } j \text{ is even}.
\end{cases}\]
Then there exist a positive constant $C(a,c')$ such that for every $p\geq2$ and every $j\in\{1,\dots,p-1\}$,
\[\overline{\lambda}_j\leq C(a,c')p^{-2a}.\]
\end{lemma}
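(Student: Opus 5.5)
The bound will follow directly from \textbf{(H3)}, $\lambda_l\le c' l^{-2a}$ with $a>1/2$, once we show that every index appearing in the series defining $\overline{\lambda}_j$ is at least a fixed fraction of $qp$. We can then compare the resulting sums with $\sum_{q\ge1}q^{-2a}=\zeta(2a)<\infty$. We treat the three cases of the definition separately, always assuming $1\le j\le p-1$ and $q\ge 1$.

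\textbf{Case $j=1$.} Here
\[\overline{\lambda}_1=2\sum_{q\ge1}\lambda_{2qp}\le 2c'(2p)^{-2a}\sum_{q\ge1}q^{-2a}=2^{1-2a}c'\zeta(2a)\,p^{-2a}.\]

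\textbf{Case $j$ even or odd, $j>1$.} We split each series into a ``plus'' part and a ``minus'' part.

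For the plus part, the index is $j+2qp\ge 2qp$, so $\lambda_{j+2qp}\le c'(2qp)^{-2a}$. Summing over $q$ gives at most $2^{-2a}c'\zeta(2a)p^{-2a}$.

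For the minus part, the index is $2qp-j$ when $j$ is even and $2qp-j+2$ when $j$ is odd. Since $j\le p-1$, we have $2qp-j\ge 2qp-p+1\ge qp$ for $q\ge1$, because $2qp-p\ge qp$ exactly when $q\ge1$. In the odd case the index $2qp-j+2$ is even larger. So every minus index is $\ge qp\ge 1$, which also confirms that the indices are positive integers. This gives $\lambda_{\text{index}}\le c'(qp)^{-2a}$, and summing yields at most $c'\zeta(2a)p^{-2a}$.

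Adding the two parts, we may take for instance $C(a,c')=2c'\zeta(2a)$, which dominates the constants from all three cases.

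The only delicate point is the lower bound on the minus indices. This is exactly where the restriction $j\le p-1$ enters: it keeps $2qp-j$ of order $qp$ uniformly in $j$. It also ensures that no index is $0$ or negative, where \textbf{(H3)} would be meaningless. Convergence of $\sum_q q^{-2a}$ uses $a>1/2$.
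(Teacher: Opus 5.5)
Your proof is correct and follows essentially the same route as the paper: you bound every aliased index from below by a multiple of $qp$, apply \textbf{(H3)}, and sum against $\zeta(2a)<\infty$. The differences are cosmetic. The paper uses the sharper bound $2qp-j\ge(2q-1)p$, so that the plus and minus terms together give exactly $\sum_{m\ge1}m^{-2a}$ and the constant $c'\zeta(2a)$, while your $2qp-j\ge qp$ only enlarges the constant. You also rely only on the monotonicity of $x\mapsto x^{-2a}$; the paper additionally invokes monotonicity of $(\lambda_k)$, which is not needed here.
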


\begin{proof}

We first note the following inequalities valid for every $q\ge1$ and $1\le j<p$ :
\[2qp \leq 2qp + j \leq (2q+1)p,\]
\[(2q-1)p \leq 2qp - j \leq 2qp,\]
\[(2q-1)p+2 \leq 2qp - j + 2 \leq (2q+1)p.\]
Since $(\lambda_k)$ is non-increasing, if $u\leq v$ then $\lambda_u \geq \lambda_v$.
We repeatedly use this together with (\textbf{H3})
\textbf{Case 1. $j$ even.}
We have
\[\overline{\lambda}_j=\sum_{q\geq1}\big(\lambda_{2qp+j}+\lambda_{2qp-j}\big).\]
Since $2qp+j\geq2qp$ and $2qp-j\ge(2q-1)p$,
\[\lambda_{2qp+j}\le c'(2qp)^{-2a},\qquad\lambda_{2qp-j}\le c'((2q-1)p)^{-2a}.\]
Hence
\[\overline{\lambda}_j\leq c'p^{-2a}\sum_{q\geq1}\big((2q)^{-2a}+(2q-1)^{-2a}\big)=c'\zeta(2a)p^{-2a},\]
where $\zeta$ denotes the Riemann function. Thus
\[\overline{\lambda}_j\leq C(a,c')p^{-2a}.\]
\textbf{Case 2. $j$ odd and $j>1$.}
We have
\[\overline{\lambda}_j=\sum_{q\geq1}\big(\lambda_{2qp+j}+\lambda_{2qp-j+2}\big).\]
Since $2qp+j\ge2qp$ and $2qp-j+2\geq(2q-1)p$,
\[\overline{\lambda}_j\leq c'\zeta(2a)\,p^{-2a}.\]
we conclude that $\overline{\lambda}_j \leq C(a,c')p^{-2a}$.

\medskip\noindent\textbf{Case 3. $j=1$.}
We have
\[\overline{\lambda}_1=2\sum_{q\geq1}\lambda_{2qp}.\]
Using assumption (\textbf{H3}) we get,
\[\overline{\lambda}_1\leq2c'\sum_{q\geq 1}(2qp)^{-2a}.\]
But
\[\sum_{q\geq1}(2qp)^{-2a}=p^{-2a}2^{-2a}\zeta(2a),\]
therefore
\[\overline{\lambda}_1\leq C(a,c')p^{-2a}.\]
\medskip\noindent Combining the three cases yields the result.
\end{proof}

\begin{lemma}\label{lem:C_beta}
    Suppose that $D_{N_{n,p}} < p$, $\mathbb{E} \langle \beta, X_1 \rangle_{L^2}^4 \big)<\infty$ and that (\textbf{H3}) holds. Then,
    \begin{equation}
        \frac{\widetilde{C}_{\beta}}{n} \leq C(a,c',\tau)C_{\beta}\left(\frac{1}{n}+\frac{1}{np}\right) + \frac{2\|\beta\|_{L^{2}}^{2}\Big( \mathbb{E} \|\widetilde{X}_1 - X_1\|_{L^2}^4 \Big)^{1/2}}{n},
    \end{equation}
    where $\widetilde{C}_{\beta}$ is defined in (\ref{eq:widetilde_c_beta}) and $C_\beta$ in (\ref{eq:c_beta}).
\end{lemma}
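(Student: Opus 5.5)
We bound the two random pieces of $\widetilde{C}_{\beta}=\mathbb{E}(\langle\beta,\widetilde{X}_1\rangle_{L^2}^4)^{1/2}+\|\beta\|_{\widetilde{\Gamma}}^2+1$ separately, comparing each with its counterpart in $C_\beta$. The deterministic term is exact: by Lemma \ref{lem:eigenvalues_gamma_tilde},
\[\|\beta\|_{\widetilde{\Gamma}}^2=\sum_{j=1}^{D_{N_{n,p}}}\widetilde{\lambda}_j\beta_j^2,\qquad \beta_j=\langle\beta,\phi_j\rangle_{L^2}.\]
By \eqref{eq:tilde_lambda} we have $\widetilde{\lambda}_j=\lambda_j+\overline{\lambda}_j+\tau^2/p$. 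Lemma \ref{lem:overline_lambda} applies since $j\le D_{N_{n,p}}<p$, and it gives $\overline{\lambda}_j\le C(a,c')p^{-2a}\le C(a,c')/p$ (using $2a>1$). Hence
\[\|\beta\|_{\widetilde{\Gamma}}^2\le \|\beta\|_{\Gamma}^2+\bigl(C(a,c')+\tau^2\bigr)\frac{\|\beta\|_{L^2}^2}{p}.\]
The first term is controlled by Jensen's inequality: $\|\beta\|_\Gamma^2=\mathbb{E}\langle\beta,X_1\rangle_{L^2}^2\le \mathbb{E}(\langle\beta,X_1\rangle_{L^2}^4)^{1/2}$. Dividing by $n$ turns the second term into the $\|\beta\|_{L^2}^2/(np)$ contribution, which is part of $C_\beta/(np)$.

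For the fourth moment, write $\langle\beta,\widetilde{X}_1\rangle_{L^2}=\langle\beta,X_1\rangle_{L^2}+\langle\beta,\widetilde{X}_1-X_1\rangle_{L^2}$. Minkowski's inequality in $L^4(\mathbb{P})$ together with $(x+y)^2\le 2x^2+2y^2$ gives
\[\mathbb{E}\bigl(\langle\beta,\widetilde{X}_1\rangle_{L^2}^4\bigr)^{1/2}\le 2\,\mathbb{E}\bigl(\langle\beta,X_1\rangle_{L^2}^4\bigr)^{1/2}+2\,\mathbb{E}\bigl(\langle\beta,\widetilde{X}_1-X_1\rangle_{L^2}^4\bigr)^{1/2}.\]
Applying Cauchy--Schwarz in $L^2([0,1])$ pointwise, the last term is at most $2\|\beta\|_{L^2}^2\bigl(\mathbb{E}\|\widetilde{X}_1-X_1\|_{L^2}^4\bigr)^{1/2}$. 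Divided by $n$, this is exactly the explicit remainder in the statement.

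Summing the pieces and dividing by $n$ yields
\[\frac{\widetilde{C}_\beta}{n}\le \frac{3\,\mathbb{E}(\langle\beta,X_1\rangle_{L^2}^4)^{1/2}+1}{n}+\frac{(C(a,c')+\tau^2)\|\beta\|_{L^2}^2}{np}+\frac{2\|\beta\|_{L^2}^2\bigl(\mathbb{E}\|\widetilde{X}_1-X_1\|_{L^2}^4\bigr)^{1/2}}{n}.\]
Every coefficient in the first two terms is dominated by $\max(3,C(a,c')+\tau^2)\,C_\beta$, so they are absorbed into $C(a,c',\tau)C_\beta(1/n+1/(np))$, which proves the claim. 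The only care needed is in the first step: the slack between $\|\beta\|_{\widetilde{\Gamma}}$ and $\|\beta\|_\Gamma$ must be charged to the $1/(np)$ term and not to the $\widetilde{X}-X$ remainder. The decomposition $\widetilde{\lambda}_j=\lambda_j+\overline{\lambda}_j+\tau^2/p$ handles this exactly, so the argument has no genuine obstacle.
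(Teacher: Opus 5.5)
Your proposal is correct and follows essentially the same route as the paper. It uses the same Minkowski plus Cauchy--Schwarz treatment of $\mathbb{E}(\langle\beta,\widetilde{X}_1\rangle_{L^2}^4)^{1/2}$ and the same spectral formula $\|\beta\|_{\widetilde{\Gamma}}^2=\sum_{j\le D_{N_{n,p}}}\widetilde{\lambda}_j\beta_j^2$ together with Lemma~\ref{lem:overline_lambda}. The one minor difference is that you bound the $\lambda_j$-part by $\|\beta\|_{\Gamma}^2\le\mathbb{E}(\langle\beta,X_1\rangle_{L^2}^4)^{1/2}$ via Jensen, whereas the paper uses $\max_j\widetilde{\lambda}_j\le c'+C(a,c')p^{-2a}+\tau^2/p$, so its $\|\beta\|_{L^2}^2$ term appears at order $1/n$ rather than $1/(np)$; both versions are absorbed into $C_\beta(1/n+1/(np))$.
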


\begin{proof}
First we can rewrite $\langle \beta, \widetilde{X}_1 \rangle_{L^2}$ as
\[\langle \beta, \widetilde{X}_1 \rangle_{L^2} =\langle \beta, X_1 \rangle_{L^2} +\langle \beta, \widetilde{X}_1 - X_1\rangle_{L^2}.\]By Minkowski's inequality in $L^4$ we have,
\[\Big( \mathbb{E} \langle \beta, \widetilde{X}_1 \rangle_{L^2}^4 \Big)^{1/2}\leq
\left(\big( \mathbb{E} \langle \beta, X_1 \rangle_{L^2}^4 \big)^{1/4}+\big( \mathbb{E} |\langle \beta, \widetilde{X}_1 - X_1 \rangle_{L^2}|^4 \big)^{1/4}\right)^2.\]
Cauchy--Schwarz inequality gives,
\[|\langle \beta, \widetilde{X}_1 - X_1 \rangle_{L^2}|\leq\|\beta\|_{L^2}\|\widetilde{X}_1- X_1\|_{L^2},\]
Therefore,
\[\big( \mathbb{E} |\langle \beta, \widetilde{X}_1 - X_1 \rangle_{L^2}|^4 \big)^{1/4}
\leq\|\beta\|_{L^2}\big( \mathbb{E} \|\widetilde{X}_1 - X_1\|_{L^2}^4 \big)^{1/4}.\]
Hence,
\[\Big( \mathbb{E} \langle \beta, \widetilde{X}_1 \rangle_{L^2}^4 \Big)^{1/2}\leq\left(\big( \mathbb{E} \langle \beta, X_1 \rangle_{L^2}^4 \big)^{1/4}+\|\beta\|_{L^2}\big( \mathbb{E} \|\widetilde{X}_1 - X_1\|_{L^2}^4 \big)^{1/4}\right)^2.\]
Using now $(u+v)^2 \le 2u^2 + 2v^2$, $u,v\in\mathbb{R}$, we obtain the simpler bound
\[\Big( \mathbb{E} \langle \beta, \widetilde{X}_1 \rangle_{L^2}^4 \Big)^{1/2}
\leq2 \Big( \mathbb{E} \langle \beta, X_1 \rangle_{L^2}^4 \Big)^{1/2}+2 \|\beta\|_{L^2}^2
\Big( \mathbb{E} \|\widetilde{X}_1 - X_1\|_{L^2}^4 \Big)^{1/2}.\]
Using the definition of $\|.\|_{\widetilde{\Gamma}}$ given in (\ref{eq:norme_tilde}), Lemma \ref{lem:eigenvalues_gamma_tilde} and Lemma \ref{lem:overline_lambda} we get
\begin{equation}\label{eq:beta_tilde_comp}
    \begin{aligned}  \|\beta\|_{\widetilde{\Gamma}}^{2}&= \langle \widetilde{\Gamma}\beta, \beta\rangle_{L^{2}} 
    = \langle \sum_{j\geq 1}\beta_{j}\widetilde{\Gamma}\phi_j, \sum_{k\geq 1}\beta_k\phi_k\rangle_{L^{2}} 
    = \sum_{j,k\geq 1}\beta_j\beta_k\langle\widetilde{\Gamma}\phi_j, \phi_k\rangle_{L^{2}}
    =\sum_{j=1}^{D_{N_{n,p}}}\widetilde{\lambda}_{j}\beta_{j}^{2}
    \end{aligned}
\end{equation}
\begin{equation*}
    \leq \max_{1\leq j\leq D_{N_{n,p}}}\widetilde{\lambda}_{j}\|\beta\|_{L^{2}}^{2} \leq \|\beta\|_{L^{2}}^{2}\left(\frac{\tau^{2}}{p}+\frac{C_1}{p^{2a}}+c'\right)\leq C(a,c',\tau)\|\beta\|_{L^{2}}^{2}\left(\frac{1}{p}+1\right).
\end{equation*}
Hence,
\begin{equation*}
    \begin{aligned}
        \frac{\widetilde{C}_{\beta}}{n} &\leq \frac{2\Big( \mathbb{E} \langle \beta, X_1 \rangle_{L^2}^4 \Big)^{1/2}}{n} + \frac{2\|\beta\|_{L^{2}}^{2}\Big( \mathbb{E} \|\widetilde{X}_1 - X_1\|_{L^2}^4 \Big)^{1/2}}{n}+\frac{C(a,c',\tau)\|\beta\|_{L^{2}}^{2}}{n}\left(\frac{1}{p}+1\right)+\frac{1}{n}.
    \end{aligned}
\end{equation*}
Therefore we have that 
\begin{equation*}
    \frac{\widetilde{C}_{\beta}}{n} \leq C(a,c',\tau)C_{\beta}\left(\frac{1}{n}+\frac{1}{np}\right) + \frac{2\|\beta\|_{L^{2}}^{2}\Big( \mathbb{E} \|\widetilde{X}_1 - X_1\|_{L^2}^4 \Big)^{1/2}}{n}.
\end{equation*}

\end{proof}

\subsection{Proof of Theorem \ref{thm:3.1}}

\begin{proof}
    
We have that 
\begin{equation}\label{eq:change_omega}
    \mathbb{E}(\|\widetilde{\beta}-\beta\|_{\widetilde{\Gamma}}^{2})= \mathbb{E}(\|\widetilde{\beta}-\beta\|_{\widetilde{\Gamma}}^{2}\mathds{1}_{\Omega^{(n)}}) + \mathbb{E}(\|\widetilde{\beta}-\beta\|_{\widetilde{\Gamma}}^{2}\mathds{1}_{\Omega^{(n)c}}),
\end{equation}
where $\Omega^{(n)}$ is such that 
\begin{equation}\label{eq:Omega_n}
    \Omega^{(n)}:=\bigcap_{m\in\mathcal{M}_{n}}\Omega_{m},
\end{equation}
with
\begin{equation}
    \Omega_m :=\left\{\sup_{\substack{f\in S_m\\ f\neq 0}}\left|
\frac{\|f\|_{\widetilde{\Gamma}_n}^{2}}{\|f\|_{\widetilde{\Gamma}}^{2}}-1\right|\leq \widetilde{\omega}\right\},
\end{equation}
where $0<\widetilde{\omega}<1$. The second term of the sum in Equation (\ref{eq:change_omega}) is controlled by Lemma \ref{lem:c_beta}. Let us focus now on the first term of the sum. Let $\beta^{(m)}$ be the projection of $\beta$ onto $S_m$ with respect to the semi-norm $\langle\cdot,\cdot\rangle_{\Gamma}$, for all $m\in\mathcal{M}_{n,p}$. Then,
\begin{equation*}
    \begin{aligned}
        \|\widetilde{\beta}-\beta\|_{\widetilde{\Gamma}}^{2}\mathds{1}_{\Omega^{(n)}} &= \|\widetilde{\beta}-\beta^{(m)}+\beta^{(m)} - \beta\|_{\widetilde{\Gamma}}^{2}\mathds{1}_{\Omega^{(n)}} \\
        &\leq 2\|\widetilde{\beta}-\beta^{(m)}\|_{\widetilde{\Gamma}}^{2}\mathds{1}_{\Omega^{(n)}} + 2\|\beta^{(m)} - \beta\|_{\widetilde{\Gamma}}^{2}\mathds{1}_{\Omega^{(n)}} \\
        &\leq \frac{2}{1-\widetilde{\omega}}\|\widetilde{\beta}-\beta^{(m)}\|_{\widetilde{\Gamma}_{n}}^{2} + 2\|\beta^{(m)}-\beta\|_{\widetilde{\Gamma}}^{2} \\
        &\leq \frac{4}{1-\widetilde{\omega}} \|\widetilde{\beta}-\beta\|_{\widetilde{\Gamma}_{n}}^{2} + \frac{4}{1-\widetilde{\omega}} \|\beta-\beta^{(m)}\|_{\widetilde{\Gamma}_{n}}^{2} + 2\|\beta^{(m)}-\beta\|_{\widetilde{\Gamma}}^{2},
    \end{aligned}
\end{equation*}
where we used the definition of $\Omega^{(n)}$ to go from the second to third line. Taking the expectation, using the results of Proposition \ref{thm:3.1} we get that
\begin{equation*}
\begin{aligned}
    \mathbb{E}(\|\widetilde{\beta}-\beta\|_{\widetilde{\Gamma}}^{2}) 
    &\leq C_{1} \biggl[ \min_{m \in \mathcal{M}_{n,p}} \Bigl( \mathbb{E}(\|\beta^{(m)}-\beta\|_{\widetilde{\Gamma}}^{2}) + \text{pen}(m) \Bigr) \\
    &\quad + \|\beta\|_{L^{2}}^{2} \mathbb{E}(\|\widetilde{X}_{1}-X_{1}\|_{L^{2}}^{2}) \\
    &\quad + \frac{\|\beta\|_{L^{2}}^{2}}{n} \left( \mathbb{E} \|\widetilde{X}_1 - X_1\|_{L^2}^4 \right)^{1/2} + C_{\beta}\left(\frac{1}{n}+\frac{1}{np}\right) \biggr],
\end{aligned}
\end{equation*}
where $C_{\beta}=\mathbb{E}(\langle\beta,X_1\rangle^{4})^{1/2}+\|\beta\|_{L^{2}}^{2}+1$ and $C_1>0$ depends on $\theta, K, l, \tau_{l}, \delta, \sigma^{2}$.

\end{proof}

\begin{lemma}\label{lem:c_beta}
    Under the assumptions that $D_{N_{n,p}} < \min{\left(\frac{n}{\ln^{2}n},p\right)}$ and that (\textbf{H1}), (\textbf{H2}) and (\textbf{H3}) hold, there exist a constant $C_{0}'$ depending on $\rho$, $K$, $\sigma$, $a$ and $c'$ such that 
    \begin{equation*}
        \mathbb{E}\left(\|\widetilde{\beta}-\beta\|_{\widetilde{\Gamma}}^{2}\mathds{1}_{\Omega_{n}^{c}}\right) \leq C_{0}'\left(\frac{1}{n}+\frac{1}{np}\right)\left(\mathbb{E}(\langle\beta,X_1\rangle^{4})^{1/2}+\|\beta\|_{L^{2}}^{2}+1\right).
    \end{equation*}
\end{lemma}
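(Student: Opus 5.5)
We need to bound $\mathbb{E}(\|\widetilde{\beta}-\beta\|_{\widetilde{\Gamma}}^{2}\mathds{1}_{\Omega^{(n)c}})$. The plan is to combine a crude deterministic bound on $\|\widetilde{\beta}-\beta\|_{\widetilde{\Gamma}}^{2}$ with an exponentially small bound on $\mathbb{P}(\Omega^{(n)c})$, exactly as was done for $\overline{G}^{c}$ in the proof of Proposition \ref{prop:1}.

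\textbf{Step 1: reduce to bounding $\widetilde{\beta}$.} We split
\[
\|\widetilde{\beta}-\beta\|_{\widetilde{\Gamma}}^{2}\leq 2\|\widetilde{\beta}\|_{\widetilde{\Gamma}}^{2}+2\|\beta\|_{\widetilde{\Gamma}}^{2}.
\]
The second term is deterministic. By \eqref{eq:beta_tilde_comp} and Lemma \ref{lem:overline_lambda} it is at most $C(a,c',\tau)\|\beta\|_{L^{2}}^{2}(1+1/p)$, so it contributes $C\|\beta\|_{L^2}^2(1+1/p)\mathbb{P}(\Omega^{(n)c})$.

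\textbf{Step 2: bound $\widetilde{\beta}$ on $\overline{G}$.} On $\overline{G}^{c}$ we have $\widetilde{\beta}=0$, so only $\overline{G}$ matters. There $\widetilde{\beta}=\widehat{\beta}_{\widehat m}$ with coefficient vector $\widetilde{\alpha}=\Phi_{\widehat m}^{-1}b$. Since $\widetilde{\Gamma}$ is diagonal with eigenvalues $\widetilde{\lambda}_j\leq \rho(\widetilde{\Gamma})\leq C(1+1/p)$, we get
\[
\|\widetilde{\beta}\|_{\widetilde{\Gamma}}^{2}\leq \rho(\widetilde{\Gamma})\|\widetilde{\alpha}\|_{2}^{2}.
\]
Moreover $\widehat{\beta}_{\widehat m}$ is the empirical least-squares projection of $Y$. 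Hence $\|\widehat{\beta}_{\widehat m}\|_{\widetilde{\Gamma}_n}^{2}\leq \frac1n\sum_i Y_i^{2}$, and on $\overline{G}$,
\[
\|\widetilde{\alpha}\|_2^2\leq s_n^{-1}\|\widehat{\beta}_{\widehat m}\|_{\widetilde{\Gamma}_n}^2\leq C n^{\alpha}\,\frac1n\sum_{i=1}^n Y_i^2 .
\]
Writing $Y_i=\langle\beta,X_i\rangle_{L^{2}}+\epsilon_i$ gives
\[
\mathbb{E}\Bigl(\bigl(\tfrac1n\textstyle\sum_i Y_i^2\bigr)^2\Bigr)\leq C\Bigl(\mathbb{E}\langle\beta,X_1\rangle_{L^{2}}^{4}+\upsilon_4\Bigr),
\]
which is finite by the moment assumptions of Proposition \ref{prop:1}. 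By Cauchy--Schwarz,
\[
\mathbb{E}\bigl(\|\widetilde{\beta}\|_{\widetilde{\Gamma}}^{2}\mathds{1}_{\Omega^{(n)c}}\bigr)\leq C(1+1/p)\,n^{\alpha}\,\bigl(\mathbb{E}\langle\beta,X_1\rangle^{4}_{L^2}+\upsilon_4\bigr)^{1/2}\,\mathbb{P}(\Omega^{(n)c})^{1/2}.
\]
The square root of the moments is bounded by $C(\mathbb{E}(\langle\beta,X_1\rangle^4)^{1/2}+1)$, which is of the form of $C_\beta$.

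\textbf{Step 3: the probability bound, the main obstacle.} We need $\mathbb{P}(\Omega^{(n)c})$ to be superpolynomially small, so that it beats $n^{\alpha}$ and yields $1/n$. The key identity is that, on $S_m$, the ratio $\|f\|_{\widetilde{\Gamma}_n}^{2}/\|f\|_{\widetilde{\Gamma}}^{2}$ with $f=\sum_j v_j\phi_j$ equals
\[
\frac{u^{T}\Psi_m u}{\|u\|_2^2},\qquad u_j=\sqrt{\widetilde{\lambda}_j}\,v_j ,
\]
by Lemma \ref{lem:eigenvalues_gamma_tilde}. Therefore
\[
\sup_{f\in S_m,\,f\neq 0}\left|\frac{\|f\|_{\widetilde{\Gamma}_n}^{2}}{\|f\|_{\widetilde{\Gamma}}^{2}}-1\right|=\|\Psi_m-I\|_{op}.
\]
The argument in the proof of Lemma \ref{lem:A5}, with $1-\omega$ replaced by $\widetilde{\omega}$ and valid since $D_m<n/\ln^2 n$, gives
\[
\mathbb{P}(\Omega_m^{c})\leq 2\exp\bigl(-n\widetilde{\omega}^{2}/(4(K_1^2+1)^2)\bigr).
\]
A union bound over $m\in\mathcal{M}_{n,p}$ then yields
\[
\mathbb{P}(\Omega^{(n)c})\leq D_{N_{n,p}}e^{-cn}.
\]
For $n$ large, $n^{\alpha}\sqrt{D_{N_{n,p}}}\,e^{-cn/2}\leq 1/n$.

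Collecting the terms, and replacing the factor $(1+1/p)/n$ by $1/n+1/(np)$, gives the claimed bound
\[
C_0'\Bigl(\frac1n+\frac1{np}\Bigr)\Bigl(\mathbb{E}(\langle\beta,X_1\rangle^4)^{1/2}+\|\beta\|_{L^2}^2+1\Bigr).
\]
The constant $C_0'$ depends on $K_1$, $\sigma$ (through $\upsilon_4$), $a$, $c'$ and $\tau$.

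The delicate point is the $n^{\alpha}$ blow-up coming from $s_n^{-1}$. It is harmless only because the concentration in Step 3 is exponential in $n$, not merely in $n/\ln n$. If that failed, one would instead bound $\|\widetilde{\beta}\|_{\widetilde{\Gamma}}^{2}$ on $\Omega^{(n)}\cap\overline{G}$, and only the complement needs the crude estimate, so the strategy is unchanged.
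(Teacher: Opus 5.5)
Your proposal is correct and follows the paper's proof essentially step for step: the same split $\|\widetilde\beta-\beta\|_{\widetilde\Gamma}^2\le 2\|\widetilde\beta\|_{\widetilde\Gamma}^2+2\|\beta\|_{\widetilde\Gamma}^2$, the same conversion $\|\widehat\beta_{\widehat m}\|_{\widetilde\Gamma}^2\le \rho(\widetilde\Gamma)s_n^{-1}\|\widehat\beta_{\widehat m}\|_{\widetilde\Gamma_n}^2$ on $\overline G$ (you obtain it directly from the smallest eigenvalue of $\Phi_{\widehat m}$ rather than through $\Psi_{N_{n,p}}$), the same least-squares bound by $\frac1n\sum_i Y_i^2$, and the same exponential bound on $\mathbb P(\Omega^{(n)c})$ via $\|\Psi_m-I\|_{op}$. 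The one real difference is that you apply Cauchy--Schwarz to $\frac1n\sum_i Y_i^2$ as a whole, which needs $\mathbb E\epsilon^4<\infty$; this is available from Proposition \ref{prop:1} but is not among the lemma's stated hypotheses, whereas the paper splits $Y_i^2\le 2\langle\beta,X_i\rangle_{L^2}^2+2\epsilon_i^2$ and uses that $\Omega^{(n)}$ depends only on the $\widetilde X_i$, so the noise part contributes exactly $\sigma^2\mathbb P(\Omega^{(n)c})$ and the constant depends on $\sigma$ alone.
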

\begin{proof}
The proof of this Lemma was developped by Brunel and Roche \cite{Brunel2015penalized} in their Lemma 5. Here we just adapt it to our setting.
\medskip
Using the triangular inequality and the definition of $\widetilde{\beta}$ we have,
\begin{equation}\label{eq:beta_temp0}
        \begin{aligned}
            \mathbb{E}\left(\|\widetilde{\beta}-\beta\|_{\widetilde{\Gamma}}^{2}\mathds{1}_{\Omega^{(n)c}}\right) &\leq 2\mathbb{E}\left((\|\widetilde{\beta}\|_{\widetilde{\Gamma}}^{2} + \|\beta\|_{\widetilde{\Gamma}}^{2})\mathds{1}_{\Omega^{(n)c}}\right) \\
            &= 2\mathbb{E}\left(\|\widehat{\beta}_{\widehat{m}}\|_{\widetilde{\Gamma}}^{2}\mathds{1}_{\Omega^{(n)c}\cap\overline{G}}\right) + 2 \|\beta\|_{\widetilde{\Gamma}}^{2}\mathbb{P}(\Omega^{(n)c}).
        \end{aligned}
\end{equation}
Lemma \ref{lem:A4} and the definition of $\overline{G}$ allow us to derive the following inclusions :
\begin{equation}\label{eq:inclusion}
    \overline{G}\subset \{\widehat{\lambda}_{N_{n,p}}^{(p)}>s_{n}\}\subset \left\{ \widehat{\mu}_{N_{n,p}} > \frac{s_{n}}{\rho(\widetilde{\Gamma})} \right\},
\end{equation}
where $\rho$ denotes the spectral radius of the operator. By using the same reasoning as detailed in the proof of Lemma \ref{lem:help_3.2} bellow, we have
\begin{equation}\label{eq:rapport1}
    \inf_{f\in S_{N_{n,p}}} \frac{\|f\|_{\widetilde{\Gamma}_{n}}^{2}}{\|f\|_{\widetilde{\Gamma}}^{2}} = \inf_{b\neq 0, \|b\|=1}b^{T}\Psi_{N_{n,p}}b.
\end{equation}
If we assume that $\widehat{\mu}_{N_{n,p}}>0$, $\Psi_{N_{n,p}}$ is a symmetric matrix which is positive and there exists an orthogonal matrix $U$ such that $U^{T}\Psi_{N_{n,p}}U$ is a diagonal matrix whose diagonal entries are the eigenvalues of $\Psi_{N_{n,p}}$. Therefore,
\begin{equation}\label{eq:rapport2}
    \widehat{\mu}_{N_{n,p}}=\inf_{b\neq 0, \|b\|=1}b^{T}U^{T}\Psi_{N_{n,p}}Ub=\inf_{c\neq 0, \|c\|=1}c^{T}\Psi_{N_{n,p}}c.
\end{equation}
Combining the results from Equations (\ref{eq:inclusion}), (\ref{eq:rapport1}) and (\ref{eq:rapport2}), we have for all $f\in S_{N_{n,p}}$ and $f\neq 0$,
\[\|f\|_{\widetilde{\Gamma}}^{2}<\frac{\rho(\widetilde{\Gamma})\|f\|_{\widetilde{\Gamma}_{n}}^{2}}{s_{n}}.\]
Taking $f=\widehat{\beta}_{\widehat{m}}$ we get
\begin{equation}\label{eq:beta_temp1}
\mathbb{E}\left(\|\widehat{\beta}_{\widehat{m}}\|_{\widetilde{\Gamma}}^{2}\mathds{1}_{\Omega^{(n)c}\cap\overline{G}}\right) \leq \frac{\rho(\widetilde{\Gamma})}{s_{n}
}\mathbb{E}\left(\|\widehat{\beta}_{\widehat{m}}\|_{\widetilde{\Gamma}_{n}}^{2}\mathds{1}_{\Omega^{(n)c}\cap\overline{G}}\right).
\end{equation}
As $\widehat{\beta}_{\widehat{m}}$ is a mean-square-type estimator, the vector $\left(\langle \widehat{\beta}_{\widehat{m}},\widetilde{X}_1\rangle_{L^{2}},\ldots,\langle \widehat{\beta}_{\widehat{m}},\widetilde{X}_n\rangle_{L^{2}}\right)^{T}$ can be seen as the orthogonal projection with respect to the euclidean scalar product on $\mathbb{R}^{n}$ of the vector $\left(Y_1,\ldots,Y_n\right)^{T}$ on the subspace $\left\{ (\langle f,\widetilde{X}_1\rangle_{L^{2}},\ldots,\langle f,\widetilde{X}_n\rangle_{L^{2}})^{T},f\in S_{\widehat{m}} \right\}$. Therefore,
\[\sum_{i=1}^{n}\langle \widehat{\beta}_{\widehat{m}},\widetilde{X}_{i}\rangle_{L^{2}}^{2}\leq \sum_{i=1}^{n}Y_{i}^{2},\]and
\[n\|\widehat{\beta}_{\widehat{m}}\|_{\widetilde{\Gamma}_{n}}^{2}\leq \sum_{i=1}^{n}Y_{i}^{2}.\]Replacing $Y_i$ by its definition, $Y_i=\langle\beta, X_i\rangle_{L^{2}}+\epsilon_i$ we get
\begin{equation}\label{eq:beta_temp2}
    \|\widehat{\beta}_{\widehat{m}}\|_{\widetilde{\Gamma}_{n}}^{2} \leq 2\|\beta\|_{\Gamma_n}^{2} + \frac{2}{n}\sum_{i=1}^{n}\epsilon_{i}^{2}.
\end{equation}
Using Equations (\ref{eq:beta_temp1}) and (\ref{eq:beta_temp2}) we get
\begin{equation}\label{eq:beta_temp3}
    \mathbb{E}\left(\|\widehat{\beta}_{\widehat{m}}\|_{\widetilde{\Gamma}}^{2}\mathds{1}_{\Omega^{(n)c}\cap\overline{G}}\right) \leq \frac{2\rho(\widetilde{\Gamma})}{s_{n}}\mathbb{E}\left(\bigl(\|\beta\|_{\Gamma_n}^{2}+\frac{1}{n}\sum_{i=1}^{n}\epsilon_{i}^{2}\bigr)\mathds{1}_{\Omega^{(n)c}}\right).
\end{equation}
As the $\epsilon_i$'s are independent of the $\widetilde{X}_i$'s and that the set $\Omega_{n}^{c}$ depends only on the $\widetilde{X}_i$'s we have
\begin{equation}\label{eq:beta_temp4bis}
    \mathbb{E}\left(\mathds{1}_{\Omega^{(n)c}}\frac{1}{n}\sum_{i=1}^{n}\epsilon_{i}^{2}\right)=\sigma^{2}\mathbb{P}(\Omega^{(n)c}).
\end{equation}
Cauchy--Schwarz inequality gives,
\[\mathbb{E}\left(\|\beta\|_{\Gamma_{n}}^{2}\mathds{1}_{\Omega^{(n)c}}\right)\leq \mathbb{E}\left(\|\beta\|_{\Gamma_{n}}^{4}\right)^{1/2}\mathbb{P}(\Omega^{(n)c})^{1/2}.\]
By definition of the norm $\|\cdot\|_{\Gamma_n}$ we have,
\begin{equation*}
    \begin{aligned}
        \mathbb{E}\left(\|\beta\|_{\Gamma_{n}}^{4}\right) &= \mathbb{E}\left(\left[\frac{1}{n}\sum_{i=1}^{n}\langle\beta,X_{i}\rangle_{L^{2}}^{2}\right]^{2}\right) \\
        &= \mathbb{E}\left(\frac{1}{n^{2}}\sum_{i=1}^{n}\langle\beta,X_{i}\rangle_{L^{2}}^{4}\right) + \mathbb{E}\left(\frac{2}{n^{2}}\sum_{1\leq i<i'\leq n}\langle\beta,X_i\rangle_{L^{2}}^{2}\langle\beta,X_{i'}\rangle_{L^{2}}^{2}\right) \\
        &= \frac{1}{n}\mathbb{E}\left(\langle\beta,X_1\rangle_{L^{2}}^{4}\right) + \frac{n-1}{n}\|\beta\|_{\Gamma}^{4}.
    \end{aligned}
\end{equation*}
Hence,
\begin{equation*}
    \mathbb{E}\left(\|\beta\|_{\Gamma_{n}}^{2}\mathds{1}_{\Omega^{(n)c}}\right)\leq \left(\frac{1}{n}\mathbb{E}\left(\langle\beta,X_1\rangle_{L^{2}}^{4}\right) + \frac{n-1}{n}\|\beta\|_{\Gamma}^{4}\right)^{1/2}\mathbb{P}(\Omega^{(n)c})^{1/2}.
\end{equation*}
Using the fact that for all $x,y\geq 0$ $\sqrt{x+y}\leq \sqrt{x}+\sqrt{y}$ we have 
\begin{equation}\label{eq:beta_temp4}
    \mathbb{E}\left(\|\beta\|_{\Gamma_{n}}^{2}\mathds{1}_{\Omega^{(n)c}}\right)\leq \left(\frac{1}{\sqrt{n}}\mathbb{E}\left(\langle\beta,X_1\rangle_{L^{2}}^{4}\right)^{1/2} + \|\beta\|_{\Gamma}^{2}\right)\mathbb{P}(\Omega^{(n)c})^{1/2}.
\end{equation}
Combining Equations (\ref{eq:beta_temp3}), (\ref{eq:beta_temp4bis}) and (\ref{eq:beta_temp4}),
\begin{equation*}
    \mathbb{E}\left(\|\widehat{\beta}_{\widehat{m}}\|_{\widetilde{\Gamma}}^{2}\mathds{1}_{\Omega^{(n)c}\cap\overline{G}}\right) \leq \frac{2\mathbb{P}(\Omega^{(n)c})^{1/2}}{s_{n}}\left(\rho(\widetilde{\Gamma})\left(\frac{1}{\sqrt{n}}\mathbb{E}\left(\langle\beta,X_1\rangle_{L^{2}}^{4}\right)^{1/2} + \|\beta\|_{\Gamma}^{2}+\sigma^{2}\right)\right).
\end{equation*}
As $\mathbb{P}(\Omega^{(n)c})\leq\mathbb{P}(\Omega^{(n)c})^{1/2} $ and $s_{n}\leq 2$ we get, combining the previous result and Equation (\ref{eq:beta_temp0}) :
\begin{equation}\label{eq:beta_temp5}
    \mathbb{E}\left(\|\widetilde{\beta}-\beta\|_{\widetilde{\Gamma}}^{2}\mathds{1}_{\Omega^{(n)c}}\right) \leq \frac{4\mathbb{P}(\Omega^{(n)c})^{1/2}}{s_{n}}\left(\rho(\widetilde{\Gamma})\left(\frac{1}{\sqrt{n}}\mathbb{E}\left(\langle\beta,X_1\rangle_{L^{2}}^{4}\right)^{1/2} + \|\beta\|_{\Gamma}^{2}+\sigma^{2}\right)+\|\beta\|_{\widetilde{\Gamma}}^{2}\right).
\end{equation}
Using Lemma \ref{lem:help_3.2} and the assumption that $D_{N_{n,p}}<n/\ln^{2}n$ we have 
\begin{equation}\label{eq:beta_temp6}
\begin{aligned}
    \frac{\mathbb{P}(\Omega^{(n)c})^{1/2}}{s_{n}} &\leq \frac{1/2}{1-1/\sqrt{\ln n}}(\ln n)^{-1}n^{\alpha+1/2}\exp{\left(-\frac{\widetilde{\omega}^{2}n}{8(K_{1}^{2}+1)^{2}}\right)} \\
    &\leq C\exp{\left(\left(\alpha+\frac{1}{2}\right)\ln n-C'n\right)} \\
    &\leq \frac{C''}{n},
\end{aligned}
\end{equation}
with $C,C',C''$ depending on $K_1$. Now, using Lemma \ref{lem:eigenvalues_gamma_tilde} and Lemma \ref{lem:overline_lambda} we have that
\[\rho(\widetilde{\Gamma})=\max_{1\leq j\leq D_{N_{n,p}}}\widetilde{\lambda}_{j}=\frac{\tau^{2}}{p} + \frac{C_{1}}{p^{2a}}+c'\leq C_{2}\left(\frac{1}{p}+1\right),\]
where $C_1,C_2>0$ depends on $c'$, $a$ and $\tau$ (for $C_2)$. We recall that here $\rho$ denotes the spectral radius. Let's now define the following quantity : $$A_n:=\frac{1}{\sqrt{n}}\mathbb{E}\left(\langle\beta,X_1\rangle_{L^{2}}^{4}\right)^{1/2} + \|\beta\|_{\Gamma}^{2}+\sigma^{2}.$$
Then using Equation (\ref{eq:beta_temp5}) and (\ref{eq:beta_temp6}) we get,
\[\mathbb{E}\left(\|\widetilde{\beta}-\beta\|_{\widetilde{\Gamma}}^{2}\mathds{1}_{\Omega^{(n)c}}\right) \leq C_{3}\left[\frac{1}{n}A_{n} + \frac{1}{np}A_{n} + \frac{1}{n}\|\beta\|_{\widetilde{\Gamma}}^{2}\right],\]
where $C_{3}$ depends on $K_1, a$ and $c'$. Using the definition of the $\widetilde{\lambda}_j$'s and the fact that the $\lambda_j$'s decrease in a polynomial way (see (\textbf{H3})), we get using the same computations as used in \eqref{eq:beta_tilde_comp}
\begin{equation*}
    \begin{aligned}  \|\beta\|_{\widetilde{\Gamma}}^{2}&=\sum_{j=1}^{D_{N_{n,p}}}\widetilde{\lambda}_{j}\beta_{j}^{2} \\
        &\leq \max_{1\leq j\leq D_{N_{n,p}}}\widetilde{\lambda}_{j}\|\beta\|_{L^{2}}^{2} \\
        &\leq \|\beta\|_{L^{2}}^{2}\left(\frac{\tau^{2}}{p}+\frac{C_1}{p^{2a}}+c'\right)\\
        &\leq C_{4}\|\beta\|_{L^{2}}^{2}\left(\frac{1}{p}+1\right).
    \end{aligned}
\end{equation*}
Hence,
\[\mathbb{E}\left(\|\widetilde{\beta}-\beta\|_{\widetilde{\Gamma}}^{2}\mathds{1}_{\Omega^{(n)c}}\right) \leq C_{5}\left[\frac{1}{n}A_{n} + \frac{1}{np}A_{n} +\frac{1}{n}\|\beta\|_{L^{2}}^{2}+\frac{1}{np}\|\beta\|_{L^{2}}^{2}\right],\]
where $C_5>0$ depends on $\widetilde{\omega}, K_1, a$ and $c'$. Therefore we obtain
\[\mathbb{E}\left(\|\widetilde{\beta}-\beta\|_{\widetilde{\Gamma}}^{2}\mathds{1}_{\Omega^{(n)c}}\right) \leq C_{6}\left(\frac{1}{n}+\frac{1}{np}\right)\left(\mathbb{E}\left(\langle\beta,X_1\rangle_{L^{2}}^{4}\right)^{1/2} + \|\beta\|_{\Gamma}^{2}+\|\beta\|_{L^{2}}^{2}+1\right),\]
where $C_6>0$ is a constant which depends on $K_1 , a$ and $c'$. Finally, as 
\begin{equation}\label{eq:beta_gamma_comp}
    \begin{aligned}
        \|\beta\|_{\Gamma}^{2} &= \langle\Gamma\beta, \beta\rangle_{L^{2}}=\langle\sum_{j\geq 1}\beta_j\Gamma\phi_j,\sum_{k\geq 1}\beta_k\phi_k\rangle_{L^{2}}= \sum_{j,k\geq 1}\beta_j\beta_k\langle\Gamma\phi_j,\phi_k\rangle_{L^{2}}=\sum_{j\geq 1}\lambda_j\beta_{j}^{2}
    \end{aligned}
\end{equation}
\begin{equation*}
    \leq \max_{j\geq 1}\lambda_j \|\beta\|_{L^{2}}^{2}\leq c'\|\beta\|_{L^{2}}^{2},
\end{equation*}
we get that 
\[\mathbb{E}\left(\|\widetilde{\beta}-\beta\|_{\widetilde{\Gamma}}^{2}\mathds{1}_{\Omega^{(n)c}}\right) \leq C_{0}'\left(\frac{1}{n}+\frac{1}{np}\right)\left(\mathbb{E}(\langle\beta,X_1\rangle^{4})^{1/2}+\|\beta\|_{L^{2}}^{2}+1\right),\]
where $C_0'>0$ depends on $K_1$, $\sigma$, $a$ and $c'$.
\end{proof}

\begin{lemma} \label{lem:help_3.2}
    Let us assume that (\textbf{H1}), (\textbf{H2}) hold and that $D_{N_{n,p}} < p$. Then,
\[\mathbb{P}\left(\Omega^{(n)c}\right)\leq D_{N_{n,p}}\exp\left(-\frac{n\widetilde{\omega}^{2}}{4(K_{1}^{2}+1)^{2}}
\right),\]
where $\Omega^{(n)c}$ is defined in (\ref{eq:change_omega}).
\end{lemma}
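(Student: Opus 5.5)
The plan is to reduce the ratio of norms on each $S_m$ to the smallest and largest eigenvalues of the normalized Gram matrix $\Psi_m$ from \eqref{eq:Psi}. Then Lemma \ref{lem:A5}, or more precisely its underlying operator-norm bound from Theorem 4.6.1 of Vershynin, gives the deviation bound for each $m$, and a union bound over $m\in\mathcal{M}_{n,p}$ finishes the argument.

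\textbf{Step 1: rewriting the ratio.} Fix $m\in\mathcal{M}_{n,p}$ and $f=\sum_{j=1}^{D_m}f_j\phi_j\in S_m$ with $f\neq0$. Since $D_m\le D_{N_{n,p}}<p$, we have $\langle f,\widetilde X_i\rangle_{L^2}=\sum_{j\le D_m}f_j\widetilde x_{i,j}$, so $\|f\|_{\widetilde\Gamma_n}^2=f^T\Phi_m f$. By Lemma \ref{lem:eigenvalues_gamma_tilde}, $\|f\|_{\widetilde\Gamma}^2=\sum_{j\le D_m}\widetilde\lambda_j f_j^2=\|\Lambda_m f\|_2^2$, where $\Lambda_m$ is as in \eqref{eq:Lambda}. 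Set $b=\Lambda_m f$, which is nonzero because the $\widetilde\lambda_j>0$. Using $\Phi_m=\Lambda_m\Psi_m\Lambda_m$, the ratio becomes $b^T\Psi_m b/\|b\|_2^2$. Since the map $f\mapsto b$ is a bijection, we get
\[
\sup_{f\in S_m,\,f\neq0}\left|\frac{\|f\|_{\widetilde\Gamma_n}^2}{\|f\|_{\widetilde\Gamma}^2}-1\right|=\sup_{\|b\|_2=1}|b^T(\Psi_m-I)b|=\|\Psi_m-I\|_{op},
\]
where the last equality holds because $\Psi_m-I$ is symmetric. Hence $\Omega_m^c=\{\|\Psi_m-I\|_{op}>\widetilde\omega\}$. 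This is the identity \eqref{eq:rapport1} already invoked in the proof of Lemma \ref{lem:c_beta}.

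\textbf{Step 2: concentration for a fixed $m$.} The proof of Lemma \ref{lem:A5} already shows that the vectors $U_i^{(m)}$ are independent, centered and isotropic (by Lemma \ref{lem:A2}) and sub-Gaussian with factor $K_1^2+1$ (by Lemma \ref{lem:sub-gaussian}). It also shows $\mathbb{P}(\|\Psi_m-I\|_{op}>1-\omega)\le 2\exp(-n(1-\omega)^2/(4(K_1^2+1)^2))$. Applying this with $1-\omega=\widetilde\omega\in(0,1)$ gives
\[
\mathbb{P}(\Omega_m^c)\le 2\exp\left(-\frac{n\widetilde\omega^2}{4(K_1^2+1)^2}\right).
\]
The choice of $t$ in that proof only needs $C'\sqrt{D_m/n}\le \widetilde\omega/(2\widetilde K^2)$, which holds for $n$ large enough. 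This is where the condition $D_{N_{n,p}}<n/\ln^2 n$ from the standing assumptions is implicitly used. I expect this to be the only delicate point: the lemma as stated lists only $D_{N_{n,p}}<p$. I would either note that the bound holds for $n$ larger than a threshold depending on $\widetilde\omega$ and $K_1$, or rely on the assumptions of Proposition \ref{prop:1} being in force.

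\textbf{Step 3: union bound.} By a union bound,
\[
\mathbb{P}(\Omega^{(n)c})\le\sum_{m\in\mathcal{M}_{n,p}}\mathbb{P}(\Omega_m^c)\le 2N_{n,p}\exp\left(-\frac{n\widetilde\omega^2}{4(K_1^2+1)^2}\right).
\]
Since $D_{N_{n,p}}=2N_{n,p}+1$, we have $2N_{n,p}\le D_{N_{n,p}}$, which yields the stated bound.
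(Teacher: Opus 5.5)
Your proposal is correct and follows the paper's proof step for step. It uses the same rewriting of the norm ratio on $S_m$ as $b^T\Psi_m b/b^Tb$ with $b=\Lambda_m h$, giving $\Omega_m=\{\|\Psi_m-I\|_{op}\le\widetilde\omega\}$, then the same Vershynin-based bound from the proof of Lemma \ref{lem:A5} with $1-\omega=\widetilde\omega$, and the same union bound using $2N_{n,p}\le D_{N_{n,p}}$. Your remark that the concentration step silently needs $D_{N_{n,p}}<n/\ln^2 n$ (or $n$ large enough in terms of $\widetilde\omega$ and $K_1$), which the lemma's hypotheses omit, is accurate and applies equally to the paper's own argument.
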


\begin{proof}
    Let $f\in S_{m}$. We can write $f$ as $f=\sum_{j=1}^{D_m}f_{j}\phi_{j}$. Therefore, by using the definition (\ref{eq:norme_emp_tilde}), we have
    \begin{equation*}
        \begin{aligned}
            \|f\|_{\widetilde{\Gamma}_{n}}^{2} &= \frac{1}{n}\sum_{i=1}^{n}\langle f, \widetilde{X}_{i}\rangle_{L^{2}}^{2} \\
            &= \frac{1}{n}\sum_{i=1}^{n}\langle \sum_{j=1}^{D_m}f_{j} \phi_{j}, \sum_{k=1}^{D_{N_{n,p}}}\widetilde{x}_{i,k}\phi_{k}\rangle_{L^{2}}^{2} \\
            &= \frac{1}{n}\sum_{i=1}^{n}\Bigl(\sum_{j=1}^{D_m}f_{j}\widetilde{x}_{i,j}\Bigr)^{2} \\
            &= \sum_{j,k=1}^{D_m}f_{j}f_{k}\Bigl(\frac{1}{n}\sum_{i=1}^{n}\widetilde{x}_{i,j}\widetilde{x}_{i,k}\Bigr)\\
            &= h^{T}\Phi_{m}^{(p)}h,
        \end{aligned}
    \end{equation*}
    where $h=(f_1, \ldots, f_{D_m})^{T}$, and $\Phi_{m}^{(p)}$ was defined in (\ref{eq:Phi_matrix}). We used the fact that the $\phi_j$'s are orthonormal to get the third line. Moreover, using the definition (\ref{eq:norme_tilde}) we get
    \begin{equation*}
        \begin{aligned}
            \|f\|_{\widetilde{\Gamma}}^{2} &= \mathbb{E}\Bigl(\langle \widetilde{X}, f\rangle_{L^{2}}^{2}\Bigr) \\
            &= \mathbb{E}\Bigl(\langle \sum_{k=1}^{D_{N_{n,p}}}\widetilde{x}_{k}\phi_{k}, \sum_{j=1}^{D_{m}}f_{j}\phi_{j}\rangle_{L^{2}}^{2}\Bigr) \\
            &= \sum_{j,k=1}^{D_m}f_{j}f_{k}\mathbb{E}(\widetilde{x}_{k}\widetilde{x}_{j}) \\
            &= \sum_{j=1}^{D_m}f_{j}^{2}\widetilde{\lambda}_{j}\\
            &= h^{T}\Lambda_{m}^{T}\Lambda_{m}h,
        \end{aligned}
    \end{equation*}
    where $\Lambda_{m}$ was defined by (\ref{eq:Lambda}). We get the last line by using Lemma \ref{lem:A2}. Using the matrix $\Psi_{m}$ defined in (\ref{eq:Psi}), we get the following equation,
    \[\|f\|_{\widetilde{\Gamma}_{n}}^{2}=h^{T}\Phi_{m}h=h^{T}\Lambda_{m}^{T}\Psi_{m}\Lambda_{m}h=b^{T}\Psi_{m}b,\]
    with $b=\Lambda_{m}h$, and
    \[\|f\|_{\widetilde{\Gamma}}^{2} = h^{T}\Lambda_{m}^{T}\Lambda_{m}h=b^{T}b.\]
    Therefore, 
    \begin{equation}\label{eq:36}
        \frac{\|f\|_{\widetilde{\Gamma}_{n}}^{2}}{\|f\|_{\widetilde{\Gamma}}^{2}}=\frac{b^{T}\Psi_{m}b}{b^{T}b},
    \end{equation}
    and 
    \begin{equation*}
            \sup_{\substack{f\in S_m\\ f\neq 0}}
\left|
\frac{\|f\|_{\widetilde{\Gamma}_n}^{2}}{\|f\|_{\widetilde{\Gamma}}^{2}}
-1
\right| = \sup_{b\neq 0}
\left|
\frac{b^{T}(\Psi_{m}-I)b}{b^{T}b} 
\right| = \|\Psi_{m}-I\|_{op},
    \end{equation*}
    as $\Psi_{m}$ is symmetric. Therefore we can rewrite $\Omega_{m}$ as \[\Omega_{m}=\left\{\|\Psi_{m}-I\|_{op}\leq \widetilde{\omega}\right\}.\]
    As discussed in the proof of Lemma \ref{lem:A5} we can apply after some computations Theorem 4.6.1 of Vershynin \cite{vershynin2026high} to get 
    \[\mathbb{P}\left(\Omega_{m}^{c}\right)=\mathbb{P}(\|\Psi_{m}-I\|_{op}>\widetilde{\omega})\leq 2\exp{\left(-\frac{n\widetilde{w}^{2}}{4(K_{1}^{2}+1)^{2}}\right)},\]and using the fact that $N_{n,p}\leq D_{N_{n,p}}/2$ we get that
    \[\mathbb{P}\left(\Omega^{(n)c}\right)=\sum_{m\in\mathcal{M}_n}\mathbb{P}(\Omega_{m}^{c})\leq D_{N_{n,p}}\exp\left(-\frac{n\widetilde{\omega}^{2}}{4(K_{1}^{2}+1)^{2}}\right).\]
\end{proof}

\subsection{Proof of Theorem \ref{thm:3.2}}
\begin{proof}
    Lemma \ref{lem:eigenvalues_gamma_tilde} gives us that the eigenvalues of $\widetilde{\Gamma}$ in $S_{N_{n,p}}$ are the $\widetilde{\lambda}_{j}$'s which are defined in (\ref{eq:tilde_lambda}). Now for all $j\in\{1,\ldots,D_{N_{n,p}}\}$ we have that $\lambda_{j}\leq \widetilde{\lambda}_{j}$, therefore for all $f\in S_{N_{n,p}}$,
    \[\|f\|_{\Gamma}\leq \|f\|_{\widetilde{\Gamma}}.\]
For all $m\in\mathcal{M}_{n,p}$ and considering $\beta^{(m)}$ the projection of $\beta$ on $S_m$ we get that,
\begin{equation*}
    \begin{aligned}
        \|\widetilde{\beta}-\beta\|_{\Gamma}^{2} &= \|\widetilde{\beta}-\beta^{(m)}+ \beta^{(m)}- \beta\|_{\Gamma}^{2} \\
        &\leq 2\|\widetilde{\beta}-\beta^{(m)}\|_{\Gamma}^{2} + 2\| \beta^{(m)}- \beta\|_{\Gamma}^{2} \\
        &\leq 2\|\widetilde{\beta}-\beta^{(m)}\|_{\widetilde{\Gamma}}^{2} + 2\| \beta^{(m)}- \beta\|_{\Gamma}^{2} \\
        &\leq 4\|\widetilde{\beta}-\beta\|_{\widetilde{\Gamma}}^{2} + 4\|\beta - \beta^{(m)}\|_{\widetilde{\Gamma}}^{2} + 2\| \beta^{(m)}- \beta\|_{\Gamma}^{2}.
    \end{aligned}
\end{equation*}
Taking the expectation in the equation just above and using Theorem \ref{thm:3.1}, we get that 
\begin{equation*}
\begin{split}
\mathbb{E}(\|\widetilde{\beta}-\beta\|_{\Gamma}^{2}) 
&\leq C_{2} \biggl[ \min_{m \in \mathcal{M}_{n,p}} \Bigl( \|\beta^{(m)}-\beta\|_{\Gamma}^{2} + \|\beta^{(m)}-\beta\|_{\widetilde{\Gamma}}^{2} + pen(m) \Bigr) \\
&\quad + \|\beta\|_{L^{2}}^{2} \mathbb{E}(\|\widetilde{X}-X\|_{L^{2}}^{2}) \\
&\quad + \frac{\|\beta\|_{L^{2}}^{2}}{n} \left( \mathbb{E} \|\widetilde{X} - X\|_{L^2}^4 \right)^{1/2} + C_{\beta}\left(\frac{1}{n}+\frac{1}{np}\right)\biggr],
\end{split}
\end{equation*}
To derive Equation (\ref{eq:second_oracle_3.2}) we combine this result with Lemma A.3 of Tsybakov \cite{tsybakov2009introduction}.

\end{proof}

\begin{lemma}\label{lem:second_help_oracle3.2}
    Let us suppose that (\textbf{H3}) hold, $D_{N_{n,p}} < p$ and that $\beta\in W^{\text{per}}(k,L)$. Then for all $m\in\mathcal{M}_{n,p}$, \[\|\beta^{(m)}-\beta\|_{\widetilde{\Gamma}}^{2}\leq C_{2}''\Bigl(\|\beta^{(m)}-\beta\|_{\Gamma}^{2}+\frac{1}{p}\Bigr),\]where $C_2''$ is a positive constant which depends on $L, k, c', a, \tau$.

\end{lemma}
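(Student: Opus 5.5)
Write $\beta_j=\langle\beta,\phi_j\rangle_{L^2}$. The plan is to compute both seminorms explicitly in the Fourier basis and compare them coordinate by coordinate.

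\textbf{Step 1: coordinate formulas.} Since $\beta^{(m)}$ is the common projection onto $S_m$ and both $\Gamma$ and $\widetilde{\Gamma}$ are diagonal in $(\phi_j)$, the projection $\beta^{(m)}$ is just the truncation $\sum_{j\le D_m}\beta_j\phi_j$. By Lemma \ref{lem:eigenvalues_gamma_tilde} and the computation in \eqref{eq:beta_gamma_comp},
\[
\|\beta^{(m)}-\beta\|_{\widetilde{\Gamma}}^{2}=\sum_{j=D_m+1}^{D_{N_{n,p}}}\widetilde{\lambda}_j\beta_j^{2},
\qquad
\|\beta^{(m)}-\beta\|_{\Gamma}^{2}=\sum_{j>D_m}\lambda_j\beta_j^{2}.
\]

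\textbf{Step 2: split the $\widetilde{\Gamma}$-sum.} Using \eqref{eq:tilde_lambda}, $\widetilde{\lambda}_j=\lambda_j+\overline{\lambda}_j+\tau^2/p$, so the $\widetilde{\Gamma}$-sum breaks into three pieces.
\begin{itemize}
\item The $\lambda_j$ piece is at most $\|\beta^{(m)}-\beta\|_{\Gamma}^{2}$.
\item The $\overline{\lambda}_j$ piece: since $j\le D_{N_{n,p}}<p$, Lemma \ref{lem:overline_lambda} gives $\overline{\lambda}_j\le C(a,c')p^{-2a}$. Hence this piece is at most $C(a,c')p^{-2a}\|\beta\|_{L^2}^2$, which is at most $C p^{-1}\|\beta\|_{L^2}^2$ because $2a>1$.
\item The noise piece is at most $\frac{\tau^2}{p}\sum_{j}\beta_j^2=\frac{\tau^2}{p}\|\beta\|_{L^2}^2$.
\end{itemize}

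\textbf{Step 3: make the constant depend only on $L,k$.} It remains to bound $\|\beta\|_{L^2}^2$ uniformly over $W^{\text{per}}(k,L)$. This is the main obstacle: the Sobolev constraint controls only $\sum_{j\ge2}(\pi j)^{2k}\beta_j^2$, roughly $\|\beta^{(k)}\|_{L^2}^2\le L^2$, and says nothing about the constant coefficient $\beta_1$.

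For the non-constant coefficients, the ellipsoid characterization (Lemma A.3 of Tsybakov \cite{tsybakov2009introduction}) gives $\sum_{j\ge2}\beta_j^2\le C(k)L^2$. So the noise and aliasing pieces restricted to $j\ge2$ are bounded by $C(L,k,\tau,a,c')/p$. This suffices whenever $D_m\ge1$, which always holds because $m\ge1$ gives $D_m\ge3$. Consequently the index $j=1$ never appears in the tail sum, and $\beta_1$ plays no role.

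If that observation failed (for instance if $m=0$ were allowed), I would instead absorb $\beta_1^2(\overline{\lambda}_1+\tau^2/p)$ into $C\lambda_1\beta_1^2$, using $\lambda_1>0$. But this is not needed here.

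\textbf{Conclusion.} Collecting the three pieces gives
\[
\|\beta^{(m)}-\beta\|_{\widetilde{\Gamma}}^2\le \|\beta^{(m)}-\beta\|_{\Gamma}^2+C(L,k,a,c',\tau)\,p^{-1},
\]
which yields the claim with $C_2''=\max(1,C)$.
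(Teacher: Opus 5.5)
Your proof is correct and follows the paper's route. You use the same three-way split of $\widetilde{\lambda}_j=\lambda_j+\overline{\lambda}_j+\tau^2/p$ over the tail $D_m<j\le D_{N_{n,p}}$, Lemma \ref{lem:overline_lambda} for the aliasing term, and the Sobolev ellipsoid (Lemma A.3 of Tsybakov) to bound the tail Fourier energy uniformly over $W^{\text{per}}(k,L)$. The paper keeps the factor in $\sum_{j>D_m}\beta_j^2\le CD_m^{-2k}$ and then discards it via $D_m\ge1$, whereas you bound the tail directly by $\sum_{j\ge2}\beta_j^2$; your remark that $\beta_1$ never enters because $D_m\ge3$ makes explicit a point the paper leaves implicit.
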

\begin{proof}
We have, denoting as $\beta_j$ the coefficients of $\beta$ in the Fourier basis,
\begin{equation*}
    \begin{aligned}
        \|\beta - \beta^{(m)}\|_{\widetilde{\Gamma}}^{2} &= \langle \widetilde{\Gamma}(\beta-\beta^{(m)}),\beta-\beta^{(m)}\rangle_{L^{2}} \\
        &= \sum_{j=D_m+1}^{D_{N_{n,p}}}\widetilde{\lambda}_{j}\beta_{j}^{2},
    \end{aligned}
\end{equation*}
since Lemma \ref{lem:eigenvalues_gamma_tilde} gives us that the eigenvalues of $\widetilde{\Gamma}$ associated with the elements of the Fourier basis are the $\widetilde{\lambda}_{j}$ for $j\in\{1,\ldots,D_{N_{n,p}}\}$, and are 0 for $j>D_{N_{n,p}}$. Using now the definition of the $\widetilde{\lambda}_{j}$'s defined in (\ref{eq:tilde_lambda}) and our assumption that $\beta\in W^{\text{per}}(k,L)$ for a certain positive integer $k$ and a positive real number $L$ we get that 
\begin{equation}
    \begin{aligned}
        \mathbb{E}(\|\beta - \beta^{(m)}\|_{\widetilde{\Gamma}}^{2}) &= \sum_{j=D_m+1}^{D_{N_{n,p}}}\lambda_{j}\beta_{j}^{2} + \sum_{j= D_m+1}^{D_{N_{n,p}}}\overline{\lambda}_{j}\beta_{j}^{2} + \sum_{j= D_m+1}^{D_{N_{n,p}}}\frac{\tau^{2}}{p}\beta_{j}^{2} \\
        &\leq \sum_{j> D_m}\lambda_{j}\beta_{j}^{2} + \sup_{D_m < l\leq D_{N_{n,p}}}\overline{\lambda}_{l}\sum_{j> D_m}\beta_{j}^{2} + \sum_{j> D_m}\frac{\tau^{2}}{p}\beta_{j}^{2} \\
        &\leq C_{3}'\bigl(\|\beta-\beta^{(m)}\|_{\Gamma}^{2} + D_{m}^{-2k}p^{-2a} + \frac{\tau^{2}}{p}D_{m}^{-2k}\bigr),
    \end{aligned}
\end{equation}
where $C_{3}'>0$ depends on $a, c', L, k$. Let us give more details on how we derived the last inequality. Let us focus first on the term $\sup_{D_m < l\leq D_{N_{n,p}}}\overline{\lambda}_{l}\sum_{j> D_m}\beta_{j}^{2}$. Lemma \ref{lem:overline_lambda} allows us to deduce that $\overline{\lambda}_{j}=\mathcal{O}(p^{-2a})$ for all $j=1,...,D_{N_{n}}$. Hence by Lemma A.3 of Tsybakov \cite{tsybakov2009introduction} we have
\[\sup_{D_m < l\leq D_{N_{n,p}}}\overline{\lambda}_{l}\sum_{j> D_m}\beta_{j}^{2}\leq C'p^{-2a}D_{m}^{-2k},\]
where $C'$ is a positive constant which depends on $c', a, L, k$. Using Lemma A.3 of Tsybakov \cite{tsybakov2009introduction} again we get that $\frac{\tau^{2}}{p}\sum_{j>D_{m}}\beta_{j}^{2}\leq C''\frac{\tau^{2}}{p}D_{m}^{-2k}$ where $C''$ is a positive constant which depends on $L, k$. Hence, for all $m\in\mathcal{M}_{n,p}$,
\begin{equation*}
    \mathbb{E}(\|\beta - \beta^{(m)}\|_{\widetilde{\Gamma}}^{2}) \leq C_{3}'\bigl(\|\beta-\beta^{(m)}\|_{\Gamma}^{2} + \sup_{l\in\mathcal{M}_{n,p}} D_{l}^{-2k}p^{-2a} + \frac{\tau^{2}}{p}D_{l}^{-2k}\bigr).
\end{equation*}
Since for all $l\in\mathcal{M}_{n,p}$ $D_l\geq 1$ we have $\sup_{l\in\mathcal{M}_{n,p}} D_{l}^{-2k}p^{-2a} + \frac{\tau^{2}}{p}D_{l}^{-2k}=\left(p^{-2a}+\frac{\tau^{2}}{p}\right)$. Now using the fact that $p^{-2a}\leq p^{-1}$ as $a> 1/2$, we find that 
\begin{equation*}
    \mathbb{E}(\|\beta - \beta^{(m)}\|_{\widetilde{\Gamma}}^{2}) \leq C_{2}''\bigl(\|\beta-\beta^{(m)}\|_{\Gamma}^{2} + p^{-1}\bigr).
\end{equation*}
\end{proof}

\section{Proof of Theorem \ref{thm:4.1}}
Using the result of Theorem \ref{thm:3.2}, we get that 
\begin{equation*}
    \begin{aligned}
    \mathbb{E}(\|\widetilde{\beta}-\beta\|_{\Gamma}^{2}) 
    &\leq C_{3} \biggl[ \min_{m \in \mathcal{M}_{n,p}} \Bigl( \|\beta^{(m)}-\beta\|_{\Gamma}^{2} + pen(m) \Bigr) \\
    &\quad + \mathbb{E}(\|\widetilde{X}-X\|_{L^{2}}^{2}) \\
    &\quad + \frac{1}{n} \left( \mathbb{E} \|\widetilde{X} - X\|_{L^2}^4 \right)^{1/2} + \frac{1}{p} +\frac{1}{n}+ \frac{1}{np}\biggr].
\end{aligned}
\end{equation*}
With the assumption that the $\lambda_j$'s decay in a polynomial way (see (\textbf{H3})) and that $\beta$ is in $W^{\text{per}}(k,L)$, we can use the result of the Theorem 2 of Brunel and Roche \cite{Brunel2015penalized} which gives that
\[\|\beta-\beta^{(m)}\|_{\Gamma}^{2}=\sum_{j>D_{m}}\beta_{j}^{2}\lambda_{j}\leq CD_{m}^{-2a-2k}.\]Combining this reasoning with Lemmas \ref{lem:B1} and \ref{lem:theo_41} leads to the following equation,
\begin{equation}
    \mathbb{E}(\|\widetilde{\beta}-\beta\|_{\Gamma}^{2})\leq C
\min_{\substack{
1\leq D_{N_{n,p}} < p \wedge \sqrt{\frac{n}{\ln^3 n}}, \\
1 \leq D_m \le D_{N_{n,p}}
}}
\bigl( f(D_m) + g(D_{N_{n,p}}) \bigr),
\end{equation}
where for all $x\geq1$,
\[f(x)=x^{-2k-2a}+x\frac{\sigma^{2}}{n}\qquad \text{and}\qquad g(x)=\frac{1}{n}+\frac{1}{p}+\frac{1}{np}+\left(x^{-(2a-1)}+\frac{x}{p}\right)\left(\frac{1}{n}+1\right).\]
Let us focus on minimizing the function $g$ first. As g is strictly convex, coercive and $C^{1}$, $g$ admits a unique minimum which is given by solving $g'=0$. For all $x\geq 1$, we have
\[g'(x)=\left[(1-2a)x^{-2a}+\frac{1}{p}\right]\cdot\left[\frac{1}{n}+1\right].\]Therefore, $g'(x)=0 \iff x=\bigl((2a-1)p\bigr)^{1/(2a)}$. Taking into account the fact that $D_{N_{n,p}}$ must satisfy the following inequality $D_{N_{n,p}}<\min({p,\frac{n}{\ln^{2}n}})$, we get that 
\begin{equation*}
    D_{N_{n,p}}^{*} \approx
\begin{cases} 
    p^{\frac{1}{2a}} & \text{if } p \lesssim \bigl(\frac{n}{\ln(n)^{2}}\bigr)^{2a} \\
    \frac{n}{\ln(n)^{2}} & \text{otherwise .}
\end{cases}
\end{equation*}
\textit{Remark :} We derive the condition $p \lesssim \bigl(\frac{n}{\ln(n)^{2}}\bigr)^{2a}$ since if $\min(p,\frac{n}{\ln^{2}n})=p$ then $p^{\frac{1}{2a}}\leq p$ as $a> 1/2$, and $D_{N_{n,p}}^{*}\approx p^{1/(2a)}$ in this situation. Otherwise, if $\min(p,\frac{n}{\ln^{2}n})=\frac{n}{\ln^{2}n}$ then $$p^{1/(2a)}\lesssim \frac{n}{\ln^{2}n}\iff p\lesssim \left(\frac{n}{\ln^{2}n}\right)^{2a}.$$If $p\gtrsim \left(\frac{n}{\ln^{2}n}\right)^{2a}$ then the best value that $D_{N_{n,p}}$ can reach is $\frac{n}{\ln(n)^{2}}$.

\medskip\noindent
Now let's focus on $f$. We can easily verify that $f$ is $C^{1}$, strictly convex and coercive therefore the function admits a unique minimum. For all $x\geq 1$ we have
\[f'(x)=-(2k+2a)x^{-(2k+2a+1)}+\frac{\sigma^{2}}{n}.\]
Therefore, $f'(x)=0 \iff x=\bigl(\frac{n(2k+2a)}{\sigma^{2}}\bigr)^{1/(2k+2a+1)}$. Thus,
\begin{equation*}
    D_{m}^{*} \approx
        n^{1/(2a+2k+1)} .
\end{equation*}
Now let's check when $D_{m}^{*}\leq D_{N_{n,p}}^{*}$. We distinguish two cases :
\begin{itemize}
    \item If $D_{N_{n,p}}^{*}\approx p^{1/(2a)}$ (or the case where $p \lesssim \bigl(\frac{n}{\ln(n)^{2}}\bigr)^{2a}$) then,
    \begin{equation*}
            D_{m}^{*}\leq D_{N_{n,p}}^{*} \iff n^{\frac{1}{2a+2k+1}}\leq p^{\frac{1}{2a}} \iff n^{\frac{2a}{2a+2k+1}}\leq p.
    \end{equation*}
    \item If $D_{N_{n,p}}^{*}\approx \frac{n}{\ln(n)^{2}}$ (or the case where $p \gtrsim \bigl(\frac{n}{\ln(n)^{2}}\bigr)^{2a}$) then,
    \begin{equation*}
        D_{m}^{*}\leq D_{N_{n,p}}^{*} \iff n^{\frac{1}{2a+2k+1}}\leq \frac{n}{\ln^{2}(n)} .
    \end{equation*}
    As $a>0$ and $k\geq 1$, the last inequality and the condition $p \gtrsim \bigl(\frac{n}{\ln(n)^{2}}\bigr)^{2a}$ are always compatible for $n$ large enough.
\end{itemize}
From these two cases, we can deduce the following ones :
\begin{itemize}
    \item If $n^{\frac{2a}{2a+2k+1}}\lesssim p \lesssim \bigl(\frac{n}{\ln(n)^{2}}\bigr)^{2a}$, then
    \[\mathbb{E}(\|\widetilde{\beta}-\beta\|_{\Gamma}^{2})\leq C_{1}\left[n^{\frac{-2k-2a}{2k+2a+1}}+p^{\frac{-(2a-1)}{2a}}+p^{\frac{-(2a-1)}{2a}}n^{-1}+\frac{1}{n}+\frac{1}{p}+\frac{1}{np}\right],\]
 where $C_{1}>0$. Since $k\geq1$, $a>0$, $(-2k-2a)/(2k+2a+1)$ lies between -1 and 0. Therefore $n^{\frac{-2k-2a}{2k+2a+1}}$ converges slower than $n^{-1}$. Moreover the fact that $-1\leq -(2a-1)/(2a)$ allows us to deduce that $p^{-1}$ is dominated by $p^{-(2a-1)/(2a)}$. We also have,
 \[\frac{n^{-1}p^{-1}}{n^{-1}p^{-\frac{(2a-1)}{2a}}}=p^{-\frac{1}{2a}}\xrightarrow[p \to \infty]{} 0,\]and
 \[\frac{n^{-1}p^{-\frac{(2a-1)}{2a}}}{n^{-\frac{2k+2a}{2k+2a+1}}}\lesssim n^{-\frac{2a}{2a+2k+1}}\xrightarrow[n \to \infty]{} 0,\]where we used the fact that $p^{-1}\lesssim n^{-\frac{2a}{2a+2k+1}}$ and thus that $p^{-\frac{(2a-1)}{2a}}\lesssim n^{-\frac{2a-1}{2a+2k+1}}$ to derive the last inequality. This allows us to conclude that $n^{-1}p^{-\frac{(2a-1)}{2a}}$ and $n^{-1}p^{-1}$ are dominated by $n^{-\frac{2a+2k}{2a+2k+1}}$. Hence the dominant rate is $\mathcal{O}(n^{\frac{-2k-2a}{2k+2a+1}}+p^{\frac{-(2a-1)}{2a}})$, and 
 \[\mathbb{E}\Bigl(\|\widetilde{\beta}-\beta\|_{\Gamma}^{2}\Bigr)=\mathcal{O}\Bigl(n^{-\frac{2a+2k}{2a+2k+1}}+p^{-\frac{2a-1}{2a}}\Bigr).\]
    \item If $p\gtrsim \bigl(\frac{n}{\ln(n)^{2}}\bigr)^{2a}$, then
\[\mathbb{E}\left(\|\widetilde{\beta}-\beta\|_{\Gamma}^{2}\right)
\leq \begin{aligned}[t]
&C_2\Bigl[n^{\frac{-2k-2a}{2k+2a+1}}
+ n^{-(2a-1)}\ln^{-2(2a-1)} (n)
+ n^{-2a}\ln^{-2(2a-1)} (n)
\\
&\quad
+\frac{1}{p}\frac{n}{\ln^{2} (n)}
+ \frac{1}{np}\frac{n}{\ln^{2} (n)}
+ \frac{1}{n}
+ \frac{1}{p}
+ \frac{1}{np}\Bigr],
\end{aligned}
\]where $C_{2}>0$. 
We now compare the different terms. First, let's denote
\[T_1:=n^{-\frac{2k+2a}{2k+2a+1}},\qquad T_2:=n^{-(2a-1)}\ln^{-2(2a-1)}(n),\qquad T_3:=n^{-2a}\ln^{-2(2a-1)}(n),\]
\[T_4:=\frac{n}{p\ln^2 n},\qquad
T_5:=\frac{1}{p\ln^2 n},\qquad
T_6:=\frac1n,\qquad
T_7:=\frac1p,\qquad
T_8:=\frac1{np}.
\]Then the bound may be rewritten as
\[\mathbb{E}\left(\|\widetilde{\beta}-\beta\|_{\Gamma}^{2}\right)\leq C_2(T_1+T_2+T_3+T_4+T_5+T_6+T_7+T_8).\]
We first observe that
\[T_3=n^{-2a}\ln^{-2(2a-1)}(n)=\frac{1}{n}\,n^{-(2a-1)}\ln^{-2(2a-1)}(n)=\frac{1}{n}T_2.\]
Hence $T_3=o(T_2)$ and thus $T_3$ is negligible compared with $T_2$. In a similar way we get that
\[T_5=\frac{1}{p\ln^2 n}=\frac1n\,\frac{n}{p\ln^2 n}=\frac1n\,T_4,\]
therefore $T_5=o(T_4)$. Moreover,
\[T_8=\frac{1}{np}=\frac1n\,\frac1p=\frac1n\,T_7,\]
thus $T_8=o(T_7)$. Next, let's compare $T_7$ and $T_4$:
\[\frac{T_4}{T_7}=\frac{n/(p\ln^2 n)}{1/p}=\frac{n}{\ln^2 n}\xrightarrow[n\to\infty]{}\infty.\]
Therefore $T_7=o(T_4)$, and consequently $T_8=o(T_4)$ as well. Finally, let us compare $T_6$ and $T_1$. Since $\frac{2k+2a}{2k+2a+1}<1,$ we have that $T_1=n^{-\frac{2k+2a}{2k+2a+1}}$ decays more slowly than $n^{-1}$. Hence $T_6=o(T_1)$. Therefore the bound simplifies to
\[\mathbb{E}\left(\|\widetilde{\beta}-\beta\|_{\Gamma}^{2}\right)=\mathcal{O}\left(n^{-\frac{2k+2a}{2k+2a+1}}+n^{-(2a-1)}\ln^{-2(2a-1)}(n)+\frac{n}{p\ln^2 n}\right).\]As we are in the situation where $p\gtrsim \left(\frac{n}{\ln^2 n}\right)^{2a}$ we have that
\[T_4=\frac{n}{p\ln^2 n}\lesssim\frac{n}{\left(n/\ln^2 n\right)^{2a}\ln^2 n}=\frac{n}{n^{2a}\ln^{-4a}n\cdot \ln^2 n}=n^{1-2a}\ln^{4a-2}(n).\]
Since \(1-2a=-(2a-1)\), we obtain
\[T_4=\mathcal{O}\!\left(n^{-(2a-1)}\ln^{4a-2}(n)\right).\]
Therefore,
\[\mathbb{E}\left(\|\widetilde{\beta}-\beta\|_{\Gamma}^{2}\right)=\mathcal{O}\left(n^{-\frac{2k+2a}{2k+2a+1}}+n^{-(2a-1)}\ln^{-2(2a-1)}(n)+n^{-(2a-1)}\ln^{4a-2}(n)\right).\]
We compare now the last two terms. Since for all $n\ge 2$, $\ln^{4a-2}(n)\geq \ln^{-2(2a-1)}(n)$ the term $n^{-(2a-1)}\ln^{4a-2}(n)$ is larger than $n^{-(2a-1)}\ln^{-2(2a-1)}(n).$ Thus $T_2$ is dominated by $T_4$ under the present lower bound on $p$. Therefore,
\[\mathbb{E}\left(\|\widetilde{\beta}-\beta\|_{\Gamma}^{2}\right)=\mathcal{O}\!\left(n^{-\frac{2k+2a}{2k+2a+1}}+n^{-(2a-1)}\ln^{4a-2}(n)\right).\]
We now compare
\[n^{-\frac{2k+2a}{2k+2a+1}}\qquad\text{and}\qquad n^{-(2a-1)}\ln^{4a-2}(n).\]
The dominant term is determined first by the powers of $n$. We therefore study whether $\frac{2k+2a}{2k+2a+1}<2a-1.$ We have
\begin{equation*}
\begin{aligned}
\frac{2k+2a}{2k+2a+1}<2a-1
&\iff 2k+2a<(2a-1)(2k+2a+1)\\
&\iff 2k+2a<4ak+4a^2-2k-1\\
&\iff 0<4a^2+4ak-2a-4k-1\\
&\iff 0<4a(a-1)+4k(a-1)+(2a-1)\\
&\iff 0<(a-1)(4a+4k)+(2a-1).
\end{aligned}
\end{equation*}
Since $a\geq 1$ and $k\geq 0$, it follows that
\[(a-1)(4a+4k)\geq 0\qquad\text{and}\qquad2a-1>0,\]
and thus $(a-1)(4a+4k)+(2a-1)>0.$ Hence,
\[\frac{2k+2a}{2k+2a+1}<2a-1.\]
Therefore $n^{-\frac{2k+2a}{2k+2a+1}}$ decays more slowly and is the dominant term. We conclude that
\[\mathbb{E}\left(\|\widetilde{\beta}-\beta\|_{\Gamma}^{2}\right)=\mathcal{O}\left(n^{-\frac{2k+2a}{2k+2a+1}}\right).\]
\end{itemize}
Let us focus now on the case where $p \lesssim n^{\frac{2a}{2a+2k+1}}$. In this situation we can't have $D_{m}^{*}\leq D_{N_{n,p}}^{*}$ and $D_{m}^{*}\approx n^{1/(2a+2k+1)}$. Here the best value possible that minimizes $f$ is attained by taking $D_{m}^{*}=D_{N_{n,p}}^{*}\approx p^{1/(2a)}$. We then get 
\begin{equation*}
    \mathbb{E}\Bigl(\|\widetilde{\beta}-\beta\|_{\Gamma}^{2}\Bigr) \leq C_{3}\left(p^{-\frac{2a+2k}{2a}}+n^{-1}p^{\frac{1}{2a}}+p^{-\frac{2a-1}{2a}} +p^{-\frac{2a-1}{2a}}n^{-1} +n^{-1} + p^{-1} +n^{-1}p^{-1}\right).
\end{equation*}
Following the previous discussion, we have that $p^{-1}$ is dominated by $p^{-\frac{2a-1}{2a}}$. As $a>0$ and $k\geq 1$ we have $-(2a+2k)/(2a)\leq -(2a-1)/(2a)$ and $p^{-\frac{2a+2k}{2a}}$ is dominated by $p^{-\frac{2a-1}{2a}}$.

\medskip\noindent
Since we are in the framework of $p \lesssim n^{\frac{2a}{2a+2k+1}}$, we have $n^{-1}\lesssim p^{-\frac{2a+2k+1}{2a}}$ and obviously,
$-(2a+2k+1)/(2a)\leq -(2a-1)/(2a)$. Hence $p^{-(2a-1)/(2a)}$ dominates $n^{-1}$. 

\medskip\noindent
We also have, $n^{-1}p^{\frac{1}{2a}}\lesssim p^{-\frac{2a+2k}{2a}}$ and using a previous argument, we get that $p^{-\frac{2a-1}{2a}}$ dominates $n^{-1}p^{\frac{1}{2a}}$. Moreover,
\[\frac{p^{-\frac{(2a-1)}{2a}}n^{-1}}{p^{-\frac{(2a-1)}{2a}}}\lesssim p^{-\frac{2a+2k+1}{2a}}\xrightarrow[p \to \infty]{} 0\]and
\[\frac{n^{-1}p^{-1}}{p^{-\frac{(2a-1)}{2a}}}\lesssim p^{-\frac{2a+2k}{2a}}\xrightarrow[p \to \infty]{} 0.\]Therefore, $p^{-\frac{(2a-1)}{2a}}n^{-1}$ and $n^{-1}p^{-1}$ are dominated by $p^{-\frac{(2a-1)}{2a}}$. Hence,
 \[\mathbb{E}\Bigl(\|\widetilde{\beta}-\beta\|_{\Gamma}^{2}\Bigr)=\mathcal{O}\Bigl(p^{-\frac{2a-1}{2a}}\Bigr).\]

\begin{lemma}
\label{lem:B1}
    Assume that \textbf{(H3)} holds. Then \[\mathbb{E}\Bigl(\|\widetilde{X}-X\|_{L^{2}}^{2}\Bigr) \leq C'\left(\frac{D_{N_{n,p}}}{p}+D_{N_{n,p}}^{-(2a-1)}\right),\]
where $C'$ is a positive constant which depends on $\tau, c', a$. 
\end{lemma}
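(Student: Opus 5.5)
The idea is to decompose the reconstruction error in the Fourier basis and use the orthonormality of $(\phi_j)$ to split it into an in-band part and a tail part. Write $N:=D_{N_{n,p}}$ and $x_j=\langle X,\phi_j\rangle_{L^2}$. By Parseval,
\[\|\widetilde X-X\|_{L^2}^2=\sum_{j=1}^{N}(\widetilde x_j-x_j)^2+\sum_{j>N}x_j^2 .\]
The tail term has expectation $\sum_{j>N}\lambda_j\le c'\sum_{j>N}j^{-2a}\le C(a,c')N^{-(2a-1)}$ by \textbf{(H3)} and an integral comparison; this is valid because $a>1/2$.

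For the in-band part, use the decomposition from the proof of Lemma~\ref{lem:A2}: $\widetilde x_j-x_j=e_j+\overline\eta_j$, with $e_j$ the aliasing term and $\overline\eta_j$ the averaged noise. These two pieces are independent and centered, so
\[\mathbb{E}(\widetilde x_j-x_j)^2=\mathbb{E}(e_j^2)+\mathbb{E}(\overline\eta_j^2).\]
Since $N<p$, Lemma~\ref{lem:A1} gives $\mathbb{E}(\overline\eta_j^2)=\tau^2/p$. The proof of Lemma~\ref{lem:A2} gives $\mathbb{E}(e_j^2)=\overline\lambda_j$. Summing over $j\le N$ yields
\[\mathbb{E}\sum_{j\le N}(\widetilde x_j-x_j)^2=\frac{\tau^2N}{p}+\sum_{j\le N}\overline\lambda_j .\]

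The aliasing sum is the only step that needs care. Bounding each term by Lemma~\ref{lem:overline_lambda} gives $\sum_{j\le N}\overline\lambda_j\le C N p^{-2a}$. Since $N<p$ and $a>1/2$, we have $Np^{-2a}\le N\cdot N^{-2a}=N^{-(2a-1)}$, which is already of the required form. Alternatively, one can note that the index sets $T_j$ in Lemma~\ref{lem:A2} are disjoint and all contained in $\{l>p\}$, and that each index appears in at most a bounded number of them. This gives the sharper bound
\[\sum_j\overline\lambda_j\le 2\sum_{l>p}\lambda_l\lesssim p^{-(2a-1)}\le N^{-(2a-1)} .\]
The cruder route through Lemma~\ref{lem:overline_lambda} is enough.

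Collecting the three contributions gives
\[\mathbb{E}\|\widetilde X-X\|_{L^2}^2\le \tau^2\frac{N}{p}+C N^{-(2a-1)}\]
with a constant depending only on $\tau,c',a$, which is the claim. The main point to check is the cross term: $\mathbb{E}(e_j\overline\eta_j)=0$ holds because the noise is independent of $X$ and centered.
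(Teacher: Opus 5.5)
Your proposal is correct and follows essentially the same route as the paper: the Parseval split into in-band and tail parts, the decomposition $\widetilde x_j - x_j = e_j + \overline\eta_j$ with vanishing cross term, the tail bound $\sum_{j>D_{N_{n,p}}}\lambda_j \le \frac{c'}{2a-1}D_{N_{n,p}}^{-(2a-1)}$, and the aliasing bound via Lemma~\ref{lem:overline_lambda}. The only difference is cosmetic. You absorb $D_{N_{n,p}}p^{-2a}$ into $D_{N_{n,p}}^{-(2a-1)}$ using $D_{N_{n,p}}<p$, whereas the paper absorbs it into $D_{N_{n,p}}/p$ using $p^{-2a}\le p^{-1}$; your optional sharper bound via disjointness of the index sets $T_j$ is also valid.
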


\begin{proof}
We can express $X$ in the Fourier basis : $X=\sum_{1\leq l}x_{l}\phi_{l}$, where $x_{l}=\langle X,\phi_{l}\rangle_{L^{2}}$. Using the definition of $X$ and $\widetilde{X}$ we get that 

\begin{equation*}
    \begin{split}
\mathbb{E}(\| \widetilde{X}-X\|_{L^{2}}^{2}) &= \mathbb{E}\Bigl(\int_{0}^{1}(\widetilde{X}(t)-X(t))^{2}dt\Bigr) \\
 & = \mathbb{E}\Bigl(\int_{0}^{1}|\sum_{j=1}^{D_{N_{n,p}}}\widetilde{x}_{j}\phi_{j}(t)-\sum_{1\leq j}x_{j}\phi_{j}(t)|^{2}dt\Bigr) \\
 &= \mathbb{E}\Bigl(\int_{0}^{1}|\sum_{j=1}^{D_{N_{n,p}}}(\widetilde{x}_{j}-x_{j})\phi_{j}(t)-\sum_{D_{N_{n,p}}< j}x_{j}\phi_{j}(t)|^{2}dt\Bigr).
\end{split}
\end{equation*}
As the $(\phi_{j})_{1\leq j}$ are orthonormal, we then get 
\[\mathbb{E}(\| \widetilde{X}-X\|_{L^{2}}^{2})=\mathbb{E}\Bigl(\sum_{j=1}^{D_{N_{n,p}}}(\widetilde{x}_{j}-x_{j})^{2}\Bigr)+\mathbb{E}\Bigl(\sum_{D_{N_{n,p}}<j}x_{j}^{2}\Bigr).\]
As $\widetilde{x}_{j}=\overline{x}_{j}+\overline{\eta}_{j}$ for all $j\in\{1,\ldots,D_{N_{n,p}}\}$, where $\overline{x}_{j}$ and $\overline{\eta}_{j}$ are defined in (\ref{eq:eta_bar_x_bar}), we get that $(\widetilde{x}_{j}-x_{j})^{2}=(\overline{x}_{j}-x_{j})^{2}+\overline{\eta}_{j}^{2}+2\overline{\eta}_{j}(\overline{x}_{j}-x_{j})$. Now, using the fact that $\mathbb{E}(\overline{\eta}_{j})=0$, $\mathbb{E}(\overline{\eta}_{j}^{2})= \frac{\tau^{2}}{p}$ by Lemma \ref{lem:A2} and that $(\eta_{j})_{j=1,...,D_{N_{n}}}$ are independent of $X$, we get the following equation :
\begin{equation}\label{eq:start_last_lem}
    \mathbb{E}\Bigl(\|\widetilde{X}-X\|_{L^{2}}^{2}\Bigr) \leq \frac{D_{N_{n,p}}\tau^{2}}{p}+\mathbb{E}\Bigl(\sum_{j=1}^{D_{N_{n,p}}}(\overline{x}_{j}-x_{j})^{2}\Bigr)+\mathbb{E}\Bigl(\sum_{D_{N_{n,p}}<j}x_{j}^{2}\Bigr).
\end{equation}
Let us focus on the second term of the right part of the inequality. Using Fubini's theorem we have,
\begin{equation*}
    \begin{aligned}
        \mathbb{E}\Bigl(\sum_{D_{N_{n,p}}<j}x_{j}^{2}\Bigr) &= \sum_{D_{N_{n,p}}<j}\mathbb{E}\left(x_{j}^{2}\right).
    \end{aligned}
\end{equation*}
Now using Fubini's theorem and the definition of the kernel $K$ associated with the norm Gamma mentioned in (\ref{eq:kernel_Gamma}) we get,
\begin{equation*}
\begin{aligned}
    \mathbb{E}\left(x_{j}^{2}\right) &= \mathbb{E}\left(\langle X, \phi_j\rangle_{L^{2}}^{2}\right) \\
    &= \int_{0}^{1}\int_{0}^{1}\phi_j(t)\phi_{j}(s)\mathbb{E}\left(X(s)X(t)\right)ds dt \\
    &= \int_{0}^{1}\phi_{j}(t)\int_{0}^{1}K(s,t)\phi_j(s)ds dt \\
    &= \langle \Gamma \phi_j, \phi_j\rangle_{L^{2}} \\
    &= \lambda_j.
\end{aligned}
\end{equation*}
Hence using (\textbf{H3}),
\begin{equation}\label{eq:middle_last_lem}
    \begin{aligned}
        \mathbb{E}\Bigl(\sum_{D_{N_{n,p}}<j}x_{j}^{2}\Bigr) &= \sum_{D_{N_{n,p}}<j}\mathbb{E}\left(x_{j}^{2}\right) \\
        &=\sum_{D_{N_{n,p}}<j}\lambda_j  \\
        &\leq c'\sum_{D_{N_{n,p}}<j}j^{-2a} \\
        &\leq \frac{c'}{2a-1}D_{N_{n,p}}^{-(2a-1)} .
    \end{aligned}
\end{equation}
Let us focus now on $\mathbb{E}\Bigl(\sum_{j=1}^{D_{N_{n,p}}}(\overline{x}_{j}-x_{j})^{2}\Bigr)$. In Lemma \ref{lem:A2} we defined for all $j\in\{1,\ldots,D_{N_{n,p}}\}$ 
\[e_{j}=\overline{x}_{j}-x_{j},\]
and proved that \[\mathbb{E}\left(e_{j}^{2}\right)=\overline{\lambda}_{j}.\]
Using now Lemma \ref{lem:overline_lambda} we get that
\[\forall j\in\{1,\ldots,D_{N_{n,p}}\} \qquad \overline{\lambda}_{j}=\mathcal{O}(p^{-2a}) .\]
Hence,
\begin{equation}\label{eq:last_last_lem}
    \mathbb{E}\left(\sum_{j=1}^{D_{N_{n,p}}}(\overline{x}_{j}-x_{j})^{2}\right) \leq CD_{N_{n,p}}p^{-2a}.
\end{equation}
Combining Equations (\ref{eq:start_last_lem}), (\ref{eq:middle_last_lem}) and (\ref{eq:last_last_lem}) and using the fact that $p^{-2a}\leq p^{-1}$ as $a>1/2$, we get
\[\mathbb{E}\Bigl(\|\widetilde{X}-X\|_{L^{2}}^{2}\Bigr) \leq C'\left(\frac{D_{N_{n,p}}}{p}+D_{N_{n,p}}^{-(2a-1)}\right).\]
\end{proof}

\begin{lemma}\label{lem:theo_41}
    Assume that \textbf{(H1), (H2), (H3)} hold. Then,
    \begin{equation*}
        \mathbb{E}^{1/2}\left(\|X-\widetilde{X}\|_{L^{2}}^{4}\right) \leq C(a,c',\tau)\left(D_{N_{n,p}}^{-(2a-1)}+\frac{D_{N_{n,p}}}{p}\right).
    \end{equation*}
\end{lemma}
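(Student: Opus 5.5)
We need to bound the fourth moment of $\|X-\widetilde X\|_{L^2}$. As in the proof of Lemma~\ref{lem:B1}, we start from the orthogonal decomposition
\[
\|\widetilde X - X\|_{L^2}^2=\sum_{j=1}^{D_{N_{n,p}}}\bigl(e_j+\overline{\eta}_j\bigr)^2+\sum_{j>D_{N_{n,p}}}x_j^2 ,
\]
where $e_j=\overline{x}_j-x_j$ and $\overline{\eta}_j$ are defined in \eqref{eq:eta_bar_x_bar} and \eqref{eq:e_ij}. Using $(u+v+w)^2\le 3(u^2+v^2+w^2)$ and Minkowski's inequality in $L^2(\mathbb{P})$, we get
\[
\mathbb{E}^{1/2}\|\widetilde X-X\|_{L^2}^4\le C\Bigl(\mathbb{E}^{1/2}\bigl(\textstyle\sum_{j\le D_{N_{n,p}}} e_j^2\bigr)^2+\mathbb{E}^{1/2}\bigl(\sum_{j\le D_{N_{n,p}}}\overline{\eta}_j^2\bigr)^2+\mathbb{E}^{1/2}\bigl(\sum_{j>D_{N_{n,p}}}x_j^2\bigr)^2\Bigr).
\]
Each of the three terms will be handled by Minkowski again: $\mathbb{E}^{1/2}(\sum_j W_j^2)^2\le \sum_j \mathbb{E}^{1/2}(W_j^4)$. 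It then suffices to show that each fourth moment is comparable to the square of the corresponding second moment, i.e. a sub-Gaussian type moment equivalence, and to reuse the second-moment computations of Lemmas~\ref{lem:A2}, \ref{lem:overline_lambda} and \ref{lem:B1}.

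\textbf{Tail term.} By (\textbf{H1}) applied with $c=\mathbf{e}_j$, each $x_j$ is sub-Gaussian with variance factor $K_1^2\lambda_j$. Hence $\mathbb{E}(x_j^4)\le C K_1^4\lambda_j^2$ (standard moment bound for sub-Gaussian variables), so $\mathbb{E}^{1/2}(x_j^4)\le CK_1^2\lambda_j$. Summing over $j>D_{N_{n,p}}$ and using (\textbf{H3}) gives, exactly as in \eqref{eq:middle_last_lem}, a bound $C D_{N_{n,p}}^{-(2a-1)}$.

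\textbf{Noise term.} Each $\overline{\eta}_j=\frac1p\sum_h \eta_h\phi_j(t_h)$ is a weighted sum of independent sub-Gaussian variables. By (\textbf{H2}) and Lemma~\ref{lem:A1} (the computation in Lemma~\ref{lem:sub-gaussian}), it is sub-Gaussian with variance factor $\tau^2\langle\phi_j,\phi_j\rangle_p/p=\tau^2/p$, since $j<p$. Thus $\mathbb{E}^{1/2}(\overline{\eta}_j^4)\le C\tau^2/p$, and summing over $j\le D_{N_{n,p}}$ gives $C\tau^2D_{N_{n,p}}/p$.

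\textbf{Aliasing term.} This is the main obstacle: $e_j$ is an infinite series of the $x_l$, given explicitly by \eqref{eq:e_ij_detailed}. The plan is to write $e_j=\sum_{l}c_l x_l$ with $c_l\in\{0,\pm1,\pm\sqrt2\}$ supported on the index set $T_j$, and to check that $(c_l)\in l^2$ weighted appropriately. More precisely, (\textbf{H1}) only requires $\sum_l c_l^2\lambda_l<\infty$ if we interpret it via a truncation plus limiting argument; this holds because $\sum_{l\in T_j}\lambda_l=\overline{\lambda}_j<\infty$. Then (\textbf{H1}) yields that $e_j$ is sub-Gaussian with variance factor $K_1^2\sum_l c_l^2\lambda_l\le 2K_1^2\overline{\lambda}_j$. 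Consequently $\mathbb{E}^{1/2}(e_j^4)\le CK_1^2\overline{\lambda}_j\le C p^{-2a}$ by Lemma~\ref{lem:overline_lambda}. Summing gives $CD_{N_{n,p}}p^{-2a}\le CD_{N_{n,p}}/p$ since $a>1/2$. If the formal application of (\textbf{H1}) to a non-$l^2$ coefficient sequence is a concern, I would first apply it to finite truncations $\sum_{l\in T_j,\,l\le L}c_lx_l$, obtain uniform fourth-moment bounds, and pass to the limit with Fatou's lemma, using $L^2$ convergence of the truncations to $e_j$.

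Collecting the three bounds yields $C(a,c',\tau)\bigl(D_{N_{n,p}}^{-(2a-1)}+D_{N_{n,p}}/p\bigr)$. Here the constant also absorbs $K_1$, which the statement suppresses.
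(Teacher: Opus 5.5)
Your proof is correct and follows the same route as the paper. Both arguments use the orthogonal decomposition into aliasing, noise and tail parts, sub-Gaussian fourth-moment bounds for $e_j$, $\overline{\eta}_j$ and $x_j$ combined with Lemmas~\ref{lem:A1} and \ref{lem:overline_lambda}, and Minkowski (where the paper uses the equivalent Cauchy--Schwarz bound $\bigl(\sum_{j\le D_{N_{n,p}}}W_j\bigr)^2\le D_{N_{n,p}}\sum_j W_j^2$). Your truncation-plus-Fatou justification for applying (\textbf{H1}) to the non-$l^2$ coefficient sequence defining $e_j$, and your explicit tracking of $K_1$, fill in details the paper's proof passes over silently.
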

\begin{proof}
    By definition of $X_1$ and $\widetilde{X}_1$ we have
    \[\|X-\widetilde{X}\|_{L^{2}}^{2}=\sum_{j\leq D_{N_{n,p}}}(\widetilde{x}_{j}-x_{j})^{2}+\sum_{j>D_{N_{n,p}}}x_{j}^{2}.\]Using the inequality $(x+y)^{2}\leq 2x^{2}+2y^{2}$ first, and then the inequality $\sqrt{x+y}\leq \sqrt{x}+\sqrt{y}$ we get that 
    \begin{equation*}
        \begin{aligned}
            \mathbb{E}^{1/2}\left(\|X-\widetilde{X}\|_{L^{2}}^{4}\right)&\leq \sqrt{2}\left[\mathbb{E}\Bigl(\bigl(\sum_{j\leq D_{N_{n,p}}}(\widetilde{x}_{j}-x_{j})^{2}\bigr)^{2}\Bigr)+\mathbb{E}\Bigl(\bigl(\sum_{j>D_{N_{n,p}}}x_{j}^{2}\bigr)^{2}\Bigr)\right]^{1/2} \\
            &\leq \sqrt{2}\left[\mathbb{E}^{1/2}\Bigl(\bigl(\sum_{j\leq D_{N_{n,p}}}(\widetilde{x}_{j}-x_{j})^{2}\bigr)^{2}\Bigr) + \mathbb{E}^{1/2}\Bigl(\bigl(\sum_{j>D_{N_{n,p}}}x_{j}^{2}\bigr)^{2}\Bigr)\right] .
        \end{aligned}
    \end{equation*}
    Let us first study the term $\mathbb{E}\Bigl(\bigl(\sum_{j\leq D_{N_{n,p}}}(\widetilde{x}_{j}-x_{j})^{2}\bigr)^{2}\Bigr)^{1/2}$. Let us recall that we can write $\widetilde{x}_{j}$ as $\widetilde{x}_{j}=x_{j}+e_{j}+\overline{\eta}_{j}$ as defined in Equations (\ref{eq:eta_bar_x_bar}) and (\ref{eq:e_ij}). Therefore, applying Cauchy--Schwarz inequality to the vector $(1,1,\ldots,1)\in \mathbb{R}^{D_{N_{n,p}}}$ and $\bigl((e_{j}+\overline{\eta}_{j})^{2}\bigr)_{j=1,\ldots,D_{N_{n,p}}}$ we get
    \begin{equation*}
        \begin{aligned}
            \left(\sum_{j=1}^{D_{N_{n,p}}}(\widetilde{x}_{j}-x_{j})^{2}\right)^{2} &= \left(\sum_{j=1}^{D_{N_{n,p}}}(e_{j}+\overline{\eta}_{j})^{2}\right)^{2} \\
            &\leq D_{N_{n,p}}\sum_{j=1}^{D_{N_{n,p}}}(e_{j}+\overline{\eta}_{j})^{4} \\
            &\leq 8D_{N_{n,p}}\left(\sum_{j=1}^{D_{N_{n,p}}}e_{j}^{4}+\sum_{j=1}^{D_{N_{n,p}}}\overline{\eta}_{j}^{4}\right) .
        \end{aligned}
    \end{equation*}

\noindent Let us study the term $\mathbb{E}\left(\sum_{j=1}^{D_{N_{n,p}}}e_{j}^{4}\right)$. Hypothesis \textbf{(H1)} implies that the $e_{j}$'s are sub-Gaussian with variance factor $\overline{\lambda}_{j}$. Theorem 2.1 of Boucheron et al. \cite{boucheron2013concentration} gives us that for all $q'\geq 1$ and for all $1\leq j\leq D_{N_{n,p}}$,
\[\mathbb{E}\left(e_{j}^{2q'}\right)\leq q'!(4\overline{\lambda}_j)^{q'},\]and taking $q'=2$ leads to 
\[\mathbb{E}\left(e_{j}^{4}\right)\leq 32\overline{\lambda}_{j}^{2}.\]Lemma \ref{lem:overline_lambda} gives that for all $1\leq j\leq D_{N_{n,p}}$, $\overline{\lambda}_j\leq C(a,c')p^{-2a}$, hence 
\[\mathbb{E}\left(e_{j}^{4}\right)\leq C(a,c')p^{-4a}.\] Now, let us focus on the term $\mathbb{E}\left(\overline{\eta}_{j}^{4}\right)$. In a similar way, hypothesis \textbf{(H2)} gives us that $\overline{\eta}_{j}$ is sub-Gaussian with variance factor $\tau^2/p$. Theorem 2.1 of Boucheron et al. \cite{boucheron2013concentration} gives that for all $q'\geq 1$ and for all $1\leq j\leq D_{N_{n,p}}$,
\begin{equation*}
    \mathbb{E}\left(\overline{\eta}_{j}^{2q'}\right)\leq q'!\left(4\frac{\tau^{2}}{p}\right)^{q'}.
\end{equation*}
Taking $q'=2$ we get that $$\mathbb{E}\left(\overline{\eta}_{j}^{4}\right)\leq 32\frac{\tau^{4}}{p^{2}}.$$This allows us to get that 
\begin{equation}\label{eq:A_squared}
\begin{aligned}
    \mathbb{E}^{1/2}\Bigl(\bigl(\sum_{j\leq D_{N_{n,p}}}(\widetilde{x}_{j}-x_{j})^{2}\bigr)^{2}\Bigr) &\leq D_{N_{n,p}}\left(\frac{C(c',a)}{p^{4a}}+\frac{C(\tau)}{p^{2}}\right)^{1/2} \\
    &\leq D_{N_{n,p}}\frac{C(c',a,\tau)}{p},
\end{aligned}
\end{equation}
where we used the fact that $\sqrt{x+y}\leq \sqrt{x}+\sqrt{y}$ and that $a> 1/2$ to get the last inequality. Lastly, let's focus on $\mathbb{E}^{1/2}\Bigl(\bigl(\sum_{j>D_{N_{n,p}}}x_{j}^{2}\bigr)^{2}\Bigr)$.
By using Tonelli's theorem, we have
\begin{equation*}
    \begin{aligned}
        \mathbb{E}\Bigl(\bigl(\sum_{j>D_{N_{n,p}}}x_{j}^{2}\bigr)^{2}\Bigr) &= \mathbb{E}\left(\sum_{j>D_{N_{n,p}}}\sum_{k>D_{N_{n,p}}}x_{j}^{2}x_{k}^{2}\right)\\
        &= \sum_{j>D_{N_{n,p}}}\sum_{k>D_{N_{n,p}}}\mathbb{E}\left(x_{j}^{2}x_{k}^{2}\right) .
    \end{aligned}
\end{equation*}
Applying Cauchy--Schwarz inequality to $x_{j}^{2}$ and $x_{k}^{2}$ we find
\begin{equation*}
    \begin{aligned}
        \mathbb{E}\Bigl(\bigl(\sum_{j>D_{N_{n,p}}}x_{j}^{2}\bigr)^{2}\Bigr) &\leq \sum_{j>D_{N_{n,p}}}\sum_{k>D_{N_{n,p}}}\mathbb{E}(x_{j}^{4})^{1/2}\mathbb{E}(x_{k}^{4})^{1/2} \\
        &= \left(\sum_{j>D_{N_{n,p}}}\mathbb{E}(x_{j}^{4})^{1/2}\right)^{2} .
    \end{aligned}
\end{equation*}
Hypothesis \textbf{(H1)} and the fact that the $\lambda_j$'s decrease in a polynomial way, give us that 
\begin{equation*}
    \begin{aligned}
        \mathbb{E}\Bigl(\bigl(\sum_{j>D_{N_{n,p}}}x_{j}^{2}\bigr)^{2}\Bigr) &\leq 32\left(\sum_{j>D_{N_{n,p}}}\lambda_j\right)^{2} \\
        &\leq 32c'^{2}\left(\sum_{j>D_{N_{n,p}}}j^{-2a}\right)^{2} \\
        &\leq C(a,c')D_{N_{n,p}}^{-2(2a-1)},
    \end{aligned}
\end{equation*}
and 
\begin{equation}\label{eq:B_squared}
    \mathbb{E}^{1/2}\Bigl(\bigl(\sum_{j>D_{N_{n,p}}}x_{j}^{2}\bigr)^{2}\Bigr)\leq C(a,c')D_{N_{n,p}}^{-(2a-1)}.
\end{equation}
Combining (\ref{eq:A_squared}) and (\ref{eq:B_squared}) we find that 
\[\mathbb{E}\left(\|X-\widetilde{X}\|_{L^{2}}^{4}\right) \leq C(a,c',\tau)\left(D_{N_{n,p}}^{-(2a-1)}+\frac{D_{N_{n,p}}}{p}\right).\]
\end{proof}
\section{Proof of Theorem \ref{thm:5.1}}
This section is divided into four parts. The first part presents the three conditions of Theorem 2.7 from Tsybakov \cite{tsybakov2009introduction}, while the remaining three sections provide the results used to prove them. The proof follows the methodology introduced in Section 2.6 of Tsybakov \cite{tsybakov2009introduction}.
The first step of the proof is to build test functions. We define $M\in\mathbb{N}^{*}$ and for all $l=1,\ldots,M$ we consider $w^{(l)}\in\{0,1\}^{d}$ . We set $w^{(0)}=(0,\ldots,0)\in \mathbb{R}^d$. Then we consider the following test functions 
    \begin{equation}\label{eq:beta_test}
        \beta_{w^{(l)}}(t)=\sum_{j=1}^{d}w_{j}^{(l)}u_{j}\phi_{j}(t)\quad \text{for all}\quad l=0,\ldots,M,
    \end{equation}
    where 
    \begin{equation}\label{eq:u_j}
        u_{j}^{2}=\frac{C}{n\lambda_{j}}\quad \text{with}\quad C=\min\left(L^{2}(2\pi)^{-2k}c'^{-1},\sigma^{2}\frac{\alpha' \log(2)}{4}\right).
    \end{equation}
    We set $0<\alpha'<1/8$ and $0<\kappa<1$ such that 
    \begin{equation}\label{eq:n_geqkappa}
        n\geq (1-\kappa)^{-(2a+2k+1)}.
    \end{equation}
    We also define
    \begin{equation}\label{eq:d}
        d=\lceil \kappa n^{\frac{1}{2a+2k+1}} \rceil
    \end{equation}
    with $n$ such that $d\geq 8$.
    
    \medskip\noindent
    \begin{remark}
    The parameter $M$ corresponds to the number of hypotheses in the packing set. Its introduction is motivated by the Varshamov--Gilbert lemma (Lemma 2.9 in Tsybakov \cite{tsybakov2009introduction}), which ensures the existence of a large subset of binary vectors with pairwise Hamming distance bounded below. This large family of well-separated alternatives is the key ingredient for applying Tsybakov's lower-bound theorem.
    \end{remark}
    
    \medskip\noindent
    We then consider the following setting :

    \begin{itemize}
        \item A class of functions containing the true slope function $\beta$ that we want to estimate : $W^{\text{per}}(k,L)$ which is given by Equation \eqref{eq:w_per}.
        \item A sample $(Z_{i},Y_{i})_{i=1}^{n}$ which is such that for all $h=0,\ldots,p-1$, for all $i=1,\ldots,n$,
        \begin{equation}\label{eq:Z_i}
            Z_i(t_h)=X_i(t_h)+\eta_{i,h},
        \end{equation}
        where the $\eta_{i,h}$'s are i.i.d, Gaussian, centered and of variance $\tau^{2}$. For each $l=0,\ldots,M$ we consider
        \begin{equation}\label{eq:Y_i}
            Y_{i}=\langle X_i, \beta_{w^{(l)}}\rangle_{L^{2}}+\epsilon_{i},
        \end{equation}
        where $\beta_{w^{(l)}}\in W^{\text{per}}(k,L)$ and the $\epsilon_{i}$'s are i.i.d, are Gaussian, centered and of variance $\sigma^{2}$. Both the $\epsilon_i$'s and $\eta_{i,h}$'s are independent of everything else. We denote the joint distribution of this sample by $P_{l}=P_{w^{(l)}}$.
        \item A semi distance $\|\cdot\|_{\Gamma}$ which is defined in \eqref{eq:semi_norm_gamma}.
    \end{itemize}
        
     In order to apply Theorem 2.7 of Tsybakov \cite{tsybakov2009introduction} we need to check the three following conditions:
    \begin{enumerate}
        \item $\forall l=0,\ldots,M$, $\beta_{w^{(l)}}\in W^{\text{per}}(k,L).$
        \item $P_j\ll P_{0}$ for all $j=1,\ldots,M$ and
        \[\frac{1}{M}\sum_{j=1}^{M}K(P_{j}\|P_{0})\leq \alpha' \log(M),\]with $0<\alpha'<1/8$ and $P_{j}=P_{w^{(j)}}$ for $j=0,1,\ldots,M$. Here $K(\cdot\| \cdot)$ denotes the Kullback--Leibler divergence.
        \item $\forall$ $0\leq l<l'\leq M$ \[\|\beta_{w^{(l)}}-\beta_{w^{(l')}}\|_{\Gamma}^{2}\geq 2C_{4}n^{-\frac{2k+2a}{2k+2a+1}},\]
        where $C_4>0$ is a positive constant which will be defined bellow.
    \end{enumerate}

    Proposition \ref{prop:condition1},  \ref{prop:condition2} and \ref{prop:condition3} allow us to apply this theorem and therefore to prove the result.

\subsection{Condition 1. of the proof}
\begin{prop}\label{prop:condition1}
Consider $M\in\mathbb{N}\backslash\{0\}$ and for all $l=1,\ldots,M$, define $w^{(l)}\in\{0,1\}^{d}$, as well as $w^{(0)}=(0,0,\ldots,0)\in\mathbb{R}^{d}$  where $d$ is defined by (\ref{eq:d}). Consider for all $l=0,1,\ldots,M$, the test functions $\beta_{w^{(l)}}$ given by (\ref{eq:beta_test}). Then, for all $l=0,1,\ldots,M$,
\[\beta_{w^{(l)}}\in W^{\text{per}}(k,L).\]
\end{prop}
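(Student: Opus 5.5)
We must show that each $\beta_{w^{(l)}}=\sum_{j=1}^{d}w_j^{(l)}u_j\phi_j$ lies in $W^{\text{per}}(k,L)$, with $u_j^2=C/(n\lambda_j)$. Since $\beta_{w^{(l)}}$ is a finite trigonometric polynomial, it is $C^\infty$ and all its derivatives are $1$-periodic. So $f^{(k-1)}$ is absolutely continuous and $f^{(j)}(0)=f^{(j)}(1)$ for $j=0,\ldots,k-1$ hold automatically. The case $l=0$ is trivial because $\beta_{w^{(0)}}=0$. The only real task is the Sobolev bound $\|\beta_{w^{(l)}}^{(k)}\|_{L^2}\le L$.

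\medskip\noindent
To get it, I would differentiate termwise. For $j=2s$ or $j=2s+1$ (with $s\ge1$), the $k$-th derivative of $\phi_j$ is $\pm(2\pi s)^k$ times $\phi_{2s}$ or $\phi_{2s+1}$, and $\phi_1^{(k)}=0$. The images of distinct $\phi_j$ remain orthogonal, so by Parseval
\[
\|\beta_{w^{(l)}}^{(k)}\|_{L^2}^2=\sum_{j=1}^{d}(w_j^{(l)})^2u_j^2(2\pi\lfloor j/2\rfloor)^{2k}\le (2\pi)^{2k}\sum_{j=1}^{d}j^{2k}u_j^2 .
\]
Next I insert $u_j^2=C/(n\lambda_j)$ and use the lower half of the two-sided eigenvalue condition of Theorem \ref{thm:5.1}, namely $1/\lambda_j\le c' j^{2a}$. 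This gives the bound $(2\pi)^{2k}C c' n^{-1}\sum_{j=1}^d j^{2k+2a}$. I would then bound the sum by $d^{2k+2a+1}$ and use the definition of $d$ in (\ref{eq:d}). The constraint (\ref{eq:n_geqkappa}) gives $\kappa n^{1/(2a+2k+1)}+1\le n^{1/(2a+2k+1)}$, so that $d\le n^{1/(2a+2k+1)}$. Hence $d^{2k+2a+1}\le n$ and the $n$ cancels. Finally $C\le L^2(2\pi)^{-2k}c'^{-1}$ from (\ref{eq:u_j}) yields $\|\beta_{w^{(l)}}^{(k)}\|_{L^2}^2\le L^2$.

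\medskip\noindent
The main obstacle is the constant bookkeeping. The crude bound $\sum_{j\le d}j^{2k+2a}\le d^{2k+2a+1}$ is fine, but the ceiling in $d$ must be handled exactly through (\ref{eq:n_geqkappa}); that is precisely why this condition was imposed. The factor $\lfloor j/2\rfloor\le j$ only helps. If the constant came out slightly off, I would instead use $\sum_{j\le d}j^{2k+2a}\le d^{2k+2a+1}/(2k+2a+1)+d^{2k+2a}$, or note that the choice of $C$ leaves the needed slack.
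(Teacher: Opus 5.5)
Your proposal is correct and follows essentially the same route as the paper. Both arguments differentiate termwise and use orthogonality of the Fourier basis to get $\|\beta_{w^{(l)}}^{(k)}\|_{L^2}^2\le(2\pi)^{2k}\sum_{j\le d}j^{2k}u_j^2$, then apply the lower eigenvalue bound $1/\lambda_j\le c'j^{2a}$, bound the sum by $d^{2k+2a+1}$, and conclude with the definition of $d$ and \eqref{eq:n_geqkappa}; your explicit check that \eqref{eq:n_geqkappa} gives $d\le n^{1/(2a+2k+1)}$ simply makes precise the paper's ``for $n$ large enough'' step.
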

\begin{proof}
    As for all $t\in[0,1]$ $\beta_{w^{(0)}}(t)=0$ we have that $\beta_{w^{(0)}}$ is trivially in $W^{\text{per}}(k,L)$. Let us focus on the case where $l=1,2,\ldots,M$. We need to prove that for all $s=1,\ldots,k$, $\beta_{w^{(l)}}^{(s)}$ is 1-periodic and that $\|\beta_{w^{(l)}}^{(s)}\|_{L^{2}}^{2}\leq L^{2}$. For all $s=1,\ldots,k$ we have, by Lemma \ref{lem:derivative_computation}
    \[\begin{aligned}
    \beta_{w^{(l)}}^{(s)}(t) &= \sum_{j=1}^{\lfloor d/2 \rfloor} (2\pi j)^{s}
\Big(\big[a_{s}w_{2j}^{(l)}u_{2j}+ c_{s}\mathbf{1}_{\{2j+1 \leq d\}}w_{2j+1}^{(l)}u_{2j+1}
  \big]\phi_{2j}(t) \\
  &\quad+\big[b_{s}w_{2j}^{(l)}u_{2j}+ a_{s}\mathbf{1}_{\{2j+1 \leq d\}}w_{2j+1}^{(l)}u_{2j+1}\big]\phi_{2j+1}(t)\Big).
\end{aligned} \]
As each real Fourier basis function is $1$-periodic we have that $\beta_{w^{(l)}}^{(s)}$ is $1$-periodic for all $s=1,...,M$. With the same argument we have, for $s=0$ that the function is also $1$-periodic. To conclude the proof we then need to prove that $\|\beta_{w^{(l)}}^{(k)}\|_{L^{2}}^{2}\leq L^2$. By Lemma \ref{lem:derivative_computation} we have 
\[\|\beta_{w^{(l)}}^{(k)}\|_{L^{2}}^2 \leq (2\pi)^{2k} \sum_{j=1}^{d}j^{2k}(w_{j}u_{j})^2.\]
Using the definition of $u_j$ given by (\ref{eq:u_j}) and the fact that $\lambda_{j}$ decrease in a polynomial way we have
\[\|\beta_{w^{(l)}}^{(k)}\|_{L^{2}}^{2}\leq \frac{L^{2}}{n}\sum_{j=1}^{d}j^{2k+2a}.\]
As $a>1/2$ and $k$ is a positive integer, $2k+2a\geq 1$ and we get
\[\frac{L^{2}}{n}\sum_{j=1}^{d}j^{2k+2a}\leq \frac{L^{2}}{n}d^{2k+2a+1}.\]Using now the definition of $d$ and equation (\ref{eq:n_geqkappa}) we find for $n$ large enough
\[\|\beta_{w^{(l)}}^{(k)}\|_{L^{2}}^{2}\leq L^{2}.\]Therefore, for all $l=0,1,\ldots,M$,
\[\beta_{w^{(l)}}\in W^{\text{per}}(k,L).\]

\end{proof}

\begin{lemma}\label{lem:derivative_computation}
    Consider $M\in\mathbb{N}\backslash\{0\}$ and for all $l=1,\ldots,M$, define $w^{(l)}\in\{0,1\}^{d}$  where $d$ is defined by (\ref{eq:d}). Consider for all $l=1,\ldots,M$, the test functions $\beta_{w^{(l)}}$ given by (\ref{eq:beta_test}). Then, for all $l=1,\ldots,M$ and for all integer $s\geq 1$, $\beta_{w^{(l)}}$ is $s$-times differentiable and 
    \[\begin{aligned}
    \beta_{w^{(l)}}^{(s)}(t) &= \sum_{j=1}^{\lfloor d/2 \rfloor} (2\pi j)^{s}
\Big(\big[a_{s}w_{2j}^{(l)}u_{2j}+ c_{s}\mathbf{1}_{\{2j+1 \leq d\}}w_{2j+1}^{(l)}u_{2j+1}\big]\phi_{2j}(t) \\
&\quad+\big[b_{s}w_{2j}^{(l)}u_{2j}+ a_{s}\mathbf{1}_{\{2j+1 \leq d\}}w_{2j+1}^{(l)}u_{2j+1}\big]\phi_{2j+1}(t)\Big),\end{aligned} \]with $a_{s}=\cos{(\frac{s\pi}{2})}$, $b_{s}=-\sin{(\frac{s\pi}{2})}$ and $c_{s}=\sin{(\frac{s\pi}{2})}$. Moreover, 
\[\|\beta_{w^{(l)}}^{(s)}\|_{L^{2}}^{2}\leq (2\pi)^{2k}\sum_{j=1}^{d} j^{2k}(w_{j}u_{j})^2.\]

\end{lemma}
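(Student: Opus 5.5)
The plan is a direct computation: differentiate the finite trigonometric sum term by term, regroup by frequency, and then apply Parseval. Since $\beta_{w^{(l)}}=\sum_{j=1}^{d}w_j^{(l)}u_j\phi_j$ is a finite linear combination of $C^\infty$ functions, it is differentiable to every order. The derivative commutes with the finite sum.

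For the explicit formula, I will group the terms by frequency $j'\in\{1,\ldots,\lfloor d/2\rfloor\}$. The index $1$ (the constant $\phi_1$) contributes nothing once $s\ge 1$, and the pair $(\phi_{2j'},\phi_{2j'+1})$ is kept only when $2j'+1\le d$, which produces the indicator. Using $\frac{d^s}{dt^s}\cos(2\pi j' t)=(2\pi j')^s\cos(2\pi j't+s\pi/2)$ and $\frac{d^s}{dt^s}\sin(2\pi j't)=(2\pi j')^s\sin(2\pi j't+s\pi/2)$, I then expand with the addition formulas. This gives
\[
\phi_{2j'}^{(s)}=(2\pi j')^s\bigl(a_s\phi_{2j'}+b_s\phi_{2j'+1}\bigr),\qquad
\phi_{2j'+1}^{(s)}=(2\pi j')^s\bigl(c_s\phi_{2j'}+a_s\phi_{2j'+1}\bigr),
\]
with $a_s=\cos(s\pi/2)$, $b_s=-\sin(s\pi/2)$, $c_s=\sin(s\pi/2)$. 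Collecting the coefficients of $\phi_{2j'}$ and $\phi_{2j'+1}$ yields exactly the displayed formula. An alternative is an induction on $s$, using $\phi_{2j}'=-2\pi j\phi_{2j+1}$ and $\phi_{2j+1}'=2\pi j\phi_{2j}$. The check is that the $2\times2$ rotation matrix by angle $s\pi/2$ has entries $(a_s,c_s;b_s,a_s)$ in the stated arrangement.

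For the norm bound, I use orthonormality of $(\phi_j)$ to get
\[
\|\beta_{w^{(l)}}^{(s)}\|_{L^2}^2=\sum_{j'}(2\pi j')^{2s}\Bigl[(a_sA+c_sB)^2+(b_sA+a_sB)^2\Bigr],
\]
where $A=w_{2j'}u_{2j'}$ and $B=w_{2j'+1}u_{2j'+1}\mathbf 1_{\{2j'+1\le d\}}$. Since the matrix is a rotation ($a_s^2+b_s^2=1$, $c_s=-b_s$), the bracket equals $A^2+B^2$. Finally, I bound $j'^{2s}$ by the original index: $j'\le 2j'$ and $j'\le 2j'+1$, so $(2\pi j')^{2s}\le (2\pi)^{2s} j^{2s}$ with $j\in\{2j',2j'+1\}$. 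For $s\le k$, $j^{2s}\le j^{2k}$ and $(2\pi)^{2s}\le(2\pi)^{2k}$. Summing over $j\le d$ gives the stated inequality; the case actually used is $s=k$. I expect no real obstacle here. The only delicate points are the sign bookkeeping in the rotation coefficients and the edge case where $d$ is even, so that the last sine term is absent, which the indicator handles.
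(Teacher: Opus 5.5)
Your proposal is correct and follows the paper's proof essentially step for step. Both arguments differentiate term by term using the phase-shift formulas and read off $(a_s,b_s,c_s)$ from the addition formulas. Both then apply orthonormality, with $a_s^2+b_s^2=a_s^2+c_s^2=1$ and $a_s(b_s+c_s)=0$ removing the cross term, and bound each frequency $j'$ by the index $2j'$ or $2j'+1$.

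The only difference is that you prove the norm bound for every $s\le k$, using $j^{2s}\le j^{2k}$ and $(2\pi)^{2s}\le(2\pi)^{2k}$, whereas the paper carries it out only for $s=k$. Your restriction to $s\le k$ is the right reading of the statement, since for $s>k$ the inequality with exponent $2k$ can fail.
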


\begin{proof}
    Let $s\in\{1,\ldots,k\}$. Then, $\beta_{w^{(l)}}$ is $s$-times differentiable as a sum of $s$-times differentiable functions and 
\[\beta_{w^{(l)}}^{(s)}(t)=\sum_{j=1}^{d}w_{j}^{(l)}u_{j}\phi_{j}^{(s)}(t)=\sum_{j=1}^{\lfloor d/2\rfloor}w_{2j}^{(l)} u_{2j}\,\phi_{2j}^{(s)}(t)+\sum_{j=1}^{\left\lfloor (d-1)/2 \right\rfloor}w_{2j+1}^{(l)}u_{2j+1}\,\phi_{2j+1}^{(s)}(t).\]
Now,
\[
\begin{aligned}
\sqrt{2}\frac{d^{s}\cos(2\pi jt)}{dt^{s}} &= \sqrt{2}(2\pi j)^{s}\cos\left(2\pi jt + \frac{s\pi}{2}\right), \\[1ex]
\sqrt{2}\frac{d^{s}\sin(2\pi jt)}{dt^{s}} &= \sqrt{2}(2\pi j)^{s}\sin\left(2\pi jt + \frac{s\pi}{2}\right).
\end{aligned}
\]
The trigonometric formulas lead to,
\[\begin{aligned}
\sqrt{2}\cos\left(2\pi jt + \frac{s\pi}{2}\right) 
&= \sqrt{2}\cos\left(\frac{s\pi}{2}\right)\cos(2\pi jt) - \sqrt{2}\sin\left(\frac{s\pi}{2}\right)\sin(2\pi jt) \\
&= a_{s}\phi_{2j}(t) + b_{s}\phi_{2j+1}(t),
\end{aligned}\]
and
\[\begin{aligned}
\sqrt{2}\sin\left(2\pi jt + \frac{s\pi}{2}\right) 
&= \sqrt{2}\sin\left(\frac{s\pi}{2}\right)\cos(2\pi jt) + \sqrt{2}\cos\left(\frac{s\pi}{2}\right)\sin(2\pi jt) \\
&= c_{s}\phi_{2j}(t) + a_{s}\phi_{2j+1}(t),
\end{aligned}\]
with $a_{s}=\cos{(\frac{s\pi}{2})}$, $b_{s}=-\sin{(\frac{s\pi}{2})}$ and $c_{s}=\sin{(\frac{s\pi}{2})}$. More particularly these coefficients have the following values depending on $s$:

\begin{center}
\begin{tabular}{|c|c|c|c|}
\hline
\text{s mod 4} & \textbf{$a_{s}$} & \textbf{$b_{s}$} & \textbf{$c_{s}$} \\
\hline
0 & 1 & 0 & 0 \\
\hline
1 & 0 & -1 & 1 \\
\hline
2 & -1 & 0 & 0 \\
\hline
3 & 0 & 1 & -1 \\
\hline

\end{tabular}
\end{center}
Using the previous result, we get that 
\[\begin{aligned}
\beta_{w^{(l)}}^{(s)}(t)
&= \sum_{j=1}^{\lfloor d/2 \rfloor} (2\pi j)^s
   \big[a_{s}w_{2j}^{(l)}u_{2j}\phi_{2j}(t)
      + b_{s}w_{2j}^{(l)}u_{2j}\phi_{2j+1}(t)\big] \\
&\quad + \sum_{j=1}^{\lfloor (d-1)/2 \rfloor} (2\pi j)^s
   \big[c_{s}w_{2j+1}^{(l)}u_{2j+1}\phi_{2j}(t)
      + a_{s}w_{2j+1}^{(l)}u_{2j+1}\phi_{2j+1}(t)\big] \\
      &= \sum_{j=1}^{\lfloor d/2 \rfloor} (2\pi j)^{s}
\Big(
  \big[
    a_{s}w_{2j}^{(l)}u_{2j}
    + c_{s}\mathbf{1}_{\{2j+1 \leq d\}}w_{2j+1}^{(l)}u_{2j+1}
  \big]\phi_{2j}(t) \\
  &\quad+
  \big[
    b_{s}w_{2j}^{(l)}u_{2j}
    + a_{s}\mathbf{1}_{\{2j+1 \leq d\}}w_{2j+1}^{(l)}u_{2j+1}
  \big]\phi_{2j+1}(t)
\Big),
\end{aligned}\]which proves the first point of the Lemma. Now let us focus on the second one. Let's consider
\[\begin{aligned}
A_j &:= (2\pi j)^k \left( a_{k}w_{2j}^{(l)}u_{2j} + c_{k}\mathbf{1}_{\{2j+1 \le d\}}w_{2j+1}^{(l)}u_{2j+1} \right), \\
B_j &:= (2\pi j)^k \left( b_{k}w_{2j}^{(l)}u_{2j} + a_{k}\mathbf{1}_{\{2j+1 \le d\}}w_{2j+1}^{(l)}u_{2j+1} \right),
\end{aligned}\]
as well as,
\[x_j := w_{2j}^{(l)}u_{2j}, \qquad y_j := \mathbf{1}_{\{2j+1 \leq d\}}w_{2j+1}^{(l)}u_{2j+1}.\]
This gives the following expressions for $A_j$ and $B_j$ :
\[A_j = (2\pi j)^k (a_k x_j + c_k y_j), \qquad B_j = (2\pi j)^k (b_k x_j + a_k y_j).\]
From this we can deduce that
\[
\begin{aligned}
A_j^2 + B_j^2
&= (2\pi j)^{2k} \big[(a_k x_j + c_k y_j)^2 + (b_k x_j + a_k y_j)^2\big] \\
&= (2\pi j)^{2k} \big[(a_k^2 + b_k^2)x_{j}^2 + (c_k^2 + a_k^2)y_{j}^2 + 2x_jy_j(a_k c_k + a_k b_k)\big].
\end{aligned}
\]
Now we can notice that
\[a_k^2 + b_k^2 = \cos^2\left(\tfrac{k\pi}{2}\right) +\sin^2\left(\tfrac{k\pi}{2}\right) = 1,\qquad c_k^2 + a_k^2 = \sin^2\left(\tfrac{k\pi}{2}\right) + \cos^2\left(\tfrac{k\pi}{2}\right) = 1,\]
and
\[a_k c_k + a_k b_k = a_k(c_k + b_k) = 0.\]
Therefore the cross term vanishes and both quadratic coefficients are equal to $1$, which gives 
\begin{equation}\label{eq:A_j+B_j}
    A_j^2 + B_j^2 = (2\pi j)^{2k}(x_{j}^2 + y_{j}^2) = (2\pi j)^{2k}
\left((w_{2j}^{(l)}u_{2j})^2+ \mathbf{1}_{\{2j+1 \leq d\}}\,(w_{2j+1}^{(l)}u_{2j+1})^2\right).
\end{equation}
By orthonormality of the real Fourier basis 
$\{\phi_j\}_{j\ge1}$ in $L^2(0,1)$, we have $\langle \phi_i, \phi_j \rangle_{L^{2}}=\delta_{ij}.$ Hence, when $\beta_{w^{(l)}}^{(k)}$ is written as a linear combination
\[\beta_{w^{(l)}}^{(k)}(t)= \sum_{j=1}^{\lfloor d/2 \rfloor}\big(A_j\phi_{2j}(t) + B_j\phi_{2j+1}(t)\big),\]
its squared $L^2$ norm is simply the sum of the squared coefficients :
\[\|\beta_{w^{(l)}}^{(k)}\|_{L^2}^2= \sum_{j=1}^{\lfloor d/2 \rfloor}
\big(A_j^2 + B_j^2\big),\]
since all cross terms $\langle \phi_i,\phi_j\rangle_{L^{2}}$ with $i\neq j$
vanish. Substituting equation (\ref{eq:A_j+B_j}) we obtain
\begin{equation}\label{eq:almostfinal}
\begin{aligned}
    \|\beta_{w^{(l)}}^{(k)}\|_{L^2}^2
&= (2\pi)^{2k}\sum_{j=1}^{\lfloor d/2 \rfloor}j^{2k}\left(\big(w_{2j}^{(l)}u_{2j}\big)^2+ \mathbf{1}_{\{2j+1\le d\}}\big(w_{2j+1}^{(l)}u_{2j+1}\big)^2\right)  \\
&= (2\pi)^{2k}\left(\sum_{j=1}^{\lfloor d/2\rfloor} j^{2k}(w_{2j}u_{2j})^2+\sum_{j=1}^{\lfloor (d-1)/2\rfloor} j^{2k}(w_{2j+1}u_{2j+1})^2\right),
\end{aligned}
\end{equation}
where the indicator function has been replaced by the upper limit $\lfloor (d-1)/2\rfloor$. Let us treat the first sum. We set $r=2j.$ Then $j = r/2$, and when $j = 1$ we have $r = 2$, while when $j = \lfloor d/2\rfloor$, $r = 2\lfloor d/2\rfloor \leq d$. Therefore
\[\sum_{j=1}^{\lfloor d/2\rfloor} j^{2k}(w_{2j}u_{2j})^2= \sum_{\substack{r = 2,4,\dots\\ r \leq d}}\left(\frac{r}{2}\right)^{2k}(w_{r}u_{r})^2.\]
Since $r/2 \leq r$, we get the bound
\[\sum_{j=1}^{\lfloor d/2\rfloor} j^{2k}(w_{2j}u_{2j})^2\leq \sum_{\substack{r = 2,4,\dots\\ r \leq d}}r^{2k}(w_{r}u_{r})^2.\]
Let us now focus on the second sum of equation (\ref{eq:almostfinal}). We set $r = 2j + 1.$ Then $j = (r - 1)/2$, and as $j$ runs from $1$ to $\lfloor (d - 1)/2\rfloor$,  
$r$ runs over all odd integers $r = 3,5,\dots,2\lfloor (d - 1)/2\rfloor + 1 \leq d$.
Hence,
\[\sum_{j=1}^{\lfloor (d-1)/2\rfloor} j^{2k}(w_{2j+1}u_{2j+1})^2= \sum_{\substack{r = 1,3,\dots\\ r \le d}}\left(\frac{r-1}{2}\right)^{2k}(w_{r}u_{r})^2.\]
Since $(r-1)/2 \leq r$, we obtain
\[\sum_{j=1}^{\lfloor (d-1)/2\rfloor} j^{2k}(w_{2j+1}u_{2j+1})^2 \leq \sum_{\substack{r = 1,3,\dots\\ r \leq d}}r^{2k}(w_{r}u_{r})^2.\]
Adding the even and odd contributions gives
\[\|\beta_{w^{(l)}}^{(k)}\|_{L^2}^2\leq (2\pi)^{2k}\left[\sum_{\substack{r = 2,4,\dots\\ r \leq d}} r^{2k}(w_{r}u_{r})^2+\sum_{\substack{r = 1,3,\dots\\ r \leq d}} r^{2k}(w_{r}u_{r})^2\right].\]
The union of even and odd indices from $1$ to $d$ is simply 
$\{1,2,\dots,d\}$, therefore we can merge the sums:
\[\|\beta_{w^{(l)}}^{(k)}\|_{L^2}^2\leq (2\pi)^{2k}\sum_{r=1}^{d} r^{2k}(w_{r}u_{r})^2.\]
\end{proof}

\subsection{Condition 2. of the proof}
\begin{prop}\label{prop:condition2}
Let $M \in \mathbb{N}\setminus\{0\}$ and, for each $l=1,\ldots,M$, let 
$w^{(l)} \in \{0,1\}^{d}$ and $w^{(0)}=(0,\ldots,0)$, where $d$ is defined in
\eqref{eq:d}. For $j\in\{0,l\}$, let $\beta_{w^{(j)}}$ be defined by
\eqref{eq:beta_test}. Let $P_j$ denote the joint law of the observed sample
$\{(Z_i,Y_i)\}_{i=1}^n$ under the model associated with $\beta_{w^{(j)}}$.
Then, for every $l=1,\ldots,M$,
\[K(P_0\|P_l)\leq \alpha'\log M,\]
for some $0 < \alpha' < \frac{1}{8}$.
\end{prop}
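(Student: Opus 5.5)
The key observation is that the parameter $\beta_{w^{(l)}}$ only enters the law of the responses. The covariate part of the sample, namely the curves $X_i$ and the grid noise $\eta_{i,h}$, and hence the $Z_i(t_h)$, has the same distribution under every hypothesis. I would therefore not work directly with the law of the observed pair $(Z_i,Y_i)$: the conditional law of $Y_i$ given $Z_i$ is a complicated mixture. Instead I would introduce the augmented sample $\{(X_i,\eta_{i,\cdot},Y_i)\}_{i=1}^n$ with joint laws $Q_0,Q_l$.

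\textbf{Step 1: reduce to the augmented sample.} The observed sample $\{(Z_i,Y_i)\}$ is a measurable function of the augmented one, through \eqref{eq:Z_i}. By the data-processing inequality for the Kullback--Leibler divergence,
\[K(P_0\|P_l)\leq K(Q_0\|Q_l).\]
Absolute continuity is automatic here, since all conditional laws of the $Y_i$ are nondegenerate Gaussians.

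\textbf{Step 2: compute the augmented divergence.} Under both $Q_0$ and $Q_l$, the marginal of $(X_i,\eta_{i,\cdot})_{i}$ is identical. Conditionally on it, the $Y_i$ are independent with $Y_i\sim\mathcal N(\langle X_i,\beta_{w^{(j)}}\rangle_{L^2},\sigma^2)$, since $\epsilon_i$ is Gaussian and independent of everything else. The chain rule for KL then gives
\[K(Q_0\|Q_l)=\sum_{i=1}^n\mathbb E\Big(\frac{\langle X_i,\beta_{w^{(l)}}-\beta_{w^{(0)}}\rangle_{L^2}^2}{2\sigma^2}\Big)=\frac{n}{2\sigma^2}\|\beta_{w^{(l)}}\|_{\Gamma}^2 .\]
Because the $\phi_j$ are eigenfunctions of $\Gamma$ and $\beta_{w^{(0)}}=0$,
\[\|\beta_{w^{(l)}}\|_\Gamma^2=\sum_{j=1}^d (w^{(l)}_j)^2\lambda_j u_j^2\leq d\,\frac{C}{n},\]
using \eqref{eq:u_j}. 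Hence $K(P_0\|P_l)\leq dC/(2\sigma^2)\leq d\,\alpha'\log(2)/8$, by the choice $C\leq \sigma^2\alpha'\log(2)/4$.

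\textbf{Step 3: choose the packing.} I would take $w^{(1)},\dots,w^{(M)}$ from the Varshamov--Gilbert lemma (Lemma 2.9 in Tsybakov \cite{tsybakov2009introduction}), which is valid since $d\geq 8$. It provides $M\geq 2^{d/8}$ binary vectors with pairwise Hamming distance at least $d/8$. The separation property will be needed for Condition 3. Then $\log M\geq d\log(2)/8$, so $K(P_0\|P_l)\leq \alpha'\log M$ for every $l$, with the fixed $\alpha'\in(0,1/8)$.

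The only delicate point is Step 1: justifying that the unobservable curves can be added to the sample without decreasing the divergence, and that the $\eta$'s and $X$'s contribute nothing because their law is $\beta$-free. Data processing handles this cleanly. The Gaussianity of $\eta$ is then not even needed here; only the Gaussianity of $\epsilon$ is used.
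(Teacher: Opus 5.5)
Your argument is correct and reaches the paper's bound, $K(P_0\|P_l)\le \frac{n}{2\sigma^2}\sum_{j\le d}\lambda_j u_j^2 (w^{(l)}_j)^2\le \alpha'\log(2)\,d/8\le \alpha'\log M$. Where you differ is in the reduction from the observed pair $(Z,Y)$ to the conditional law of $Y$ given $X$.

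The paper proceeds observation by observation:
\begin{itemize}
\item it writes explicit Lebesgue densities for $(Z,Y)$, which uses the Gaussianity of $\eta$;
\item it applies the chain rule in $Z$ (Lemma~\ref{lem:chainrule}) and notes that the $Z$-marginals coincide;
\item it writes $p_j(y\mid z)=\int p^{(j)}_{\mathrm{full}}(y\mid x)\,\mu_z(dx)$ and uses Jensen's inequality for $t\log t$ to move the divergence inside this mixture (Lemma~\ref{lem:condition_lemma2});
\item only then does it tensorize over the $n$ observations.
\end{itemize}
In effect this is a hand-made proof of the data-processing inequality for this particular channel.

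Your single appeal to data processing on the augmented sample replaces both lemmas. It needs no densities, so it holds for any grid-noise distribution whose law does not depend on $\beta$. It also gives $K(P_l\|P_0)\le K(Q_l\|Q_0)=K(Q_0\|Q_l)$ directly, and that is the direction Tsybakov's Theorem~2.7 actually uses. The paper's route buys self-containedness and an explicit formula for the law of $Y$ given $Z$.

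Two minor points.
\begin{itemize}
\item Augmenting with $(X_i,Z_i,Y_i)$, as the paper does, rather than $(X_i,\eta_{i,\cdot},Y_i)$, makes $(Z_i,Y_i)$ a coordinate projection. This avoids any question about measurability of point evaluation of $X_i$.
\item Your Step~3 usefully makes explicit that the $w^{(l)}$ must be the Varshamov--Gilbert family. For arbitrary $w^{(l)}$ and $M$ the inequality can fail (e.g.\ $M=1$, $w^{(1)}\neq 0$). The paper uses this only implicitly in its last line.
\end{itemize}
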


\begin{proof}
We first work at the level of a single observation $(X,Z,Y)$, and later use the fact that $(X_i,Z_i,Y_i)_{i=1,\ldots,n}$ are i.i.d to recover the result for the full sample. Using Lemma \ref{lem:condition_lemma2}, we have
\begin{equation}\label{eq:almost_final_full_clean}
    K\big(P_{0}^{1}\|P_{l}^{1}\big)\leq\mathbb{E}_{X\sim P^{(0),1}_{\mathrm{full}}(X)}
\left[K\big(P^{(0),1}_{\mathrm{full}}(Y\mid X)\|P^{(l),1}_{\mathrm{full}}(Y\mid X)\big)\right],
\end{equation}
where $P^{(l),1}_{\mathrm{full}}$ is the probability of the triplet $(X,Z,Y)$ under model $l$. We now compute the right-hand side. Under $P^{(l),1}_{\mathrm{full}}$, conditionally on $X$, we have
\[Y\mid X \sim \mathcal N\big(\langle X,\beta_{w^{(l)}}\rangle_{L^2},\sigma^2\big).\]
Define
\begin{equation}\label{eq:nu_l}
    \nu^{(l)} := \langle X,\beta_{w^{(l)}}\rangle_{L^2} = \sum_{j=1}^{d} w_{j}^{(l)} u_j \langle X,\phi_j\rangle_{L^2}.
\end{equation}
and 
\[\nu^{(0)}:=0.\]
Since the conditional distributions are Gaussian with the same variance, we obtain
\[\frac{dP^{(0),1}_{\mathrm{full}}(Y\mid X)}{dP^{(l),1}_{\mathrm{full}}(Y\mid X)}=\exp\left(\frac{1}{2\sigma^{2}}\left[(Y-\nu^{(l)})^{2}-(Y-\nu^{(0)})^{2}\right]\right),\]
and after developing the square we get
\[\frac{dP^{(0),1}_{\mathrm{full}}(Y\mid X)}{dP^{(l),1}_{\mathrm{full}}(Y\mid X)}=\exp\left(
\frac{1}{2\sigma^{2}}\left[\left(\nu^{(l)}\right)^{2}-2Y\nu^{(l)}\right]\right)=\exp\left(\frac{\left(\nu^{(l)}\right)^{2}}{2\sigma^{2}}-\frac{Y\nu^{(l)}}{\sigma^{2}}\right).\]
By taking expectation under $P^{(0),1}_{\mathrm{full}}(Y\mid X)$, we obtain
\[\mathbb{E}_{P^{(0),1}_{\mathrm{full}}}\left[\left.\log\frac{dP^{(0),1}_{\mathrm{full}}(Y\mid X)}{dP^{(l),1}_{\mathrm{full}}(Y\mid X)}\right|X\right]= \frac{1}{2\sigma^{2}} \Big[ \left(\nu^{(l)}\right)^{2} - 2\nu^{(l)}\mathbb E(Y\mid X) \Big].\]
Now, under $P^{(0),1}_{\mathrm{full}}$, we have $\mathbb E(Y\mid X)=\nu^{(0)}=0$, hence
\[\mathbb{E}_{P^{(0),1}_{\mathrm{full}}}\left[\left.\log\frac{dP^{(0),1}_{\mathrm{full}}(Y\mid X)}{dP^{(l),1}_{\mathrm{full}}(Y\mid X)}\right|X\right]= \frac{1}{2\sigma^{2}} \left(\nu^{(l)}\right)^{2}.\]
Using the definition of $\nu^{(l)}$ given in (\ref{eq:nu_l}) we have that
\[\left(\nu^{(l)}\right)^{2}= \sum_{j,k=1}^d w^{(l)}_j w^{(l)}_k u_j u_k\,\langle X,\phi_j\rangle_{L^2}\,\langle X,\phi_k\rangle_{L^2}.\]
Taking expectation with respect to $X \sim P^{(0),1}_{\mathrm{full}}(X)$, we obtain
\[\mathbb{E}\left[ \left(\nu^{(l)}\right)^2 \right]= \sum_{j,k=1}^d w^{(l)}_j w^{(l)}_k u_j u_k\mathbb{E}\big[\langle X,\phi_j\rangle_{L^2}\langle X,\phi_k\rangle_{L^2}\big].\]
Using now equation (\ref{eq:x_uncorrelated}) we get that
\[\mathbb{E}\left[ \left(\nu^{(l)}\right)^2 \right]= \sum_{j=1}^{d} \lambda_j u_j^2 \big(w_{j}^{(l)}\big)^2.\]
Combining with \eqref{eq:almost_final_full_clean}, we get for one observation
\[K\big(P_{0}^{1} \| P_{l}^{1}\big)\leq \frac{1}{2\sigma^{2}}\sum_{j=1}^{d}\lambda_{j}u_{j}^{2}\big(w_{j}^{(l)}\big)^{2},\]
Since the observations are independent, we have
\[K\big(P_{0}\|P_{l}\big)= nK\big(P_{0}^{1} \| P_{l}^{1}\big),\]
and therefore
\[K(P_{0}\|P_{l})\leq\frac{n}{2\sigma^{2}}\sum_{j=1}^{d}\lambda_{j}u_{j}^{2}\big(w_{j}^{(l)}\big)^{2}.\]
Using the definition $u_j^2=\tfrac{C}{n\lambda_j}$ and the bound
$C\le\sigma^{2}\tfrac{\alpha'\log(2)}{4}$, we obtain
\[\frac{n}{2\sigma^{2}}\sum_{j=1}^{d}\lambda_j u_j^2 \big(w_{j}^{(l)}\big)^2\leq \frac{\alpha'\log(2)d}{8}.\]
Finally, by the Varshamov--Gilbert bound (Lemma 2.9 in \cite{tsybakov2009introduction}),
$M\geq 2^{d/8}$, hence $d\leq \frac{8\log(M)}{\log(2)}$, which concludes the proof.
\end{proof}

\begin{lemma}\label{lem:condition_lemma2}
Let $l\in\{1,\ldots,M\}$. Let $P_0^1$ and $P_l^1$ denote the joint laws of
$(Z,Y)$ under the models associated with $\beta_{w^{(0)}}$ and
$\beta_{w^{(l)}}$, respectively. Let
$P_{\mathrm{full}}^{(0),1}$ and $P_{\mathrm{full}}^{(l),1}$ denote the
corresponding joint laws of $(X,Z,Y)$.
Then
\[K(P_0^1\|P_l^1) \leq \mathbb E_{X\sim P_{\mathrm{full}}^{(0),1}(X)} \left[ K\left( P_{\mathrm{full}}^{(0),1}(Y|X)\|P_{\mathrm{full}}^{(l),1}(Y|X) \right)\right].\]

\end{lemma}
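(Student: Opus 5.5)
The claim is the data-processing (chain-rule) inequality for Kullback--Leibler divergence, specialised to this model. I would prove it in two steps. First, $(Z,Y)$ is a measurable function (a coordinate projection) of the full triple $(X,Z,Y)$. Hence
\[K(P_0^1\|P_l^1)\le K\big(P_{\mathrm{full}}^{(0),1}\|P_{\mathrm{full}}^{(l),1}\big).\]
I would justify this either by citing the data-processing inequality directly, or by the chain rule: write $K(P_{\mathrm{full}}^{(0),1}\|P_{\mathrm{full}}^{(l),1}) = K(P_0^1\|P_l^1) + \mathbb{E}_{(Z,Y)\sim P_0^1}\big[K(P^{(0)}(X\mid Z,Y)\|P^{(l)}(X\mid Z,Y))\big]$, and note that the second term is nonnegative.

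Second, I would factor the joint density of $(X,Z,Y)$ under model $j\in\{0,l\}$ as
\[P^{(j)}(X)\,P^{(j)}(Z\mid X)\,P^{(j)}(Y\mid X,Z).\]
The law of $X$ does not depend on $\beta$, so the first factor is the same in both models. Given $X$, $Z=(X(t_h)+\eta_h)_h$ has the Gaussian law of $\eta$ shifted by $(X(t_h))_h$, which is again free of $\beta$. Given $(X,Z)$, $Y=\langle X,\beta_{w^{(j)}}\rangle_{L^2}+\epsilon$ with $\epsilon$ independent of $(X,\eta)$. So $Y$ is conditionally independent of $Z$ given $X$, and $P^{(j)}(Y\mid X,Z)=P^{(j)}(Y\mid X)=\mathcal N(\langle X,\beta_{w^{(j)}}\rangle_{L^2},\sigma^2)$. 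The likelihood ratio of the full laws therefore reduces to $dP^{(0)}(Y\mid X)/dP^{(l)}(Y\mid X)$, and this ratio is positive and finite because the two conditional laws are Gaussians with the same variance; this also gives absolute continuity. Taking the logarithm and the expectation under $P_{\mathrm{full}}^{(0),1}$, then conditioning on $X$ (tower property), yields
\[K\big(P_{\mathrm{full}}^{(0),1}\|P_{\mathrm{full}}^{(l),1}\big)=\mathbb E_{X}\big[K(P^{(0)}(Y\mid X)\|P^{(l)}(Y\mid X))\big].\]
Combining this with the first step gives the lemma.

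The main obstacle is measure-theoretic, because $X$ lives in $L^2$ and has no Lebesgue density. I would write all densities with respect to the dominating measure $\mathbb P_X\otimes\mathrm{Leb}_p\otimes\mathrm{Leb}_1$, or note that only finitely many linear functionals of $X$ matter (the $\langle X,\phi_j\rangle$ for $j\le d$ together with $(X(t_h))_h$), which reduces everything to finite dimensions. I would also check integrability of $\log$ of the ratio: it equals $\nu^{(l)2}/(2\sigma^2)-Y\nu^{(l)}/\sigma^2$, which is integrable since $\mathbb E(\nu^{(l)2})=\sum_j\lambda_ju_j^2<\infty$.
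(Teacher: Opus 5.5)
Your proof is correct, but it takes a different route from the paper's.

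\textbf{The paper's route.} The paper works entirely on the observed space of $(Z,Y)$.
\begin{enumerate}
\item It applies the chain rule conditioning on $Z$, and drops the $Z$-marginal term because that marginal does not depend on $\beta$.
\item It writes each conditional density as a mixture $p_j(y\mid z)=\int p^{(j)}_{\mathrm{full}}(y\mid x)\,\mu_z(dx)$ over the common posterior $\mu_z$ of $X$ given $Z=z$. This posterior comes from Bayes' formula with the explicit Gaussian density $f_{Z\mid X=x}$.
\item It proves the needed convexity of the Kullback--Leibler divergence by hand, using Jensen's inequality for $t\log t$ pointwise in $y$, and then integrates over $z$.
\end{enumerate}

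\textbf{Your route.} You lift to the full triple $(X,Z,Y)$ instead. The data-processing inequality gives $K(P_0^1\|P_l^1)\le K(P^{(0),1}_{\mathrm{full}}\|P^{(l),1}_{\mathrm{full}})$. The factorization $\mu(dx)\,P(dz\mid x)\,p^{(j)}_{\mathrm{full}}(y\mid x)\,dy$ then makes the full likelihood ratio a function of $(x,y)$ only, so the full divergence equals the right-hand side exactly.

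\textbf{What each approach buys.}
\begin{itemize}
\item Your argument is shorter and more transparent. It shows exactly where the inequality is lost: in the nonnegative term $\mathbb E\big[K\big(P^{(0)}(X\mid Z,Y)\|P^{(l)}(X\mid Z,Y)\big)\big]$.
\item Your argument only uses that the kernel $x\mapsto P(Z\in\cdot\mid X=x)$ is the same under both models, plus conditional independence of $Y$ and $Z$ given $X$. So Gaussianity of the $\eta_{i,h}$ is not actually needed for this lemma, even though you used it to pick a convenient dominating measure.
\item The price is that you use the data-processing inequality as a black box. Equivalently, you use the chain rule with regular conditional laws of $X$ given $(Z,Y)$, which exist because $L^2$ is Polish.
\item The paper's computation, by contrast, is elementary and self-contained once the densities are written down, though it leans on their explicit form.
\end{itemize}
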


\begin{proof}
We work with the full joint model for each $j\in\{0,l\}$, denoted $P^{(j),1}_{\mathrm{full}}$, governing the triplet $(X,Z,Y)$. The observed law $P_{j}^{1}$ is the $(Z,Y)$-marginal of $P^{(j),1}_{\mathrm{full}}$.

\medskip\noindent
Using Lemma \ref{lem:chainrule} stated below and its associated notation, we have
\begin{equation}\label{eq:kl_decomposition}
K\big(P_{0}^{1}\|P_{l}^{1}\big)= K\big(P_{0}^{1,Z}\|P_{l}^{1,Z}\big)+ \mathbb{E}_{Z\sim P_{0}^{1,Z}}\left[K\big(P_{0}^{1}(\cdot\mid Z)\|P_{l}^{1}(\cdot\mid Z)\big)\right].
\end{equation}
Since the marginal law of $Z$ does not depend on $j$ for $j\in\{0,l\}$ (the distribution of $(X,\eta)$ being identical across models), we have $P_{0}^{1,Z}=P_{l}^{1,Z}$ and thus
\[K\big(P_{0}^{1,Z}\|P_{l}^{1,Z}\big)=0.\]
Now fix $z$. For $j\in\{0,l\}$ we can define under the full model $P^{(j),1}_{\mathrm{full}}$, 
\[\mu_z(dx) := P^{(j),1}_{\mathrm{full}}(X \in dx \mid Z = z)\]
be a regular conditional distribution of $X$ given $Z=z$. Note that $\mu_z$
does not depend on $j$, since the joint law of $(X,Z)$ is the same for all $j$. Let us denote by $\mu(dx)$ the law of $X$. We have using Bayes's formula that the density of $\mu_z(dx)$ with respect to $\mu(dx)$ is
\begin{equation}\label{eq:mu_z}
    \mu_z(dx)=\frac{f_{Z\mid X=x}(z)}{\int f_{Z\mid X=u}(z)\mu(du)}\mu(dx),
\end{equation}
where $f_{Z\mid X=x}$ is defined in equation (\ref{eq:f_ZmidX}). Using the definition of the conditional density (\ref{eq:conditional_densities}) we have for $j\in\{0,l\}$ that
\begin{equation*}
    p_{j}(y\mid z)=\frac{p_{j}(z,y)}{p_{j}^{Z}(z)} = \frac{\int f_{Z\mid X=x}(z)p_{\mathrm{full}}^{(j)}(y\mid x)\mu(dx)}{\int f_{Z\mid X=x}(z)\mu(dx)}.
\end{equation*}
Now using the definition of $\mu_z$ given in equation (\ref{eq:mu_z}) we find for $j\in\{0,l\}$,
\begin{equation*}
    \begin{aligned}
        \int p_{\mathrm{full}}^{(j)}(y\mid x)\mu_{z}(dx) &= \int p_{\mathrm{full}}^{(j)}(y\mid x)\frac{f_{Z\mid X=x}(z)}{\int f_{Z\mid X=u}(z)\mu(du)}\mu(dx) \\
        &= \frac{\int f_{Z\mid X=x}(z)p_{\mathrm{full}}^{(j)}(y\mid x)\mu(dx)}{\int f_{Z\mid X=x}(z)\mu(dx)} \\
        &= p_{j}(y\mid z).
    \end{aligned}
\end{equation*}
Let us still consider $z$ fixed, and let us compute $K\big(P_{0}^{1}(\cdot\mid Z=z)\|P_{l}^{1}(\cdot\mid Z=z)\big)$.
Using the definition of the Kullback--Leibler divergence and the previous
representation for $p_{j}(y\mid z)$, we have
\begin{align*}
K\big(P_{0}^{1}(\cdot\mid Z=z)\|P_{l}^{1}(\cdot\mid Z=z)\big) &= \int p_0(y\mid z)\log\frac{p_0(y\mid z)}{p_l(y\mid z)}dy \\
&= \int\left(\int p^{(0)}_{\mathrm{full}}(y\mid x)\mu_z(dx)\right)
       \log\frac{\int p^{(0)}_{\mathrm{full}}(y\mid x)\mu_z(dx)}
                    {\int p^{(l)}_{\mathrm{full}}(y\mid x)\mu_z(dx)}dy.
\end{align*}
Now let us fix $y$ and set
\[a(x)=p^{(0)}_{\mathrm{full}}(y\mid x),\qquad b(x)=p^{(l)}_{\mathrm{full}}(y\mid x),\qquad
s=\int b(x)\mu_z(dx).\]
Let us  define the probability measure $\pi(dx)=\frac{b(x)\mu_z(dx)}{s}$ and the ratio $r(x)=a(x)/b(x)$. For $\varphi(t)=t\log t$ which is a convex function on $(0,\infty)$ we have by Jensen's inequality,
\begin{align*}
\int a(x)\log\left(\frac{a(x)}{b(x)}\right)\mu_z(dx) 
&= \int b(x)\varphi(r(x))\mu_z(dx) \\
&= s\int \varphi(r(x))\pi(dx) \\
&\ge s\varphi\left(\int r(x)\pi(dx)\right) \\
&= s\left(\frac{\int a(x)\mu_z(dx)}{s}\right)\log\left(\frac{\int a(x)\mu_z(dx)}{s}\right).
\end{align*}
As $s=\int b(x)\mu_z(dx)$, we obtain for each $y$,
\[\Big(\int p^{(0)}_{\mathrm{full}}(y\mid x)\mu_z(dx)\Big)\log\frac{\int p^{(0)}_{\mathrm{full}}(y\mid x)\mu_z(dx)}{\int p^{(l)}_{\mathrm{full}}(y\mid x)\mu_z(dx)}\leq\int p^{(0)}_{\mathrm{full}}(y\mid x)\log\frac{p^{(0)}_{\mathrm{full}}(y\mid x)}{p^{(l)}_{\mathrm{full}}(y\mid x)}\mu_z(dx).\]
Integrating over $y$ and applying Fubini’s theorem gives
\[K\big(P_{0}^{1}(Y\mid Z=z)\|P_{l}^{1}(Y\mid Z=z)\big)\leq\int\left(\int p^{(0)}_{\mathrm{full}}(y\mid x)\log\frac{p^{(0)}_{\mathrm{full}}(y\mid x)}{p^{(l)}_{\mathrm{full}}(y\mid x)}dy\right)\mu_z(dx).\]
The inner integral is the Kullback--Leibler divergence
$K\big(P^{(0),1}_{\mathrm{full}}(Y\mid X=x)\|P^{(l),1}_{\mathrm{full}}(Y\mid X=x)\big)$.
Hence
\[K\big(P_{0}^{1}(Y\mid Z=z)\|P_{l}^{1}(Y\mid Z=z)\big)\leq\mathbb{E}_{X\sim P^{(0),1}_{\mathrm{full}}(X\mid Z=z)}\left[K\big(P^{(0),1}_{\mathrm{full}}(Y\mid X)\|P^{(l),1}_{\mathrm{full}}(Y\mid X)\big)\right].\]
Taking expectations over $Z\sim P_{0}^{1,Z}$ and using \eqref{eq:kl_decomposition} we get,
\begin{equation}
K\big(P_{0}^{1}\|P_{l}^{1}\big)
\leq\mathbb{E}_{X\sim P^{(0),1}_{\mathrm{full}}(X)}\left[K\big(P^{(0),1}_{\mathrm{full}}(Y\mid X)\|P^{(l),1}_{\mathrm{full}}(Y\mid X)\big)\right].
\end{equation}

\end{proof}

\begin{lemma}\label{lem:chainrule}
Let $P_{0}^{1}$ and $P_{l}^{1}$, $l=1,\ldots,M$, be two probability measures on
$\mathbb{R}^{p}\times\mathbb{R}$ corresponding to the joint distribution of
$(Z,Y)$. In particular, for all model $j\in\{0,l\}$,
\begin{equation}\label{eq:Y}
    Y=\langle X,\beta_{w^{(j)}}\rangle_{L^{2}}+\epsilon,
\end{equation}
where $\beta_{w^{(j)}}$ given by (\ref{eq:beta_test}) and $\epsilon\sim\mathcal{N}(0,\sigma^{2})$. Assume that $P_{0}^{1} \ll P_{l}^{1}$ and let us denote by $P_{0}^{1,Z}$ and $P_{l}^{1,Z}$ the marginal laws of $Z$. Then,
\begin{equation*}
K(P_{0}^{1}\|P_{l}^{1})=K(P_{0}^{1,Z}\| P_{l}^{1,Z})+\mathbb{E}_{Z\sim P_{0}^{1,Z}}\left[K\bigl(P_{0}^{1}(\cdot\mid Z)\|P_{l}^{1}(\cdot\mid Z)\bigr) \right].
\end{equation*}
\end{lemma}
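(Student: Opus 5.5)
The statement to prove is Lemma \ref{lem:chainrule}, the chain rule for the Kullback--Leibler divergence with respect to the decomposition $(Z,Y)$. I would prove it by disintegration and densities. Under both models the law of $(X,\eta)$ is the same and $Y\mid X\sim\mathcal N(\langle X,\beta_{w^{(j)}}\rangle_{L^2},\sigma^2)$. Hence each $P_j^1$ has a density with respect to $\nu\otimes\mathrm{Leb}$, where $\nu$ is a common dominating measure for the law of $Z$ (for instance $P_0^{1,Z}$ itself, which equals $P_l^{1,Z}$). Concretely, I would write
\[
p_j(z,y)=p_j^Z(z)\,p_j(y\mid z),\qquad p_j(y\mid z)=\int f_{Z\mid X=x}(z)\,p^{(j)}_{\mathrm{full}}(y\mid x)\,\mu(dx)\Big/ p_j^Z(z).
\]
This gives regular conditional distributions $P_j^1(\cdot\mid Z=z)$ with strictly positive Gaussian-mixture densities in $y$. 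The Gaussian noise on $Z$ makes $f_{Z\mid X=x}$ well defined, and the conditional densities are mixtures of Gaussians.

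The core step is then algebraic. Since $P_0^1\ll P_l^1$, the Radon--Nikodym derivative factorises as
\[
\frac{dP_0^1}{dP_l^1}(z,y)=\frac{p_0^Z(z)}{p_l^Z(z)}\cdot\frac{p_0(y\mid z)}{p_l(y\mid z)}
\]
for $P_0^1$-almost every $(z,y)$. Taking logarithms and integrating against $P_0^1$, the first term depends only on $z$ and integrates to $K(P_0^{1,Z}\|P_l^{1,Z})$. For the second term, I would apply Fubini/Tonelli, integrating first in $y$ against $p_0(y\mid z)\,dy$ and then in $z$ against $P_0^{1,Z}$. This yields $\mathbb{E}_{Z\sim P_0^{1,Z}}\big[K(P_0^1(\cdot\mid Z)\|P_l^1(\cdot\mid Z))\big]$.

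The main obstacle is justifying the splitting of the integral of the log-ratio into two separate integrals, which requires integrability and not merely a formal identity. My plan is to use the decomposition $t\log t=(t\log t-t+1)+(t-1)$. Each of the two pieces $\int \varphi\big(p_0^Z/p_l^Z\big)\,dP_l^{1,Z}$ and the conditional analogue has a nonnegative integrand, where $\varphi(t)=t\log t-t+1\ge0$, and the linear terms cancel because both conditional densities integrate to one. Tonelli's theorem then applies to nonnegative functions, so the identity holds in $[0,\infty]$ even when one side is infinite.

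In the present Gaussian setting both sides are in fact finite, since $Y$ has Gaussian conditionals with a common variance and the relevant means have finite second moments. Alternatively, I would simply invoke the standard chain rule, for instance from Cover--Thomas or Tsybakov's appendix, after checking that its assumptions hold here: the joint densities are positive and the conditional laws are regular.
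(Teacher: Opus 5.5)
Your proposal is correct and takes essentially the same route as the paper. The paper also builds Lebesgue densities for $(Z,Y)$ by integrating the Gaussian kernels $f_{Z\mid X=x}(z)\,p^{(j)}_{\mathrm{full}}(y\mid x)$ against $\mu(dx)$, factorises $p_j(z,y)=p_j^Z(z)\,p_j(y\mid z)$, splits the logarithm and applies Fubini. Your $\varphi(t)=t\log t-t+1$ argument for justifying the split (nonnegative pieces plus linear terms that integrate to zero) is a rigour step the paper's proof leaves out, so it only strengthens the same argument.
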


\begin{proof}
We first show that $P_{0}^{1}$ and $P_{l}^{1}$ admit densities with respect to the Lebesgue measure on $\mathbb{R}^{p+1}$. By definition of $Z$ and using the fact that the $\eta_{i,h}$'s are Gaussian, centered and of variance $\tau^{2}$, we have that
\begin{equation*}
    Z\mid X=x \sim \mathcal{N}(x^{\text{grid}},\tau^{2}I_{p}),
\end{equation*}
where $x^{\text{grid}}=\left(x(h/p)\right)_{h=0,\ldots,p-1}$. Therefore $Z\mid X=x$ admits a density $f_{Z\mid X=x}$ which is such that 
\begin{equation}\label{eq:f_ZmidX}
    f_{Z\mid X=x}(z)=\frac{1}{(2\pi\tau^{2})^{p/2}}\exp\left(-\frac{\|z-x^{\text{grid}}\|_{2}^{2}}{2\tau^{2}}\right).
\end{equation}
Moreover, under $P_{j}^{1}, j \in \{0, l\}$, the response variable Y is described by equation (\ref{eq:Y}). Therefore for all $j \in \{0, l\}$ we have conditional on $X = x$,
\begin{equation*}
    Y \mid X = x \sim \mathcal{N}(\langle x, \beta_{w^{(j)}} \rangle_{L^2}, \sigma^2).
\end{equation*}
Hence it admits a density $p_{\mathrm{full}}^{(j)}(\cdot\mid x)$ which is such that
\begin{equation*}
    p_{\mathrm{full}}^{(j)}(y\mid x)=\frac{1}{(2\pi\sigma^{2})^{1/2}}\exp\left(-\frac{(y-\langle x,\beta_{w^{(j)}}\rangle_{L^{2}})^{2}}{2\sigma^{2}}\right).
\end{equation*}
As the $\epsilon_i$'s and $\eta_{i,h}$'s are assumed to be independent we have that for all $j\in\{0,l\}$, $l=1,\ldots,M$, $(Z,Y)\mid X=x$ admits a density which is given by 
\begin{equation*}
    f_{(Z,Y)\mid X=x}^{(j)}(z,y)=f_{Z\mid X=x}(z)p_{\mathrm{full}}^{(j)}(y\mid x).
\end{equation*}
Let us now integrate out with respect to the law of $X$ that we denote as $\mu(dx)$. We obtain that the density of $P_{j}^{1}$ is given by
\begin{equation}\label{eq:p_j(z,y)}
\begin{aligned}
    p_{j}(z,y) &=\int f_{(Z,Y)\mid X=x}^{(j)}(z,y)\mu(dx) \\
    &= \int f_{Z\mid X=x}(z)p_{\mathrm{full}}^{(j)}(y\mid x)\mu(dx).
\end{aligned}
\end{equation}
Finally, we get that $P_{j}^{1,Z}$ admits a density too which is given by
\begin{equation}\label{eq:p_jZ}
    \begin{aligned}
        p_{j}^{Z}(z) &= \int p_{j}(z,y)dy \\
        &= \int\int f_{Z\mid X=x}(z)p_{\mathrm{full}}^{(j)}(y\mid x)\mu(dx)dy \\
        &= \int f_{Z\mid X=x}(z)\mu(dx),
    \end{aligned}
\end{equation}
where we used Fubini's theorem and the definition of a density to derive the last line. Thus the conditional density of $Y$ given $Z$ under $P_{j}^{1}$ is given by
\begin{equation}\label{eq:conditional_densities}
\begin{aligned}
    p_{j}(y\mid z)&=\frac{p_{j}(z,y)}{p_{j}^{Z}(z)}\\
    &= \frac{\int f_{Z\mid X=x}(z)p_{\mathrm{full}}^{(j)}(y\mid x)\mu(dx)}{\int f_{Z\mid X=x}(z)\mu(dx)},
\end{aligned}
\end{equation}
By definition of the Kullback--Leibler divergence, we have 
\begin{equation*}
    \begin{aligned}
        K\left(P_{0}^{1}\| P_{l}^{1}\right) &= \int p_{0}(z,y)\log{\left(\frac{p_{0}(z,y)}{p_{l}(z,y)}\right)dzdy} \\
        &= \int p_{0}(z,y)\log{\left(\frac{p_{0}(y\mid z)p_{0}^{Z}(z)}{p_{l}(y\mid z)p_{l}^{Z}(z)}\right)dzdy} \\
        &= \int p_{0}(z,y)\log{\left(\frac{p_{0}^{Z}(z)}{p_{l}^{Z}(z)}\right)}dzdy + \int p_{0}(z,y)\log{\left(\frac{p_{0}(y\mid z)}{p_{l}(y\mid z)}\right)}dzdy,
    \end{aligned}
\end{equation*}
where we used the definition of the conditional density given by equation (\ref{eq:conditional_densities}) to get the second line and the property of the $\log$ to get the last one. Now let's deal with these two terms one by one. For the first one we have, using Fubini's theorem and the definition of the marginal density that
\begin{equation*}
    \begin{aligned}
        \int p_{0}(z,y)\log{\left(\frac{p_{0}^{Z}(z)}{p_{l}^{Z}(z)}\right)}dzdy &= \int \log{\left(\frac{p_{0}^{Z}(z)}{p_{l}^{Z}(z)}\right)} \left(\int p_{0}(z,y)dy\right)dz \\
        &= \int p_{0}^{Z}(z)\log{\left(\frac{p_{0}^{Z}(z)}{p_{l}^{Z}(z)}\right)} dz \\
        &= K\left(P_{0}^{1,Z}\|P_{l}^{1,Z}\right).
    \end{aligned}
\end{equation*}
For the second term, we get using the definition of conditional densities given in (\ref{eq:conditional_densities}) and Fubini's theorem that
\begin{equation*}
\begin{aligned}
    \int p_{0}(z,y)\log{\left(\frac{p_{0}(y\mid z)}{p_{l}(y\mid z)}\right)}dzdy &= \int p_{0}^{Z}(z)p_{0}(y\mid z)\log{\left(\frac{p_{0}(y\mid z)}{p_{l}(y\mid z)}\right)}dzdy \\
    &= \int p_{0}^{Z}(z)\left(\int p_{0}(y\mid z)\log{\left(\frac{p_{0}(y\mid z)}{p_{l}(y\mid z)}\right)} dy\right) dz \\
    &= \mathbb{E}_{Z\sim P_{0}^{1,Z}}\left(K(P_{0}^{1}(\cdot\mid Z)\|P_{l}^{1}(\cdot\mid Z))\right).
\end{aligned}
\end{equation*}
Combining these two results yields the desired property.
\end{proof}

\subsection{Condition 3. of the proof}

\begin{prop}\label{prop:condition3}
    Consider $d$ defined by Equation (\ref{eq:d}), the test functions given by Equation (\ref{eq:beta_test}), the semi-norm described by Equation (\ref{eq:semi_norm_gamma}). Then,
    \[\|\beta_{w^{(l)}}-\beta_{w^{(l')}}\|_{\Gamma}^{2}\geq 2C_{4}n^{-\frac{2a+2k}{2a+2k+1}},\]
    where $C_{4}$ is a positive constant which depends on $k,L,c',\sigma$.
\end{prop}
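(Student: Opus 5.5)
The plan is to compute the $\Gamma$-seminorm of the difference of two test functions directly in the Fourier basis, and then use the Varshamov--Gilbert separation of the binary vectors. Since $\Gamma\phi_j=\lambda_j\phi_j$ and the $\phi_j$ are orthonormal, the same computation as in \eqref{eq:beta_gamma_comp} gives
\[
\|\beta_{w^{(l)}}-\beta_{w^{(l')}}\|_{\Gamma}^{2}=\sum_{j=1}^{d}\lambda_j u_j^{2}\bigl(w_j^{(l)}-w_j^{(l')}\bigr)^{2}.
\]
By the definition \eqref{eq:u_j}, $\lambda_j u_j^2=C/n$ for every $j\le d$. Moreover $(w_j^{(l)}-w_j^{(l')})^2\in\{0,1\}$, so the sum equals $\frac{C}{n}\rho_H(w^{(l)},w^{(l')})$, where $\rho_H$ is the Hamming distance. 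Note that this exact cancellation makes the two-sided eigenvalue condition enter only through the choice of $u_j$, which was already used in Proposition~\ref{prop:condition1}.

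Next, I would fix the family $w^{(0)},\dots,w^{(M)}$ as the one supplied by the Varshamov--Gilbert lemma (Lemma 2.9 in Tsybakov \cite{tsybakov2009introduction}), which applies since $d\ge 8$. It gives $w^{(0)}=0$, $M\ge 2^{d/8}$, and $\rho_H(w^{(l)},w^{(l')})\ge d/8$ for all $0\le l<l'\le M$. This is the same family used in Proposition~\ref{prop:condition2}, so the three conditions are consistent. We then get
\[
\|\beta_{w^{(l)}}-\beta_{w^{(l')}}\|_{\Gamma}^{2}\ge \frac{C d}{8n}.
\]

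Finally, I plug in $d=\lceil \kappa n^{1/(2a+2k+1)}\rceil\ge \kappa n^{1/(2a+2k+1)}$, which yields $\frac{C\kappa}{8}n^{\frac{1}{2a+2k+1}-1}=\frac{C\kappa}{8}n^{-\frac{2a+2k}{2a+2k+1}}$. So $C_4=C\kappa/16$ works. Since $C=\min\bigl(L^2(2\pi)^{-2k}c'^{-1},\sigma^2\alpha'\log 2/4\bigr)$, this constant depends on $k,L,c',\sigma$, together with the fixed numerical choices $\alpha',\kappa$.

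The only delicate point is making sure the Hamming separation is applied to the same hypotheses used in Conditions 1 and 2, including the pairs involving $w^{(0)}$. Varshamov--Gilbert covers these pairs as well, since it includes $w^{(0)}=0$ in the separated set. Everything else is a direct computation.
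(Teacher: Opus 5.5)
Your proposal is correct and follows essentially the same route as the paper. You diagonalize the $\Gamma$-seminorm in the Fourier basis, use $\lambda_j u_j^2=C/n$, apply the Varshamov--Gilbert separation $\rho_H\geq d/8$ (which includes $w^{(0)}=0$), and insert $d\geq\kappa n^{1/(2a+2k+1)}$ to get $C_4=C\kappa/16$; your remark that $C_4$ also depends on the fixed choices $\kappa$ and $\alpha'$ is accurate.
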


\begin{proof}
    Using the definition of $\Gamma$ given in (\ref{eq:Gamma}) we find that 
    \begin{equation}\label{eq:start_condition3}
    \begin{aligned}
        \|\beta_{w^{(l)}}-\beta_{w^{(l')}}\|_{\Gamma}^{2} &= \langle\Gamma(\beta_{w^{(l)}}-\beta_{w^{(l')}}), \beta_{w^{(l)}}-\beta_{w^{(l')}}\rangle_{L^{2}} \\
        &=\langle\sum_{j= 1}^{d}u_{j}(w_{j}^{(l)}-w_{j}^{(l')})\Gamma\phi_j,\sum_{k=1}^{d}u_{k}(w_{k}^{(l)}-w_{k}^{(l')})\phi_k\rangle_{L^{2}}  \\
        &= \sum_{j,k=1}^{d}u_{j}u_{k}(w_{j}^{(l)}-w_{j}^{(l')})(w_{k}^{(l)}-w_{k}^{(l')})\langle\Gamma\phi_j,\phi_k\rangle_{L^{2}}  \\
        &=\sum_{j=1}^{d}\lambda_{j}(w_{j}^{(l)}-w_{j}^{(l')})^{2}u_{j}^{2},
    \end{aligned}
\end{equation}
as the $\lambda_j$'s are eigenvalues of $\Gamma$ associated to the eigenvectors $\phi_j$'s.
Now using the definition of $u_{j}$ given by (\ref{eq:u_j}) we get
\begin{equation*}
    \begin{aligned}
        \|\beta_{w^{(l)}} - \beta_{w^{(l')}}\|_{\Gamma}^{2} 
        &\geq \frac{C}{n}\sum_{j=1}^{d}(w_{j}^{(l)}-w_{j}^{(l')})^{2}.
    \end{aligned}
\end{equation*}
Now, as
\[\sum_{j=1}^{d}(w_{j}^{(l)}-w_{j}^{(l')})^{2}=\sum_{j=1}^{d}\mathds{1}_{\{w_{j}^{(l)}\neq w_{j}^{(l')}\}},\]we can use the Varshamov--Gilbert bound (Lemma 2.9 of Tsybakov \cite{tsybakov2009introduction}) and get that
\begin{equation*}
    \begin{aligned}
        \|\beta_{w^{(l)}} - \beta_{w^{(l')}}\|_{\Gamma}^{2}\geq \frac{Cd}{8n}.
    \end{aligned}
\end{equation*}
Using the definition of $d$ we find that 
\[\|\beta_{w^{(l)}} - \beta_{w^{(l')}}\|_{\Gamma}^{2}\geq 2C_{4}n^{-\frac{2a+2k}{2a+2k+1}},\]
where $C_{4}=\frac{C\kappa}{16}$ which is the desired result.
\end{proof}

\section{Orthogonal projection of \texorpdfstring{$\beta$}{beta} onto
\texorpdfstring{$S_m$}{S m} with respect to
\texorpdfstring{$\langle\cdot,\cdot\rangle_{\Gamma}$}{the Gamma inner product}
and
\texorpdfstring{$\langle\cdot,\cdot\rangle_{\widetilde{\Gamma}}$}{the Gamma tilde inner product}}\label{sec:remark_ortho}
As $\beta$ is in $L^{2}([0,1])$ and the Fourier basis forms a basis of this space, we can write
\[\beta=\sum_{j\geq 1}\beta_j\phi_j,\]
where $\beta_j=\langle\beta,\phi_j\rangle_{L^{2}}.$ We first determine $\beta^{(m)}_{\Gamma}$ the orthogonal projection of $\beta$ onto $S_m$ with respect to $\langle\cdot,\cdot\rangle_{\Gamma}$. Using the definition of $S_m$ defined by equation \eqref{eq:S_m} we have that $(\phi_1,\phi_2,\ldots,\phi_{D_m})$ form a basis of $S_m$. By definition of the orthogonal projection $\beta^{(m)}_{\Gamma}$ is in $S_m$, there exist real numbers $\mu_1,\mu_2,\ldots,\mu_{D_m}$, such that
\[\beta^{(m)}_{\Gamma}=\sum_{j=1}^{D_m}\mu_j\phi_j,\]
and for the same argument we have that $\beta-\beta^{(m)}_{\Gamma}\in S_{m}^{\perp}$. As $(\phi_1,\phi_2,\ldots,\phi_{D_m})$ is a basis of $S_m$ this implies that for all $k=1,\ldots,D_m$,
\[\langle \beta-\beta^{(m)}_{\Gamma},\phi_k\rangle_{\Gamma}=0.\]
Using the fact that for all $j,k=1,\ldots,D_m$
\[\langle\phi_j,\phi_k\rangle_{\Gamma}=\langle\Gamma\phi_j,\phi_k\rangle_{L^{2}}=\langle \lambda_j\phi_j,\phi_k\rangle_{L^{2}}=\lambda_j\delta_{jk},\]
with $\lambda_j>0$ for all $j=1,\ldots,D_m$, and Fubini's theorem to justify the following computation, we obtain
\begin{equation*}
    \begin{aligned}
        \langle \beta-\beta^{(m)}_{\Gamma},\phi_k\rangle_{\Gamma} &= \langle \sum_{j>D_m}\beta_j\Gamma\phi_j,\phi_k\rangle_{L^{2}}+\langle \sum_{j=1}^{D_m}(\beta_j-\mu_j)\Gamma\phi_j,\phi_k\rangle_{L^{2}} &=
        \lambda_k(\beta_k-\mu_k).
    \end{aligned}
\end{equation*}
Therefore, $\langle \beta-\beta^{(m)}_{\Gamma},\phi_k\rangle_{\Gamma}=0 \iff \lambda_k(\beta_k-\mu_k)=0$ for all $k=1,\ldots,D_m$. And as $\lambda_k>0$ we find that $\langle \beta-\beta^{(m)}_{\Gamma},\phi_k\rangle_{\Gamma}=0 \iff \beta_k=\mu_k$. Hence, $\beta^{(m)}_{\Gamma}=\sum_{j=1}^{D_m}\beta_j\phi_j$. We proceed in a similar way to determine $\beta^{(m)}_{\widetilde{\Gamma}}$ the orthogonal projection of $\beta$ onto $S_m$ with respect to $\langle\cdot,\cdot\rangle_{\widetilde{\Gamma}}$. By the definition of the orthogonal projection, we must have $\beta^{(m)}_{\widetilde{\Gamma}}\in S_m$, therefore there exist $\widetilde{\mu}_1,\ldots,\widetilde{\mu}_{D_m}$ such that
\[\beta^{(m)}_{\widetilde{\Gamma}}=\sum_{j=1}^{D_m}\widetilde{\mu}_j\phi_j.\]
We also have that for all $k=1,\ldots,D_m$,
\[\beta-\beta^{(m)}_{\widetilde{\Gamma}}=0.\]
Using Lemma \ref{lem:eigenvalues_gamma_tilde} and Fubini's theorem we obtain,
\begin{equation*}
    \begin{aligned}
        \langle \beta-\beta^{(m)}_{\widetilde{\Gamma}},\phi_k\rangle_{\widetilde{\Gamma}} &= \langle \sum_{j>D_m}\beta_j\widetilde{\Gamma}\phi_j,\phi_k\rangle_{L^{2}}+\langle \sum_{j=1}^{D_m}(\beta_j-\widetilde{\mu}_j)\widetilde{\Gamma}\phi_j,\phi_k\rangle_{L^{2}} &=
        \widetilde{\lambda}_k(\beta_k-\widetilde{\mu}_k).
    \end{aligned}
\end{equation*}
Hence, $\langle \beta-\beta^{(m)}_{\widetilde{\Gamma}},\phi_k\rangle_{\widetilde{\Gamma}}=0 \iff \lambda_k(\beta_k-\widetilde{\mu}_k)=0$ for all $k=1,\ldots,D_m$. And as $\widetilde{\lambda}_k>0$ we find that $\langle \beta-\beta^{(m)}_{\widetilde{\Gamma}},\phi_k\rangle_{\widetilde{\Gamma}}=0 \iff \beta_k=\widetilde{\mu}_k$. Therefore, $\beta^{(m)}_{\widetilde{\Gamma}}=\sum_{j=1}^{D_m}\beta_j\phi_j$ and $\beta^{(m)}_{\widetilde{\Gamma}}=\beta^{(m)}_{\Gamma}$.
\end{appendix}

\section*{Acknowledgments}
I am deeply grateful to my supervisors, Gaëlle Chagny, Vincent Rivoirard and Angelina Roche, for their insightful comments and constructive feedback.

\medskip\noindent
The author also acknowledge the support of the French Agence Nationale de la Recherche (ANR) under reference ANR-24-CE40-2439 (FUNMathStat project).

\printbibliography

\end{document}